\documentclass{article}
\usepackage[T1]{fontenc}
\usepackage{microtype}
\usepackage{graphicx}
\usepackage{subcaption}
\usepackage{booktabs}
\usepackage{hyperref}
\usepackage{etoc}

\usepackage[preprint]{icml2026}

\usepackage{amsmath, amssymb, amsfonts, mathtools}
\usepackage{amsthm}
\usepackage{aliascnt}
\usepackage{bbm}
\usepackage{float}
\usepackage{bm}
\usepackage{graphicx}
\usepackage{booktabs}
\usepackage{hyperref}
\usepackage{xcolor}
\usepackage{microtype}
\usepackage{nicefrac}
\usepackage{placeins}

\newcommand{\E}{\mathbb{E}}
\newcommand{\R}{\mathbb{R}}

\newcommand{\defeq}{\vcentcolon=}

\newcommand{\Kmat}{\mathbf{K}}

\newcommand{\J}{\mathbf{J}}
\newcommand{\Jg}{\mathbf{J}^{\mathrm{g}}}
\newcommand{\Jexp}{\mathbf{J}^{\mathrm{exp}}}

\newcommand{\Aexp}{\mathbf{A}^{\mathrm{exp}}}
\newcommand{\Gexp}{\mathbf{G}^{\mathrm{exp}}}
\newcommand{\Hexp}{\mathbf{G}^{\mathrm{GN},\mathrm{exp}}}

\newcommand{\Wg}{\mathbf{W}^{\mathrm{g}}}
\newcommand{\Wexp}{\mathbf{W}^{\mathrm{exp}}}

\newcommand{\Ibf}{\mathbf{I}}
\newcommand{\Htilde}{\tilde{\mathbf{G}}}

\newcommand{\Sset}{\mathcal{S}}

\newcommand{\Lcal}{\mathcal{L}}
\newcommand{\Bcal}{\mathcal{B}}

\theoremstyle{plain}
\newtheorem{proposition}{Proposition}[section]
\newaliascnt{lemma}{proposition}
\newtheorem{lemma}[lemma]{Lemma}
\aliascntresetthe{lemma}
\newaliascnt{theorem}{proposition}
\newtheorem{theorem}[theorem]{Theorem}
\aliascntresetthe{theorem}
\newtheorem{lemmamain}{Lemma}[section]
\newtheorem{corollary}{Corollary}[proposition]
\theoremstyle{definition}
\newtheorem{definition}{Definition}[section]
\newtheorem{assumption}{Assumption}[section]
\theoremstyle{remark}

\AtBeginDocument{
  \crefname{lemma}{Lemma}{Lemmas}
  \Crefname{lemma}{Lemma}{Lemmas}
  \crefname{lemmamain}{Lemma}{Lemmas}
  \Crefname{lemmamain}{Lemma}{Lemmas}
  \crefname{theorem}{Theorem}{Theorems}
  \Crefname{theorem}{Theorem}{Theorems}
}

\usepackage{algorithm}
\usepackage{algorithmic}

\usepackage[capitalize,noabbrev]{cleveref}

\icmltitlerunning{SPHERE: Mitigating Spectral Plasticity Loss}

\begin{document}

\twocolumn[
\icmltitle{SPHERE: Mitigating the Loss of Spectral Plasticity in Mixture-of-Experts for Deep Reinforcement Learning}

\begin{icmlauthorlist}
\icmlauthor{Lirui Luo}{pku,bigai}
\icmlauthor{Guoxi Zhang}{bigai}
\icmlauthor{Hongming Xu}{bigai}
\icmlauthor{Cong Fang\textsuperscript{\textdagger}}{pku,ai}
\icmlauthor{Qing Li\textsuperscript{\textdagger}}{bigai}
\end{icmlauthorlist}

\icmlaffiliation{pku}{State Key Lab of General AI, School of Intelligence Science and Technology, Peking University}
\icmlaffiliation{bigai}{State Key Laboratory of General Artificial Intelligence, BIGAI}
\icmlaffiliation{ai}{Institute for Artificial Intelligence, Peking University}

\icmlcorrespondingauthor{Cong Fang}{fangcong@pku.edu.cn}
\icmlcorrespondingauthor{Qing Li}{dylan.liqing@gmail.com}

\icmlkeywords{Mixture-of-Experts, Reinforcement Learning, Spectral Plasticity, Effective Rank, K-FAC}

\vskip 0.3in
]

\makeatletter
\g@addto@macro\icmlcorrespondingauthor@text{. \textdagger{} Corresponding authors. Project page: \url{https://sphere-rl.github.io/}}
\makeatother
\printAffiliationsAndNotice{}  

\begin{abstract}
In deep reinforcement learning (DRL), an agent is trained from a stream of experience. In a continual learning setting, such agents can suffer from \emph{plasticity loss}: their ability to learn new skills from new experiences diminishes over training. Recently, Mixture-of-Experts (MoE) networks have been reported to enable scaling laws and facilitate the learning of diverse skills. However, in continual reinforcement learning settings, their performance can degenerate as learning proceeds, indicating a loss of plasticity. To address this, building on Neural Tangent Kernel (NTK) theory, we formalize the plasticity loss in MoE policies as a loss of \emph{spectral plasticity}. We then derive a tractable proxy for spectral plasticity, one expressible in terms of individual expert feature matrices. Leveraging this proxy, we introduce \emph{SPHERE}, a practical Parseval penalty tailored for MoE-based policies that alleviates the loss of spectral plasticity. On MetaWorld and HumanoidBench, SPHERE improves average success under continual RL by 133\% and 50\% over an unregularized MoE baseline, while maintaining higher spectral plasticity throughout training.

\end{abstract}

\section{Introduction}


\begin{figure}[t]
\centering
  \includegraphics[width=\linewidth]{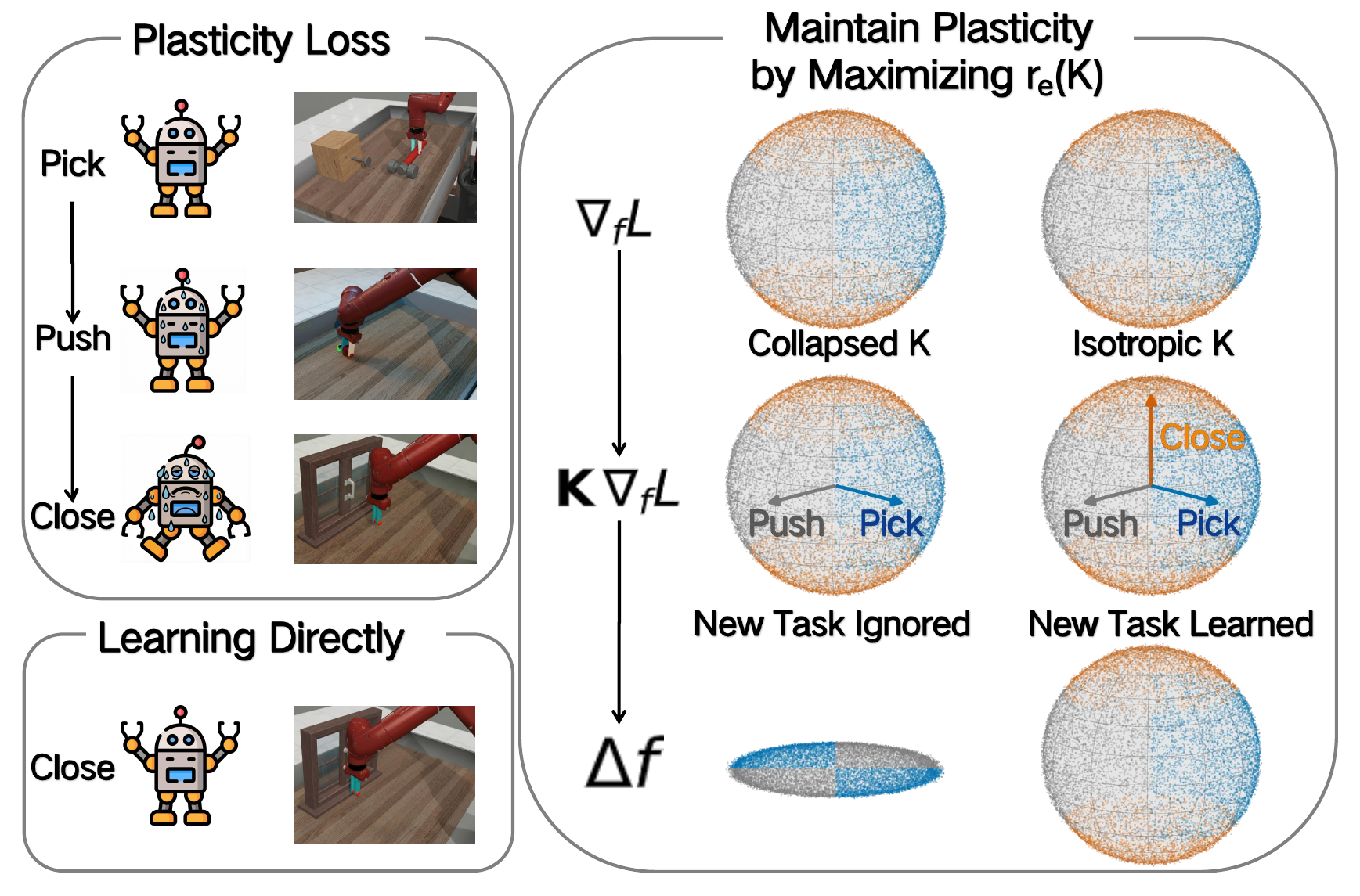}
\caption{\textbf{Plasticity loss as loss of spectral plasticity.}
\textbf{Left:} Continual RL can fail on later tasks despite isolated learnability.
\textbf{Right:} By Eq.~\eqref{eq:deltaf-update}, $\Delta f= -\eta\,\Kmat\,\nabla_f L$; low effective rank of the empirical Neural Tangent Kernel (eNTK) restricts updates to a few directions (collapsed spectrum), while high eNTK effective rank enables diverse directions (isotropic spectrum).}
\label{fig:teaser}
\end{figure}
In neuroscience and cognitive science, \emph{plasticity loss} refers to a gradual decline in a learner's ability to incorporate new experience \citep{livingston1966brain,mateos2019impact}. Recent work has shown that deep reinforcement learning (DRL) agents can exhibit a similar effect. As training proceeds, it becomes harder for the policy to improve from new experience \citep{kumar2021implicit,nikishin2022primacy,lyle2022capacity,lyle2023understanding,abbas2023loss,dohare2024loss,sutton2018reinforcement}. This plasticity loss limits the ability of DRL agents to perform in continual learning settings.

Meanwhile, Mixture-of-Experts (MoE) architectures provide a way to scale policy capacity in DRL and have been deployed in multi-task control for manipulators \citep{huang2025mentor}, humanoids \citep{wang2025more}, and quadrupeds \citep{huang2025moeloco}, as well as in large-scale online RL systems \citep{obando2024moeparam}. Yet empirical studies report that, in continual learning settings, MoE performance can degrade as training proceeds \citep{willi2024mixture}, suggesting that MoE itself can suffer severe plasticity loss.

Plasticity loss in DRL has been increasingly studied. However, its manifestation in MoE policies remains poorly characterized. Motivated by this gap, we focus on mitigating plasticity loss in MoE policies. Addressing this problem is important because MoE policies are increasingly used in modern RL, yet plasticity loss can cause systematic failures on later tasks even when those tasks are learnable in isolation.

In this paper, we formalize plasticity loss through \emph{spectral plasticity}: the diversity of functional update directions available to the policy, quantified by the spectral-entropy effective rank of the empirical Neural Tangent Kernel (eNTK). Under this lens, plasticity loss corresponds to a decline in this effective rank, which in turn confines functional updates to fewer directions. Since explicitly forming the eNTK matrix is intractable, we derive its layerwise block decomposition. This decomposition leads to a tractable effective-rank proxy controlled by a \emph{weighted expert feature matrix} available from forward passes, yielding a principled, practical regularization target.

Guided by this view, we introduce \emph{SPHERE}, Spectral Plasticity via Hyperspherical Expert REgularization, which applies a practical Parseval penalty to this weighted feature matrix. On MetaWorld and HumanoidBench, SPHERE improves average success by up to 133\% and 50\% over an unregularized MoE baseline, while maintaining higher spectral plasticity throughout training.

Concretely, we make the following contributions:
\begin{itemize}
    \item We formalize plasticity loss as the loss of spectral plasticity and derive a tractable proxy building on NTK theory.
    \item We introduce SPHERE, a principled and practical regularizer to counter the loss of spectral plasticity.
    \item We evaluate SPHERE on MetaWorld and HumanoidBench, showing consistent gains over MoE baselines and prior plasticity methods, and sustained spectral plasticity throughout training.
\end{itemize}

\section{Related Work}

\noindent\textbf{Plasticity Loss in DRL.}\spacefactor=1000\space\space%
Plasticity loss, also described as capacity loss \citep{lyle2022capacity} and primacy bias \citep{nikishin2022primacy}, refers to a progressive decline in an agent's ability to learn from new interaction data as training proceeds \citep{lyle2023understanding,nikishin2022primacy}. Empirically, it manifests as increased dormant neurons \citep{sokar2023dormant}, effective-rank and spectral collapse in representations \citep{chung2024parseval}, and related collapse in Hessian spectra \citep{he2025spectralcollapse}. It is often attributed to the non-stationarity of RL training, where early experience can dominate and hinder learning from later samples \citep{igl2021transient,kumar2021implicit}. In continual RL, these effects can lead to substantial performance degradation \citep{dohare2024loss}. Recent work links plasticity loss to eNTK spectral collapse and proposes mitigation strategies in dense DRL architectures \citep{lyle2024disentangle,tang2025churn}. We build on this line of work but focus specifically on MoE policies.

\noindent\textbf{Spectral Regularization.}\spacefactor=1000\space\space%
Regularization based on spectral quantities has also been used to mitigate plasticity loss. Spectral normalization and related regularizers stabilize training by constraining the largest singular value of weight matrices \citep{miyato2018spectral,bjorck2021towards,lewandowski2024spectral}. A related line studies Hessian geometry, including Hessian spectral collapse, as a mechanism of plasticity loss \citep{lewandowski2023directions,he2025spectralcollapse}. SPHERE focuses on the empirical NTK spectrum of MoE policies, aiming to preserve spectrum-wide uniformity rather than only controlling the largest component of a weight spectrum.

\noindent\textbf{MoE in Online DRL.}\spacefactor=1000\space\space%
MoE architectures have been adopted in online DRL as a means of scaling policy capacity via sparse expert routing \citep{obando2024moeparam}, and have been deployed in multi-task control for manipulators \citep{huang2025mentor}, humanoids \citep{wang2025more}, and quadrupeds \citep{huang2025moeloco}. \citet{willi2024mixture} further evaluate MoE in continual RL settings and report performance degradation over task sequences, indicating that MoE policies can suffer plasticity loss. Despite the broad use and study of MoE in RL, work on mitigating plasticity loss in MoE policies remains lacking.

\section{Preliminaries}
\label{sec:preliminaries}

\noindent\textbf{Markov Decision Processes.}\spacefactor=1000\space\space%
Consider a Markov Decision Process (MDP) defined by a tuple $(\mathcal{S},\mathcal{A},\mathcal{P},\mathcal{R},\gamma,\rho_0,T)$, with state space $\mathcal{S}$, action space $\mathcal{A}$, transition kernel $\mathcal{P}: \mathcal{S}\times\mathcal{A}\to\mathcal{P}(\mathcal{S})$, reward function $\mathcal{R}: \mathcal{S}\times\mathcal{A}\to\mathbb{R}$, discount factor $\gamma\in[0,1)$, initial-state distribution $\rho_0$, and horizon $T$ \citep{sutton2018reinforcement}.
The agent interacts with the MDP via a policy $a_t\sim\pi(\cdot\mid s_t)$ that maps states to action distributions.
The objective is to optimize the policy to maximize the expected discounted return $J(\pi){=}\mathbb{E}_{\pi}[\sum_{t=0}^{T} \gamma^t r_t]$, where $s_0\sim\rho_0$, $s_{t+1}\sim\mathcal{P}(\cdot\mid s_t,a_t)$ and $r_t{=}\mathcal{R}(s_t,a_t)$.
The state--action value function is $q^{\pi}(s,a){=}\mathbb{E}_{\pi}[\sum_{t=0}^{T} \gamma^t r_t\mid s_0{=}s, a_0{=}a]$ for $(s,a)\in\mathcal{S}\times\mathcal{A}$.

\noindent\textbf{Top-$K$ Mixture-of-Experts Policies.}\spacefactor=1000\space\space%
In a dense MoE~\citep{shazeer2017outrageously}, a set of experts $\{m_e(\cdot;\theta_e)\}_{e=1}^E$ is combined by a gate that produces mixture weights.
Given an input $x$, each expert produces an output $m_e(x;\theta_e)$, and the gate computes logits $s(x;\psi)\in\R^E$ and normalized weights $h(x;\psi)\in\R^E$ (e.g., via softmax).
The MoE output is the weighted sum
\begin{equation}
  y(x;\Theta) \;=\; \sum_{e=1}^{E} m_e(x;\theta_e)\,h_e(x;\psi),
  \label{eq:dense-moe-output}
\end{equation}
where $\Theta{=}(\psi,\{\theta_e\}_{e=1}^{E})$ denotes all learnable parameters.

Top-$K$ MoE introduces sparsity by routing each input to only a subset of experts~\citep{shazeer2017outrageously}.
Given gate logits $s(x;\psi)\in\R^E$, define the selected expert index set $\Sset(x)\subseteq\{1,\ldots,E\}$ as $\Sset(x){=}\operatorname*{arg\,top}_K(s(x;\psi))$, i.e., the indices of the $K$ largest components of $s(x;\psi)$, with $|\Sset(x)|{=}K$.
The corresponding subset softmax is
\begin{equation}
	    h^{(K)}_e(x) \,=\, \begin{cases}
	      \dfrac{\exp\big(s_e(x)\big)}{\sum_{j\in\Sset(x)}\exp\big(s_j(x)\big)}, & \text{if } e\in\Sset(x),\\[4pt]
      0, & \text{otherwise,}
    \end{cases}\label{eq:subset-softmax-def}
\end{equation}
and the MoE output is
\begin{equation}
  y(x;\Theta) \,=\, \sum_{e\in\Sset(x)} m_e(x;\theta_e)\, h^{(K)}_e(x;\psi)\,,
  \label{eq:topk-output}
\end{equation}
In this paper, we also consider DS-MoE~\citep{dai2024deepseekmoe}, which splits experts more finely and includes always-active shared experts.

\noindent\textbf{Empirical Neural Tangent Kernel.}\spacefactor=1000\space\space%
Consider a parameterized model $f(x;\theta)\in\R$ with parameters $\theta\in\R^{P}$.
Given inputs $X{=}\{x_n\}_{n=1}^N$, let $f(X;\theta)\in\R^{N}$ stack the outputs on $X$, and let $\J\in\R^{N\times P}$ denote its Jacobian with respect to $\theta$, whose $n$-th row is $(\nabla_{\theta} f(x_n;\theta))^\top$.
Then the empirical neural tangent kernel (eNTK) matrix \citep{lyle2024disentangle} is
\begin{equation}
  \Kmat \;=\; \J\J^\top \in\R^{N\times N},
  \label{eq:entk-def}
\end{equation}
which captures gradient correlations between samples.
Here $\Kmat$ is symmetric positive semidefinite (SPSD): $(\J\J^\top)^\top=\J\J^\top$, and for any $v\in\R^{N}$,
$v^\top \Kmat v = v^\top \J\J^\top v = \|\J^\top v\|_2^2\ge 0$.
The eNTK also governs gradient descent dynamics in output space.
Consider the loss $L(\theta){=}\sum_{n=1}^{N} \ell(f(x_n;\theta))$ on the minibatch $X$, and denote $f_i(X){=}f(X;\theta_i)\in\R^{N}$.
Then discrete-time gradient descent induces the functional iteration~\citep{lee2019wide}
\begin{equation}
  f_{i+1}(X)
  \,=\,
  f_i(X)
  - \eta\,\Kmat_i\,\nabla_{f_i(X)} L,
  \label{eq:entk-functional-update}
\end{equation}
where $\Kmat_i$ denotes the eNTK matrix on $X$ at iteration $i$. Dropping the iteration index, let $\Delta f(X)= f_{i+1}(X)-f_i(X)$ denote the change in outputs across two consecutive gradient steps, and write $\nabla_f L= \nabla_{f(X)}L$.
Then Eq.~\eqref{eq:entk-functional-update} gives
\begin{equation}
  \Delta f \;=\; -\eta\,\Kmat\,\nabla_f L,
  \label{eq:deltaf-update}
\end{equation}

\noindent\textbf{Continual RL and Plasticity Loss.}\spacefactor=1000\space\space%
In continual RL, an agent encounters a sequence of tasks $\mathcal{T}_1,\mathcal{T}_2,\ldots,\mathcal{T}_k$, each modeled as an MDP.
We assume these MDPs share the same state and action spaces but may differ in transition dynamics and reward functions.
Recent work in DRL links plasticity loss to spectral collapse in the eNTK spectrum.
\citet{lyle2024disentangle} and \citet{tang2025churn} demonstrate that the spectrum of the eNTK of dense MLP networks collapses to a low-rank structure as plasticity degrades.

\noindent\textbf{Spectral-Entropy Effective Rank.}\spacefactor=1000\space\space%
Since rank is a discrete quantity, it is not amenable to gradient-based optimization and is highly sensitive to small perturbations. We therefore use the spectral-entropy effective rank \citep{roy2007effective} as a smooth proxy.
For a matrix $M$ with singular values $\{\sigma_i(M)\}$ and normalized weights $p_i=\sigma_i(M)/\sum_j \sigma_j(M)$, this spectral-entropy effective rank is
\begin{equation}
    r_e(M) = \exp\!\Big( -\sum_{\{i:\,\sigma_i(M)>0\}} p_i\log p_i \Big),
    \label{eq:effrank-parameter}
\end{equation}
where $r_e(M)\le \mathrm{rank}(M)\le \min\{m,n\}$.
The first inequality holds with equality if and only if the nonzero singular values of $M$ are equal; the second holds with equality if and only if $M$ is full rank.
For SPSD matrices (e.g., $\Kmat$), singular values coincide with eigenvalues.

\section{Method}
\label{sec:theory}


In this section, we introduce \emph{spectral plasticity} and use it to characterize plasticity loss in Top-$K$ MoE policies.
\Cref{sec:method-analysis} formalizes spectral plasticity and relates its decay to plasticity loss.
To obtain a tractable objective without explicitly constructing the eNTK matrix $\Kmat$, \Cref{sec:method-deompose} analyzes spectral plasticity in parameter space and decomposes the resulting matrix structure into smaller matrices.
Building on this decomposition, \Cref{sec:spectral-plasticity-lb} derives a tractable lower bound that serves as a surrogate objective for preserving spectral plasticity.
Finally, \Cref{sec:method} instantiates and optimizes this lower bound, yielding the SPHERE penalty.

\subsection{Formulating Spectral Plasticity}
\label{sec:method-analysis}

\noindent\textbf{A Spectral View of Plasticity.}\spacefactor=1000\space\space%
We interpret plasticity loss through the spectrum of the eNTK matrix $\Kmat$.
Since $\Kmat$ is SPSD, it admits the eigendecomposition
$\Kmat = \sum_{j=1}^{N}\lambda_j\,u_j u_j^\top$,
where $\{u_j\}_{j=1}^{N}$ are orthonormal eigenvectors and $\lambda_{j}\ge 0$ are the corresponding eigenvalues.
Substituting into Eq.~\eqref{eq:deltaf-update} yields
\begin{equation}
  \Delta f
  = -\eta \sum_{j=1}^{N} \lambda_{j}\,\langle u_{j}, \nabla_f L\rangle\,u_{j},
  \label{eq:entk-eig-update}
\end{equation}
where $\langle u_{j}, \nabla_f L\rangle$ is the projection of the loss gradient onto direction $u_j$.
$\{u_j\}$ are the update directions and $\{\lambda_j\}$ are their gains.
If $\lambda_j=0$, then the component of $\nabla_f L$ along $u_j$ is completely filtered out and cannot change the outputs, implying a loss of learning capacity in that direction.
More generally, when the spectrum is highly non-uniform, $\Kmat$ biases updates toward a small set of directions, implicitly encoding a prior over gradient structure and attenuating components that do not align with the dominant eigenspace.
Accordingly, high plasticity corresponds to an isotropic spectrum, with eigenvalues as equal as possible.
This motivates the following definition.
\begin{definition}[Spectral Plasticity]\label{def:spectral-plasticity}
Given inputs $X=\{x_i\}_{i=1}^{N}$, let $\Kmat(X;\theta)$ denote the eNTK matrix (Eq.~\eqref{eq:entk-def}). We define spectral plasticity as $\mathrm{SP}(X;\theta)\triangleq r_e(\Kmat(X;\theta))$.
\end{definition}
Directly working with the sample-space eNTK $\Kmat=\J\J^\top$ is expensive to form and analyze: given $\J\in\R^{N\times P}$, forming $\J\J^\top$ explicitly costs $O(N^2P)$ time and requires $O(NP+N^2)$ memory to store $\J$ and $\Kmat$. Therefore, we now introduce a tractable parameter-space surrogate for spectral plasticity. 

\subsection{Decomposing Spectral Plasticity}
Since the hierarchical structure of the parameter space, we shift the analysis from the sample space to the parameter space to decompose it into smaller matrices, thereby simplifying the computation. First, we introduce the spectral plasticity surrogate for the parameter space as follows:
\label{sec:method-deompose}
\begin{definition}[Gauss--Newton surrogate]\label{def:gn-surrogate}
Let $\J$ be as defined in Eq.~\eqref{eq:entk-def}. The Gauss-Newton matrix ~\citep{botev2017practical,zhao2024theoretical}  is defined as
\begin{equation}
  G^{\mathrm{GN}} \,\triangleq\, \frac{1}{N}\J^\top\J \in \R^{P\times P}.
  \label{eq:ggn-def}
\end{equation}
\end{definition}

Since $\Kmat{=}\J\J^\top$ and $\J^\top\J$ share the same nonzero eigenvalues~\citep{horn2012matrix,golub2013matrix} and $r_e$ is invariant under positive scalar rescaling~\citep{roy2007effective}, we have
\begin{equation}
  r_e(\Kmat)
  \,=\,
  r_e(\J^\top\J)
  \,=\,
  r_e\!\big((1/N)\J^\top\J\big)
  \,=\,
  r_e(G^{\mathrm{GN}}).
  \label{eq:reK-reGGN}
\end{equation}
For the Top-$K$ MoE it is convenient to partition the parameter Jacobian $\J\in\R^{N\times P}$ according to gate and expert parameters:
\begin{equation}
  \J \,=\, \big[\, \Jg \;\big|\; \Jexp \,\big],
\end{equation}
where $\Jg\in\R^{N\times P^{\mathrm{g}}}$ contains the columns corresponding to gate parameters $\psi$ and $\Jexp\in\R^{N\times P^{\mathrm{exp}}}$ contains the columns corresponding to expert parameters $\{\theta_e\}_{e=1}^{E}$.
Within the gate, we further decompose
\begin{equation}
  \Jg \,=\, \big[\, \Jg_1 \;\cdots\; \Jg_{L^{\mathrm{g}}} \,\big],
\end{equation}
where $L^{\mathrm{g}}$ denotes the number of gate layers and $\Jg_{\ell}\in\R^{N\times p^{\mathrm{g}}_{\ell}}$ stacks the vectorized gradients with respect to the parameters of gate layer $\ell$.
For the experts, we similarly write
\begin{equation}
  \Jexp \,=\, \big[\, \Jexp_1 \;\cdots\; \Jexp_{L^{\mathrm{exp}}} \,\big],\qquad
  \Jexp_{\ell} \,=\, \big[\, \Jexp_{1,\ell} \;\cdots\; \Jexp_{E,\ell} \,\big],
\end{equation}
where $L^{\mathrm{exp}}$ denotes the number of expert layers and $\Jexp_{e,\ell}\in\R^{N\times p^{\mathrm{exp}}_{e,\ell}}$ stacks the vectorized gradients with respect to the parameters of expert $e$ at layer $\ell$.

Under the gate/expert partition above, the GN matrix $G^{\mathrm{GN}}{=}(1/N)\J^\top\J$ has the block form
\begin{equation}
  \begin{aligned}
    G^{\mathrm{GN}}
    &=
    \frac{1}{N}
    \begin{bmatrix}
      (\Jg)^{\top} \Jg & (\Jg)^{\top} \Jexp \\
      (\Jexp)^{\top} \Jg & (\Jexp)^{\top} \Jexp
    \end{bmatrix} \\
    &=
    \begin{bmatrix}
      \mathbf{G}^{\mathrm{GN},\mathrm{g}} & \mathbf{G}^{\mathrm{GN},\mathrm{g,exp}} \\
      \mathbf{G}^{\mathrm{GN},\mathrm{exp,g}} & \mathbf{G}^{\mathrm{GN},\mathrm{exp}}
    \end{bmatrix}\!,
  \end{aligned}
\end{equation}
Here $\mathbf{G}^{\mathrm{GN},\mathrm{g}}$ and $\mathbf{G}^{\mathrm{GN},\mathrm{exp}}$ denote the gate and expert blocks, and $\mathbf{G}^{\mathrm{GN},\mathrm{g,exp}}$ and $\mathbf{G}^{\mathrm{GN},\mathrm{exp,g}}$ denote the cross blocks.
$(\Jg)^{\top}\Jg$ is an $L^{\mathrm{g}}\times L^{\mathrm{g}}$ block matrix whose $(\ell,\ell')$ block is $(\Jg_{\ell})^{\top}\Jg_{\ell'}$, and $(\Jexp)^{\top}\Jexp$ is an $L^{\mathrm{exp}}\times L^{\mathrm{exp}}$ block matrix whose $(\ell,\ell')$ block is $(\Jexp_{\ell})^{\top}\Jexp_{\ell'}$.
Since $\Jexp_{\ell}=[\,\Jexp_{1,\ell}\;\cdots\;\Jexp_{E,\ell}\,]$, each same-layer expert block $(\Jexp_{\ell})^{\top}\Jexp_{\ell}$ is itself an $E\times E$ block matrix whose $(e,e')$ block is $(\Jexp_{e,\ell})^{\top}\Jexp_{e',\ell}$.
We next derive the corresponding layerwise gate and expert blocks.

We now express the eNTK effective rank in terms of the layerwise gate and expert GN blocks, yielding a decomposition of $r_e(\Kmat)$ across layers.

\begin{assumption}[Block-diagonal approximation]\label{ass:gn-block}
We adopt a block-diagonal approximation \citep{botev2017practical}:
\begin{equation}
  G^{\mathrm{GN}}
  \;\approx\;
  \Big(\bigoplus_{\ell=1}^{L^{\mathrm{g}}} \mathbf{G}^{\mathrm{GN},\mathrm{g}}_{\ell}\Big)
  \;\bigoplus\;
  \Big(\bigoplus_{\ell=1}^{L^{\mathrm{exp}}} \Hexp_{\ell}\Big),
  \label{eq:gn-block}
	\end{equation}
	where $\mathbf{G}^{\mathrm{GN},\mathrm{g}}_{\ell}$ and $\Hexp_{\ell}$ denote the gate- and expert-layer blocks and $\bigoplus$ denotes the block direct sum.
	\end{assumption}

		A useful property of a block-diagonal matrix is that its effective rank admits a block-wise decomposition, in which the blocks correspond to layers in our application. Lemma~\ref{lem:block-effrank} formalizes this decomposition, and Proposition~\ref{prop:block-mixture-entk} applies it to decompose $r_e(\Kmat)$ across layer blocks.
		\begin{lemmamain}[Block-diagonal effective rank]\label{lem:block-effrank}
		Let $M=\bigoplus_{b=1}^{B} M_{b}$ be block-diagonal. Set $s_{b}=\|M_{b}\|_\ast$ and assume $\sum_{b}s_{b}>0$. Define mixture weights $\alpha_{b}=s_{b}/\sum_{m} s_{m}$. Then
			\begin{equation}
			  r_e(M) \;=\; \exp\!\Big( H(\alpha) + \sum_{\{b:\,s_{b}>0\}} \alpha_{b}\, \log r_e(M_{b}) \Big),
			\end{equation}
			where $H(\alpha){=}{-}\sum_{b} \alpha_{b} \log \alpha_{b}$.
			\end{lemmamain}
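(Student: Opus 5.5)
The plan is to reduce the statement to the grouping (chain-rule) property of Shannon entropy. The singular values of a block-diagonal matrix are the union, with multiplicity, of the singular values of its blocks. Indeed, if $M_b = U_b\Sigma_b V_b^\top$ is an SVD of each block, then $M = (\bigoplus_b U_b)(\bigoplus_b \Sigma_b)(\bigoplus_b V_b)^\top$ is an SVD of $M$, up to padding with zero singular values when blocks are rectangular. Zero singular values are excluded from the entropy sum in Eq.~\eqref{eq:effrank-parameter}, so they play no role. Consequently $\|M\|_\ast=\sum_b s_b$. The normalized weights of $M$ are indexed by pairs $(b,i)$ with $\sigma_{b,i}>0$:
\begin{equation*}
  p_{b,i} \;=\; \frac{\sigma_{b,i}}{\sum_m s_m} \;=\; \alpha_b\, q_{b,i},
  \qquad q_{b,i} \,=\, \sigma_{b,i}/s_b .
\end{equation*}
Here $q_{b,\cdot}$ are exactly the normalized weights defining $r_e(M_b)$. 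This factorization is only meaningful for blocks with $s_b>0$. Blocks with $s_b=0$ have no positive singular values and $\alpha_b=0$, so they contribute nothing to either side, with the convention $0\log 0=0$ in $H(\alpha)$.

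Next I expand the entropy of $p$ over the positive entries:
\begin{equation*}
  -\sum_{b:\,s_b>0}\sum_{i:\,\sigma_{b,i}>0} \alpha_b q_{b,i}\big(\log\alpha_b+\log q_{b,i}\big)
  \;=\; -\sum_{b:\,s_b>0}\alpha_b\log\alpha_b \;+\; \sum_{b:\,s_b>0}\alpha_b\Big(-\sum_i q_{b,i}\log q_{b,i}\Big).
\end{equation*}
The first term uses $\sum_i q_{b,i}=1$ and equals $H(\alpha)$. In the second term, the inner bracket is $\log r_e(M_b)$ by definition. Exponentiating gives the claimed formula.

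The only step needing care is the first: justifying that the multiset of singular values of $M$ is the union of those of its blocks, including when blocks are rectangular. The block-diagonal SVD construction above handles this directly, since a direct sum of orthogonal matrices is orthogonal. For the SPSD blocks used later, the claim is even simpler, because singular values coincide with eigenvalues and the eigenvalues of a direct sum are the union of the blocks' eigenvalues.
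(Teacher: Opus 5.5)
Your proof is correct and follows the same route as the paper. Both arguments take the singular-value multiset of $M$ to be the union over blocks, factor the normalized weights as $p_{b,i}=\alpha_b q_{b,i}$, and apply the entropy chain rule $H(p)=H(\alpha)+\sum_{b:\,s_b>0}\alpha_b H(q_b)$ before exponentiating. Your block-SVD justification (covering rectangular blocks) and your explicit handling of zero blocks are somewhat more careful than the paper, which simply cites the direct-sum spectral property.
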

\begin{proof}
Proof in \Cref{app:effrank-details}.
\end{proof}

\begin{proposition}[Layerwise mixture form of $r_e(\Kmat)$]\label{prop:block-mixture-entk}
Under \eqref{eq:gn-block}, define nuclear norms $s^{\mathrm{g}}_{\ell}=\|\mathbf{G}^{\mathrm{GN},\mathrm{g}}_{\ell}\|_\ast$ and $s^{\mathrm{exp}}_{\ell}=\|\Hexp_{\ell}\|_\ast$.
Let $S=\sum_m s^{\mathrm{g}}_{m} + \sum_m s^{\mathrm{exp}}_{m}>0$ and mixture weights $\alpha^{\mathrm{g}}_{\ell}=s^{\mathrm{g}}_{\ell}/S$ and $\alpha^{\mathrm{exp}}_{\ell}=s^{\mathrm{exp}}_{\ell}/S$.
Then the spectral-entropy effective rank of the eNTK satisfies
\begin{equation}
\begin{aligned}
  r_e(\Kmat)
  &\approx
  \exp\!\Big(
	    H(\alpha)
	    + \sum_{\{\ell:\,s^{\mathrm{g}}_{\ell}>0\}} \alpha^{\mathrm{g}}_{\ell}\, \log r_e\big(\mathbf{G}^{\mathrm{GN},\mathrm{g}}_{\ell}\big)
	  \\
	  &\qquad\qquad
	    + \sum_{\{\ell:\,s^{\mathrm{exp}}_{\ell}>0\}} \alpha^{\mathrm{exp}}_{\ell}\, \log r_e\big(\Hexp_{\ell}\big)
	  \Big),
	\end{aligned}
\end{equation}
where $H(\alpha){=}{-}\sum_{\ell:\,s^{\mathrm{g}}_{\ell}>0}\alpha^{\mathrm{g}}_{\ell}\log\alpha^{\mathrm{g}}_{\ell} \;-\;\sum_{\ell:\,s^{\mathrm{exp}}_{\ell}>0}\alpha^{\mathrm{exp}}_{\ell}\log\alpha^{\mathrm{exp}}_{\ell}$ is the Shannon entropy of the combined gate and expert mixture weights.
\end{proposition}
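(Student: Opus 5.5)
The plan is to chain three facts already stated in the paper: the identity $r_e(\Kmat)=r_e(G^{\mathrm{GN}})$ from Eq.~\eqref{eq:reK-reGGN}, the block-diagonal approximation of Assumption~\ref{ass:gn-block}, and Lemma~\ref{lem:block-effrank}. The single ``$\approx$'' in the statement comes only from the assumption. Every other step is an exact identity.

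\textbf{Step 1 (sample space to parameter space).}
Eq.~\eqref{eq:reK-reGGN} gives $r_e(\Kmat)=r_e(G^{\mathrm{GN}})$. It rests on two facts: $\J\J^\top$ and $\J^\top\J$ have the same nonzero eigenvalues, and $r_e$ is invariant under positive rescaling.

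\textbf{Step 2 (pass to the block-diagonal surrogate).}
Replace $G^{\mathrm{GN}}$ by
\[
M \defeq \Big(\bigoplus_{\ell=1}^{L^{\mathrm{g}}} \mathbf{G}^{\mathrm{GN},\mathrm{g}}_{\ell}\Big)\oplus\Big(\bigoplus_{\ell=1}^{L^{\mathrm{exp}}} \Hexp_{\ell}\Big).
\]
Assumption~\ref{ass:gn-block} justifies this, so $r_e(\Kmat)\approx r_e(M)$.

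Every diagonal block is of the form $\tfrac1N(\J_\ell)^\top\J_\ell$ for the appropriate column slice $\J_\ell$ of $\J$. Each block is therefore SPSD. Its singular values equal its eigenvalues, and its nuclear norm is its trace. This matches the paper's choices $s^{\mathrm{g}}_\ell$ and $s^{\mathrm{exp}}_\ell$.

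\textbf{Step 3 (apply the lemma).}
Apply Lemma~\ref{lem:block-effrank} to $M$. The blocks are indexed by $b\in\{(\mathrm{g},\ell)\}_{\ell=1}^{L^{\mathrm{g}}}\cup\{(\mathrm{exp},\ell)\}_{\ell=1}^{L^{\mathrm{exp}}}$, with $B=L^{\mathrm{g}}+L^{\mathrm{exp}}$. The total nuclear norm is $\sum_b s_b=S>0$, which holds by hypothesis. The lemma's weights $\alpha_b$ then coincide with $\alpha^{\mathrm{g}}_\ell$ and $\alpha^{\mathrm{exp}}_\ell$.

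Splitting the sum over $\{b:s_b>0\}$ into its gate part and its expert part gives the two displayed sums. The entropy $H(\alpha)$ over all blocks reduces to the stated sum over blocks with positive weight, using the convention $0\log 0=0$.

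\textbf{Supporting the lemma.}
If Lemma~\ref{lem:block-effrank} needed to be re-verified, I would argue as follows.
\begin{itemize}
\item The spectrum of $M$ is the disjoint union of the block spectra.
\item The normalized weights factor as $p_{b,i}=\alpha_b\,q_{b,i}$, where $q_{b,i}=\sigma_i(M_b)/s_b$.
\item The entropy chain rule then gives $H(p)=H(\alpha)+\sum_b\alpha_b H(q_b)$.
\item Since $H(q_b)=\log r_e(M_b)$, exponentiating gives the lemma.
\end{itemize}

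\textbf{Main obstacle.}
The delicate point is bookkeeping, not analysis. First, blocks with $s_b=0$ must be excluded consistently, both in the entropy and in $\log r_e$, since $r_e$ of a zero block is undefined. Second, one must state clearly that the ``$\approx$'' means exact equality for the block-diagonal surrogate $M$. No quantitative error bound is claimed for dropping the off-diagonal blocks; that gap is inherited from Assumption~\ref{ass:gn-block}.
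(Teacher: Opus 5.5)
Your proposal is correct and matches the paper's proof. The paper's argument is the one-line combination of Eq.~\eqref{eq:reK-reGGN}, the block-diagonal approximation \eqref{eq:gn-block}, and Lemma~\ref{lem:block-effrank}, which are exactly your Steps 1--3, and your re-derivation of the lemma (disjoint union of block spectra, factorization $p_{b,i}=\alpha_b q_{b,i}$, entropy chain rule) mirrors the appendix proof. Your two bookkeeping points, that the ``$\approx$'' is an exact equality for the block-diagonal surrogate and that zero-norm blocks must be excluded consistently, are accurate refinements the paper leaves implicit.
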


\begin{proof}
Combine \eqref{eq:reK-reGGN} and the block-diagonal approximation \eqref{eq:gn-block} with Lemma~\ref{lem:block-effrank}.
\end{proof}

	Moreover, since $r_e(M)\le \operatorname{rank}(M)\le P(M)$, the maximal attainable effective rank of the gate block is limited by its parameter count $P^{\mathrm{g}}$, which in typical Top-$K$ MoE policies is much smaller than the expert parameter count $P^{\mathrm{exp}}$.
	Thus the gate contribution to the mixture in Proposition~\ref{prop:block-mixture-entk} is limited. In the sequel, we treat the gate terms as fixed and focus on the expert blocks $\{\Hexp_{\ell}\}$.

	\subsection{A Lower Bound on Spectral Plasticity}\label{sec:spectral-plasticity-lb}

	We now derive a lower bound for $r_e(\Hexp_{\ell})$.
	Each expert-layer Gauss--Newton block $\Hexp_{\ell}$ can be expressed in terms of two components: (i) the concatenation of expert feature terms weighted by their corresponding Top-$K$ gating, and (ii) pre-activation backprop-gradient terms.
		Then $r_e(\Hexp_{\ell})$ is lower-bounded by the inverse of the condition number of a tractable Kronecker proxy $\Htilde^{\mathrm{GN},\mathrm{exp}}_{\ell}$.

	\begin{proposition}[Surrogate for $r_e(\Hexp_{\ell})$]\label{prop:kron-proxy-surrogate}
	Consider an expert layer $\ell$ of the Top-$K$ MoE policy \eqref{eq:topk-output}.
	Define the concatenated weighted expert feature vector
\begin{equation}
  a_{\ell-1}(x_i)
  \,=\,\big[\,
      h^{(K)}_{i,1}\,a^{\mathrm{exp}}_{1,\ell-1}(x_i)^{\top}
      \;\big|\;\cdots\;\big|\;
      h^{(K)}_{i,E}\,a^{\mathrm{exp}}_{E,\ell-1}(x_i)^{\top}
    \,\big]^{\top},
  \label{eq:gating-weighted-feature}
\end{equation}
where $a^{\mathrm{exp}}_{e,\ell-1}(x_i)\in\R^{d^{\mathrm{exp}}_{e,\ell-1}}$ is the input to expert $e$ on sample $x_i$ and $h^{(K)}_{i,e}$ is the Top-$K$ gating weight from \eqref{eq:subset-softmax-def}.
Stack $a_{\ell-1}(x_i)$ over the batch into $\Phi_{\ell-1}$ and define the feature Gram
\begin{equation}
  \Aexp_{\ell-1} \,=\, (1/N)\,\Phi_{\ell-1}^{\top}\Phi_{\ell-1}.
\end{equation}
Similarly, define the concatenated expert backprop vector $g_{\ell}(x_i){=}\big[\,g^{\mathrm{exp}}_{1,\ell}(x_i)^{\top}\;|\;\cdots\;|\;g^{\mathrm{exp}}_{E,\ell}(x_i)^{\top}\,\big]^{\top}$, stack it over the batch into $\Psi_{\ell}$, and define the gradient Gram
\begin{equation}
  \Gexp_{\ell} \,=\, (1/N)\,\Psi_{\ell}^{\top}\Psi_{\ell},
\end{equation}
where $g^{\mathrm{exp}}_{e,\ell}(x_i)\in\R^{d^{\mathrm{exp}}_{e,\ell}}$ is the corresponding pre-activation backprop vector.
				Under a within-layer independence approximation \citep{martens2015kfac}, let $\Htilde^{\mathrm{GN},\mathrm{exp}}_{\ell}{=}\Aexp_{\ell-1}\otimes \Gexp_{\ell}$ be the Kronecker proxy for the expert-layer block $\Hexp_{\ell}$.
			Let $k_{\ell}=\dim(\Hexp_{\ell})$. Then
			\begin{equation}
			  r_e(\Hexp_{\ell})
			  \;\ge\;
			  \frac{k_{\ell}}{\kappa(\Htilde^{\mathrm{GN},\mathrm{exp}}_{\ell})}
			  \;=\;
			  \mathrm{LB}^{\mathrm{GN},\mathrm{exp}}_{\ell},
			\end{equation}
			where $\mathrm{LB}^{\mathrm{GN},\mathrm{exp}}_{\ell}$ denotes the corresponding proxy-to-target lower bound and $\kappa(A)=\lambda_{\max}(A)/\lambda_{\min}(A)$ is the spectral condition number.
			Moreover, equality holds if and only if $\kappa(\Htilde^{\mathrm{GN},\mathrm{exp}}_{\ell})=1$.
\end{proposition}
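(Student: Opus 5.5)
The plan is to establish a general inequality: for any SPSD matrix $M$ of dimension $k$ with positive eigenvalues, $r_e(M)\ge k/\kappa(M)$. Then apply it to the Kronecker proxy, treating $\Hexp_{\ell}$ as equal to $\Htilde^{\mathrm{GN},\mathrm{exp}}_{\ell}$ under the within-layer independence approximation of \citet{martens2015kfac}. This mirrors how \eqref{eq:gn-block} is used as an identity in Proposition~\ref{prop:block-mixture-entk}. Under that identification the inequality reads $r_e(\Hexp_{\ell})\ge k_{\ell}/\kappa(\Htilde^{\mathrm{GN},\mathrm{exp}}_{\ell})$. I note that $\dim(\Htilde^{\mathrm{GN},\mathrm{exp}}_{\ell})=\dim(\Aexp_{\ell-1})\cdot\dim(\Gexp_{\ell})=k_{\ell}$, since the expert-layer parameters are exactly the weights mapping the concatenated input features to the concatenated pre-activations. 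If $\lambda_{\min}=0$, the convention $\kappa=\infty$ makes the bound trivial, so I assume positive definiteness.

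The core inequality comes from a min-entropy bound. Let $\lambda_1\ge\dots\ge\lambda_k>0$ and $p_i=\lambda_i/\sum_j\lambda_j$, so that $H(p)=-\sum_i p_i\log p_i$. Because $-\log p_i\ge -\log p_{\max}$ termwise, we get $H(p)\ge -\log p_{\max}$, and hence $r_e(M)=e^{H(p)}\ge 1/p_{\max}=\sum_j\lambda_j/\lambda_1$. Each $\lambda_j\ge\lambda_k$, so $\sum_j\lambda_j\ge k\lambda_k$. Combining the two steps gives $r_e(M)\ge k\lambda_k/\lambda_1=k/\kappa(M)$.

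For the equality case, suppose $\kappa=1$. Then all eigenvalues are equal, so $p_i=1/k$ and $r_e=k=k/\kappa$. Conversely, suppose equality holds. Then both inequalities above must be tight. The second is tight only if $\lambda_j=\lambda_k$ for all $j$, which forces $\kappa=1$. (The first is tight only if $p$ is uniform on its support, which is consistent.)

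For completeness I would also record the Kronecker spectral facts. The eigenvalues of $\Aexp_{\ell-1}\otimes\Gexp_{\ell}$ are the pairwise products of the two factors' eigenvalues, so $\kappa(\Htilde^{\mathrm{GN},\mathrm{exp}}_{\ell})=\kappa(\Aexp_{\ell-1})\,\kappa(\Gexp_{\ell})$. This motivates regularizing the feature Gram, although it is not needed for the bound itself.

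The main obstacle is the step from the proxy to the true block $\Hexp_{\ell}$. The bound is exact only for the Kronecker proxy, because $r_e$ and $\kappa$ are not monotone under the K-FAC approximation error. I would state explicitly that the inequality is proven for $\Htilde^{\mathrm{GN},\mathrm{exp}}_{\ell}$ and transferred to $\Hexp_{\ell}$ through the independence approximation, in the same way the paper treats its other approximations.
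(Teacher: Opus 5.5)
There is a genuine gap. It is the identification $\Hexp_{\ell}=\Htilde^{\mathrm{GN},\mathrm{exp}}_{\ell}$ together with your claim $\dim(\Htilde^{\mathrm{GN},\mathrm{exp}}_{\ell})=k_{\ell}$.

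The MoE expert layer has no cross-expert weights. Its parameters are the separate matrices $\Wexp_{e,\ell}\in\R^{d^{\mathrm{exp}}_{e,\ell}\times d^{\mathrm{exp}}_{e,\ell-1}}$, so $k_{\ell}=\sum_{e}d^{\mathrm{exp}}_{e,\ell}\,d^{\mathrm{exp}}_{e,\ell-1}$. By contrast, $\Aexp_{\ell-1}\otimes\Gexp_{\ell}$ has dimension $\big(\sum_e d^{\mathrm{exp}}_{e,\ell-1}\big)\big(\sum_e d^{\mathrm{exp}}_{e,\ell}\big)$, which is $E$ times larger when widths are equal. The two matrices therefore cannot be identified, even approximately.

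What the within-layer independence approximation actually gives, applied to each $(e,e')$ block of Eq.~\eqref{eq:moe-exact-same-layer}, is the block Khatri--Rao product $\Hexp_{\ell}\approx\Aexp_{\ell-1}*\Gexp_{\ell}$. Its $(e,e')$ block is $\Aexp_{\ell-1,(e,e')}\otimes\Gexp_{\ell,(e,e')}$. The Kronecker product additionally contains all mixed pairs (a feature coordinate of expert $e$ with a gradient coordinate of expert $e'\neq e$), which correspond to no parameter.

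As written, your argument only proves $r_e(\Htilde^{\mathrm{GN},\mathrm{exp}}_{\ell})\ge\dim(\Htilde^{\mathrm{GN},\mathrm{exp}}_{\ell})/\kappa(\Htilde^{\mathrm{GN},\mathrm{exp}}_{\ell})$, which is a statement about the proxy alone. Your ``main obstacle'' paragraph calls this an approximation-error issue. It is actually a structural mismatch, and it has an exact fix.

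The missing idea is a principal-submatrix embedding followed by Cauchy interlacing.
\begin{itemize}
\item Restrict the rows and columns of $\Aexp_{\ell-1}\otimes\Gexp_{\ell}$ to index pairs (feature coordinate, gradient coordinate) that belong to the same expert. Up to a simultaneous reordering, this yields exactly $\Aexp_{\ell-1}*\Gexp_{\ell}$. The paper phrases this as: the Khatri--Rao product is a principal submatrix of the Tracy--Singh product $\Aexp_{\ell-1}\boxtimes\Gexp_{\ell}$, which is permutation-similar to the Kronecker proxy.
\item Interlacing then gives $\lambda_{\max}(\Hexp_{\ell})\le\lambda_{\max}(\Htilde^{\mathrm{GN},\mathrm{exp}}_{\ell})$ and $\lambda_{\min}(\Hexp_{\ell})\ge\lambda_{\min}(\Htilde^{\mathrm{GN},\mathrm{exp}}_{\ell})$. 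Hence $\kappa(\Hexp_{\ell})\le\kappa(\Htilde^{\mathrm{GN},\mathrm{exp}}_{\ell})$.
\item Your min-entropy chain is exactly the paper's. Apply it to $\Hexp_{\ell}$ itself, which has dimension $k_{\ell}$: $r_e(\Hexp_{\ell})\ge\operatorname{Tr}(\Hexp_{\ell})/\lambda_{\max}(\Hexp_{\ell})\ge k_{\ell}/\kappa(\Hexp_{\ell})\ge k_{\ell}/\kappa(\Htilde^{\mathrm{GN},\mathrm{exp}}_{\ell})$.
\end{itemize}

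The equality case also needs this link. If equality holds, the first two inequalities must be tight, which forces $\Hexp_{\ell}$ to be a scaled identity. Tightness of the interlacing step then gives $\kappa(\Htilde^{\mathrm{GN},\mathrm{exp}}_{\ell})=1$. Conversely, if $\Htilde^{\mathrm{GN},\mathrm{exp}}_{\ell}$ is a scaled identity, so is its principal submatrix $\Hexp_{\ell}$, and therefore $r_e(\Hexp_{\ell})=k_{\ell}$.
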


\begin{proof}
Proof in \Cref{app:ts-kron-details}.
\end{proof}

To increase this lower bound, we introduce a spectral contraction operator that provably increases it.
\Cref{def:spectral-contraction} formalizes the update, \Cref{lem:spectral-contraction-monotone} establishes monotone decrease of $\kappa$, \Cref{lem:kron-preserve-sc} propagates monotone $\kappa$ through the Kronecker proxy, and \Cref{prop:feature-spectral-improves-bound} yields a monotone increase of the bound in \Cref{prop:kron-proxy-surrogate}.
\begin{definition}[Spectral contraction]\label{def:spectral-contraction}
Let $R(M)$ be a differentiable regularizer on nonzero SPSD matrices $M\in\R^{m\times m}$.
We say that $R$ is \emph{spectrally contractive} if there exists $\eta_{\max}>0$ such that, for all nonzero SPSD $M_t$ and all $0\le \eta\le \eta_{\max}$, the update induced by a gradient step of size $\eta$ monotonically contracts every eigenvalue toward the mean eigenvalue.
In other words, let $\bar\lambda_t = \operatorname{Tr}(M_t)/m$ denote the mean eigenvalue of $M_t$, and write $M_t=U\,\operatorname{diag}(\lambda_{i,t})\,U^\top$.
Then there exists $\beta\in[0,1]$ such that
\begin{equation}
  M_{t+1}
  \;=\;
  U\,\operatorname{diag}\big((1-\beta)\lambda_{i,t}+\beta\bar\lambda_t\big)\,U^\top.
\end{equation}
\end{definition}

\begin{lemma}[Monotonicity of spectral contraction]\label{lem:spectral-contraction-monotone}
If $M_{t+1}$ is obtained from $M_t$ by a spectral contraction update in the sense of Definition~\ref{def:spectral-contraction}, then
\begin{equation}
  \kappa(M_{t+1}) \;\le\; \kappa(M_t).
\end{equation}
\end{lemma}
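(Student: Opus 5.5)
The plan is to work directly in the eigenbasis $U$ of $M_t$, since by Definition~\ref{def:spectral-contraction} the update leaves the eigenvectors unchanged and acts on each eigenvalue by the affine map $\phi(\lambda)=(1-\beta)\lambda+\beta\bar\lambda_t$ with $\beta\in[0,1]$ and $\bar\lambda_t=\operatorname{Tr}(M_t)/m\ge 0$. The eigenvalues of $M_{t+1}$ are therefore exactly $\{\phi(\lambda_{i,t})\}_{i=1}^m$. Because $1-\beta\ge 0$, $\phi$ is nondecreasing, so it sends the largest eigenvalue to the largest and the smallest to the smallest:
\[
\lambda_{\max}(M_{t+1})=\phi(\lambda_{\max}(M_t)),\qquad \lambda_{\min}(M_{t+1})=\phi(\lambda_{\min}(M_t)).
\]
This reduces the matrix statement to a scalar inequality between two numbers $a=\lambda_{\min}(M_t)$ and $b=\lambda_{\max}(M_t)$, where $0\le a\le\bar\lambda_t\le b$ and $\bar\lambda_t>0$ because $M_t$ is nonzero SPSD.

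Next I handle the degenerate cases before the main computation. If $a=0$, then $\kappa(M_t)=+\infty$ under the convention for $\kappa$, and the inequality is trivial. If $\beta=1$, every eigenvalue becomes $\bar\lambda_t$, so $\kappa(M_{t+1})=1\le\kappa(M_t)$. When $a>0$, $\phi(a)\ge a>0$, so $\kappa(M_{t+1})$ is finite and positive.

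The main step is to show $\dfrac{(1-\beta)b+\beta\bar\lambda}{(1-\beta)a+\beta\bar\lambda}\le \dfrac{b}{a}$. Cross-multiplying by the positive denominators, this becomes
\[
a(1-\beta)b+a\beta\bar\lambda\le b(1-\beta)a+b\beta\bar\lambda,
\]
which simplifies to $\beta\bar\lambda(b-a)\ge 0$. This holds because $\beta\ge0$, $\bar\lambda\ge0$, and $b\ge a$. An equivalent view is that $\kappa$ is the ratio of two numbers, and both are moved toward a common positive value by the same convex weight, which can only decrease their ratio.

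I expect the only real obstacle to be the bookkeeping in the first step: checking that the ordering of the eigenvalues is preserved so that the extreme eigenvalues of $M_{t+1}$ really are the images of those of $M_t$. This uses $\beta\le 1$, and it is why Definition~\ref{def:spectral-contraction} restricts $\beta$ to $[0,1]$. A second point is fixing the convention $\kappa=\infty$ for singular $M_t$, so that the statement makes sense for every nonzero SPSD matrix.
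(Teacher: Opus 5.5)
Your proof is correct and follows the paper's argument. Like the paper, you work in the eigenbasis of $M_t$, dispose of the singular case via $\kappa(M_t)=+\infty$, and cross-multiply to reduce the ratio inequality to $\beta\bar\lambda(\lambda_{\max}-\lambda_{\min})\ge 0$. Your explicit check that $\lambda\mapsto(1-\beta)\lambda+\beta\bar\lambda_t$ is nondecreasing for $\beta\le 1$, so that extreme eigenvalues map to extreme eigenvalues, fills in a step the paper uses only implicitly.
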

\begin{proof}
Proof in \Cref{app:proof-lem:spectral-contraction-monotone}.
\end{proof}

\begin{lemma}[Kronecker preservation of monotone $\kappa$]\label{lem:kron-preserve-sc}
Let $A_t,A_{t+1}\in\R^{m\times m}$ and $B_t\in\R^{n\times n}$ be nonzero SPSD.
If $\kappa(A_{t+1})\le \kappa(A_t)$, then for $K_t=A_t\otimes B_t$ and $K_{t+1}=A_{t+1}\otimes B_t$,
\begin{equation}
  \kappa(K_{t+1}) \;\le\; \kappa(K_t).
\end{equation}
\end{lemma}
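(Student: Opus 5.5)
The plan is to use the standard fact that the eigenvalues of a Kronecker product are the pairwise products of the factors' eigenvalues, and then to observe that the condition number factorizes. Write the spectra of $A_t$, $A_{t+1}$, and $B_t$ as $\{\mu_i\}$, $\{\mu'_i\}$, and $\{\nu_j\}$, all nonnegative. The spectrum of $K_t=A_t\otimes B_t$ is then $\{\mu_i\nu_j\}$, and similarly for $K_{t+1}$. Since every eigenvalue is nonnegative, the largest product is the product of the largest factors, and the smallest product is the product of the smallest factors. This gives
\begin{equation*}
  \lambda_{\max}(A\otimes B)=\lambda_{\max}(A)\,\lambda_{\max}(B),\qquad
  \lambda_{\min}(A\otimes B)=\lambda_{\min}(A)\,\lambda_{\min}(B).
\end{equation*}
Nonnegativity is essential here: with mixed signs the extremal products would not pair up this way, but for SPSD matrices the monotonicity of multiplication on $[0,\infty)$ makes it immediate.

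From this we get the factorization
\begin{equation*}
  \kappa(A\otimes B)=\kappa(A)\,\kappa(B).
\end{equation*}
Applying it with the common factor $B_t$ yields $\kappa(K_{t+1})=\kappa(A_{t+1})\kappa(B_t)$ and $\kappa(K_t)=\kappa(A_t)\kappa(B_t)$. Since $\kappa(B_t)>0$, multiplying the hypothesis $\kappa(A_{t+1})\le\kappa(A_t)$ by it gives the claim.

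The main obstacle is the degenerate case of singular matrices, where $\lambda_{\min}=0$ and $\kappa$ is infinite. I will adopt the convention $\kappa=+\infty$ whenever $\lambda_{\min}=0$ for a nonzero SPSD matrix, which is consistent with the definition $\kappa(A)=\lambda_{\max}(A)/\lambda_{\min}(A)$ stated in Proposition~\ref{prop:kron-proxy-surrogate}. Nonzero-ness ensures $\lambda_{\max}>0$, so there are no $0/0$ ambiguities. I will then check the cases separately:
\begin{itemize}
  \item If $B_t$ is singular, both $K_t$ and $K_{t+1}$ are singular, so both condition numbers equal $+\infty$ and the inequality holds with equality.
  \item If $B_t$ is nonsingular and $A_{t+1}$ is singular, the hypothesis forces $\kappa(A_t)=+\infty$, so $A_t$ is singular as well, and again both sides are $+\infty$.
  \item If $B_t$ and $A_{t+1}$ are both nonsingular, the left side is finite and the inequality is clear, whether or not $A_t$ is singular.
\end{itemize}
With this convention, the factorization $\kappa(A\otimes B)=\kappa(A)\kappa(B)$ holds in $[1,\infty]$, using $c\cdot\infty=\infty$ for $c\ge1$. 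The monotonicity argument therefore covers all cases uniformly. Alternatively, if the paper intends $\lambda_{\min}$ to mean the smallest \emph{nonzero} eigenvalue, the same argument goes through verbatim, because nonzero products $\mu_i\nu_j$ arise exactly from pairs of nonzero factors.
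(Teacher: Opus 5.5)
Your proof is correct and follows essentially the same route as the paper: the Kronecker spectrum gives $\kappa(A\otimes B)=\kappa(A)\,\kappa(B)$, and the singular cases are handled by the convention $\kappa=+\infty$. You split cases on $B_t$ and $A_{t+1}$, whereas the paper splits on $A_t$ and $B_t$; your version makes explicit a step the paper leaves tacit, namely that once $A_t\succ 0$ the hypothesis forces $A_{t+1}$ to be nonsingular.
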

\begin{proof}
Proof in \Cref{app:proof-lem:kron-preserve-sc}.
\end{proof}

\begin{proposition}[Spectral contraction on $\Aexp_{\ell-1}$ increases the lower bound]\label{prop:feature-spectral-improves-bound}
			If $\Aexp_{\ell-1,t+1}$ is obtained from $\Aexp_{\ell-1,t}$ by a spectrally contractive update (Definition~\ref{def:spectral-contraction}), with $\Gexp_{\ell}$ fixed, then
				\begin{equation}
				  \begin{aligned}
			    \mathrm{LB}^{\mathrm{GN},\mathrm{exp}}_{\ell,t+1}
			    &\;\ge\;
			    \mathrm{LB}^{\mathrm{GN},\mathrm{exp}}_{\ell,t}.
			  \end{aligned}
			\end{equation}
\end{proposition}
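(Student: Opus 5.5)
The plan is to chain the two preceding lemmas and then read the result off the formula for the lower bound in \Cref{prop:kron-proxy-surrogate}.

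First, we apply \Cref{lem:spectral-contraction-monotone} to $M_t=\Aexp_{\ell-1,t}$. The hypothesis says that $\Aexp_{\ell-1,t+1}$ is obtained from it by a spectral contraction, so
\begin{equation}
\kappa(\Aexp_{\ell-1,t+1})\le\kappa(\Aexp_{\ell-1,t}).
\end{equation}
Because $\Gexp_{\ell}$ is held fixed, we can then invoke \Cref{lem:kron-preserve-sc} with $A_t=\Aexp_{\ell-1,t}$, $A_{t+1}=\Aexp_{\ell-1,t+1}$ and $B_t=\Gexp_{\ell}$. Both Grams are nonzero SPSD by construction as $(1/N)\Phi^\top\Phi$ and $(1/N)\Psi^\top\Psi$, and a spectral contraction of a nonzero SPSD matrix stays nonzero SPSD, since its trace is preserved and its eigenvalues remain convex combinations of nonnegative numbers. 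The lemma gives
\begin{equation}
\kappa(\Htilde^{\mathrm{GN},\mathrm{exp}}_{\ell,t+1})\le\kappa(\Htilde^{\mathrm{GN},\mathrm{exp}}_{\ell,t}).
\end{equation}
For intuition, this step is just $\kappa(A\otimes B)=\kappa(A)\kappa(B)$, because the eigenvalues of a Kronecker product are the products $\lambda_i(A)\mu_j(B)$.

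Finally, $\mathrm{LB}^{\mathrm{GN},\mathrm{exp}}_{\ell}=k_\ell/\kappa(\Htilde^{\mathrm{GN},\mathrm{exp}}_{\ell})$, where $k_\ell=\dim(\Hexp_\ell)$. This dimension depends only on the architecture and is unchanged by the update. The map $\kappa\mapsto k_\ell/\kappa$ is nonincreasing on $(0,\infty]$, so the inequality on condition numbers flips into the claimed inequality $\mathrm{LB}_{\ell,t+1}\ge\mathrm{LB}_{\ell,t}$.

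The main obstacle is the degenerate case where some eigenvalue is zero, so that $\kappa=\infty$ under the definition $\lambda_{\max}/\lambda_{\min}$. Here I would adopt the convention $k_\ell/\infty=0$. If $\kappa(\Htilde_t)=\infty$, then $\mathrm{LB}_t=0$ and the claim is trivial. Otherwise $\Aexp_{\ell-1,t}$ and $\Gexp_\ell$ are positive definite. For $\beta\in[0,1]$, the contracted eigenvalues $(1-\beta)\lambda_i+\beta\bar\lambda$ then remain strictly positive, so every condition number in the chain is finite and the monotone-inverse step is legitimate.
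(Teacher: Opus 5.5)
Your proposal is correct and follows the paper's proof: Lemma~\ref{lem:spectral-contraction-monotone} gives $\kappa(\Aexp_{\ell-1,t+1})\le\kappa(\Aexp_{\ell-1,t})$, Lemma~\ref{lem:kron-preserve-sc} with $B_t=\Gexp_{\ell}$ carries this over to the Kronecker proxy, and applying $\kappa\mapsto k_\ell/\kappa$ reverses the inequality. Your separate treatment of the singular case, and your check that $\Aexp_{\ell-1,t+1}$ stays positive definite, supply a detail that the paper's Kronecker lemma uses without stating.
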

\begin{proof}
Proof in \Cref{app:proof-prop:feature-spectral-improves-bound}.
\end{proof}

\noindent
In summary, we derive an explicit lower bound on $r_e(\Hexp_{\ell})$ via a Kronecker proxy and identify an operator that monotonically improves this lower bound.
Via Proposition~\ref{prop:block-mixture-entk}, improving the expert-block lower bounds provides a tractable surrogate objective for increasing spectral plasticity $r_e(\Kmat)$.
The proxy depends only on the low-dimensional Gram factors $\Aexp_{\ell-1}$ and $\Gexp_{\ell}$, avoiding explicit construction and manipulation of the full expert-layer block $\Hexp_{\ell}$ and thereby substantially reducing computation and memory.

\subsection{Algorithm}
\label{sec:method}

\noindent
In this section, we instantiate the spectral contraction operator.

\noindent\textbf{Objective.}\spacefactor=1000\space\space%
Motivated by the tractable proxy in \Cref{sec:spectral-plasticity-lb}, we act on the feature factor to improve spectral plasticity.
We regularize only the MoE \emph{actor} to isolate effects on policy plasticity, and target its last hidden expert layer: empirically, deeper layers tend to exhibit the lowest effective-rank embeddings~\citep{huh2023lowrank}.
Denote the corresponding weighted expert feature Gram by $\Aexp_{\mathrm{last}}\in\R^{m\times m}$, where $m{=}\sum_{e=1}^{E} d^{\mathrm{exp}}_{e,\mathrm{last}}$ is the concatenated feature dimension.
SPHERE instantiates the spectral contraction desideratum (Definition~\ref{def:spectral-contraction}) with a quadratic penalty on the \emph{anisotropic} component of $\Aexp_{\mathrm{last}}$.
\begin{equation}
\begin{aligned}
		    \Lcal_{\mathrm{SPHERE}}\big(\Aexp_{\mathrm{last}}\big)
	    &= \big\|\, \Aexp_{\mathrm{last}}-\tfrac{\operatorname{Tr}(\Aexp_{\mathrm{last}})}{m}\Ibf_{m}\,\big\|_F^2 \\
	    &= \big\|\Aexp_{\mathrm{last}}\big\|_F^2
	       \,{-}\,\tfrac{\operatorname{Tr}(\Aexp_{\mathrm{last}})^2}{m}.
\end{aligned}
	    \label{eq:sphere-penalty}
\end{equation}
See \Cref{prop:frob-shift-identity} for the detailed derivation of the second equality.

\begin{proposition}[SPHERE is spectrally contractive]\label{prop:sphere-spectrally-contractive}
The SPHERE penalty $\Lcal_{\mathrm{SPHERE}}(\cdot)$ (Eq.~\eqref{eq:sphere-penalty}) is spectrally contractive (Definition~\ref{def:spectral-contraction}) with $\eta_{\max}=\tfrac{1}{2}$.
\end{proposition}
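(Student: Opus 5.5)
The plan is to compute the gradient of $\Lcal_{\mathrm{SPHERE}}$ with respect to the matrix argument $M$, treated as a free symmetric matrix. The definition of spectral contraction speaks of ``the update induced by a gradient step of size $\eta$'' on the SPSD matrix itself. Then I would show that the resulting step $M_{t+1}=M_t-\eta\nabla\Lcal_{\mathrm{SPHERE}}(M_t)$ has exactly the form required by Definition~\ref{def:spectral-contraction}.

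Using the second form in Eq.~\eqref{eq:sphere-penalty}, we have $\nabla_M\|M\|_F^2=2M$ and $\nabla_M \operatorname{Tr}(M)^2/m = 2\operatorname{Tr}(M)\Ibf_m/m$. Hence
\[
\nabla_M \Lcal_{\mathrm{SPHERE}}(M)=2\big(M-\bar\lambda\,\Ibf_m\big),\qquad \bar\lambda=\operatorname{Tr}(M)/m.
\]
Equivalently, one can differentiate the first form $\|M-\bar\lambda\Ibf_m\|_F^2$ and note that the term coming from differentiating $\bar\lambda$ vanishes, because $\operatorname{Tr}(M-\bar\lambda\Ibf_m)=0$. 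The gradient step then gives
\[
M_{t+1}=M_t-2\eta(M_t-\bar\lambda_t\Ibf_m)=(1-2\eta)M_t+2\eta\,\bar\lambda_t\Ibf_m .
\]

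Next I would diagonalize. Write $M_t=U\operatorname{diag}(\lambda_{i,t})U^\top$ and use $\Ibf_m=UU^\top$. This gives
\[
M_{t+1}=U\operatorname{diag}\big((1-\beta)\lambda_{i,t}+\beta\bar\lambda_t\big)U^\top,\qquad \beta=2\eta .
\]
The condition $\beta\in[0,1]$ is exactly $0\le\eta\le\tfrac12$, which gives $\eta_{\max}=\tfrac12$.

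For completeness I would also check that $M_{t+1}$ stays nonzero SPSD, so the update remains in the class where the definition applies. Each new eigenvalue is a convex combination of $\lambda_{i,t}\ge0$ and $\bar\lambda_t\ge0$, so it is nonnegative. The trace is preserved, so $\operatorname{Tr}(M_{t+1})=\operatorname{Tr}(M_t)>0$ because $M_t$ is a nonzero SPSD matrix. Each eigenvalue moves monotonically toward $\bar\lambda_t$, with $|\lambda_{i,t+1}-\bar\lambda_t|=(1-\beta)|\lambda_{i,t}-\bar\lambda_t|$, which is the ``contraction toward the mean'' the definition describes.

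The only real subtlety is interpretive rather than computational. The definition takes the gradient step in the matrix argument, not in the network parameters that generate $\Aexp_{\mathrm{last}}$, so the proof should state this explicitly. It should also record that the Frobenius gradient, taken with respect to symmetric or unconstrained $M$, is itself symmetric, so the update needs no symmetrization. Beyond that, I expect the argument to be a direct computation.
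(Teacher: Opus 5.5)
Your proposal is correct and follows essentially the same route as the paper's proof. Lemma~\ref{lem:sphere-spectral-gradient} computes $\nabla_M\Lcal_{\mathrm{SPHERE}}(M)=2(M-\bar\lambda\,\Ibf_m)$, writes the step as $(1-2\eta)M_t+2\eta\bar\lambda_t\Ibf_m$, diagonalizes it, and checks trace and SPSD preservation, after which Corollary~\ref{cor:sphere-monotone-re-cond} sets $\beta=2\eta\in[0,1]$ and takes the gradient step on the matrix argument itself, exactly as you do.
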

\begin{proof}
Proof in \Cref{app:proof-prop:sphere-spectrally-contractive}.
\end{proof}

\begin{theorem}[Optimizing Eq.~\eqref{eq:sphere-penalty} increases $r_e(\Kmat)$]\label{thm:sphere-monotone-lb-reK}
If SPHERE is applied to $\Aexp_{\mathrm{last},t}$ with $0\le \eta\le \tfrac{1}{2}$, then it increases $r_e(\Kmat)$.
\end{theorem}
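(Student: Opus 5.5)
The argument chains the results already established, reading ``increases $r_e(\Kmat)$'' in the surrogate sense the section builds toward. Concretely, the claim is that the tractable lower bound on $r_e(\Kmat)$, obtained by combining Proposition~\ref{prop:block-mixture-entk} with Proposition~\ref{prop:kron-proxy-surrogate}, is non-decreasing. The plan is to verify each link of the chain at the last expert layer and then transport the improvement up through the mixture formula.

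\textbf{Step 1: spectral contraction of the feature Gram.} By Proposition~\ref{prop:sphere-spectrally-contractive}, a gradient step of size $0\le\eta\le\tfrac12$ on $\Lcal_{\mathrm{SPHERE}}$ produces $\Aexp_{\mathrm{last},t+1}$ from $\Aexp_{\mathrm{last},t}$ by a spectral contraction. The gradient of $\Lcal_{\mathrm{SPHERE}}$ with respect to the matrix is $2(\Aexp-\tfrac{\operatorname{Tr}\Aexp}{m}\Ibf_m)$. This gradient shares the eigenbasis of $\Aexp$ and preserves the trace, so one step gives $\lambda_i\mapsto(1-2\eta)\lambda_i+2\eta\bar\lambda$, with $\beta=2\eta\in[0,1]$.

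\textbf{Step 2: monotone condition number and lower bound.} Lemma~\ref{lem:spectral-contraction-monotone} gives $\kappa(\Aexp_{\mathrm{last},t+1})\le\kappa(\Aexp_{\mathrm{last},t})$. Holding $\Gexp_{\mathrm{last}}$ fixed, Lemma~\ref{lem:kron-preserve-sc} transfers this to the Kronecker proxy $\Htilde^{\mathrm{GN},\mathrm{exp}}_{\mathrm{last}}$. Proposition~\ref{prop:feature-spectral-improves-bound} then gives $\mathrm{LB}^{\mathrm{GN},\mathrm{exp}}_{\mathrm{last},t+1}\ge\mathrm{LB}^{\mathrm{GN},\mathrm{exp}}_{\mathrm{last},t}$. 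This bound in turn lower-bounds $r_e(\Hexp_{\mathrm{last}})$ by Proposition~\ref{prop:kron-proxy-surrogate}.

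\textbf{Step 3: propagation through the mixture formula.} Substituting into Proposition~\ref{prop:block-mixture-entk}, $\log r_e(\Kmat)$ is approximated by $H(\alpha)+\sum_b\alpha_b\log r_e(M_b)$. This expression is increasing in each $\log r_e(M_b)$ with nonnegative weight $\alpha_b$. Replacing the last expert block's $r_e$ by its lower bound therefore yields a lower bound on $r_e(\Kmat)$ that is non-decreasing in $\mathrm{LB}^{\mathrm{GN},\mathrm{exp}}_{\mathrm{last}}$. As in the text, the gate blocks and the other expert blocks are treated as fixed.

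\textbf{Main obstacle.} The weights $\alpha_b$ and $H(\alpha)$ depend on the nuclear norm $s^{\mathrm{exp}}_{\mathrm{last}}=\|\Hexp_{\mathrm{last}}\|_\ast$, which the update could change. I would handle this first by noting that spectral contraction preserves the trace of $\Aexp_{\mathrm{last}}$. Under the Kronecker proxy, $\operatorname{Tr}(A\otimes G)=\operatorname{Tr}A\,\operatorname{Tr}G$, so $s^{\mathrm{exp}}_{\mathrm{last}}$, and hence every $\alpha_b$ and $H(\alpha)$, are unchanged. Monotonicity then follows termwise.

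The remaining caveat should be stated explicitly. The conclusion holds at the level of the surrogate, using Assumption~\ref{ass:gn-block}, the Kronecker proxy, and fixed $\Gexp$ and other blocks. Strictly, what increases is the lower bound on $r_e(\Kmat)$, and it increases strictly unless $\kappa=1$ already.
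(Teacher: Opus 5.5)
Your chain (SPHERE is a spectral contraction with $\beta=2\eta$, so $\kappa(\Aexp_{\mathrm{last}})$ is non-increasing, so the Kronecker-proxy bound is non-decreasing, so the mixture lower bound is non-decreasing) is the paper's own route. The paper proves the theorem as an immediate corollary of Theorem~\ref{thm:spectral-contraction-increases-reK-lb}. That result plugs $\mathrm{LB}^{A}_{\ell}=k_\ell/(\kappa(\Aexp_{\ell-1})\,\kappa(\Gexp_{\ell}))$ into the mixture of Proposition~\ref{prop:block-mixture-entk}, and it reads ``increases $r_e(\Kmat)$'' as monotonicity of that surrogate lower bound, exactly as you do. The one place you go beyond the paper is your ``main obstacle'' paragraph, and that step is false.

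Write $A=\Aexp_{\ell-1}$ and $G=\Gexp_{\ell}$ at the regularized layer. You claim $s^{\mathrm{exp}}_{\ell}$ is invariant because $\operatorname{Tr}(A\otimes G)=\operatorname{Tr}A\,\operatorname{Tr}G$ and the contraction preserves $\operatorname{Tr}A$. But $s^{\mathrm{exp}}_{\ell}=\|\Hexp_{\ell}\|_\ast$ is the nuclear norm of the expert block itself. Under the within-layer independence approximation that block is the block Khatri--Rao product $A * G$, a $k_\ell\times k_\ell$ principal submatrix of the Tracy--Singh envelope, not the larger Kronecker proxy $A\otimes G$.

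Its trace is $\sum_e\operatorname{Tr}(A_{ee})\operatorname{Tr}(G_{ee})$. The update $A\mapsto(1-\beta)A+\beta\bar\lambda\Ibf_m$ sends each $\operatorname{Tr}(A_{ee})$ to $(1-\beta)\operatorname{Tr}(A_{ee})+\beta\bar\lambda d_e$. So the total trace is preserved but redistributed across expert blocks, and $s^{\mathrm{exp}}_{\ell}$ shifts by $\beta\sum_e\operatorname{Tr}(G_{ee})\,(\bar\lambda d_e-\operatorname{Tr}A_{ee})$. For example, take two one-dimensional experts with $A=\mathrm{diag}(3,1)$, $G=\mathrm{diag}(1,2)$ and $\beta=1$ (i.e., $\eta=\tfrac12$); the trace goes from $5$ to $6$. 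Hence $\alpha$ and $H(\alpha)$ move, and your termwise monotonicity argument does not go through.

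The paper does not try to prove invariance. Theorem~\ref{thm:spectral-contraction-increases-reK-lb} simply takes the mixture weights, the gate terms, and $\kappa(\Gexp_\ell)$ as fixed by hypothesis. You should drop the invariance argument and list frozen $\alpha$ among your explicit surrogate assumptions.

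A minor point: strict increase also requires $\eta>0$, not just $\kappa>1$, since $\eta=0$ leaves everything unchanged.
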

\begin{proof}
Proof in \Cref{app:proof-thm:sphere-monotone-lb-reK}.
\end{proof}

\noindent\textbf{Total Loss.}\spacefactor=1000\space\space%
Let $\Lcal_{\mathrm{PPO}}$ denote the PPO objective.
At each gradient update, we compute $\Aexp_{\mathrm{last}}$ and add the SPHERE penalty $\Lcal_{\mathrm{SPHERE}}(\Aexp_{\mathrm{last}})$ (Eq.~\eqref{eq:sphere-penalty}) to the actor.
\begin{equation}
    \Lcal\;=\;\Lcal_{\mathrm{PPO}}\;+\; \lambda^{\mathrm{e}}\,\Lcal_{\mathrm{SPHERE}}\big(\Aexp_{\mathrm{last}}\big)\,.
    \label{eq:total-loss}
\end{equation}
$\lambda^{\mathrm{e}}$ is a regularization coefficient. See \Cref{alg:entk_moe} for a summary of the training procedure.

\section{Experiments}
\label{sec:experiments}

We first provide evidence of the loss of spectral plasticity in both dense PPO and MoE policies.
We then evaluate SPHERE using task success and $r_e(\Kmat)$, ablate key design choices, and provide a qualitative analysis.

\subsection{Experimental Setup}
\label{subsec:exp-setup}
 \noindent\textbf{Training Protocols.}\spacefactor=1000\space\space%
 We consider two training protocols over a shared task list:
 \emph{RL}, where each task is trained independently from scratch, and
 \emph{CRL}, where a single agent is trained sequentially through the list.
 In CRL, training on each new task resumes from the final parameters of the previous task.

 \noindent\textbf{Tasks and Training.}\spacefactor=1000\space\space%
 We evaluate on tasks from two robotics benchmarks: HumanoidBench \citep{sferrazza2024humanoidbench} and MetaWorld \citep{yu2019metaworld}.
 For HumanoidBench, we use the five H1 tasks, as they share the same observation and action spaces.
 For MetaWorld, we adopt the CW10 subset from Continual World \citep{wolczyk2021continualworld}, a set of $10$ manipulation tasks over shared observation and action spaces.
 Following benchmark protocols \citep{wolczyk2021continualworld,sferrazza2024humanoidbench}, we train for $T_{\mathrm{MW}}=10^6$ environment steps per MetaWorld task and $T_{\mathrm{HB}}=10^7$ steps per HumanoidBench task.

 \noindent\textbf{Metrics.}\spacefactor=1000\space\space%
 We report the mean and standard deviation of final success rates over five random seeds.
 To quantify spectral plasticity, we track $r_e(\Kmat)$; higher is better.\footnote{We evaluate $r_e(\Kmat)$ on a fixed held-out batch of states from rollouts of a pre-trained Stair policy (excluded from the CRL sequence) and reuse it across evaluation checkpoints and methods. This targets plasticity on \emph{new} experience rather than on previously seen training states.}

\noindent\textbf{Baselines.}\spacefactor=1000\space\space%
We compare dense PPO, PPO(10x), with an actor that has $10\times$ the PPO actor parameter count, Top-$K$ MoE actors, Dense-MoE actors, and DS-MoE~\citep{dai2024deepseekmoe}.
For Top-$K$ MoE, we use $E=10$ experts and route $K=2$ experts per input.
As mitigation baselines on the same Top-$K$ MoE actor as SPHERE, we evaluate LayerNorm (LN), which has been shown empirically to mitigate plasticity loss in DRL \citep{juliani2024a,ma2024revisiting}, C-CHAIN~\citep{tang2025churn}, which reduces churn to preserve plasticity, CBP~\citep{dohare2024loss}, which continuously reinitializes a subset of neurons, and PW~\citep{chung2024parseval}, which regularizes weight matrices toward approximate orthogonality.
All methods share the same PPO training pipeline.

\subsection{Results}
\label{subsec:results}

\paragraph{Evidence of the Loss of Spectral Plasticity in DRL.}
\label{subsec:existence-plasticity-loss}
We first show that the loss of spectral plasticity arises broadly in DRL across policy architectures.
Since CRL learns each new task by resuming from the previous task checkpoint, a direct probe of plasticity is to compare it against training from a random initialization. We therefore compare RL with CRL for PPO, Top-$K$ MoE, Dense-MoE, and DS-MoE, and report both final success rates and $r_e(\Kmat)$.

\begin{figure}[t]
  \centering
  \includegraphics[width=\linewidth]{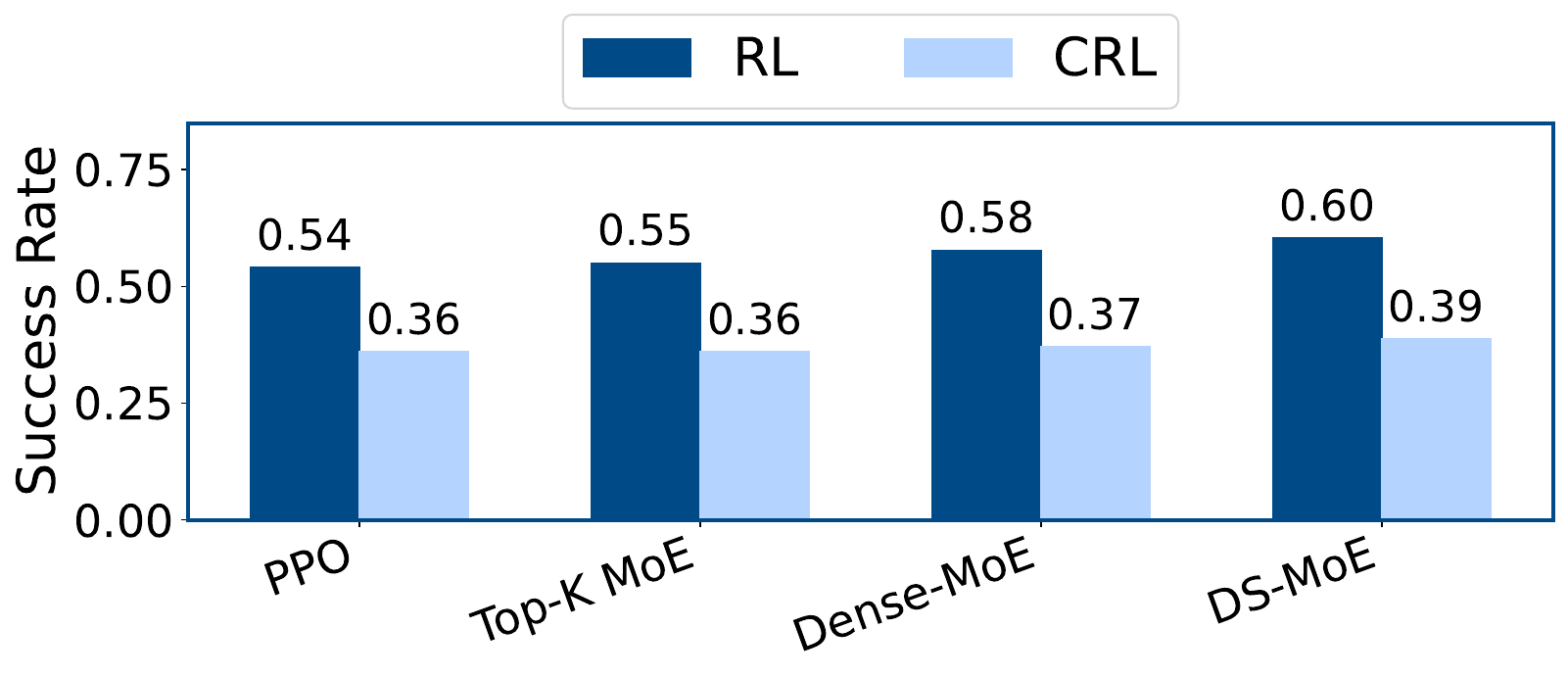}
  \caption{\textbf{All methods show degraded performance under CRL relative to RL.} We report average final success rates on HumanoidBench under RL and CRL for PPO, Top-$K$ MoE, Dense-MoE, and DS-MoE.}
  \label{fig:humanoidbench-archs-rl-vs-crl-relu}
\end{figure}

\begin{figure}[t]
  \centering
  \includegraphics[width=\linewidth]{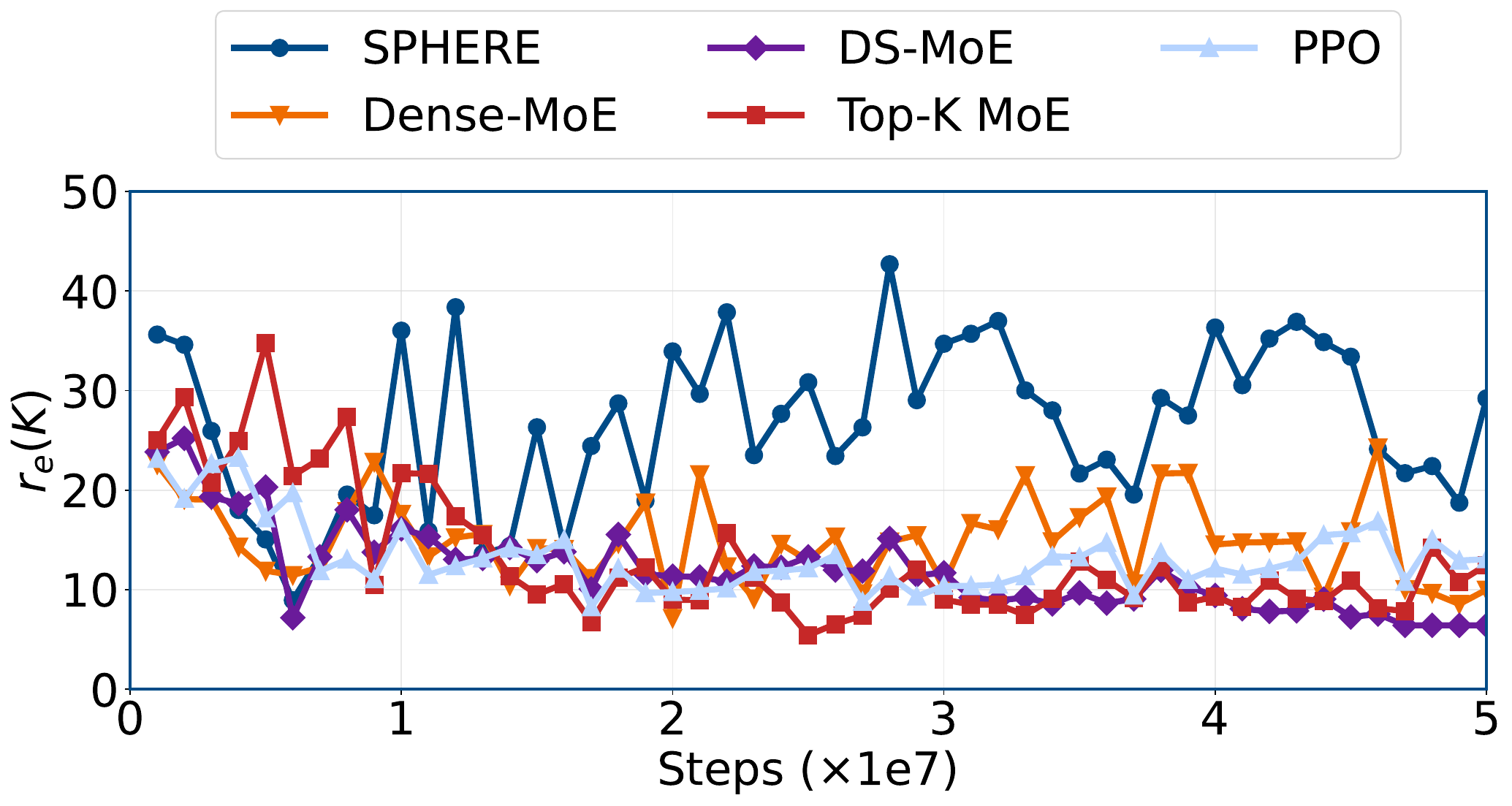}
  \caption{\textbf{SPHERE maintains higher $r_e(\Kmat)$ while baselines decay.} We plot $r_e(\Kmat)$ over HumanoidBench CRL training for PPO, Top-$K$ MoE, Dense-MoE, DS-MoE, and SPHERE. The baseline decay indicates the loss of spectral plasticity.}
  \label{fig:humanoidbench-entk-erank-archs}
\end{figure}

As shown in \Cref{fig:humanoidbench-archs-rl-vs-crl-relu}, simply switching from RL to CRL reduces success for both dense PPO and MoE policies.
Consistent with this, \Cref{fig:humanoidbench-entk-erank-archs} shows that MoE variants also exhibit rank collapse of $\Kmat$, indicating the loss of spectral plasticity across MoE types.
Together, these results show that the loss of spectral plasticity is a common phenomenon across policy architectures.

\paragraph{SPHERE Mitigates the Loss of Spectral Plasticity.}
\label{subsec:pager-vs-baselines}
We next evaluate whether SPHERE mitigates the loss of spectral plasticity, using both final success rates and $r_e(\Kmat)$.

\Cref{fig:metaworld-methods-success-bars,fig:humanoidbench-methods-success-bars} summarize the average final success rates under RL and CRL on MetaWorld and HumanoidBench.

\noindent\emph{MetaWorld.}\spacefactor=1000\space\space%
On MetaWorld, SPHERE improves performance under both RL and CRL relative to the Top-$K$ MoE baseline.
Under CRL, SPHERE improves average success by +133\% relative to Top-$K$ MoE and substantially narrows the RL--CRL gap by 52\%.
The gain is larger under CRL, where sustained adaptation across tasks is required and preserving spectral plasticity helps maintain learning on later tasks.

\noindent\emph{HumanoidBench.}\spacefactor=1000\space\space%
On HumanoidBench, relative to Top-$K$ MoE, SPHERE improves average success by 36\% under RL and 50\% under CRL.
The large gain under RL shows that spectral plasticity can decay even within a single task. PPO continually updates from newly collected on-policy rollouts, so the effective training distribution drifts over time. HumanoidBench uses $10\times$ more environment steps per task than MetaWorld, which makes this within-task decay more pronounced and amplifies the benefit of SPHERE under RL.
Consistent with this, \Cref{fig:humanoidbench-entk-erank-archs} shows that SPHERE maintains higher $r_e(\Kmat)$ during CRL training.

\begin{figure}[t]
  \centering
  \includegraphics[width=\linewidth]{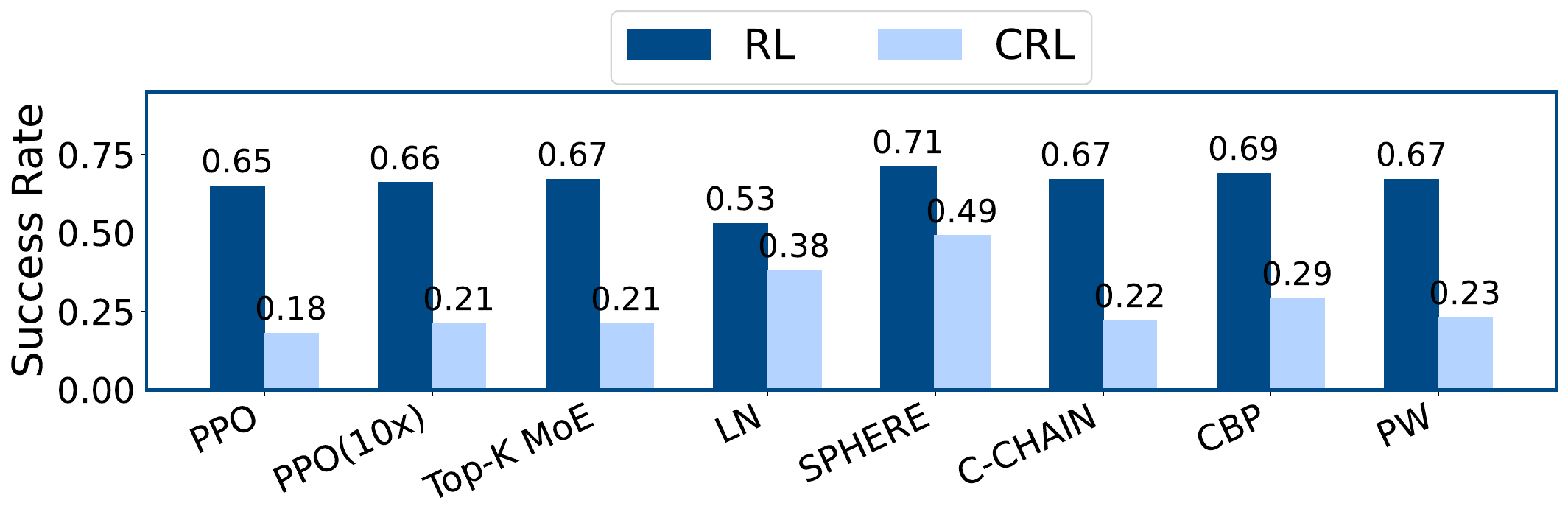}
\caption{\textbf{Under CRL, SPHERE improves average success by 133\% over Top-$K$ MoE and reduces the RL--CRL gap by 52\%.} We report average final success rates on MetaWorld across methods under RL and CRL. SPHERE outperforms other mitigation baselines.}
  \label{fig:metaworld-methods-success-bars}
\end{figure}

\begin{figure}[t]
  \centering
  \includegraphics[width=\linewidth]{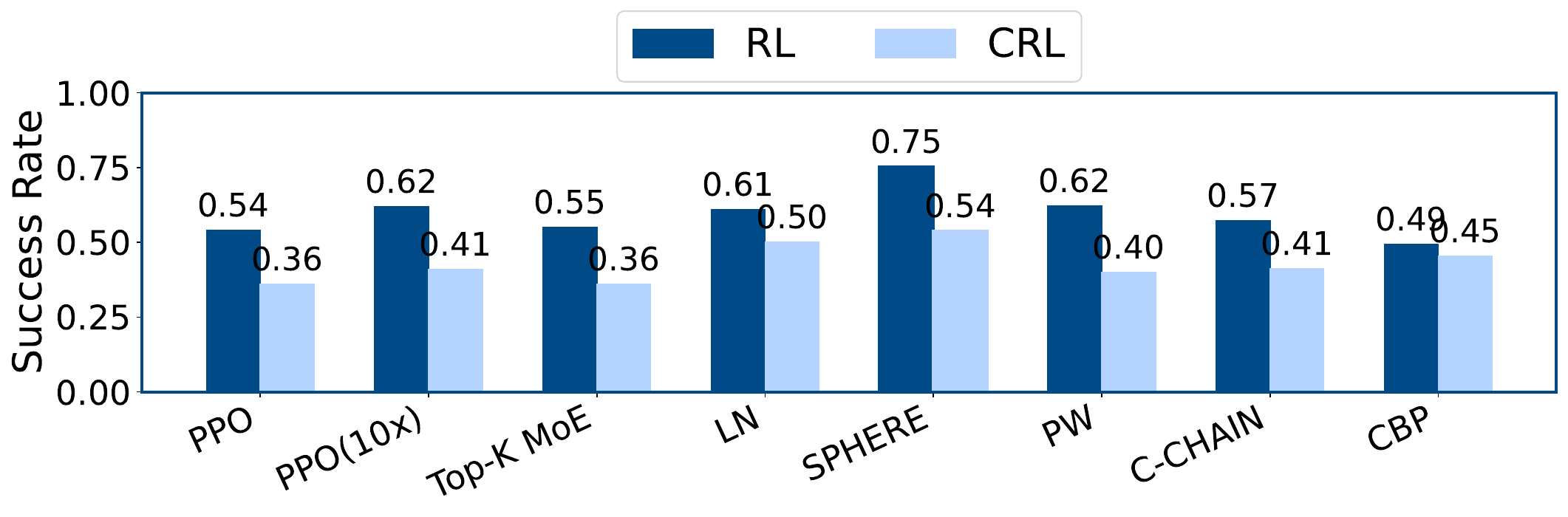}
\caption{\textbf{Relative to Top-$K$ MoE, SPHERE improves average success by 36\% under RL and 50\% under CRL.} We report average final success rates on HumanoidBench across methods under RL and CRL. SPHERE outperforms other mitigation baselines.}
  \label{fig:humanoidbench-methods-success-bars}
\end{figure}

\paragraph{Key Design Choices for SPHERE.}
\label{subsec:ablations}
We ablate key design choices of SPHERE on HumanoidBench under CRL with five seeds, as summarized in \Cref{tab:humanoidbench-sphere-ablation}.

\noindent\emph{Layerwise Placement.}\spacefactor=1000\space\space%
\label{para:layerwise-placement}
SPHERE regularizes the last hidden expert layer of the actor.
Applying the same penalty to all hidden expert layers underperforms regularizing only the last hidden expert layer, suggesting that regularizing earlier layers can overconstrain representation learning.

\begin{figure*}[t]
  \centering
  \includegraphics[width=\textwidth]{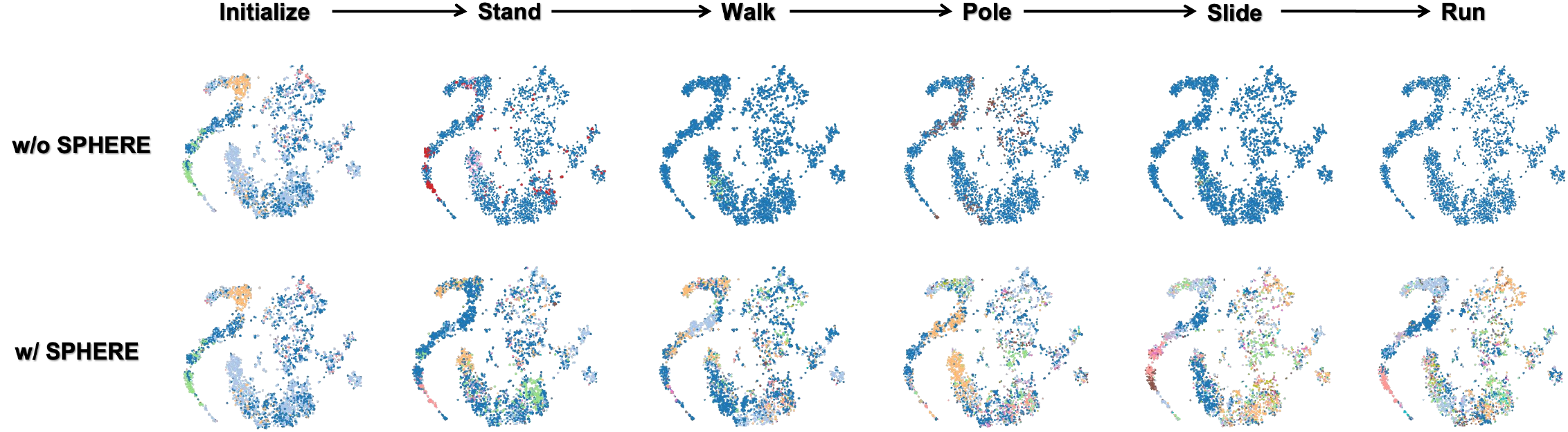}
  \caption{\textbf{SPHERE avoids spectral collapse to a single component.} We visualize held-out states with t-SNE and color each state by the singular direction of the weighted expert feature matrix with the largest absolute projection.
  Columns show snapshots along the HumanoidBench CRL sequence.
  Top: w/o SPHERE. Bottom: w/ SPHERE.}
  \label{fig:case-study}
\end{figure*}

\begin{table}[t]
	  \centering
	  \small
	  \caption{\textbf{Layerwise placement and the $\Aexp_{\mathrm{last}}$ instantiation of \Cref{eq:sphere-penalty}, as well as cross-expert concatenation (\Cref{eq:gating-weighted-feature}), are all useful.} On HumanoidBench under CRL, we compare SPHERE to variants that (i) remove the regularizer (w/o SPHERE), (ii) apply \Cref{eq:sphere-penalty} to $\Aexp_{\ell}$ at all hidden expert layers $\ell$, (iii) regularize each expert independently and sum the losses (no cross-expert concatenation), and (iv) regularize $\Gexp_{\mathrm{last}}$ instead of $\Aexp_{\mathrm{last}}$. None of these variants matches SPHERE consistently.}
	  \label{tab:humanoidbench-sphere-ablation}
	  \begin{tabular}{lc}
	    \toprule
	    Variant & Average success \\
	    \midrule
	    w/o SPHERE & $0.36\pm0.08$ \\
	    w/ SPHERE & $\mathbf{0.54\pm0.12}$ \\
	    All hidden expert layers & $0.42\pm0.07$ \\
	    Per-expert loss sum & $0.40\pm0.08$ \\
	    $\Gexp_{\mathrm{last}}$ Regularization & $0.43\pm0.09$ \\
	    \bottomrule
  \end{tabular}
\end{table}

\noindent\emph{Cross-Expert Concatenation.}\spacefactor=1000\space\space%
SPHERE regularizes a Gram matrix formed from the routing-weighted \emph{concatenation} of expert features.
This concatenated Gram includes off-diagonal blocks that capture cross-expert correlations, so Eq.~\eqref{eq:sphere-penalty} directly suppresses cross-expert correlation and encourages a shared, decorrelated representation geometry.
To test whether these cross-expert terms matter, we keep the same layerwise placement as SPHERE but regularize each expert independently and sum the losses, which removes cross-expert correlation penalties.
This \emph{Per-expert loss sum} variant yields $0.40\pm0.08$, only slightly above the baseline and below SPHERE, suggesting that suppressing cross-expert correlations via concatenation is important for the gains.

\noindent\emph{Gradient-Gram Regularization.}\spacefactor=1000\space\space%
By the Kronecker proxy in \Cref{prop:kron-proxy-surrogate}, the surrogate for spectral plasticity depends on a feature Gram and a gradient Gram.
As a complementary ablation, we apply the SPHERE penalty to $\Gexp_{\mathrm{last}}$ while leaving $\Aexp_{\mathrm{last}}$ unregularized.
This improves over the baseline but remains below SPHERE, suggesting that the gains primarily come from shaping $\Aexp_{\mathrm{last}}$.
Since $\Gexp_{\mathrm{last}}$ requires backpropagation to extract, we adopt feature-Gram regularization on $\Aexp_{\mathrm{last}}$ as the main SPHERE instantiation.

\paragraph{Qualitative Analysis.}
\label{subsec:case-study}
\Cref{fig:case-study} shows the evolution of effective-rank collapse in the weighted expert representations and the associated loss of spectral plasticity over the HumanoidBench CRL sequence.
Using a fixed held-out state batch from \Cref{subsec:exp-setup}, we visualize last-layer weighted expert features with t-SNE at initialization and after each task.
We compute the singular directions of the weighted expert feature matrix on the held-out batch, and color each state by the direction that best aligns with its feature vector, i.e., the one with the largest absolute projection.

	In the baseline, colors quickly become nearly uniform, indicating that most states concentrate on a single dominant feature direction.
		Consistent with our proxy analysis in \Cref{prop:kron-proxy-surrogate}, this loss of feature diversity manifests as a collapsed weighted feature Gram $\Aexp_{\mathrm{last}}$, which shrinks the expert-block surrogate and, via \Cref{prop:block-mixture-entk}, contributes to a lower $r_e(\Kmat)$.
	With SPHERE, multiple components remain active throughout the sequence, consistent with preserved spectral plasticity.

\noindent
As a quantitative complement, \Cref{fig:reA-reK-batchwise} shows that $r_e(\Aexp_{\mathrm{last}})$ positively co-varies with $r_e(\Kmat)$ across rollout batches at a representative checkpoint. The Pearson correlation is $r{=}0.846$, supporting expert-feature isotropy as a practical proxy signal for spectral plasticity.
\begin{figure}[t]
  \centering
  \includegraphics[width=0.62\linewidth]{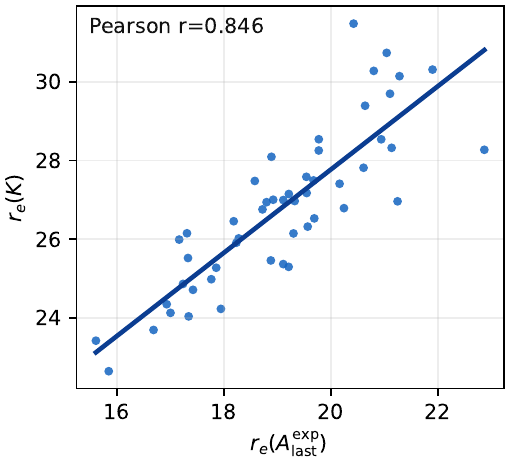}
  \caption{\textbf{Expert feature isotropy tracks spectral plasticity.} We visualize a scatter plot of $r_e(\Aexp_{\mathrm{last}})$ versus $r_e(\Kmat)$. The Pearson correlation is $r{=}0.846$. This supports using the SPHERE penalty on $\Aexp_{\mathrm{last}}$ as a practical proxy for spectral plasticity.}
  \label{fig:reA-reK-batchwise}
\end{figure}

\section{Conclusion}
In this paper, we formalize the loss of spectral plasticity in MoE policies for deep reinforcement learning and propose a regularizer to mitigate it.
Building on NTK theory, we formulate MoE plasticity loss as a decline in spectral plasticity. To make this quantity tractable, we further decompose it and derive a tractable surrogate objective.
We then instantiate it as SPHERE, a principled and practical regularizer applied to weighted expert features to counter the loss of spectral plasticity.
On MetaWorld and HumanoidBench, SPHERE improves average success by up to 133\% and 50\% over an unregularized MoE baseline, while maintaining higher spectral plasticity throughout training.

\noindent\textbf{Limitations.}\spacefactor=1000\space\space%
Our derivation relies on theoretical approximations. While we empirically validate these assumptions in the appendix, removing these approximations is a direction for future theoretical work.
We focus on MoE policies for continuous-control RL on MetaWorld and HumanoidBench; extending SPHERE to MoE architectures in large-scale pretrained foundation models remains future work.

\section*{Acknowledgements}
C.~Fang was supported by the National Science and Technology Major Project (No.~2022ZD0114902) and NSF China (No.~92470117).

\section*{Impact Statement}
This paper presents work whose goal is to advance the field of machine learning. There are many potential societal consequences of our work, none of which we feel must be specifically highlighted here.

\clearpage
\bibliographystyle{icml2026}
\bibliography{reference_header,references}

\appendix
\onecolumn
\makeatletter
\def\addcontentsline#1#2#3{%
  \begingroup
    \let\label\@gobble
    \ifx\@currentHref\@empty
      \phantomsection
    \fi
    \addtocontents{#1}{%
      \protect\contentsline{#2}{#3}{\thepage}{\@currentHref}\protected@file@percent
    }%
  \endgroup
}
\makeatother
\part*{Appendix}
\etocsetlocaltop.toc{part}
\section*{Appendix Roadmap}
\noindent\textbf{Implementation.}\spacefactor=1000\space\space%
\Cref{app:implementation-details} contains implementation details.
It includes the training procedure in Algorithm~\ref{alg:entk_moe}, minimal JAX code for the SPHERE penalty, adaptive regularization and hyperparameters, and implementation details for estimating $r_e(\Kmat)$.

\noindent\textbf{Core theory derivations.}\spacefactor=1000\space\space%
\Cref{app:mat-calc,app:topk-moe-grad-kfac,app:ts-kron-details,app:effrank-details,app:proj-I} provide supporting derivations for the main text.
They cover matrix-calculus preliminaries, Top-$K$ MoE gradients and Jacobian-Gram structure, the Tracy--Singh/Kronecker proxy construction, effective-rank identities, and SPHERE-as-spectral-contraction proofs.

\noindent\textbf{Interpretation and extensions.}\spacefactor=1000\space\space%
\Cref{app:mlp-degenerate-moe,app:rethink-moe-entk-spectrum,app:perturbation-bounds} provide interpretation and extensions.
They include the $E{=}1$ dense specialization, an eNTK spectral perspective on load balancing and specialization, and perturbation and stability bounds.

\noindent\textbf{Extra experiments.}\spacefactor=1000\space\space%
\Cref{app:extra-experiments} reports additional experimental results.
It includes per-task success tables, additional ablations and robustness analyses, runtime overhead at large expert counts, and supervised continual-learning experiments in computer vision and natural language processing, including Transformer MoE backbones.

\clearpage
\etocsetlocaltop.toc{part}
\etocsetnexttocdepth{subsection}
\etocsettocstyle{\section*{Appendix Contents}}{\clearpage}
\localtableofcontents
\clearpage
\section{Implementation Details}\label{app:implementation-details}
\subsection{Training Procedure}
\noindent
Algorithm~\ref{alg:entk_moe} summarizes PPO with an MoE actor and SPHERE regularization.
\begin{algorithm}[H]
\caption{PPO with Actor MoE SPHERE regularization}
\label{alg:entk_moe}
\begin{algorithmic}[1]
  \STATE Initialize actor $\pi_\theta$ with MoE layers and Top-$K$ gating, and critic $V_\phi$ with dense layers.
  \FOR{iteration $j=1$ to $J$}
    \STATE Collect trajectories with $\pi_\theta$ and compute advantages and PPO targets.
		    \FOR{each PPO minibatch $\Bcal$ of $N$ samples}
		      \STATE Run the MoE actor forward pass on $\Bcal$ and compute $\Aexp_{\mathrm{last}}$ at the last hidden expert layer.
		      \STATE Compute $\Lcal_{\mathrm{PPO}}(\theta,\phi;\Bcal)$ and the SPHERE loss $\Lcal_{\mathrm{SPHERE}}\big(\Aexp_{\mathrm{last}}\big)$ via Eq.~\eqref{eq:sphere-penalty}, then set the total loss $\Lcal$ via Eq.~\eqref{eq:total-loss}.
		      \STATE Update $\{\theta,\phi\}$ with a gradient step on $\Lcal$.
		    \ENDFOR
  \ENDFOR
\end{algorithmic}
\end{algorithm}
\FloatBarrier

\subsection{Minimal JAX Implementation of the SPHERE Penalty}
\label{app:sphere-code}
\noindent
The SPHERE penalty (Eq.~\eqref{eq:sphere-penalty}) depends only on the Frobenius norm and trace of the normalized Gram matrix induced by the weighted expert features (Eq.~\eqref{eq:gating-weighted-feature}).
It can be implemented in a few lines without SVD; to avoid forming a large Gram, compute it in feature space or sample space depending on which dimension is smaller.
\begin{verbatim}
# JAX implementation (phi: [T,D])
import jax.numpy as jnp

# Dimensions:
#   T: minibatch size (#samples/tokens)
#   E: number of experts
#   H: per-expert feature dimension
#   D: concatenated feature dimension (= E*H)
#
# Construct phi by concatenating gating-weighted expert features:
#   probs: [T,E], expert_features: [T,E,H]  ->  phi: [T, E*H]
weighted = probs[..., None] * expert_features
phi = weighted.reshape((weighted.shape[0], -1))

def sphere_loss(phi):
    T, D = phi.shape
    G = (phi.T @ phi)/T if D <= T else (phi @ phi.T)/T
    fro2 = jnp.sum(G * G); tr = jnp.trace(G)
    return fro2 - (tr * tr) / D
\end{verbatim}

\subsection{Adaptive Regularization Coefficient}
In Eq.~\eqref{eq:total-loss}, $\lambda^{\mathrm{e}}$ trades off adherence to the SPHERE feature constraint against PPO optimization.
We set it adaptively per minibatch. Unless stated, we use gradient-norm scaling,
\[
  \lambda^{\mathrm{e}}
  \;=\;
  \rho\cdot \operatorname{stopgrad}\!\Big(\frac{\|\nabla_{\theta}\Lcal_{\mathrm{actor}}\|_2}{\|\nabla_{\theta}\Lcal_{\mathrm{SPHERE}}\|_2+\varepsilon}\Big),
\]
where $\rho$ is a tunable ratio and $\varepsilon>0$ is a small constant.

	\subsection{Training Hyperparameters}
	\begin{table}[H]
	  \centering
	  \small
			  \caption{\textbf{Training hyperparameters for Top-$K$ MoE PPO with SPHERE.} Values summarize the default MetaWorld and HumanoidBench training setups.}
	  \label{tab:train-hyperparams}
	  \begin{tabular}{p{0.33\textwidth}p{0.30\textwidth}p{0.30\textwidth}}
	    \toprule
	    & MetaWorld & HumanoidBench \\
    \midrule
	    Env steps per task & $1{,}000{,}000$ & $10{,}000{,}000$ \\
	    Vectorized envs ($n_{\mathrm{env}}$) & 8 & 128 \\
		    Observation normalization & none & VecNormalize \\
		    PPO rollout steps ($n_{\mathrm{steps}}$) & 2048 & 16 \\
		    PPO batch size & 256 & 64 \\
		    PPO epochs per update & 10 & 10 \\
	    Discount $\gamma$ & 0.99 & 0.99 \\
		    GAE $\lambda$ & 0.95 & 0.95 \\
		    Clip range & 0.20 & 0.20 \\
		    Max grad norm & 0.50 & 0.50 \\
		    Entropy coefficient & 0.00 & 0.00 \\
		    Value coefficient & 0.50 & 0.50 \\
		    Learning rate & linear $3\mathrm{e}{-4}\rightarrow1\mathrm{e}{-4}$ & linear $3\mathrm{e}{-4}\rightarrow1\mathrm{e}{-4}$ over first half \\
		    Activation & ReLU & ReLU \\
		    MLP width & 256 & 256 \\
		    MLP depth & 3 & 3 \\
		    Number of experts $E$ & 10 & 10 \\
		    Top-$K$ & 2 & 2 \\
		    Gating softmax temperature $\tau$ & 1.0 & 1.0 \\
		    SPHERE ratio $\rho$ & $1\mathrm{e}{-1}$ & $1\mathrm{e}{-3}$ \\
		    \bottomrule
			  \end{tabular}
\end{table}

\subsection{Implementation Details for Computing $r_e(\Kmat)$}
\label{app:implementation-details-ntk}
\noindent\textbf{Output definition.}\spacefactor=1000\space\space%
To measure spectral plasticity, we track the spectral-entropy effective rank $r_e(\Kmat)$ of the actor empirical NTK.
For the diagonal-Gaussian actor used by PPO, we take $f_{\theta}(s)\defeq \mu_{\theta}(s)$, where $\mu_{\theta}(s)$ is the mean vector of $\pi_{\theta}(a\mid s)=\mathcal{N}(\mu_{\theta}(s),\mathrm{diag}(\sigma^2))$, and exclude the log-standard-deviation parameters. In our implementation, $\log\sigma$ is learned and state-independent.

\noindent\textbf{SLQ computation of $r_e(\Kmat)$.}\spacefactor=1000\space\space%
We estimate $r_e(\Kmat)$ without explicitly constructing the sample-space Gram.
Using Eq.~\eqref{eq:effrank-parameter}, let $\{\lambda_k\}$ denote the eigenvalues of $\Kmat$ and define
$p_k\defeq \lambda_k/\mathrm{Tr}(\Kmat)$ (assuming $\mathrm{Tr}(\Kmat)>0$).
Define $f(x)\defeq x\log x$ with the convention $f(0)\defeq 0$.
Writing an eigendecomposition $\Kmat{=}U\Lambda U^\top$ with $\Lambda=\operatorname{diag}(\lambda_1,\ldots,\lambda_N)$, we have
$\mathrm{Tr}(f(\Kmat))=\mathrm{Tr}(f(\Lambda))=\sum_k f(\lambda_k)=\sum_k \lambda_k\log\lambda_k$.
Then
\begin{align*}
  -\sum_{k} p_k\log p_k
  &= -\sum_k \frac{\lambda_k}{\mathrm{Tr}(\Kmat)}\log\frac{\lambda_k}{\mathrm{Tr}(\Kmat)}
   = -\frac{1}{\mathrm{Tr}(\Kmat)}\sum_k \lambda_k\big(\log\lambda_k-\log \mathrm{Tr}(\Kmat)\big) \\
  &= \log \mathrm{Tr}(\Kmat) \;-\; \frac{1}{\mathrm{Tr}(\Kmat)}\sum_k \lambda_k\log\lambda_k
   = \log \mathrm{Tr}(\Kmat) \;-\; \frac{\mathrm{Tr}\!\big(f(\Kmat)\big)}{\mathrm{Tr}(\Kmat)}.
\end{align*}
Substituting into $r_e(\Kmat)=\exp(-\sum_k p_k\log p_k)$ gives
\begin{equation}
  r_e(\Kmat)
  \;=\;
  \exp\!\Big(\log \mathrm{Tr}(\Kmat) - \frac{\mathrm{Tr}\!\big(f(\Kmat)\big)}{\mathrm{Tr}(\Kmat)}\Big),
		  \label{eq:implementation-entk-erank-trace}
\end{equation}
Directly constructing $\Kmat$ (or materializing $\J\in\R^{N\times P}$) is typically GPU-memory prohibitive for modern actor networks.
We therefore estimate the traces in \eqref{eq:implementation-entk-erank-trace} with stochastic Lanczos quadrature (SLQ).
For Gaussian probes $z\sim\mathcal{N}(0,I)$, Hutchinson's identity gives $\mathrm{Tr}(f(\Kmat))=\mathbb{E}_{z}[\,z^\top f(\Kmat)z\,]$.
Given a probe $z$, run $M$ steps of the Lanczos algorithm on $\Kmat$ with $q_1=z/\|z\|_2$, using only matrix--vector products with $\Kmat$, to obtain an orthonormal basis $Q_M=[q_1,\ldots,q_M]$ and tridiagonal $T_M\defeq Q_M^\top \Kmat Q_M$.
Gauss quadrature then yields the approximation
\begin{equation}
  z^\top f(\Kmat) z \;\approx\; \|z\|_2^2\, e_1^\top f(T_M) e_1,
  \label{eq:implementation-slq-quadrature}
\end{equation}
where $e_1$ is the first canonical basis vector.
Averaging \eqref{eq:implementation-slq-quadrature} over probes gives an estimator of $\mathrm{Tr}(f(\Kmat))$ (and similarly for $\mathrm{Tr}(\Kmat)$ by taking $f(x)=x$), which we plug into \eqref{eq:implementation-entk-erank-trace}.
Finally, each Lanczos matrix--vector product can be computed as $\Kmat v=\J(\J^\top v)$ via reverse-/forward-mode automatic differentiation, avoiding materialization of $\Kmat$.
For vector-valued outputs $f_{\theta}(x_i)\in\R^{d}$, we follow the SLQ estimator and reduce to the scalar-output form in Eq.~\eqref{eq:entk-def} by defining a per-sample scalar projection $g_{\theta}(x_i)\defeq \mathbf{1}_d^\top f_{\theta}(x_i)$. Let $\J$ in Eq.~\eqref{eq:entk-def} be the Jacobian of $g_{\theta}$ over the batch, i.e., its $i$-th row is $(\nabla_{\theta} g_{\theta}(x_i))^\top$.

\subsection{Code Availability}
\noindent
Code for reproducing our experiments is available through the project page.

\section{Technical Preliminaries for Derivations}\label{app:mat-calc}
For completeness, we collect here a few without-loss-of-generality conventions and matrix-calculus identities used in \Cref{sec:theory}.

\subsection{Without-Loss-of-Generality Conventions}
\label{app:wlog-conventions}
\noindent
We record two reductions used throughout the theory. Neither changes the implemented algorithm.

\subsubsection{Bias Absorption for Affine Layers}
\noindent
Throughout \Cref{sec:theory} we work with bias-free layers without loss of generality.
Formally, let a generic gate or expert layer be affine, of the form $z_{\ell}{=}\mathbf{W}_{\ell}a_{\ell-1}{+}b_{\ell}$ with $\mathbf{W}_{\ell}\in\R^{d_{\ell}\times d_{\ell-1}}$, $a_{\ell-1}\in\R^{d_{\ell-1}}$, $b_{\ell}\in\R^{d_{\ell}}$, and $z_{\ell}\in\R^{d_{\ell}}$.
By augmenting the activation with a constant
\[
\tilde{a}_{\ell-1} \,=\, \begin{bmatrix} a_{\ell-1} \\ 1 \end{bmatrix},\qquad
\tilde{\mathbf{W}}_{\ell} \,=\, [\,\mathbf{W}_{\ell}\;\; b_{\ell}\,],
\]
we can rewrite the layer as a purely linear map
\[
  z_{\ell} \,=\, \tilde{\mathbf{W}}_{\ell}\tilde{a}_{\ell-1}.
\]
Thus, after this reparameterization, each layer can be treated as bias-free by working with $\tilde{\mathbf{W}}_{\ell}$ and $\tilde{a}_{\ell-1}$.
In \Cref{sec:theory} we therefore use the notation $\mathbf{W}_{\ell}$ and $a_{\ell-1}$ for this bias-absorbed form, and all layer widths $\{d_{\ell}\}$ are understood for this representation.

\subsubsection{Numerical Convention for SPSD Matrices}
\label{app:numerical-spsd}
\noindent
All Gram matrices we analyze (e.g., $\Kmat$, $\Aexp_{\ell}$, and $\Gexp_{\ell}$) are symmetric positive semidefinite (SPSD) by construction.
We do not explicitly perturb these matrices in our experiments.

\begin{lemma}[Ridge shift turns SPSD into SPD]\label{lem:spsd-to-spd}
Let $M\in\R^{n\times n}$ be SPSD.
For any $\varepsilon>0$, the ridge-shifted matrix $M_{\varepsilon}\defeq M+\varepsilon \Ibf_{n}$ is SPD and satisfies $\|M_{\varepsilon}-M\|_{2}=\varepsilon$.
\end{lemma}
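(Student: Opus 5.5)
The plan is a direct verification with the spectral theorem. Since $M$ is SPSD, write $M=U\Lambda U^\top$ with $U$ orthogonal and $\Lambda=\operatorname{diag}(\lambda_1,\ldots,\lambda_n)$, where every $\lambda_i\ge 0$.

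First, symmetry is immediate, because $M$ and $\varepsilon\Ibf_n$ are both symmetric. For positive definiteness, take any nonzero $v\in\R^n$. Then
\[
v^\top M_\varepsilon v \;=\; v^\top M v + \varepsilon\|v\|_2^2 \;\ge\; \varepsilon\|v\|_2^2 \;>\; 0,
\]
using $v^\top M v\ge 0$. One can say the same thing spectrally: because $U U^\top=\Ibf_n$, we have $M_\varepsilon=U(\Lambda+\varepsilon\Ibf_n)U^\top$, so its eigenvalues are $\lambda_i+\varepsilon\ge\varepsilon>0$.

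Second, for the distance we have $M_\varepsilon-M=\varepsilon\Ibf_n$. Its operator $2$-norm is its largest singular value, and all singular values equal $\varepsilon$. Hence $\|M_\varepsilon-M\|_2=\varepsilon\|\Ibf_n\|_2=\varepsilon$.

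There is no real obstacle here. The only step that needs care is noting that the quadratic form bound uses $\varepsilon>0$ strictly together with $v\neq 0$. Strictness is what gives SPD rather than only SPSD.
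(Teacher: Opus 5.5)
Your proposal is correct and follows the same route as the paper's proof: symmetry of $M$ and $\varepsilon\Ibf_n$, positive definiteness from $v^\top M_\varepsilon v \ge \varepsilon\|v\|_2^2 > 0$ for $v \neq 0$, and $\|M_\varepsilon - M\|_2 = \varepsilon\|\Ibf_n\|_2 = \varepsilon$. The spectral reformulation $M_\varepsilon = U(\Lambda+\varepsilon\Ibf_n)U^\top$ is a harmless extra that the paper's proof does not use.
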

\begin{proof}
Since both $M$ and $\Ibf_{n}$ are symmetric, $M_{\varepsilon}$ is symmetric.
For any nonzero $x\in\R^{n}$,
\[
  x^{\top}M_{\varepsilon}x
  \;=\;
  x^{\top}Mx \;+\; \varepsilon\,\|x\|_{2}^{2}
  \;\ge\;
  \varepsilon\,\|x\|_{2}^{2}
  \;>\;0,
\]
so $M_{\varepsilon}\succ 0$.
Finally, $M_{\varepsilon}-M=\varepsilon \Ibf_{n}$ and $\|\Ibf_{n}\|_{2}=1$, hence $\|M_{\varepsilon}-M\|_{2}=\varepsilon$.
\end{proof}

Therefore, whenever a strictly positive spectrum is required (e.g., in $\kappa(\cdot)$ or matrix logarithms), all statements can be read as applied to $M_{\varepsilon}$ for an arbitrarily small $\varepsilon$, with $\varepsilon$ suppressed in the notation.
In this sense, throughout the appendix we freely treat SPSD matrices as SPD.

\noindent\textbf{Gate and Expert Parameterization in Top-$K$ MoE.}\spacefactor=1000\space\space%
Specializing to the Top-$K$ MoE policy of \Cref{sec:theory}, we write the full parameter vector as $\theta{=}(\psi,\{\theta_e\}_{e=1}^{E})$, where $\psi$ denotes the gate parameters and $\theta_e$ denotes the parameters of expert $e$.
Let $L^{\mathrm{g}}$ and $L^{\mathrm{exp}}$ denote the numbers of gate and expert layers, respectively.
After absorbing biases into the weights as above we take
\[
  \psi \,=\, \{\Wg_{\ell}\}_{\ell=1}^{L^{\mathrm{g}}},\qquad
  \theta_e \,=\, \{\Wexp_{e,\ell}\}_{\ell=1}^{L^{\mathrm{exp}}},
\]
where $\Wg_{\ell}\in\R^{d^{\mathrm{g}}_{\ell}\times d^{\mathrm{g}}_{\ell-1}}$ are the gate weight matrices and $\Wexp_{e,\ell}\in\R^{d^{\mathrm{exp}}_{e,\ell}\times d^{\mathrm{exp}}_{e,\ell-1}}$ are the expert weight matrices for expert $e$ at layer $\ell$.
Thus each gate or expert layer is treated as bias-free in the Jacobian Gram analysis, with parameters grouped into gate and expert blocks via $\{\Wg_{\ell}\}$ and $\{\Wexp_{e,\ell}\}$.

\begin{lemma}[Linear-layer gradient]\label{lem:linear-layer-grad}
Let $z_{\ell}{=}W_{\ell}a_{\ell-1}$ and $g_{\ell}{=}\partial f/\partial z_{\ell}$ for a scalar-valued function $f$. Then
\[
  \frac{\partial f}{\partial W_{\ell}} \,=\, g_{\ell} a_{\ell-1}^\top\,.
\]
\end{lemma}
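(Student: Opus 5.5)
The identity follows by applying the scalar chain rule entrywise. We parametrize the scalar $f$ through $z_{\ell}=W_{\ell}a_{\ell-1}$, treating $a_{\ell-1}$ as independent of $W_{\ell}$. This is legitimate because $a_{\ell-1}$ is computed from earlier layers only, and after the bias absorption of \Cref{app:wlog-conventions} the layer is purely linear.

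Fix indices $(j,k)$ and write $z_{\ell,i}=\sum_{m}(W_{\ell})_{im}\,a_{\ell-1,m}$. This gives
\[
\frac{\partial z_{\ell,i}}{\partial (W_{\ell})_{jk}} = \delta_{ij}\, a_{\ell-1,k}.
\]
Since $W_{\ell}$ affects $f$ only through $z_{\ell}$, the chain rule yields
\[
\frac{\partial f}{\partial (W_{\ell})_{jk}}=\sum_i \frac{\partial f}{\partial z_{\ell,i}}\,\delta_{ij}\,a_{\ell-1,k}=g_{\ell,j}\,a_{\ell-1,k}.
\]
This is exactly the $(j,k)$ entry of the outer product $g_{\ell}a_{\ell-1}^\top$. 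Collecting all entries into a matrix with the same shape as $W_{\ell}$ gives $\partial f/\partial W_{\ell}=g_{\ell}a_{\ell-1}^\top$.

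The one point that needs care is the justification that $W_{\ell}$ enters $f$ only through $z_{\ell}$, so that no other path contributes to the chain rule. This holds for a feedforward layer; the shared weights across samples are irrelevant because the statement is per-sample. I would also note the vectorized form $\mathrm{vec}(\partial f/\partial W_{\ell})=a_{\ell-1}\otimes g_{\ell}$, using the column-major convention. That form is what feeds the Kronecker structure $\Aexp_{\ell-1}\otimes\Gexp_{\ell}$ used later.
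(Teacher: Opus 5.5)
Your proof is correct and is essentially the paper's argument: the chain rule through $z_{\ell}=W_{\ell}a_{\ell-1}$, with $a_{\ell-1}$ treated as independent of $W_{\ell}$. The only difference is presentation. You compute $\partial f/\partial (W_{\ell})_{jk}=g_{\ell,j}\,a_{\ell-1,k}$ entrywise, whereas the paper writes $df=g_{\ell}^{\top}dW_{\ell}\,a_{\ell-1}=\mathrm{Tr}\big((g_{\ell}a_{\ell-1}^{\top})^{\top}dW_{\ell}\big)$ and reads off the gradient from the trace-form definition of the matrix derivative.
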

\begin{proof}
We use differential notation and consider the partial derivative of $f$ with respect to $W_{\ell}$, treating $a_{\ell-1}$ as independent of $W_{\ell}$ (i.e., $\partial a_{\ell-1}/\partial W_{\ell}{=}0$). Then $f$ depends on $W_{\ell}$ only through $z_{\ell}{=}W_{\ell}a_{\ell-1}$. Since $f$ is scalar-valued in $z_{\ell}$, its differential satisfies $df{=}(\partial f/\partial z_{\ell})^\top dz_{\ell}{=}g_{\ell}^\top dz_{\ell}$. By the chain rule,
\[
  df \,=\, g_{\ell}^\top\,dz_{\ell}
      \,=\, g_{\ell}^\top\,d(W_{\ell}a_{\ell-1})
      \,=\, g_{\ell}^\top dW_{\ell}\,a_{\ell-1}.
\]
Using the scalar–trace identity $v^{\top} X u \,=\, \mathrm{Tr}(u\,v^{\top} X)$ for $v\in\R^{m}$, $X\in\R^{m\times n}$, and $u\in\R^{n}$ (see, e.g., \citealp{petersen2012matrix}), this can be written as
\[
  df \,=\, \mathrm{Tr}(a_{\ell-1} g_{\ell}^\top dW_{\ell})
      \,=\, \mathrm{Tr}\big((g_{\ell} a_{\ell-1}^\top)^\top dW_{\ell}\big),
\]
where the second equality uses cyclic invariance of trace, $\mathrm{Tr}(ABC){=}\mathrm{Tr}(CAB)$. By the definition of the matrix derivative of a scalar function,
\[
  df \,=\, \mathrm{Tr}\big((\partial f/\partial W_{\ell})^\top dW_{\ell}\big)\quad\text{for all }dW_{\ell},
\]
which identifies $\partial f/\partial W_{\ell}{=}g_{\ell} a_{\ell-1}^\top$.
\end{proof}

		\section{Gradients of Top-$K$ MoE}\label{app:topk-moe-grad-kfac}
	\noindent
	\textbf{Overview.}
	This appendix derives the expert-layer block Khatri--Rao structure used in \Cref{sec:theory} and in Proposition~\ref{prop:kron-proxy-surrogate}.
	Along the way, it derives the same-layer gate and expert Jacobian Grams (Eqs.~\eqref{eq:gate-same-layer} and~\eqref{eq:moe-exact-same-layer}), which are also used by later structural results (e.g., \Cref{app:ts-kron-details}).

\subsection{Setup for Per-Sample Gradients}\label{app:moe-kfac-details}
We first record the Top-$K$ MoE per-sample gradients that will be used throughout the derivations below.

\noindent\textbf{Top-$K$ MoE Per-Sample Gradients.}\spacefactor=1000\space\space%
For each sample $x_i$, the scalar output can be written as
\begin{equation}
  f(x_i) \,=\, \sum_{e'\in\Sset(x_i)} h^{(K)}_{i,e'}\, m_{e'}(x_i),
  \label{eq:topk-output-sample}
\end{equation}
where $m_{e'}(x_i)$ is the contribution of expert $e'$. Let $\vartheta$ denote any one of the parameter blocks in $\theta$; for example, $\vartheta$ may be the gate parameters $\psi$, an expert block $\theta_e$, or a specific weight matrix such as $\Wg_{\ell}$ or $\Wexp_{e,\ell}$. Differentiating $f(x_i)$ with respect to such a $\vartheta$ gives
\begin{equation}
\begin{aligned}
  \frac{\partial f(x_i)}{\partial \vartheta}
  &= \sum_{e'\in\Sset(x_i)}
     \frac{\partial}{\partial \vartheta}
     \big(h^{(K)}_{i,e'}\, m_{e'}(x_i)\big) \\
  &= \sum_{e'\in\Sset(x_i)}
     \Big(
       \frac{\partial h^{(K)}_{i,e'}}{\partial \vartheta}\, m_{e'}(x_i)
       + h^{(K)}_{i,e'}\, \frac{\partial m_{e'}(x_i)}{\partial \vartheta}
     \Big).
	\end{aligned}
	  \label{eq:topk-full-grad}
	\end{equation}
	This is the \emph{full} Top-$K$ MoE gradient in block form.

\subsection{Derivation of Same-Layer Jacobian Grams}\label{app:gate-expert-grads}
In this section we derive the same-layer gate and expert Jacobian Grams starting from the per-sample Top-$K$ MoE gradients in Eq.~\eqref{eq:topk-full-grad}.

\noindent\textbf{Gate Gradients.}\spacefactor=1000\space\space%
For the gate parameters $\psi$, the expert outputs $m_{e'}(x_i)$ do not depend on $\psi$, so the second term in Eq.~\eqref{eq:topk-full-grad} vanishes for all $e'$, giving
\[
  \frac{\partial f(x_i)}{\partial \psi}
  \;=\; \sum_{e'\in\Sset(x_i)} m_{e'}(x_i)\,\frac{\partial h^{(K)}_{i,e'}}{\partial \psi}.
\]
At a specific gate layer $\ell$, only the dependence of the gating weights $h^{(K)}_{i,e'}$ on the layer weights $\Wg_{\ell}$ contributes. Differentiating $f(x_i)$ with respect to $\Wg_{\ell}$ and using the product rule gives
\[
  \frac{\partial f(x_i)}{\partial \Wg_{\ell}}
  \;=\; \sum_{e'\in\Sset(x_i)} m_{e'}(x_i)\,\frac{\partial h^{(K)}_{i,e'}}{\partial W^{\mathrm{g}}_{\ell}}\,.
\]
Writing the gate as a multilayer network with logits $s(x_i;\psi)$, pre-activations $z^{\mathrm{g}}_{\ell}(x_i)$, and hidden activations $a^{\mathrm{g}}_{\ell-1}(x_i)$, $h^{(K)}_{i,e'}$ depends on $\Wg_{\ell}$ only through $z^{\mathrm{g}}_{\ell}(x_i)$. A second application of the chain rule gives
\[
  \frac{\partial h^{(K)}_{i,e'}}{\partial \Wg_{\ell}}
  \;=\; \frac{\partial h^{(K)}_{i,e'}}{\partial z^{\mathrm{g}}_{\ell}(x_i)}\,\frac{\partial z^{\mathrm{g}}_{\ell}(x_i)}{\partial \Wg_{\ell}}.
\]
For the linear map $z^{\mathrm{g}}_{\ell}(x_i){=}\Wg_{\ell}a^{\mathrm{g}}_{\ell-1}(x_i)$, applying Lemma~\ref{lem:linear-layer-grad} with $f(x_i){=}h^{(K)}_{i,e'}(x_i)$ and $g_{\ell}(x_i){=}\partial h^{(K)}_{i,e'}/\partial z^{\mathrm{g}}_{\ell}(x_i)$ gives
\[
  \frac{\partial h^{(K)}_{i,e'}}{\partial \Wg_{\ell}}
  \;=\; \frac{\partial h^{(K)}_{i,e'}}{\partial z^{\mathrm{g}}_{\ell}(x_i)}\,\big(a^{\mathrm{g}}_{\ell-1}(x_i)\big)^{\top},
\]
so substituting into the previous expression for $\partial f(x_i)/\partial \Wg_{\ell}$ yields
\begin{equation}
  \frac{\partial f(x_i)}{\partial \Wg_{\ell}}
  \;=\; \sum_{e'\in\Sset(x_i)} m_{e'}(x_i)\,\frac{\partial h^{(K)}_{i,e'}}{\partial z^{\mathrm{g}}_{\ell}(x_i)}\,\big(a^{\mathrm{g}}_{\ell-1}(x_i)\big)^{\top}.
  \label{eq:gate-layer-grad-expanded}
\end{equation}
Thus gate gradients are explicitly weighted by the expert outputs. Differentiating Eq.~\eqref{eq:topk-output-sample} with respect to the gate pre-activation $z^{\mathrm{g}}_{\ell}(x_i)$ and using that $m_{e'}(x_i)$ does not depend on gate activations gives
\[
  \frac{\partial f(x_i)}{\partial z^{\mathrm{g}}_{\ell}(x_i)}
  \;=\; \sum_{e'\in\Sset(x_i)} m_{e'}(x_i)\, \frac{\partial h^{(K)}_{i,e'}}{\partial z^{\mathrm{g}}_{\ell}(x_i)}\,.
\]
We therefore identify the gate pre-activation gradient at layer $\ell$ as
\begin{equation}
  g^{\mathrm{g}}_{\ell}(x_i) \,\defeq\, \frac{\partial f(x_i)}{\partial z^{\mathrm{g}}_{\ell}(x_i)} \,=\, \sum_{e'\in\Sset(x_i)} m_{e'}(x_i)\, \frac{\partial h^{(K)}_{i,e'}}{\partial z^{\mathrm{g}}_{\ell}(x_i)} \in\R^{d^{\mathrm{g}}_{\ell}}.
  \label{eq:gate-backprop-def}
\end{equation}
and Eq.~\eqref{eq:gate-layer-grad-expanded} can be written compactly as
\begin{equation}
  \frac{\partial f(x_i)}{\partial \Wg_{\ell}} \,=\, g^{\mathrm{g}}_{\ell}(x_i)\,\big(a^{\mathrm{g}}_{\ell-1}(x_i)\big)^{\top},
  \label{eq:gate-layer-grad}
\end{equation}
where $g^{\mathrm{g}}_{\ell}(x_i)$ is just a shorthand for the gate pre-activation gradient $\partial f(x_i)/\partial z^{\mathrm{g}}_{\ell}(x_i)$ and still explicitly contains the expert-output weights $m_{e'}(x_i)$. Thus, like a dense layer, the gate layer admits a per-sample gradient in backprop–activation outer-product form. We define the gate-layer Jacobian block $\Jg_{\ell}$ to stack the vectorized per-sample gradients with respect to $\Wg_{\ell}$, so the $i$-th row is
\[
  j^{\mathrm{g}}_{i,\ell}{}^{\top}
  \,\defeq\, \big(\mathrm{vec}(\tfrac{\partial f(x_i)}{\partial W^{\mathrm{g}}_{\ell}})\big)^{\top}.
\]
Using the vec--Kronecker identity $\mathrm{vec}(uv^\top){=}v\otimes u$ (see, e.g., \citealp{petersen2012matrix,horn2012matrix,golub2013matrix}) on $\partial f(x_i)/\partial W^{\mathrm{g}}_{\ell}{=}g^{\mathrm{g}}_{\ell}(x_i)\big(a^{\mathrm{g}}_{\ell-1}(x_i)\big)^{\top}$, we obtain
\[
  j^{\mathrm{g}}_{i,\ell}{}^{\top}
  \,=\, \big(a^{\mathrm{g}}_{\ell-1}(x_i)\otimes g^{\mathrm{g}}_{\ell}(x_i)\big)^{\top}.
\]
Stacking these rows over $i{=}1,\ldots,N$ gives the gate-layer Jacobian block $\Jg_{\ell}\in\R^{N\times p^{\mathrm{g}}_{\ell}}$ with rows $j^{\mathrm{g}}_{i,\ell}{}^{\top}$ (and $p^{\mathrm{g}}_{\ell}{=}d^{\mathrm{g}}_{\ell}\,d^{\mathrm{g}}_{\ell-1}$).
Its same-layer Gram expands as
\begin{align}
  \big(\Jg_{\ell}\big)^{\top}\Jg_{\ell}
  \;&=\; \sum_{i=1}^{N} j^{\mathrm{g}}_{i,\ell}\big(j^{\mathrm{g}}_{i,\ell}\big)^{\top} \nonumber\\
  \;&=\; \sum_{i=1}^{N} \big(a^{\mathrm{g}}_{\ell-1}(x_i)\otimes g^{\mathrm{g}}_{\ell}(x_i)\big)\,
                      \big(a^{\mathrm{g}}_{\ell-1}(x_i)\otimes g^{\mathrm{g}}_{\ell}(x_i)\big)^{\top} \nonumber\\
  \;&=\; \sum_{i=1}^{N} \big(a^{\mathrm{g}}_{\ell-1}(x_i)\otimes g^{\mathrm{g}}_{\ell}(x_i)\big)\,
                      \big(a^{\mathrm{g}}_{\ell-1}(x_i)^{\top}\otimes g^{\mathrm{g}}_{\ell}(x_i)^{\top}\big) \nonumber\\
  \;&=\; \sum_{i=1}^{N}
        \big(a^{\mathrm{g}}_{\ell-1}(x_i)a^{\mathrm{g}}_{\ell-1}(x_i)^{\top}\big)
        \,\otimes\,
        \big(g^{\mathrm{g}}_{\ell}(x_i)g^{\mathrm{g}}_{\ell}(x_i)^{\top}\big),\label{eq:gate-same-layer}
\end{align}
where the last line uses the Kronecker product rule $(A\otimes B)(C\otimes D){=}(AC)\otimes(BD)$ with $A{=}a^{\mathrm{g}}_{\ell-1}(x_i)$, $B{=}g^{\mathrm{g}}_{\ell}(x_i)$, $C{=}a^{\mathrm{g}}_{\ell-1}(x_i)^{\top}$, and $D{=}g^{\mathrm{g}}_{\ell}(x_i)^{\top}$.

\noindent\textbf{Expert Gradients.}\spacefactor=1000\space\space%
For expert parameters $\vartheta\subset\theta_{e}$, differentiating Eq.~\eqref{eq:topk-output-sample} with respect to $\vartheta$ gives
\[
  \frac{\partial f(x_i)}{\partial \vartheta}
  \,=\, \sum_{e'\in\Sset(x_i)} h^{(K)}_{i,e'}\, \frac{\partial m_{e'}(x_i)}{\partial \vartheta}.
\]
Here the gate outputs $h^{(K)}_{i,e'}$ do not depend on $\vartheta$, and $m_{e'}(x_i)$ depends on $\vartheta$ only when $e'{=}e$, so the sum reduces to
\[
  \frac{\partial f(x_i)}{\partial \vartheta}
  \,=\, h^{(K)}_{i,e}\, \frac{\partial m_e(x_i)}{\partial \vartheta}.
\]
Specializing to the layer weights $\vartheta{=}\Wexp_{e,\ell}$ yields
\begin{equation}
  \frac{\partial f(x_i)}{\partial \Wexp_{e,\ell}} \,=\, h^{(K)}_{i,e}\, \frac{\partial m_e(x_i)}{\partial \Wexp_{e,\ell}}.
  \label{eq:expert-layer-chain}
\end{equation}
Within expert $e$, we write the layer-$\ell$ pre-activation on sample $x_i$ as
\[
  z_{e,\ell}(x_i) \,=\, \Wexp_{e,\ell}\,a^{\mathrm{exp}}_{e,\ell-1}(x_i),\qquad
  a^{\mathrm{exp}}_{e,\ell-1}(x_i)\in\R^{d^{\mathrm{exp}}_{e,\ell-1}},
\]
and define the corresponding backprop signal
\[
  g^{\mathrm{exp}}_{e,\ell}(x_i) \,\defeq\, \frac{\partial m_e(x_i)}{\partial z_{e,\ell}(x_i)}\in\R^{d^{\mathrm{exp}}_{e,\ell}}.
\]
By the chain rule and the linear-layer gradient Lemma~\ref{lem:linear-layer-grad} applied to the scalar function $m_e(x_i)$ and the map $z_{e,\ell}(x_i){=}\Wexp_{e,\ell}a^{\mathrm{exp}}_{e,\ell-1}(x_i)$,
\begin{equation}
  \frac{\partial m_e(x_i)}{\partial \Wexp_{e,\ell}} \,=\, g^{\mathrm{exp}}_{e,\ell}(x_i)\, a^{\mathrm{exp}}_{e,\ell-1}(x_i)^{\top}.
  \label{eq:expert-inner-grad}
\end{equation}
\!Substituting into the previous expression yields
\begin{equation}
  \frac{\partial f(x_i)}{\partial \Wexp_{e,\ell}}
  \,=\, h^{(K)}_{i,e}\, g^{\mathrm{exp}}_{e,\ell}(x_i) a^{\mathrm{exp}}_{e,\ell-1}(x_i)^{\top}
  \,=\, g^{\mathrm{exp}}_{e,\ell}(x_i)\,\big(h^{(K)}_{i,e} a^{\mathrm{exp}}_{e,\ell-1}(x_i)\big)^{\top}.
  \label{eq:expert-layer-grad}
\end{equation}
Vectorizing and transposing gives the gradient row
\begin{equation}
  j_{i,e,\ell}^\top \,\defeq\, \Big(\mathrm{vec}\big(g^{\mathrm{exp}}_{e,\ell}(x_i)\big(h^{(K)}_{i,e} a^{\mathrm{exp}}_{e,\ell-1}(x_i)\big)^{\top}\big)\Big)^{\top}
  \,=\, \big(h^{(K)}_{i,e} a^{\mathrm{exp}}_{e,\ell-1}(x_i)\otimes g^{\mathrm{exp}}_{e,\ell}(x_i)\big)^{\top}
  \,=\, h^{(K)}_{i,e}\,\big(a^{\mathrm{exp}}_{e,\ell-1}(x_i)\otimes g^{\mathrm{exp}}_{e,\ell}(x_i)\big)^{\top},
  \label{eq:expert-row-grad}
\end{equation}
where we used the same vec--Kronecker identity $\mathrm{vec}(uv^\top){=}v\otimes u$ and the fact that the scalar $h^{(K)}_{i,e}$ can be absorbed into either factor of the Kronecker product.

Let $\Jexp_{e,\ell}\in\R^{N\times p^{\mathrm{exp}}_{e,\ell}}$ collect the Jacobian rows for expert $e$ at layer $\ell$, so its $i$-th row is $j_{i,e,\ell}^{\top}$, and stack experts within layer $\ell$ as $\Jexp_{\ell}{=}[\,\Jexp_{1,\ell},\ldots,\Jexp_{E,\ell}\,]\in\R^{N\times p^{\mathrm{exp}}_{\ell}}$ with $p^{\mathrm{exp}}_{\ell}{=}\sum_e p^{\mathrm{exp}}_{e,\ell}$ and rows $j_{i,\ell}^\top$. The exact same-layer Gram $(\Jexp_{\ell})^\top \Jexp_{\ell}$ is then an $E\times E$ block matrix
\[
  (\Jexp_{\ell})^{\top}\Jexp_{\ell}
  \,=\,
  \begin{bmatrix}
    (\Jexp_{1,\ell})^{\top}\Jexp_{1,\ell} & \cdots & (\Jexp_{1,\ell})^{\top}\Jexp_{E,\ell} \\
    \vdots                      & \ddots & \vdots                      \\
    (\Jexp_{E,\ell})^{\top}\Jexp_{1,\ell} & \cdots & (\Jexp_{E,\ell})^{\top}\Jexp_{E,\ell}
  \end{bmatrix}\!,
\]
whose $(e,e')$ block is
\[
  (\Jexp_{e,\ell})^{\top}\Jexp_{e',\ell}
  \,=\, \sum_{i=1}^{N} j_{i,e,\ell}\,j_{i,e',\ell}^{\top}.
\]
Substituting the per-sample form $j_{i,e,\ell}{=}h^{(K)}_{i,e}\,\mathrm{vec}(g^{\mathrm{exp}}_{e,\ell}(x_i) a^{\mathrm{exp}}_{e,\ell-1}(x_i)^{\top})$ and using $\mathrm{vec}(uv^\top){=}v\otimes u$ gives $j_{i,e,\ell}{=}h^{(K)}_{i,e}\,(a^{\mathrm{exp}}_{e,\ell-1}(x_i)\otimes g^{\mathrm{exp}}_{e,\ell}(x_i))$, so
\begin{align}
  (\Jexp_{e,\ell})^{\top}\Jexp_{e',\ell}
  \;&=\; \sum_{i=1}^{N} j_{i,e,\ell}\,j_{i,e',\ell}^{\top} \nonumber\\
  \;&=\; \sum_{i=1}^{N} h^{(K)}_{i,e} h^{(K)}_{i,e'}\,
                 \big(a^{\mathrm{exp}}_{e,\ell-1}(x_i)\otimes g^{\mathrm{exp}}_{e,\ell}(x_i)\big)\,
                 \big(a^{\mathrm{exp}}_{e',\ell-1}(x_i)\otimes g^{\mathrm{exp}}_{e',\ell}(x_i)\big)^{\top} \nonumber\\
	  \;&=\; \sum_{i=1}^{N} h^{(K)}_{i,e} h^{(K)}_{i,e'}\,
		                 \big(a^{\mathrm{exp}}_{e,\ell-1}(x_i)\otimes g^{\mathrm{exp}}_{e,\ell}(x_i)\big)\,
		                 \big(a^{\mathrm{exp}}_{e',\ell-1}(x_i)^{\top}\otimes g^{\mathrm{exp}}_{e',\ell}(x_i)^{\top}\big) \nonumber\\
	  \;&=\; \sum_{i=1}^{N} h^{(K)}_{i,e} h^{(K)}_{i,e'}\;
	        \big(a^{\mathrm{exp}}_{e,\ell-1}(x_i) a^{\mathrm{exp}}_{e',\ell-1}(x_i)^{\top}\big)\,\otimes\,\big(g^{\mathrm{exp}}_{e,\ell}(x_i) g^{\mathrm{exp}}_{e',\ell}(x_i)^{\top}\big),
		\label{eq:moe-exact-same-layer}
		\end{align}
	where the last line uses the Kronecker rule $(A\otimes B)(C\otimes D){=}(AC)\otimes(BD)$. The within-layer independence approximation step and the resulting block Khatri--Rao form are collected in \Cref{app:block-khatri-rao-details}.

	\subsection{Within-Layer Independence and Block Khatri--Rao Form}\label{app:block-khatri-rao-details}
	We summarize how the exact expert-layer Jacobian Grams in Eq.~\eqref{eq:moe-exact-same-layer} lead to the block-factorized form used in \Cref{sec:theory} and in Proposition~\ref{prop:kron-proxy-surrogate}.

Collecting the blocks $\{\big(\Jexp_{e,\ell}\big)^{\top}\Jexp_{e',\ell}\}_{e,e'}$ gives the exact same-layer expert Gram $(\Jexp_{\ell})^{\top}\Jexp_{\ell}$ as an $E\times E$ block matrix whose $(e,e')$ block is a \emph{sum} of Kronecker products between an activation factor $a^{\mathrm{exp}}_{e,\ell-1}(x_i)a^{\mathrm{exp}}_{e',\ell-1}(x_i)^{\top}$ and a gradient factor $g^{\mathrm{exp}}_{e,\ell}(x_i)g^{\mathrm{exp}}_{e',\ell}(x_i)^{\top}$, cf. Eq.~\eqref{eq:moe-exact-same-layer}. For later use it is convenient to introduce the corresponding empirical activation and gradient Grams at the expert layer. Define
\begin{equation}
\begin{aligned}
  \Aexp_{\ell-1,(e,e')}
  &\,=\, \tfrac{1}{N}\sum_{i=1}^{N}
      h^{(K)}_{i,e} h^{(K)}_{i,e'}\,
      a^{\mathrm{exp}}_{e,\ell-1}(x_i)a^{\mathrm{exp}}_{e',\ell-1}(x_i)^{\top}, \\
  \Gexp_{\ell,(e,e')}
  &\,=\, \tfrac{1}{N}\sum_{i=1}^{N}
      g^{\mathrm{exp}}_{e,\ell}(x_i)g^{\mathrm{exp}}_{e',\ell}(x_i)^{\top},
\end{aligned}
\end{equation}
and collect them into block matrices $\Aexp_{\ell-1}{=}\big[\Aexp_{\ell-1,(e,e')}\big]_{e,e'}$ and $\Gexp_{\ell}{=}\big[\Gexp_{\ell,(e,e')}\big]_{e,e'}$. In terms of these empirical Grams, the \emph{normalized} same-layer expert Gram
\[
  \Hexp_{\ell}
  \,=\, \tfrac{1}{N}\big(\Jexp_{\ell}\big)^{\top}\Jexp_{\ell}
\]
has $(e,e')$ block
\[
\begin{aligned}
  \Hexp_{e,e',\ell}
  &= \tfrac{1}{N} \big(\Jexp_{e,\ell}\big)^{\top}\Jexp_{e',\ell} \\
  &= \tfrac{1}{N}\sum_{i=1}^{N}
       h^{(K)}_{i,e} h^{(K)}_{i,e'}\;
       \big(a^{\mathrm{exp}}_{e,\ell-1}(x_i) a^{\mathrm{exp}}_{e',\ell-1}(x_i)^{\top}\big)
       \,\otimes\,
       \big(g^{\mathrm{exp}}_{e,\ell}(x_i) g^{\mathrm{exp}}_{e',\ell}(x_i)^{\top}\big).
\end{aligned}
\]
Viewing the minibatch index $i$ as a draw from the empirical data distribution, the sum above is an empirical estimate of a fourth-order moment in the activation and gradient factors, since each entry involves terms of the form $\E[a_u a_v g_p g_q]$. A within-layer independence approximation between activations and backprop gradients (see, e.g., \citealp{martens2015kfac,grosse2016kfacconv}) factorizes this fourth-order moment into a product of second moments, yielding $\E[a_u a_v g_p g_q]\approx \E[a_u a_v]\E[g_p g_q]$; substituting empirical averages for expectations yields the block-factorized approximation
\[
  \Hexp_{e,e',\ell}
  \;\approx\;
  \Aexp_{\ell-1,(e,e')}
  \,\otimes\,
  \Gexp_{\ell,(e,e')}.
\]
Collecting these approximated blocks over all $(e,e')$ pairs, the expert-layer Gram can be written compactly as a block Khatri--Rao product
\[
\begin{aligned}
  \Hexp_{\ell}
  &\approx \Aexp_{\ell-1} * \Gexp_{\ell},\\
  [\,\Aexp_{\ell-1} * \Gexp_{\ell}\,]_{e,e'}
  &= \Aexp_{\ell-1,(e,e')}\otimes \Gexp_{\ell,(e,e')},
\end{aligned}
\]
where $*$ denotes the block Khatri--Rao product on corresponding blocks.

To make the activation blocks explicit, for an expert layer $\ell$ we define the \emph{concatenated expert feature} for sample $i$ as
\begin{equation}
\begin{aligned}
  a^{\mathrm{ce}}_{\ell-1}(x_i)
  \,=\,&\big[\,
      h^{(K)}_{i,1}\,a^{\mathrm{exp}}_{1,\ell-1}(x_i)^{\top}
      \;\big|\;
      h^{(K)}_{i,2}\,a^{\mathrm{exp}}_{2,\ell-1}(x_i)^{\top}
      \;\big|\;\cdots \\[-2pt]
   &\;\big|\;
      h^{(K)}_{i,E}\,a^{\mathrm{exp}}_{E,\ell-1}(x_i)^{\top}
    \,\big]^{\top}.
\end{aligned}
\end{equation}
This vector lies in $\R^{D_{\ell-1}}$, where $D_{\ell-1}{=}\sum_{e=1}^E d^{\mathrm{exp}}_{e,\ell-1}$. Stacking over the minibatch gives the concatenated expert feature matrix $\Phi^{\mathrm{ce}}_{\ell-1}\in\R^{N\times D_{\ell-1}}$ with rows $\big(a^{\mathrm{ce}}_{\ell-1}(x_i)\big)^{\top}$. The concatenated expert feature Gram is
\[
  \Aexp_{\ell-1} \,=\, \tfrac{1}{N}\, (\Phi^{\mathrm{ce}}_{\ell-1})^{\top} \Phi^{\mathrm{ce}}_{\ell-1} \,\in\, \R^{D_{\ell-1}\times D_{\ell-1}}\,,
\]
with $(e,e')$ block
\[
  \big[\Aexp_{\ell-1}\big]_{e,e'}
  \,=\, \tfrac{1}{N}\sum_{i=1}^{N}
         h^{(K)}_{i,e}h^{(K)}_{i,e'}\,a^{\mathrm{exp}}_{e,\ell-1}(x_i)a^{\mathrm{exp}}_{e',\ell-1}(x_i)^{\top}.
\]
This is exactly the empirical block activation Gram $\Aexp_{\ell-1}$.
For notational convenience, for expert layers we also define the concatenated expert gradients
\[
  g^{\mathrm{exp}}_{\ell}(x_i) \,\defeq\, \big[\, g^{\mathrm{exp}}_{1,\ell}(x_i)^{\top}\;\big|\;\cdots\;\big|\;\, g^{\mathrm{exp}}_{E,\ell}(x_i)^{\top} \,\big]^{\top},
\]
so that $\Gexp_{\ell}=(1/N)\sum_i g^{\mathrm{exp}}_{\ell}(x_i)g^{\mathrm{exp}}_{\ell}(x_i)^\top$ contains the within- and cross-expert gradient covariances at layer $\ell$.

\section{Surrogate for $r_e(\Hexp_{\ell})$}\label{app:ts-kron-details}
The expert-layer Gauss--Newton block $\Hexp_{\ell}$ is a block Khatri--Rao product whose spectral structure does not admit a closed-form decomposition. To circumvent this difficulty, we show that optimizing a simpler Kronecker proxy $\Aexp_{\ell-1} \otimes \Gexp_{\ell}$ suffices: its condition number controls a lower bound on the effective rank of $\Hexp_{\ell}$.

The proof proceeds in two steps. First, we embed the Khatri--Rao Gram as a principal submatrix of a Tracy--Singh product that shares the same eigenvalues as the Kronecker proxy (Proposition~\ref{prop:ts-kron-proxy}). Second, Cauchy interlacing translates spectral isotropy on the proxy into an effective-rank guarantee on the target (Lemma~\ref{lem:interlacing-effrank-bound}).

\begin{proposition}[Expert block Khatri--Rao as a Tracy--Singh principal submatrix with Kronecker spectrum]\label{prop:ts-kron-proxy}
Let $\mathbf{A}{=}\Aexp_{\ell-1}\in\R^{D_{\ell-1}\times D_{\ell-1}}$ and $\mathbf{G}{=}\Gexp_{\ell}\in\R^{d_{\ell}\times d_{\ell}}$ be the concatenated expert feature and gradient Grams at expert layer $\ell$ defined in \Cref{sec:theory}, with symmetric expert-wise block partitions $\{\mathbf{A}_{e,e'}\}$ and $\{\mathbf{G}_{e,e'}\}$. Denote by $\mathbf{T}_{\ell}\defeq \mathbf{A}\boxtimes \mathbf{G}$ the Tracy--Singh product of $\mathbf{A}$ and $\mathbf{G}$, and by $\mathbf{K}_{\ell}\defeq \mathbf{A}\otimes\mathbf{G}$ the Kronecker proxy.
Then:
\begin{enumerate}
  \item $\mathbf{T}_{\ell}$ is SPSD and admits a Gram factorization $\mathbf{T}_{\ell}{=}Z_{\ell}Z_{\ell}^\top$ for a suitable matrix $Z_{\ell}$ constructed from blockwise Cholesky factors of $\mathbf{A}$ and $\mathbf{G}$.
	  \item The expert-layer Gram equals a Khatri--Rao block of $\mathbf{T}_{\ell}$ and can be written as a principal submatrix
	        \[
	          \Hexp_{\ell} \;=\; \mathbf{A} * \mathbf{G}
	          \;=\; S_{\ell}^{\top}\,\mathbf{T}_{\ell}\,S_{\ell}\,,
	        \]
	        for some column-orthonormal selection matrix $S_{\ell}$ (so that $S_{\ell}^{\top}S_{\ell}{=}I$) that selects expert-aligned blocks.
	  \item $\mathbf{T}_{\ell}$ and the Kronecker proxy $\mathbf{K}_{\ell}$ share the same multiset of eigenvalues, and hence the same spectral-entropy effective rank:
	        \[
	          \{\lambda(\mathbf{T}_{\ell})\} \equiv \{\lambda(\mathbf{K}_{\ell})\},\qquad
	          r_e(\mathbf{T}_{\ell})=r_e(\mathbf{K}_{\ell}).
        \]
\end{enumerate}
\end{proposition}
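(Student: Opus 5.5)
The plan is to work directly from the definition of the Tracy--Singh product. With respect to the expert-wise partitions $\mathbf{A}=[\mathbf{A}_{e,e'}]_{e,e'=1}^{E}$ and $\mathbf{G}=[\mathbf{G}_{f,f'}]_{f,f'=1}^{E}$, it is the $E^2\times E^2$ block matrix
\[
\mathbf{T}_{\ell}=\mathbf{A}\boxtimes\mathbf{G}=\big[\,\mathbf{A}_{e,e'}\otimes\mathbf{G}_{f,f'}\,\big]_{(e,f),(e',f')}.
\]
The backbone of all three items is a single fact: $\boxtimes$ and $\otimes$ differ only by a permutation of rows and columns. Concretely, there is a perfect-shuffle permutation matrix $P$, determined only by the block sizes $d^{\mathrm{exp}}_{e,\ell-1}$ and $d^{\mathrm{exp}}_{f,\ell}$, such that $\mathbf{A}\boxtimes\mathbf{G}=P^{\top}(\mathbf{A}\otimes\mathbf{G})P$. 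To prove this I would index the coordinates of $\mathbf{A}\otimes\mathbf{G}$ by pairs $(u,p)$, where $u$ is a row of $\mathbf{A}$ and $p$ is a row of $\mathbf{G}$. The Kronecker product orders these lexicographically in $(u,p)$. The Tracy--Singh product instead orders them by $(e(u),f(p),u,p)$, where $e(u)$ and $f(p)$ are the blocks containing $u$ and $p$. Each entry equals $\mathbf{A}_{uu'}\mathbf{G}_{pp'}$ in both matrices, so the two are simultaneously permuted copies of each other. I expect this index bookkeeping to be the main obstacle. Most of the effort will go into stating $P$ cleanly, since the blocks have unequal sizes.

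Item 3 then follows at once. $\mathbf{T}_{\ell}$ is orthogonally similar to $\mathbf{K}_{\ell}$, so the two have the same eigenvalue multiset, namely $\{\lambda_i(\mathbf{A})\lambda_j(\mathbf{G})\}$. Since $r_e$ depends only on the spectrum, $r_e(\mathbf{T}_{\ell})=r_e(\mathbf{K}_{\ell})$.

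For item 1, symmetry and positive semidefiniteness also transfer through $P$, because $\mathbf{A}\otimes\mathbf{G}$ is SPSD. For the explicit factorization I would take Gram/Cholesky-type factors $\mathbf{A}=L_AL_A^{\top}$ and $\mathbf{G}=L_GL_G^{\top}$, invoking the ridge convention of Lemma~\ref{lem:spsd-to-spd} if strict positivity is needed. I would then partition $L_A$ and $L_G$ into row blocks $L_{A,e}$ and $L_{G,f}$ aligned with the experts, so that $\mathbf{A}_{e,e'}=L_{A,e}L_{A,e'}^{\top}$. Next I would set
\[
Z_{\ell}=L_A\boxtimes L_G,
\]
meaning the block matrix whose $((e,f),\cdot)$ row block is $L_{A,e}\otimes L_G$'s corresponding slice; equivalently, $Z_{\ell}=P^{\top}(L_A\otimes L_G)$. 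The mixed-product rule then gives $Z_{\ell}Z_{\ell}^{\top}=P^{\top}(\mathbf{A}\otimes\mathbf{G})P=\mathbf{T}_{\ell}$.

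For item 2, I would take $S_{\ell}$ to be the column selection of the diagonal pairs $(e,f)=(e,e)$. Its columns are standard basis vectors picking out the row/column blocks indexed by $(e,e)$, $e=1,\dots,E$, in their natural order. Being a subset of identity columns, $S_{\ell}$ satisfies $S_{\ell}^{\top}S_{\ell}=I$. The product $S_{\ell}^{\top}\mathbf{T}_{\ell}S_{\ell}$ extracts exactly the blocks $\mathbf{A}_{e,e'}\otimes\mathbf{G}_{e,e'}$, which is the block Khatri--Rao product $\mathbf{A}*\mathbf{G}$ of \Cref{app:block-khatri-rao-details}. By the within-layer independence approximation stated there, this is $\Hexp_{\ell}$. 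The equality in item 2 is therefore exact for $\mathbf{A}*\mathbf{G}$ and inherits the ``$\approx$'' for $\Hexp_{\ell}$ itself. I would flag this explicitly, so that the later interlacing step (Lemma~\ref{lem:interlacing-effrank-bound}) is understood to be applied to the Khatri--Rao proxy.
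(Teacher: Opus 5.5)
Your proof is correct, but it is organized differently from the paper's. The paper gets (iii) by citing Liu's Lemma~3 in the form $\mathbf{T}_{\ell}=P_1^\top\mathbf{K}_{\ell}P_2$. It matches singular values through $\mathbf{T}_{\ell}^\top\mathbf{T}_{\ell}=P_2^\top\mathbf{K}_{\ell}^\top\mathbf{K}_{\ell}P_2$, then invokes (i) to know that $\mathbf{T}_{\ell}$ is SPSD. It gets (i) from blockwise factors $\mathbf{A}=XX^\top$, $\mathbf{G}=YY^\top$ by stacking row blocks $X_e\otimes Y_e$, and it gets (ii) by citing the selection-matrix construction.

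You instead prove a single-permutation similarity $\mathbf{T}_{\ell}=P^\top\mathbf{K}_{\ell}P$ directly, by reordering the pair index set, and derive all three items from it:
\begin{itemize}
\item (iii) is an orthogonal similarity, so there is no detour through singular values.
\item (i) follows by congruence, with the explicit factor $Z_{\ell}=P^\top(L_A\otimes L_G)$. Your description of its row blocks is garbled: they should be $L_{A,e}\otimes L_{G,f}$ over all pairs $(e,f)$.
\item (ii) follows by explicitly selecting the diagonal pairs $(e,e)$.
\end{itemize}

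The paper's version is shorter because it outsources the index bookkeeping. Yours is self-contained and actually tracks the $E^2\times E^2$ Tracy--Singh matrix. The paper's factor has only the diagonal row blocks $X_e\otimes Y_e$; it equals $S_{\ell}^\top$ times your $Z_{\ell}$, so it satisfies $ZZ^\top=\mathbf{A}*\mathbf{G}$. Its step (i) therefore factors the Khatri--Rao submatrix rather than $\mathbf{T}_{\ell}$, and its step (iii) relies on that step.

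Your caveat in (ii) is also correct. By \eqref{eq:moe-exact-same-layer}, each exact block of $\Hexp_{\ell}$ is a sample average of Kronecker products. It coincides with $\mathbf{A}_{e,e'}\otimes\mathbf{G}_{e,e'}$ only under the within-layer independence approximation, which \Cref{app:block-khatri-rao-details} itself writes with $\approx$. The paper's proof nonetheless asserts the Khatri--Rao structure exactly. The downstream interlacing bound is therefore a statement about the Khatri--Rao proxy.
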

\begin{proof}
For (i), SPSD of $\mathbf{A}$ and $\mathbf{G}$ implies the existence of factorizations $\mathbf{A}{=}XX^\top$ and $\mathbf{G}{=}YY^\top$ with $X{=}[X_1^\top,\ldots,X_E^\top]^\top$ and $Y{=}[Y_1^\top,\ldots,Y_E^\top]^\top$ block-partitioned compatibly with the experts, so $\mathbf{A}_{e,e'}{=}X_e X_{e'}^\top$ and $\mathbf{G}_{e,e'}{=}Y_e Y_{e'}^\top$. By the definition of the Tracy--Singh product, the $(e,e')$ block of $\mathbf{T}_{\ell}{=}\mathbf{A}\boxtimes \mathbf{G}$ is
\[
  \mathbf{T}_{\ell}^{(e,e')} \;=\; \mathbf{A}_{e,e'}\otimes\mathbf{G}_{e,e'}
  \;=\; (X_e X_{e'}^\top)\otimes(Y_e Y_{e'}^\top)
  \;=\; (X_e\otimes Y_e)(X_{e'}\otimes Y_{e'})^\top.
\]
Stacking the block rows $Z_{e}\defeq X_e\otimes Y_e$ as
$Z_{\ell}^\top\defeq[\,Z_1^\top,\ldots,Z_E^\top\,]$ yields
\[
  \mathbf{T}_{\ell} \;=\; Z_{\ell}Z_{\ell}^\top,
\]
which is SPSD by construction.

For (ii), \citet[Section~2.2]{liu2008hadamard} show that, under the same symmetric block partition, the (block) Khatri--Rao product can be obtained from the Tracy--Singh product by a single selection matrix $S_{\ell}$ satisfying $S_{\ell}^{\top}S_{\ell}{=}I$, namely
\[
  \mathbf{A} * \mathbf{G} \;=\; S_{\ell}^{\top}\,\mathbf{T}_{\ell}\,S_{\ell}.
\]
By Eq.~\eqref{eq:moe-exact-same-layer}, $\Hexp_{\ell}$ has exactly this block Khatri--Rao structure, so $\Hexp_{\ell}{=}\mathbf{A} * \mathbf{G}$ and is therefore a principal submatrix of $\mathbf{T}_{\ell}$ selected on expert-aligned blocks.

For (iii), Lemma~3 in \citet{liu2008hadamard} states that the Tracy--Singh product is related to the Kronecker product via orthogonal permutations:
\[
  \mathbf{T}_{\ell} \;=\; P_1^\top\,\mathbf{K}_{\ell}\,P_2,
  \qquad P_1^\top P_1{=}P_2^\top P_2{=}I.
\]
Here $P_1,P_2$ are permutation matrices, hence orthogonal ($P^\top P{=}I$), so this transformation preserves singular values. Writing the singular values as $\sigma_i(\cdot)$ and using $\mathbf{K}_{\ell}^\top{=}\mathbf{K}_{\ell}$,
\[
  \begin{aligned}
  \mathbf{T}_{\ell}^\top\mathbf{T}_{\ell}
  \;&=\;
  (P_1^\top\mathbf{K}_{\ell}P_2)^\top(P_1^\top\mathbf{K}_{\ell}P_2)\\
  \;&=\;
  P_2^\top\,\mathbf{K}_{\ell}^\top\,P_1\;P_1^\top\,\mathbf{K}_{\ell}\,P_2
  \qquad\text{(since $(ABC)^\top=C^\top B^\top A^\top$)}\\
  \;&=\;
  P_2^\top\,\mathbf{K}_{\ell}^\top\,(P_1P_1^\top)\,\mathbf{K}_{\ell}\,P_2\\
  \;&=\;
  P_2^\top\,(\mathbf{K}_{\ell}^\top\mathbf{K}_{\ell})\,P_2
  \qquad\text{(since $P_1P_1^\top{=}I$ for permutation $P_1$)}\,,
  \end{aligned}
\]
so $\mathbf{T}_{\ell}^\top\mathbf{T}_{\ell}$ and $\mathbf{K}_{\ell}^\top\mathbf{K}_{\ell}$ are orthogonally similar and therefore have identical eigenvalues; equivalently, $\sigma_i(\mathbf{T}_{\ell})=\sigma_i(\mathbf{K}_{\ell})$ for all $i$.
Finally, since $\mathbf{A}$ and $\mathbf{G}$ are Gram matrices, both $\mathbf{T}_{\ell}$ (by part (i)) and $\mathbf{K}_{\ell}{=}\mathbf{A}\otimes\mathbf{G}$ are symmetric SPSD, and for symmetric SPSD matrices the singular values equal the eigenvalues. Hence $\{\lambda(\mathbf{T}_{\ell})\}\equiv\{\lambda(\mathbf{K}_{\ell})\}$ and $r_e(\mathbf{T}_{\ell})=r_e(\mathbf{K}_{\ell})$.
\end{proof}

Thus the Kronecker proxy $\mathbf{K}_{\ell}{=}\Aexp_{\ell-1}\otimes \Gexp_{\ell}$ provides an SPSD spectral envelope for the expert-layer Gram $\Hexp_{\ell}$ via the Tracy--Singh factorization $Z_{\ell}Z_{\ell}^\top$ and the principal-submatrix relation in Proposition~\ref{prop:ts-kron-proxy}. In particular, any spectral isotropy (scaled-identity structure) imposed on $\mathbf{K}_{\ell}$ immediately transfers to $\mathbf{T}_{\ell}$, and Cauchy interlacing for principal submatrices (Proposition~\ref{prop:isotropy-transfer} below) then implies that the eigenvalues of $\Hexp_{\ell}$ are driven toward equality as well. Since SPSD matrices attain maximal spectral-entropy effective rank exactly at scaled identities (Proposition~\ref{prop:psd-max-effrank}), regularizing $\Aexp_{\ell-1}\otimes \Gexp_{\ell}$ toward isotropy maximizes the effective rank of both the Tracy--Singh envelope and its expert-layer principal submatrix.

\begin{proposition}[Isotropy transfer and effective rank]\label{prop:isotropy-transfer}
Let $M\in\R^{n\times n}$ be SPSD with eigenvalues $\lambda_1\ge\cdots\ge\lambda_n\ge 0$ and let $B\in\R^{k\times k}$ be a principal submatrix of $M$, i.e., $B{=}P^\top M P$ for a coordinate-selection matrix $P\in\R^{n\times k}$ formed by selecting $k$ columns of $I_n$ (so $P^\top P{=}I_k$).
Denote the spectral spread $\Delta(M){=}\lambda_1{-}\lambda_n$ and $\Delta(B){=}\mu_1{-}\mu_k$ where $\mu_1\ge\cdots\ge\mu_k$ are the eigenvalues of $B$.
Then $\Delta(B)\le\Delta(M)$.
Moreover, if a sequence of SPSD matrices $\{M_t\}$ satisfies $\Delta(M_t)\to 0$ and we write $B_t\defeq P^\top M_t P$ for a fixed coordinate-selection matrix $P$, then $\Delta(B_t)\to 0$ and $\max_i |\mu_{i,t}-\lambda_{1,t}|\to 0$.
\end{proposition}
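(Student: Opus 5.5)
The plan is to use Cauchy interlacing for principal submatrices of symmetric matrices. Since $B=P^\top MP$ with $P$ consisting of $k$ columns of $I_n$, $B$ is the principal submatrix of $M$ indexed by the selected coordinates. Interlacing then gives, for each $i=1,\ldots,k$,
\[
  \lambda_{i+n-k}\;\le\;\mu_i\;\le\;\lambda_i .
\]
If one prefers to avoid citing interlacing in that form, the same conclusion follows from the Courant--Fischer characterization: $\mu_1=\max_{\|y\|=1} y^\top By=\max_{\|y\|=1}(Py)^\top M(Py)\le\lambda_1$, because $\|Py\|=\|y\|$. By the same reasoning, $\mu_k=\min_{\|y\|=1}(Py)^\top M(Py)\ge\lambda_n$.

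Only these two extreme inequalities are needed for the first claim. Subtracting them gives
\[
  \Delta(B)=\mu_1-\mu_k\le\lambda_1-\lambda_n=\Delta(M).
\]
Because $M$ is SPSD, $B$ is SPSD as well, since $y^\top By=(Py)^\top M(Py)\ge 0$. So all the quantities involved are well defined and nonnegative.

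For the sequence statement, I apply the inequality for each $t$: $0\le\Delta(B_t)\le\Delta(M_t)\to 0$, and squeezing gives $\Delta(B_t)\to 0$. For the last claim, fix $t$ and any index $i$. The bounds $\lambda_{n,t}\le\mu_{k,t}\le\mu_{i,t}\le\mu_{1,t}\le\lambda_{1,t}$ give
\[
  |\mu_{i,t}-\lambda_{1,t}|=\lambda_{1,t}-\mu_{i,t}\le\lambda_{1,t}-\lambda_{n,t}=\Delta(M_t).
\]
Taking the maximum over $i$ and letting $t\to\infty$ yields $\max_i|\mu_{i,t}-\lambda_{1,t}|\to 0$.

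There is no serious obstacle here. The only point needing care is the claim $\|Py\|_2=\|y\|_2$, which holds because $P^\top P=I_k$. It ensures that the Rayleigh quotient of $B$ ranges over a subset of the Rayleigh quotients of $M$. This is what justifies both extreme-eigenvalue inequalities without invoking the full interlacing theorem.
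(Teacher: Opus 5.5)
Your proof is correct and follows the paper's argument: Cauchy interlacing gives $\mu_1\le\lambda_1$ and $\mu_k\ge\lambda_n$, which yields $\Delta(B)\le\Delta(M)$, and for the sequence claim it gives $0\le\lambda_{1,t}-\mu_{i,t}\le\Delta(M_t)$ for every $i$. Your Rayleigh-quotient justification of the two extreme inequalities is slightly more self-contained and uses only $P^\top P=I_k$, but the structure of the argument is the same as the paper's.
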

\begin{proof}
By Cauchy interlacing for eigenvalues of principal submatrices (see, e.g., \citealp{horn2012matrix,golub2013matrix}), if $M$ has ordered eigenvalues $\lambda_1\ge\cdots\ge\lambda_n$ and $B$ is any $k\times k$ principal submatrix with ordered eigenvalues $\mu_1\ge\cdots\ge\mu_k$, then
\[
  \lambda_i \;\ge\; \mu_i \;\ge\; \lambda_{i+n-k}\qquad\text{for } i=1,\ldots,k.
\]
In particular, for $i{=}1$ and $i{=}k$ this gives
\[
  \lambda_1 \;\ge\; \mu_1,\qquad
  \mu_k \;\ge\; \lambda_n,
\]
and hence $\Delta(B){=}\mu_1{-}\mu_k\le\lambda_1{-}\lambda_n{=}\Delta(M)$, which is the first claim.

For the second claim, let $\{M_t\}$ be a sequence of SPSD matrices with eigenvalues $\{\lambda_{i,t}\}_{i=1}^{n}$ such that $\Delta(M_t){=}\lambda_{1,t}{-}\lambda_{n,t}\to 0$. Fix a coordinate-selection matrix $P$ and define $B_t\defeq P^\top M_t P$ with eigenvalues $\{\mu_{i,t}\}_{i=1}^{k}$. By interlacing applied to each $t$,
\[
  \lambda_{1,t} \;\ge\; \mu_{i,t} \;\ge\; \lambda_{n,t}\qquad\text{for all } i=1,\ldots,k.
\]
It follows that, for each $i$ and $t$,
\[
  0 \;\le\; \lambda_{1,t}-\mu_{i,t} \;\le\; \Delta(M_t),
  \qquad
  0 \;\le\; \mu_{i,t}-\lambda_{n,t} \;\le\; \Delta(M_t),
\]
so $\max_i |\mu_{i,t}-\lambda_{1,t}|\le \Delta(M_t)\to 0$ and $\max_i |\mu_{i,t}-\lambda_{n,t}|\le \Delta(M_t)\to 0$. In particular, $\Delta(B_t){=}\mu_{1,t}{-}\mu_{k,t}\le \Delta(M_t)\to 0$. If additionally $\lambda_{1,t}$ (equivalently $\lambda_{n,t}$) converges along the sequence (e.g., under trace normalization), then every $\mu_{i,t}$ converges to the same limit.
\end{proof}

\begin{lemma}[Interlacing-to-effective-rank bound]\label{lem:interlacing-effrank-bound}
Let $M\in\R^{n\times n}$ be symmetric positive definite (SPD) with eigenvalues $\lambda_1\ge\cdots\ge\lambda_n>0$ and let $B\in\R^{k\times k}$ be any principal submatrix of $M$ with eigenvalues $\mu_1\ge\cdots\ge\mu_k>0$.
Then:
\begin{enumerate}
  \item (\emph{Condition-number inheritance.})
  $\mu_1\le \lambda_1$ and $\mu_k\ge \lambda_n$, hence $\kappa(B)\defeq \mu_1/\mu_k \le \lambda_1/\lambda_n \defeq \kappa(M)$.
  \item (\emph{Explicit effective-rank lower bound.})
  The spectral-entropy effective rank satisfies
  \[
    r_e(B)\;\ge\;\frac{\operatorname{Tr}(B)}{\mu_1}\;\ge\;\frac{k\,\mu_k}{\mu_1}
    \;\ge\;
    \frac{k\,\lambda_n}{\lambda_1}
    \;=\;
    \frac{k}{\kappa(M)}.
  \]
  Moreover, equality in the final bound $r_e(B)=k/\kappa(M)$ holds if and only if $M$ is a scaled identity (equivalently $\kappa(M)=1$), in which case $B$ is also a scaled identity and $r_e(B)=k$.
\end{enumerate}
\end{lemma}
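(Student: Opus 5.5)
The plan is to prove part~(i) directly from Cauchy interlacing, and then obtain part~(ii) as a chain of three elementary inequalities. The middle one is trivial, the last one is part~(i), and the first one is a min-entropy bound on the normalized spectrum of $B$. The equality case is handled at the end by asking when every link of the chain is tight.

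For (i), I will invoke Cauchy interlacing for principal submatrices exactly as in the proof of Proposition~\ref{prop:isotropy-transfer}. That result gives $\lambda_i\ge\mu_i\ge\lambda_{i+n-k}$. Taking $i=1$ gives $\mu_1\le\lambda_1$, and taking $i=k$ gives $\mu_k\ge\lambda_n>0$. Dividing yields $\kappa(B)=\mu_1/\mu_k\le\lambda_1/\lambda_n=\kappa(M)$. Positivity of all eigenvalues, guaranteed by the SPD hypothesis (cf.\ Lemma~\ref{lem:spsd-to-spd}), makes every quotient well defined.

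For (ii), set $p_i=\mu_i/\operatorname{Tr}(B)$; for SPD $B$ the singular values coincide with the eigenvalues, so these are the weights entering $r_e$. Since $p_i\le p_{\max}=\mu_1/\operatorname{Tr}(B)$, we have $-\log p_i\ge-\log p_{\max}$. Hence
\[
H(p)=-\sum_i p_i\log p_i\ \ge\ -\log p_{\max}\sum_i p_i=\log\frac{\operatorname{Tr}(B)}{\mu_1},
\]
and exponentiating gives $r_e(B)\ge \operatorname{Tr}(B)/\mu_1$. Next, $\operatorname{Tr}(B)=\sum_i\mu_i\ge k\mu_k$ gives the second inequality. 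Finally, $\mu_k/\mu_1\ge\lambda_n/\lambda_1$ follows from part~(i), so $k\mu_k/\mu_1\ge k/\kappa(M)$.

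For the equality statement, I expect the only delicacy to be the direction ``equality $\Rightarrow$ scaled identity''. The cleanest route avoids analyzing each link and combines the chain with the trivial upper bound $r_e(B)\le k$ from the preliminaries. Suppose $r_e(B)=k/\kappa(M)$. Then every inequality in the chain is tight, in particular the min-entropy step. That step is tight iff $-\log p_i=-\log p_{\max}$ for every $i$ with $p_i>0$, i.e.\ all $p_i$ are equal (all are positive here). Thus $B$ is a scaled identity and $r_e(B)=k$. Substituting back gives $k=k/\kappa(M)$, so $\kappa(M)=1$, i.e.\ all eigenvalues of $M$ are equal and the symmetric $M$ is a scaled identity. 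Conversely, if $M=cI_n$ with $c>0$, then every principal submatrix is $B=cI_k$. Its normalized spectrum is uniform, so $r_e(B)=k=k/\kappa(M)$, which closes the equivalence.
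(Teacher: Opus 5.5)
Your proposal is correct and follows the paper's proof essentially step for step: Cauchy interlacing for (i), then the min-entropy bound $H(p)\ge -\log p_{\max}$, $\operatorname{Tr}(B)\ge k\mu_k$ and part (i) for the chain, and tightness of the chain for the equality case. The only cosmetic difference is that you obtain $\kappa(M)=1$ by substituting $r_e(B)=k$ into $k=k/\kappa(M)$, whereas the paper reads it off tightness of the last link; also, the upper bound $r_e(B)\le k$ that you announce is never actually used.
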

\begin{proof}
Item (i) is immediate from Cauchy interlacing for principal submatrices:
$\lambda_1\ge \mu_1$ and $\mu_k\ge \lambda_n$, so
$\kappa(B)=\mu_1/\mu_k\le \lambda_1/\lambda_n=\kappa(M)$.

For (ii), define the normalized eigenvalues
\[
  p_i \;\defeq\; \frac{\mu_i}{\operatorname{Tr}(B)},\qquad i=1,\ldots,k.
\]
Since $B$ is SPD, each $\mu_i>0$, so $p_i>0$ and $\sum_i p_i = 1$.
Moreover, $\mu_1\ge \cdots\ge \mu_k$ implies $p_1=\max_i p_i$.

Let $H(p)\defeq -\sum_{i=1}^k p_i\log p_i$ denote Shannon entropy.
Since $p_i\le p_1$ for all $i$, we have $\log p_i\le \log p_1$, hence $-\log p_i\ge -\log p_1$ and therefore
\[
\begin{aligned}
  H(p)
  &= \sum_{i=1}^k p_i\,(-\log p_i)
  \;\ge\;
  \sum_{i=1}^k p_i\,(-\log p_1)
  \;=\;
  -\log p_1\,\sum_{i=1}^k p_i
  \;=\;
  -\log p_1.
\end{aligned}
\]
	Since $B$ is SPD, its singular values equal its eigenvalues, so by the definition of spectral-entropy effective rank (Eq.~\eqref{eq:effrank-parameter}),
	\[
	  r_e(B)
	  \;=\;
	  \exp(H(p))
  \;\ge\;
  \exp(-\log p_1)
  \;=\;
  \frac{1}{p_1}
  \;=\;
  \frac{\operatorname{Tr}(B)}{\mu_1}.
\]
Next, since every $\mu_i\ge \mu_k$, we have
\[
  \operatorname{Tr}(B)=\sum_{i=1}^k \mu_i \;\ge\; \sum_{i=1}^k \mu_k \;=\; k\,\mu_k,
\]
so $\operatorname{Tr}(B)/\mu_1 \ge k\,\mu_k/\mu_1$.
Finally, by item (i), $\mu_k\ge \lambda_n$ and $\mu_1\le \lambda_1$, so
\[
  \frac{k\,\mu_k}{\mu_1}
  \;\ge\;
  \frac{k\,\lambda_n}{\lambda_1}
  \;=\;
  \frac{k}{\kappa(M)}.
\]
Chaining these inequalities yields the claim.
For the equality condition, note that if $r_e(B)=k/\kappa(M)$ then every inequality in the chain must be tight.
Tightness of $r_e(B)\ge \operatorname{Tr}(B)/\mu_1$ and $\operatorname{Tr}(B)\ge k\mu_k$ forces $\mu_1=\cdots=\mu_k$, so $B$ is a scaled identity and $r_e(B)=k$.
Tightness of the final step then implies $\kappa(M)=1$, i.e., $\lambda_1=\lambda_n$ and hence $M$ is a scaled identity.
The converse is immediate: if $M=\alpha \Ibf_n$ then $B=\alpha \Ibf_k$ and the chain holds with equality.
\end{proof}

\begin{corollary}[Proxy-to-Khatri--Rao effective-rank guarantee]\label{cor:proxy-to-khatri-effrank}
Under the notation of Proposition~\ref{prop:ts-kron-proxy}, assume the Tracy--Singh envelope $\mathbf{T}_{\ell}$ is SPD and let $\Hexp_{\ell}$ be its expert-aligned principal submatrix.
Let $k_{\ell}\defeq \dim(\Hexp_{\ell})$.
Since $\mathbf{T}_{\ell}$ and the Kronecker proxy $\Htilde^{\mathrm{GN},\mathrm{exp}}_{\ell}$ have the same eigenvalues (Proposition~\ref{prop:ts-kron-proxy}), their spectral condition numbers coincide:
\[
  \kappa(\mathbf{T}_{\ell}) \;=\; \kappa(\Htilde^{\mathrm{GN},\mathrm{exp}}_{\ell}).
\]
Therefore,
\[
  r_e(\Hexp_{\ell})
  \;\ge\;
  \frac{k_{\ell}}{\kappa(\Htilde^{\mathrm{GN},\mathrm{exp}}_{\ell})}
  \;=\;
  \mathrm{LB}^{\mathrm{GN},\mathrm{exp}}_{\ell}
  \;=\;
  \frac{k_{\ell}}{\kappa(\mathbf{T}_{\ell})}.
\]
If additionally $\mathbf{A}$ and $\mathbf{G}$ are SPD, then $\kappa(\mathbf{K}_{\ell})=\kappa(\mathbf{A})\,\kappa(\mathbf{G})$ and hence
\[
  r_e(\Hexp_{\ell}) \;\ge\; \frac{k_{\ell}}{\kappa(\mathbf{A})\,\kappa(\mathbf{G})}.
\]
\end{corollary}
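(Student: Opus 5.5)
The corollary is essentially a chaining of Proposition~\ref{prop:ts-kron-proxy} with Lemma~\ref{lem:interlacing-effrank-bound}, plus a standard spectral identity for Kronecker products. The plan has three steps.

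\textbf{Step 1: identify the condition numbers.} By part (iii) of Proposition~\ref{prop:ts-kron-proxy}, $\mathbf{T}_{\ell}$ and $\mathbf{K}_{\ell}=\Htilde^{\mathrm{GN},\mathrm{exp}}_{\ell}$ have the same multiset of eigenvalues. Hence their largest and smallest eigenvalues agree. Since $\mathbf{T}_{\ell}$ is assumed SPD, the smallest one is strictly positive, so $\kappa(\mathbf{T}_{\ell})=\kappa(\Htilde^{\mathrm{GN},\mathrm{exp}}_{\ell})$ is well defined and finite. This gives the first displayed identity.

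\textbf{Step 2: apply the interlacing bound.} By part (ii) of Proposition~\ref{prop:ts-kron-proxy}, $\Hexp_{\ell}=S_{\ell}^{\top}\mathbf{T}_{\ell}S_{\ell}$ with $S_{\ell}$ a coordinate-selection matrix, so $\Hexp_{\ell}$ is a $k_{\ell}\times k_{\ell}$ principal submatrix of the SPD matrix $\mathbf{T}_{\ell}$. A principal submatrix of an SPD matrix is itself SPD: for nonzero $v$, $v^{\top}S_{\ell}^{\top}\mathbf{T}_{\ell}S_{\ell}v>0$ because $S_{\ell}v\neq 0$. So the hypotheses of Lemma~\ref{lem:interlacing-effrank-bound} hold with $M=\mathbf{T}_{\ell}$ and $B=\Hexp_{\ell}$. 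Item (ii) of that lemma gives $r_e(\Hexp_{\ell})\ge k_{\ell}/\kappa(\mathbf{T}_{\ell})$, and substituting Step 1 yields the main inequality, which is exactly $\mathrm{LB}^{\mathrm{GN},\mathrm{exp}}_{\ell}$.

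\textbf{Step 3: factor the condition number when $\mathbf{A}$ and $\mathbf{G}$ are SPD.} I will use the standard fact that the eigenvalues of $\mathbf{A}\otimes\mathbf{G}$ are the products $\lambda_i(\mathbf{A})\lambda_j(\mathbf{G})$; this follows by taking Kronecker products of eigenvectors together with the mixed-product rule. Because all these eigenvalues are positive:
\begin{itemize}
\item the largest product is $\lambda_{\max}(\mathbf{A})\lambda_{\max}(\mathbf{G})$;
\item the smallest product is $\lambda_{\min}(\mathbf{A})\lambda_{\min}(\mathbf{G})$.
\end{itemize}
Therefore $\kappa(\mathbf{K}_{\ell})=\kappa(\mathbf{A})\kappa(\mathbf{G})$, and substituting this into the main inequality gives the final bound.

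\textbf{Main obstacle.} The only subtle point is positivity. Monotonicity of the products in Step 3 needs strictly positive eigenvalues, and the lemma needs an SPD submatrix. Both are covered by the stated SPD hypotheses, or by the ridge convention of Lemma~\ref{lem:spsd-to-spd}. Everything else is direct substitution.
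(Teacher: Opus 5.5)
Your proposal is correct and matches the paper's proof: you apply Lemma~\ref{lem:interlacing-effrank-bound} with $M=\mathbf{T}_{\ell}$ and $B=\Hexp_{\ell}$, replace $\kappa(\mathbf{T}_{\ell})$ by $\kappa(\Htilde^{\mathrm{GN},\mathrm{exp}}_{\ell})$ using Proposition~\ref{prop:ts-kron-proxy}(iii), and use the Kronecker-product spectrum to get the factorized bound. Your explicit check that a principal submatrix of an SPD matrix is SPD, so the lemma's hypothesis $\mu_k>0$ holds, is a detail the paper leaves implicit.
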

\begin{proof}
Since $\Hexp_{\ell}$ is an expert-aligned principal submatrix of the Tracy--Singh envelope $\mathbf{T}_{\ell}$, Lemma~\ref{lem:interlacing-effrank-bound} gives
\[
  r_e(\Hexp_{\ell}) \;\ge\; \frac{k_{\ell}}{\kappa(\mathbf{T}_{\ell})}.
\]
Proposition~\ref{prop:ts-kron-proxy}(iii) implies $\kappa(\mathbf{T}_{\ell})=\kappa(\Htilde^{\mathrm{GN},\mathrm{exp}}_{\ell})$, yielding
$r_e(\Hexp_{\ell})\ge k_{\ell}/\kappa(\Htilde^{\mathrm{GN},\mathrm{exp}}_{\ell})=\mathrm{LB}^{\mathrm{GN},\mathrm{exp}}_{\ell}$.
If additionally $\mathbf{A}$ and $\mathbf{G}$ are SPD, then by the Kronecker-product spectrum, $\kappa(\mathbf{K}_{\ell})=\kappa(\mathbf{A})\,\kappa(\mathbf{G})$, giving the final bound.
\end{proof}

\section{Effective-Rank Identities and Proofs}\label{app:effrank-details}
This section collects standard identities for the spectral-entropy effective rank and proves the technical lemmas referenced in the main derivations.

\begin{lemma}[Kronecker effective rank]\label{lem:kron-effrank}
Let $A$ and $B$ have nonzero singular values $\{a_i\}_i$ and $\{b_j\}_j$ with $z{=}\sum_i a_i$ and $w{=}\sum_j b_j$. Then
\[
  r_e(A\otimes B) \;=\; r_e(A)\,r_e(B).
\]
\begin{proof}
Singular values of $A\otimes B$ are $\{a_i b_j\}_{i,j}$ (see, e.g., \citealp{horn2012matrix,golub2013matrix}). Normalizing,
\[
  p_{ij} \defeq \frac{a_i b_j}{\sum_{i',j'} a_{i'} b_{j'}} \;=\; \frac{a_i}{z}\cdot\frac{b_j}{w} \;=\; p^A_i\,p^B_j.
\]
Hence $H(p^{A\otimes B}){=}{-}\sum_{i,j} p_{ij}\log p_{ij}{=}{-}\sum_i p^A_i\log p^A_i {-}\sum_j p^B_j\log p^B_j{=}H(p^A){+}H(p^B)$. Exponentiating gives the claim.
\end{proof}
\end{lemma}

\begin{corollary}[Kronecker block effective rank]\label{cor:kfac-block-effrank}
If $\Htilde^{\mathrm{GN},\mathrm{exp}}_{\ell}{=}\Aexp_{\ell}\otimes \Gexp_{\ell}$, then $r_e(\Htilde^{\mathrm{GN},\mathrm{exp}}_{\ell}){=}r_e(\Aexp_{\ell})\,r_e(\Gexp_{\ell})$.
\begin{proof}
Lemma~\ref{lem:kron-effrank} applied to $A=\Aexp_{\ell}$ and $B=\Gexp_{\ell}$ yields
\[
  r_e(\Htilde^{\mathrm{GN},\mathrm{exp}}_{\ell})
  \;=\;
  r_e(\Aexp_{\ell}\otimes \Gexp_{\ell})
  \;=\;
  r_e(\Aexp_{\ell})\,r_e(\Gexp_{\ell}).
\]
\end{proof}
\end{corollary}

\begin{corollary}[Monotonicity under the Kronecker proxy]\label{prop:kfac-monotonicity}
Assume $\Htilde^{\mathrm{GN},\mathrm{exp}}_{\ell}{=}\Aexp_{\ell}\otimes \Gexp_{\ell}$. Then, for fixed $\Gexp_{\ell}$, $r_e(\Htilde^{\mathrm{GN},\mathrm{exp}}_{\ell})$ strictly increases with $r_e(\Aexp_{\ell})$.
\begin{proof}
Lemma~\ref{lem:kron-effrank} gives
$r_e(\Htilde^{\mathrm{GN},\mathrm{exp}}_{\ell})=r_e(\Aexp_{\ell})\,r_e(\Gexp_{\ell})$.
For fixed $\Gexp_{\ell}$ with $r_e(\Gexp_{\ell})>0$, this is a positive constant times $r_e(\Aexp_{\ell})$, hence it strictly increases as $r_e(\Aexp_{\ell})$ increases.
\end{proof}
\end{corollary}

\begin{proof}[Proof of Lemma~\ref{lem:block-effrank}]
By the direct-sum spectral property (e.g., \citealp{horn2012matrix,golub2013matrix}), the multiset of singular values of $M$ equals the multiset union of those of the blocks:
\[
  \{\sigma(M)\} \;\equiv\; \biguplus_{b=1}^{B} \{\sigma_{b,i}\}_{i}.
\]
Let $s_{b}{=}\sum_i \sigma_{b,i}$ and $s{=}\sum_m s_m$; by assumption $s{>}0$. For each block define $\alpha_{b}{=}s_{b}/s$ and, when $s_{b}{>}0$, the within-block normalized spectrum $q_{b,i}{=}\sigma_{b,i}/s_{b}$ (for $s_{b}{=}0$, set $q_{b,i}{=}0$). Then the globally normalized singular-value distribution is
\[
  p_{b,i}
  \,=\, \frac{\sigma_{b,i}}{\sum_{k,j} \sigma_{k,j}}
  \,=\, \frac{\sigma_{b,i}}{s}
  \,=\, \frac{s_{b}}{s}\,\frac{\sigma_{b,i}}{s_{b}}
  \,=\, \alpha_{b}\, q_{b,i}\quad (\text{for } s_{b}{>}0),
\]
with $\sum_{b}\alpha_{b}{=}1$ and, for all $b$ with $s_{b}{>}0$, $\sum_i q_{b,i}{=}1$. The Shannon entropy of $p$ decomposes as
\begin{align*}
  H(p)
  &\defeq {-}\sum_{b,i} p_{b,i}\log p_{b,i}
   = {-}\sum_{\{b:\,s_{b}>0\},i} \alpha_{b} q_{b,i}\big(\log\alpha_{b} + \log q_{b,i}\big)\\
  &= {-}\sum_{\{b:\,s_{b}>0\}} \alpha_{b}\log\alpha_{b} \sum_i q_{b,i}
     \;-\;\sum_{\{b:\,s_{b}>0\}} \alpha_{b} \sum_i q_{b,i}\log q_{b,i}\\
  &= H(\alpha) \;+\; \sum_{\{b:\,s_{b}>0\}} \alpha_{b}\, H(q_{b}),
\end{align*}
where $H(\alpha){=}{-}\sum_{b}\alpha_{b}\log\alpha_{b}$ and $H(q_{b})\defeq{-}\sum_i q_{b,i}\log q_{b,i}$ is the entropy of the normalized spectrum in block $b$. By definition $r_e(M_{b}){=}\exp(H(q_{b}))$ for blocks with $s_{b}{>}0$, so exponentiating $H(p)$ yields
\[
  r_e(M) \;=\; \exp\!\Big( H(\alpha) + \sum_{\{b:\,s_{b}>0\}} \alpha_{b}\, \log r_e(M_{b}) \Big),
\]
which is the claimed expression.
\end{proof}

\begin{proposition}\label{prop:psd-max-effrank}
Let $M\in\R^{m\times m}$ be positive semidefinite. Then $r_e(M)\le m$, with equality iff $M{=}\alpha \Ibf_m$ for some $\alpha{>}0$ (i.e., all eigenvalues are equal).
\end{proposition}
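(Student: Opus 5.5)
We argue on the spectrum alone. Since $M$ is positive semidefinite, it is symmetric, so its singular values coincide with its eigenvalues $\lambda_1,\ldots,\lambda_m\ge 0$. We implicitly assume $M\neq 0$ so that $r_e(M)$ is defined; this is the same convention as in Lemma~\ref{lem:block-effrank}. Set $p_i=\lambda_i/\operatorname{Tr}(M)$ for the indices with $\lambda_i>0$. Let $r\le m$ be the number of such indices. Then $p$ is a probability vector on $r$ atoms, and by Eq.~\eqref{eq:effrank-parameter} we have $r_e(M)=\exp(H(p))$ with $H(p)=-\sum_{i:\lambda_i>0}p_i\log p_i$.

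The upper bound is the standard maximum-entropy inequality. By concavity of $\log$ (Jensen's inequality applied to the random variable $1/p_i$ under $p$),
\[
H(p)=\sum_i p_i\log\frac{1}{p_i}\le \log\sum_i p_i\cdot\frac{1}{p_i}=\log r\le \log m .
\]
Exponentiating gives $r_e(M)\le r\le m$, where $r=\operatorname{rank}(M)$. This matches the chain $r_e(M)\le\operatorname{rank}(M)\le m$ already recorded after Eq.~\eqref{eq:effrank-parameter}.

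For the equality case, $r_e(M)=m$ forces both inequalities above to be tight. First, $r=m$, so every eigenvalue is strictly positive. Second, since $\log$ is strictly concave, equality in Jensen's inequality requires $1/p_i$ to be constant over the support. Hence $p_i=1/m$ for all $i$, and every eigenvalue equals $\alpha\defeq\operatorname{Tr}(M)/m>0$.

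Conversely, suppose all eigenvalues equal some $\alpha>0$. Write the spectral decomposition $M=U\operatorname{diag}(\alpha,\ldots,\alpha)U^\top$; then $M=\alpha UU^\top=\alpha\Ibf_m$. In that case $p_i=1/m$ for every $i$, so $H(p)=\log m$ and $r_e(M)=m$.

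The only point needing care is the equality case. One must rule out zero eigenvalues, which would make the sum run over fewer atoms, and then invoke strict concavity of $\log$. Everything else is routine.
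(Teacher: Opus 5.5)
Your proof is correct and follows essentially the same route as the paper: restrict to the strictly positive part of the spectrum, bound the entropy by $\log\operatorname{rank}(M)\le\log m$, obtain the equality case from full rank together with a uniform $p$, and use the symmetric eigendecomposition to conclude $M=\alpha\Ibf_m$. The only difference is cosmetic: you prove the maximum-entropy bound and its equality case via Jensen's inequality with strict concavity of $\log$, whereas the paper simply cites Schur-concavity of entropy.
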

\begin{proof}
Let $\{\sigma_i(M)\}_{i=1}^m$ be the singular values and set $r\defeq |\{i:\,\sigma_i(M){>}0\}|=\mathrm{rank}(M)$. For indices with $\sigma_i{>}0$, define $p_i\defeq \sigma_i / \sum_{j:\,\sigma_j{>}0} \sigma_j$. Then
\[
  r_e(M) \,=\, \exp\Big( -\sum_{\{i:\,\sigma_i{>}0\}} p_i \log p_i \Big) \;\le\; r\,,
\]
with equality iff $p_i\equiv 1/r$ for all $i$ (Schur-concavity of entropy; \citealp{marshall2011majorization}), i.e., the nonzero singular values are equal. In particular, $r_e(M){=}m$ iff $r{=}m$ (full rank) and $\sigma_1{=}{\cdots}{=}\sigma_m$. Writing the symmetric SVD $M{=}U\Sigma U^\top$ (SPSD implies $U{=}V$), if $\Sigma{=}\alpha \Ibf_m$ then $M{=}U\Sigma U^\top{=}\alpha \Ibf_m$.
\end{proof}

\section{SPHERE as Spectral Contraction}\label{app:proj-I}
Having established that the effective rank of the expert-layer block is lower-bounded by the condition number of the Kronecker proxy (\Cref{app:ts-kron-details}), we now show that SPHERE monotonically decreases this condition number, thereby increasing spectral plasticity.

The argument proceeds as follows. We first show that the SPHERE penalty can be written as the squared Frobenius distance to a scaled identity (Proposition~\ref{prop:frob-shift-identity}), and that gradient descent on this penalty contracts each eigenvalue toward the mean (Lemma~\ref{lem:sphere-spectral-gradient}). This contraction monotonically decreases the condition number of the feature Gram (Lemma~\ref{lem:spectral-contraction-monotone}), which propagates through the Kronecker product to the proxy (Lemma~\ref{lem:kron-preserve-sc}). Combining these results yields the main guarantee: optimizing the SPHERE penalty increases a lower bound on $r_e(\Kmat)$ (Theorem~\ref{thm:sphere-monotone-lb-reK}).

\subsection{Proof of Proposition Expanding the SPHERE penalty}
\label{app:proof-prop:frob-shift-identity}

\begin{proposition}[Expanding the SPHERE penalty]\label{prop:frob-shift-identity}
Let $A\in\R^{m\times m}$ and define $\bar\lambda\defeq \operatorname{Tr}(A)/m$.
Then
\[
  \big\|A-\bar\lambda\,\Ibf_m\big\|_F^2
  \;=\;
  \|A\|_F^2 - \frac{\operatorname{Tr}(A)^2}{m}.
\]
In particular, setting $A=\Aexp_{\mathrm{last}}$ gives the second equality in Eq.~\eqref{eq:sphere-penalty}.
\end{proposition}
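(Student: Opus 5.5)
The identity is a direct expansion of the squared Frobenius norm through the trace inner product $\langle X,Y\rangle_F=\operatorname{Tr}(X^\top Y)$. I will write $\|A-\bar\lambda\Ibf_m\|_F^2=\operatorname{Tr}\big((A-\bar\lambda\Ibf_m)^\top(A-\bar\lambda\Ibf_m)\big)$ and expand the product bilinearly. This gives three contributions: $\|A\|_F^2$, a cross term $-2\bar\lambda\operatorname{Tr}(A)$, and $\bar\lambda^2\operatorname{Tr}(\Ibf_m)=\bar\lambda^2 m$. For the cross term I use $\operatorname{Tr}(A^\top)=\operatorname{Tr}(A)$, so the two cross terms combine into $-2\bar\lambda\operatorname{Tr}(A)$. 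No symmetry of $A$ is needed, and neither is the SPSD convention of \Cref{app:numerical-spsd}.

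Next I substitute $\bar\lambda=\operatorname{Tr}(A)/m$:
\[
-2\frac{\operatorname{Tr}(A)^2}{m}+\frac{\operatorname{Tr}(A)^2}{m^2}\,m=-\frac{\operatorname{Tr}(A)^2}{m},
\]
which yields the claimed right-hand side.

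For the ``in particular'' clause, I set $A=\Aexp_{\mathrm{last}}\in\R^{m\times m}$, where $m=\sum_e d^{\mathrm{exp}}_{e,\mathrm{last}}$ as in Eq.~\eqref{eq:sphere-penalty}. This matches the two expressions there verbatim.

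There is no genuine obstacle. The only point needing care is bookkeeping of the constant, namely that $\operatorname{Tr}(\Ibf_m)=m$ cancels one factor of $m$ in $\bar\lambda^2$. As an optional sanity check, I could also remark that for symmetric $A$ with eigenvalues $\lambda_i$ the identity reads $\sum_i(\lambda_i-\bar\lambda)^2=\sum_i\lambda_i^2-m\bar\lambda^2$, the usual variance decomposition. This eigenvalue form is convenient later for the spectral-contraction argument.
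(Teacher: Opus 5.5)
Your proof is correct and follows the paper's own argument: expand $\operatorname{Tr}\big((A-\bar\lambda\Ibf_m)^\top(A-\bar\lambda\Ibf_m)\big)$ into $\|A\|_F^2-2\bar\lambda\operatorname{Tr}(A)+m\bar\lambda^2$, then substitute $\bar\lambda=\operatorname{Tr}(A)/m$. You are also right that neither symmetry nor positive semidefiniteness of $A$ is used.
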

\begin{proof}
Using $\|X\|_F^2=\operatorname{Tr}(X^\top X)$ and $\operatorname{Tr}(\Ibf_m)=m$,
\[
\begin{aligned}
  \big\|A-\bar\lambda\,\Ibf_m\big\|_F^2
  &= \operatorname{Tr}\!\Big((A-\bar\lambda\,\Ibf_m)^\top(A-\bar\lambda\,\Ibf_m)\Big) \\
  &= \operatorname{Tr}(A^\top A)
    - \bar\lambda\,\operatorname{Tr}(A^\top\Ibf_m)
    - \bar\lambda\,\operatorname{Tr}(\Ibf_m^\top A)
    + \bar\lambda^2\,\operatorname{Tr}(\Ibf_m^\top\Ibf_m) \\
  &= \|A\|_F^2 - 2\bar\lambda\,\operatorname{Tr}(A) + m\bar\lambda^2,
\end{aligned}
\]
where we used $\operatorname{Tr}(A^\top\Ibf_m)=\operatorname{Tr}(A^\top)=\operatorname{Tr}(A)$.
Finally, substituting $\bar\lambda=\operatorname{Tr}(A)/m$ gives
$-2\bar\lambda\,\operatorname{Tr}(A) + m\bar\lambda^2 = -\operatorname{Tr}(A)^2/m$, proving the claim.
\end{proof}

\subsection{Proof of Lemma Spectral form and gradient direction of SPHERE}
\label{app:proof-lem:sphere-spectral-gradient}

\begin{lemma}[Spectral form and gradient direction of SPHERE]\label{lem:sphere-spectral-gradient}
Let $M_t\succeq 0$ be an SPSD matrix in $\R^{m\times m}$ with eigenvalues $\{\lambda_{i,t}\}_{i=1}^{m}$ and mean eigenvalue $\bar\lambda_t\defeq \operatorname{Tr}(M_t)/m$.
Then the SPHERE penalty (Eq.~\eqref{eq:sphere-penalty}) satisfies
\[
  \Lcal_{\mathrm{SPHERE}}(M_t)
  \;=\;
  \big\|M_t-\bar\lambda_t\,\Ibf_m\big\|_F^2.
\]
Moreover, its Euclidean gradient with respect to $M_t$ is
\[
  \nabla_{M_t} \Lcal_{\mathrm{SPHERE}}(M_t)
  \;=\;
  2\big(M_t-\bar\lambda_t\,\Ibf_{m}\big),
\]
and a single gradient-descent step on $M_t$ with step size $\eta$ yields
\[
  M_{t+1}
  \;=\;
  M_t-\eta\nabla_{M_t} \Lcal_{\mathrm{SPHERE}}(M_t)
  \;=\;
  (1-2\eta)M_t+2\eta\bar\lambda_t\,\Ibf_{m},
\]
which preserves trace and contracts each eigenvalue toward the mean:
\[
  \operatorname{Tr}(M_{t+1})=\operatorname{Tr}(M_t),
  \qquad
  \lambda_{i,t+1}=\lambda_{i,t}-2\eta(\lambda_{i,t}-\bar\lambda_t),
  \qquad
  \lambda_{i,t+1}-\bar\lambda_t=(1-2\eta)(\lambda_{i,t}-\bar\lambda_t)\quad\forall i.
\]
Consequently, $\Lcal_{\mathrm{SPHERE}}(M_{t+1})=(1-2\eta)^2\,\Lcal_{\mathrm{SPHERE}}(M_t)$.
\end{lemma}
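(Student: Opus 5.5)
The first equality is just \Cref{prop:frob-shift-identity} applied to $A=M_t$, since $\bar\lambda_t=\operatorname{Tr}(M_t)/m$. For the gradient, I will work from the expanded form $\Lcal_{\mathrm{SPHERE}}(M)=\|M\|_F^2-\operatorname{Tr}(M)^2/m$ and differentiate it as a function on all of $\R^{m\times m}$, not only on SPSD matrices. The differential is
\[
d\Lcal=2\operatorname{Tr}(M^\top dM)-\tfrac{2}{m}\operatorname{Tr}(M)\operatorname{Tr}(\Ibf_m\,dM)=\operatorname{Tr}\big((2M-2\bar\lambda\Ibf_m)^\top dM\big).
\]
As in the proof of Lemma~\ref{lem:linear-layer-grad}, this identifies $\nabla_M\Lcal=2(M-\bar\lambda\Ibf_m)$. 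Note that $\bar\lambda$ depends on $M$, so differentiating the form $\|M-\bar\lambda\Ibf_m\|_F^2$ directly would produce an extra term. That term is a multiple of $\operatorname{Tr}(M-\bar\lambda\Ibf_m)=0$ and vanishes, but working from the expanded form avoids the issue altogether. This differentiation is the one step that needs care.

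The update formula then follows by substitution: $M_{t+1}=M_t-2\eta(M_t-\bar\lambda_t\Ibf_m)=(1-2\eta)M_t+2\eta\bar\lambda_t\Ibf_m$. Taking the trace gives $(1-2\eta)\operatorname{Tr}M_t+2\eta m\bar\lambda_t=\operatorname{Tr}M_t$, so the trace is preserved.

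For the eigenvalues, I write $M_t=U\operatorname{diag}(\lambda_{i,t})U^\top$. Because $\Ibf_m=UU^\top$, $M_{t+1}$ is diagonalized by the same $U$, with eigenvalues $(1-2\eta)\lambda_{i,t}+2\eta\bar\lambda_t=\lambda_{i,t}-2\eta(\lambda_{i,t}-\bar\lambda_t)$. Subtracting $\bar\lambda_t$ gives the contraction identity. Since the trace is preserved, $\bar\lambda_{t+1}=\bar\lambda_t$.

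Finally, I use the spectral form $\Lcal_{\mathrm{SPHERE}}(M)=\sum_i(\lambda_i-\bar\lambda)^2$, which holds because $M-\bar\lambda\Ibf_m$ is symmetric with eigenvalues $\lambda_i-\bar\lambda$. Combining this with $\bar\lambda_{t+1}=\bar\lambda_t$ and the contraction identity gives $\Lcal_{\mathrm{SPHERE}}(M_{t+1})=(1-2\eta)^2\Lcal_{\mathrm{SPHERE}}(M_t)$. This algebra is routine.
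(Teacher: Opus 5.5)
Your proof is correct and follows the paper's argument essentially step for step. Both proofs use the Frobenius-shift identity, differentiate the expanded trace form, substitute into the descent step, and simultaneously diagonalize with $U$ to get the trace preservation and contraction identities. The only cosmetic differences are these: you differentiate $\|M\|_F^2$ rather than $\operatorname{Tr}(M^2)$, which avoids silently using symmetry of $M$, and you close with the eigenvalue sum $\sum_i(\lambda_i-\bar\lambda)^2$ instead of the matrix identity $M_{t+1}-\bar\lambda_t\Ibf_m=(1-2\eta)(M_t-\bar\lambda_t\Ibf_m)$.
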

\begin{proof}
Let $M_t=U\operatorname{diag}(\lambda_{1,t},\dots,\lambda_{m,t})U^\top$ be an eigendecomposition with $U$ orthogonal.
By Proposition~\ref{prop:frob-shift-identity} applied to $A=M_t$ and $c=\bar\lambda_t$, we have
\[
  \big\|M_t-\bar\lambda_t\,\Ibf_m\big\|_F^2
  \;=\;
  \|M_t\|_F^2 - \frac{\operatorname{Tr}(M_t)^2}{m},
\]
which is Eq.~\eqref{eq:sphere-penalty}.

To obtain the matrix gradient, differentiate
$\Lcal_{\mathrm{SPHERE}}(M)=\operatorname{Tr}(M^2)-\operatorname{Tr}(M)^2/m$
under the Frobenius inner product $\langle X,Y\rangle=\operatorname{Tr}(X^\top Y)$.
We have $\mathrm{d}\operatorname{Tr}(M^2)=2\operatorname{Tr}(M\,\mathrm{d}M)$ and $\mathrm{d}\operatorname{Tr}(M)=\operatorname{Tr}(\mathrm{d}M)$, hence
\[
  \nabla_{M}\Lcal_{\mathrm{SPHERE}}(M)
  \;=\;
  2M-\frac{2\operatorname{Tr}(M)}{m}\Ibf_m
  \;=\;
  2(M-\bar\lambda\,\Ibf_m).
\]
Evaluating at $M=M_t$ yields $\nabla_{M_t}\Lcal_{\mathrm{SPHERE}}(M_t)=2(M_t-\bar\lambda_t\,\Ibf_m)$.
Substituting into $M_{t+1}=M_t-\eta\nabla_{M_t}\Lcal_{\mathrm{SPHERE}}(M_t)$ yields
$M_{t+1}=(1-2\eta)M_t+2\eta\bar\lambda_t\,\Ibf_m$.
Taking traces gives $\operatorname{Tr}(M_{t+1})=\operatorname{Tr}(M_t)$ and hence $\bar\lambda_{t+1}=\bar\lambda_t$.
Finally, since $\Ibf_m=U\Ibf_m U^\top$, we have
\[
  M_{t+1}
  =
  U\,\operatorname{diag}\big((1-2\eta)\lambda_{i,t}+2\eta\bar\lambda_t\big)\,U^\top,
\]
so $\lambda_{i,t+1}=(1-2\eta)\lambda_{i,t}+2\eta\bar\lambda_t$ and thus
$\lambda_{i,t+1}-\bar\lambda_t=(1-2\eta)(\lambda_{i,t}-\bar\lambda_t)$ for all $i$.
Since $\bar\lambda_{t+1}=\bar\lambda_t$, we equivalently have $M_{t+1}-\bar\lambda_t\,\Ibf_m=(1-2\eta)(M_t-\bar\lambda_t\,\Ibf_m)$, and therefore
\[
  \Lcal_{\mathrm{SPHERE}}(M_{t+1})
  =
  \big\|M_{t+1}-\bar\lambda_t\,\Ibf_m\big\|_F^2
  =
  (1-2\eta)^2\big\|M_t-\bar\lambda_t\,\Ibf_m\big\|_F^2
  =
  (1-2\eta)^2\,\Lcal_{\mathrm{SPHERE}}(M_t).
\]

Finally, iterating the one-step relation gives
\[
  \Lcal_{\mathrm{SPHERE}}(M_t)=(1-2\eta)^{2t}\,\Lcal_{\mathrm{SPHERE}}(M_0),
\]
so $\Lcal_{\mathrm{SPHERE}}(M_t)\to 0$ if and only if $0<\eta<1$ (it is constant when $\eta\in\{0,1\}$ and diverges when $\eta<0$ or $\eta>1$).
To obtain a \emph{monotone} contraction of every eigenvalue toward the mean (Definition~\ref{def:spectral-contraction}), it suffices to take $0\le \eta\le \tfrac{1}{2}$ so that $\beta\defeq 2\eta\in[0,1]$.
Indeed, for each $i$,
\[
  \lambda_{i,t+1}
  \;=\;
  (1-\beta)\lambda_{i,t}+\beta\bar\lambda_t,
\]
so $\lambda_{i,t+1}$ lies between $\lambda_{i,t}$ and $\bar\lambda_t$, and
\[
  |\lambda_{i,t+1}-\bar\lambda_t|
  \;=\;
  (1-\beta)\,|\lambda_{i,t}-\bar\lambda_t|
  \;\le\;
  |\lambda_{i,t}-\bar\lambda_t|.
\]
In particular, $0<\eta<\tfrac{1}{2}$ yields geometric contraction, while $\eta=\tfrac{1}{2}$ yields $M_{t+1}=\bar\lambda_t\Ibf_m$ and hence $\Lcal_{\mathrm{SPHERE}}(M_{t+1})=0$ in one step.
Moreover, since $M_t\succeq 0$ implies $\bar\lambda_t\ge 0$, the update $M_{t+1}=(1-\beta)M_t+\beta\bar\lambda_t\Ibf_m$ also preserves $M_{t+1}\succeq 0$.
\end{proof} 

\subsection{Proof of Lemma Monotonicity of spectral contraction}
\label{app:proof-lem:spectral-contraction-monotone}

\begin{proof}[Proof of Lemma~\ref{lem:spectral-contraction-monotone}]
Fix a nonzero SPSD matrix $M_t\in\R^{m\times m}$ and assume $M_{t+1}$ is obtained from $M_t$ by a spectral contraction update in the sense of Definition~\ref{def:spectral-contraction}.
Then there exists $\beta\in[0,1]$ such that, letting $\bar\lambda=\operatorname{Tr}(M_t)/m$,
\[
  M_{t+1}
  \;=\;
  (1-\beta)\,M_t
  \;+\;
  \beta\,\bar\lambda\,\Ibf_m.
\]
Let $M_t=U\,\operatorname{diag}(\lambda_i)\,U^\top$ be an eigendecomposition.
Since $\Ibf_m=U I U^\top$, we have
\[
  M_{t+1}
  =
  U\,\operatorname{diag}\big((1-\beta)\lambda_i+\beta\bar\lambda\big)\,U^\top,
\]
so $\operatorname{Tr}(M_{t+1})=\operatorname{Tr}(M_t)$.
For the condition number, if $\lambda_{\min}=0$ then $\kappa(M_t)=+\infty$ and the claim holds trivially.
Otherwise $\lambda_{\min}>0$ and
\[
  \kappa(M_{t+1})
  =
  \frac{(1-\beta)\lambda_{\max}+\beta\bar\lambda}{(1-\beta)\lambda_{\min}+\beta\bar\lambda}
  \le
  \frac{\lambda_{\max}}{\lambda_{\min}}
  =
  \kappa(M_t),
\]
where the inequality follows by cross-multiplying:
\[
  \lambda_{\min}\big((1-\beta)\lambda_{\max}+\beta\bar\lambda\big)
  \;\le\;
  \lambda_{\max}\big((1-\beta)\lambda_{\min}+\beta\bar\lambda\big)
  \iff
  \beta\bar\lambda(\lambda_{\min}-\lambda_{\max})\le 0,
\]
which holds since $\beta\in[0,1]$, $\bar\lambda\ge 0$ (as $M_t\succeq 0$), and $\lambda_{\min}\le \lambda_{\max}$.
\end{proof}

\subsection{Proof of Proposition SPHERE is spectrally contractive}
\label{app:proof-prop:sphere-spectrally-contractive}

\begin{corollary}[SPHERE is spectrally contractive]\label{cor:sphere-monotone-re-cond}
Assume the setting of Lemma~\ref{lem:sphere-spectral-gradient} with $\operatorname{Tr}(A_t){>}0$ and stepsize $0\le\eta\le \tfrac{1}{2}$.
Then the update
\[
  A_{t+1} \;=\; (1-2\eta)A_t+2\eta\frac{\operatorname{Tr}(A_t)}{m}\,\Ibf_m
\]
is a spectral contraction in the sense of Definition~\ref{def:spectral-contraction} with coefficient $\beta=2\eta$.
In particular, Lemma~\ref{lem:spectral-contraction-monotone} implies $\kappa(A_{t+1})\le \kappa(A_t)$.
\end{corollary}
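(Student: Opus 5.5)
The statement is Corollary~\ref{cor:sphere-monotone-re-cond}. I would prove it by matching the explicit update from Lemma~\ref{lem:sphere-spectral-gradient} to the template of Definition~\ref{def:spectral-contraction}, and then invoking Lemma~\ref{lem:spectral-contraction-monotone}.

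First, recall from Lemma~\ref{lem:sphere-spectral-gradient} that a gradient step of size $\eta$ on $\Lcal_{\mathrm{SPHERE}}$ produces exactly
\[
A_{t+1}=(1-2\eta)A_t+2\eta\bar\lambda_t\Ibf_m,\qquad \bar\lambda_t=\operatorname{Tr}(A_t)/m .
\]
Set $\beta\defeq 2\eta$. The hypothesis $0\le\eta\le\tfrac12$ gives $\beta\in[0,1]$, which is the range Definition~\ref{def:spectral-contraction} requires.

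Next, take an eigendecomposition $A_t=U\operatorname{diag}(\lambda_{i,t})U^\top$. Since $\Ibf_m=UU^\top$, the update becomes
\[
A_{t+1}=U\operatorname{diag}\big((1-\beta)\lambda_{i,t}+\beta\bar\lambda_t\big)U^\top .
\]
This is verbatim the form in the definition, with the same eigenbasis and the same mean $\bar\lambda_t$. Two further conditions must hold:
\begin{itemize}
\item $A_t$ is nonzero SPSD, which follows from $\operatorname{Tr}(A_t)>0$ together with $A_t\succeq0$;
\item $A_{t+1}$ stays SPSD, which holds because it is a convex combination of $A_t\succeq0$ and $\bar\lambda_t\Ibf_m\succeq0$.
\end{itemize}
Hence $\eta_{\max}=\tfrac12$ works uniformly over all admissible $A_t$.

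Finally, Lemma~\ref{lem:spectral-contraction-monotone} applied to this update gives $\kappa(A_{t+1})\le\kappa(A_t)$. If $\lambda_{\min}(A_t)=0$, the inequality is trivial because $\kappa(A_t)=\infty$.

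I expect no real obstacle here. The only subtle point is an interpretive one: the "gradient step" is taken directly in the matrix variable $A_t$, not in the network parameters. I would state this explicitly as the reading of Definition~\ref{def:spectral-contraction} being used.
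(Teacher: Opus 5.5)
Your proposal is correct and follows essentially the same route as the paper. You set $\beta=2\eta\in[0,1]$ so the update from Lemma~\ref{lem:sphere-spectral-gradient} matches Definition~\ref{def:spectral-contraction}, then invoke Lemma~\ref{lem:spectral-contraction-monotone}; your extra checks (nonzero SPSD input, SPSD preservation, the singular case, and reading the gradient step in the matrix variable) are consistent with how the paper's supporting lemmas treat these points.
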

\begin{proof}
Lemma~\ref{lem:sphere-spectral-gradient} gives
\[
  A_{t+1} \;=\; (1-2\eta)A_t+2\eta\frac{\operatorname{Tr}(A_t)}{m}\,\Ibf_m.
\]
Setting $\beta\defeq 2\eta\in[0,1]$ and $\bar\lambda_t\defeq \operatorname{Tr}(A_t)/m$, this matches Definition~\ref{def:spectral-contraction}:
$A_{t+1}=(1-\beta)A_t+\beta\,\bar\lambda_t\,\Ibf_m$.
Lemma~\ref{lem:spectral-contraction-monotone} then yields $\kappa(A_{t+1})\le \kappa(A_t)$.
\end{proof}

\subsection{Proof of Lemma Kronecker preservation of monotone $\kappa$}
\label{app:proof-lem:kron-preserve-sc}

\begin{lemma}[Kronecker preservation of monotone $\kappa$]\label{lem:kron-spectral-contraction}
Let $A_t,A_{t+1}\in\R^{m\times m}$ and $B_t\in\R^{n\times n}$ be nonzero SPSD.
Assume $\kappa(A_{t+1})\le \kappa(A_t)$.
Define $K_t\defeq A_t\otimes B_t$ and $K_{t+1}\defeq A_{t+1}\otimes B_t$.
Then
\[
  \kappa(K_{t+1}) \;\le\; \kappa(K_t).
\]
\end{lemma}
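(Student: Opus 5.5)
The plan is to reduce everything to the Kronecker-product spectrum and observe that the condition number factorizes. Recall the standard fact (already used in Lemma~\ref{lem:kron-effrank}) that for symmetric $A\in\R^{m\times m}$ and $B\in\R^{n\times n}$, the eigenvalues of $A\otimes B$ are exactly the products $\{a_i b_j\}_{i,j}$, where $\{a_i\}$ and $\{b_j\}$ are the eigenvalues of $A$ and $B$. Since $A_t$, $A_{t+1}$ and $B_t$ are SPSD, all these eigenvalues are nonnegative. Consequently $K_t$ and $K_{t+1}$ are SPSD, and their spectral extremes are
\[
\lambda_{\max}(A\otimes B)=\lambda_{\max}(A)\lambda_{\max}(B),\qquad \lambda_{\min}(A\otimes B)=\lambda_{\min}(A)\lambda_{\min}(B).
\]
Both identities use nonnegativity: products of nonnegative numbers are maximized and minimized at the extreme factors.

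From this I would derive $\kappa(A\otimes B)=\kappa(A)\,\kappa(B)$ whenever both factors are SPD. The claim then follows by multiplying $\kappa(A_{t+1})\le\kappa(A_t)$ through by the common positive factor $\kappa(B_t)$:
\[
\kappa(K_{t+1})=\kappa(A_{t+1})\kappa(B_t)\le\kappa(A_t)\kappa(B_t)=\kappa(K_t).
\]

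The only real care point is the singular case, where some factor has $\lambda_{\min}=0$ and $\kappa=+\infty$. Here I would follow the convention already used in the proof of Lemma~\ref{lem:spectral-contraction-monotone}: a zero minimum eigenvalue gives $\kappa=+\infty$. Nonzeroness guarantees $\lambda_{\max}>0$ for each factor.

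\begin{itemize}
\item If $\lambda_{\min}(A_t)=0$ or $\lambda_{\min}(B_t)=0$, then $\lambda_{\min}(K_t)=0$, so $\kappa(K_t)=+\infty$ and the inequality holds trivially.
\item Otherwise $A_t$ and $B_t$ are SPD. Since $\kappa(A_{t+1})\le\kappa(A_t)<\infty$, the matrix $A_{t+1}$ is also SPD, and the factorized computation above applies.
\end{itemize}

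Alternatively, one can invoke the ridge-shift convention of Lemma~\ref{lem:spsd-to-spd} and treat all matrices as SPD, which I will mention as a remark.

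I expect no genuine obstacle. The step needing the most care is justifying that the minimum eigenvalue of the Kronecker product is the product of the minima. This fails for indefinite matrices, so it must use the SPSD hypothesis explicitly, and I will state that explicitly.
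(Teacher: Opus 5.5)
Your proposal is correct and follows the paper's proof: both dispose of the singular case via $\kappa(K_t)=+\infty$, then use the Kronecker spectrum to get $\kappa(A\otimes B)=\kappa(A)\,\kappa(B)$ and multiply the hypothesis by $\kappa(B_t)$. Your explicit check that $\kappa(A_{t+1})\le\kappa(A_t)<\infty$ forces $A_{t+1}$ to be SPD, and your note that the min-eigenvalue identity needs the SPSD hypothesis, make precise two steps the paper leaves implicit.
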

\begin{proof}
If $A_t$ or $B_t$ is singular then $K_t=A_t\otimes B_t$ is singular and $\kappa(K_t)=+\infty$, so $\kappa(K_{t+1})\le \kappa(K_t)$ holds trivially.
Otherwise $A_t\succ 0$ and $B_t\succ 0$, and the Kronecker-product spectrum gives
\[
  \kappa(K_t)=\kappa(A_t\otimes B_t)=\kappa(A_t)\,\kappa(B_t),
  \qquad
  \kappa(K_{t+1})=\kappa(A_{t+1})\,\kappa(B_t).
\]
Using $\kappa(A_{t+1})\le \kappa(A_t)$ yields $\kappa(K_{t+1})\le \kappa(K_t)$.
\end{proof}

\subsection{Proof of Proposition Spectral contraction improves the lower bound}
\label{app:proof-prop:feature-spectral-improves-bound}

\begin{proof}[Proof of Proposition~\ref{prop:feature-spectral-improves-bound}]
By Lemma~\ref{lem:spectral-contraction-monotone}, a spectrally contractive update on $\Aexp_{\ell-1,t}$ yields $\kappa(\Aexp_{\ell-1,t+1})\le \kappa(\Aexp_{\ell-1,t})$.
Applying Lemma~\ref{lem:kron-spectral-contraction} with $A_t=\Aexp_{\ell-1,t}$ and $B_t=\Gexp_{\ell}$ gives $\kappa(\Htilde^{\mathrm{GN},\mathrm{exp}}_{\ell,t+1})\le \kappa(\Htilde^{\mathrm{GN},\mathrm{exp}}_{\ell,t})$.
Dividing $k_{\ell}>0$ by the condition-number inequality yields $\mathrm{LB}^{\mathrm{GN},\mathrm{exp}}_{\ell,t+1}\ge \mathrm{LB}^{\mathrm{GN},\mathrm{exp}}_{\ell,t}$.
\end{proof}

\begin{corollary}[Spectral contraction increases the proxy lower bound]\label{cor:spectral-contraction-maximizes-proxy-lb}
Under the notation of Corollary~\ref{cor:proxy-to-khatri-effrank}, assume the Kronecker proxy $\Htilde^{\mathrm{GN},\mathrm{exp}}_{\ell}$ is SPD and define
$\mathrm{LB}^{\mathrm{GN},\mathrm{exp}}_{\ell}\defeq k_{\ell}/\kappa(\Htilde^{\mathrm{GN},\mathrm{exp}}_{\ell})$.
If $\Htilde^{\mathrm{GN},\mathrm{exp}}_{\ell}$ is updated by a spectrally contractive regularizer (Definition~\ref{def:spectral-contraction}), then $\mathrm{LB}^{\mathrm{GN},\mathrm{exp}}_{\ell}$ is non-decreasing.
Moreover, $\mathrm{LB}^{\mathrm{GN},\mathrm{exp}}_{\ell}\le k_{\ell}$ with equality if and only if $\Htilde^{\mathrm{GN},\mathrm{exp}}_{\ell}$ is a scaled identity, which is also the unique maximizer of $r_e(\Htilde^{\mathrm{GN},\mathrm{exp}}_{\ell})$.
\end{corollary}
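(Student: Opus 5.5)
The two claims are proved separately, working throughout in the SPD convention of \Cref{app:numerical-spsd}.

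For the monotonicity claim, the spectral contraction in Definition~\ref{def:spectral-contraction} gives
\[
\Htilde^{\mathrm{GN},\mathrm{exp}}_{\ell,t+1}=(1-\beta)\Htilde^{\mathrm{GN},\mathrm{exp}}_{\ell,t}+\beta\bar\lambda_t\Ibf
\]
for some $\beta\in[0,1]$. Here $\bar\lambda_t\ge 0$ is the mean eigenvalue. Lemma~\ref{lem:spectral-contraction-monotone} applies directly to the proxy itself, not just to the factor $\Aexp_{\ell-1}$, and gives $\kappa(\Htilde^{\mathrm{GN},\mathrm{exp}}_{\ell,t+1})\le\kappa(\Htilde^{\mathrm{GN},\mathrm{exp}}_{\ell,t})$.

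The dimension $k_\ell$ does not change under the update. Dividing $k_\ell>0$ by the condition numbers therefore gives $\mathrm{LB}^{\mathrm{GN},\mathrm{exp}}_{\ell,t+1}\ge\mathrm{LB}^{\mathrm{GN},\mathrm{exp}}_{\ell,t}$. This mirrors the proof of Proposition~\ref{prop:feature-spectral-improves-bound}, with the Kronecker step (Lemma~\ref{lem:kron-spectral-contraction}) removed. If the contraction is instead applied to $\Aexp_{\ell-1}$ with $\Gexp_\ell$ fixed, Proposition~\ref{prop:feature-spectral-improves-bound} covers that case directly.

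For the upper bound, $\kappa(M)=\lambda_{\max}/\lambda_{\min}\ge 1$ holds for every SPD $M$, with equality iff $\lambda_{\max}=\lambda_{\min}$. By the spectral theorem, that equality is equivalent to $M=\lambda\Ibf$ for some $\lambda>0$. Hence $\mathrm{LB}^{\mathrm{GN},\mathrm{exp}}_{\ell}=k_\ell/\kappa\le k_\ell$, with equality iff the proxy is a scaled identity.

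The identification with the maximizer of $r_e$ uses Proposition~\ref{prop:psd-max-effrank}. Applied to the proxy of dimension $n=\dim\Htilde^{\mathrm{GN},\mathrm{exp}}_{\ell}$, it says $r_e\le n$ with equality iff the proxy is a scaled identity. So the maximizer set of $r_e(\Htilde^{\mathrm{GN},\mathrm{exp}}_{\ell})$ is exactly the set of scaled identities $\{\alpha\Ibf:\alpha>0\}$, which is the same set where $\mathrm{LB}$ attains $k_\ell$. Here "unique" should be read as unique up to the scale $\alpha$: $r_e$ is invariant under rescaling, so every $\alpha\Ibf$ is a maximizer.

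I expect the only real obstacle to be bookkeeping. $k_\ell=\dim(\Hexp_\ell)$ can be smaller than the proxy dimension (the principal-submatrix relation of Proposition~\ref{prop:ts-kron-proxy}), so I will not claim $\mathrm{LB}=r_e$ at the optimum. I will claim only that both quantities are maximized on the same set, the scaled identities. As a consistency check via Lemma~\ref{lem:interlacing-effrank-bound}, at that set $\Hexp_\ell$ is also a scaled identity and $r_e(\Hexp_\ell)=k_\ell$.
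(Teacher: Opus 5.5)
Your proof is correct and follows the paper's argument. Lemma~\ref{lem:spectral-contraction-monotone} applied directly to the proxy gives monotonicity of $\mathrm{LB}^{\mathrm{GN},\mathrm{exp}}_{\ell}$, $\kappa\ge 1$ with equality exactly at scaled identities gives the bound $k_\ell$, and Proposition~\ref{prop:psd-max-effrank} identifies the same set as the maximizers of $r_e$. Your caveats are accurate refinements the paper leaves implicit: uniqueness holds only up to the scale $\alpha$, and in general $k_\ell<\dim\Htilde^{\mathrm{GN},\mathrm{exp}}_{\ell}$, so the two maxima coincide in location rather than value.
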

\begin{proof}
Let $\Htilde_t$ and $\Htilde_{t+1}$ denote the proxy before and after a spectrally contractive update.
Lemma~\ref{lem:spectral-contraction-monotone} gives $\kappa(\Htilde_{t+1})\le \kappa(\Htilde_t)$, hence
\[
  \mathrm{LB}^{\mathrm{GN},\mathrm{exp}}_{\ell,t+1}
  \;=\;
  \frac{k_{\ell}}{\kappa(\Htilde_{t+1})}
  \;\ge\;
  \frac{k_{\ell}}{\kappa(\Htilde_t)}
  \;=\;
  \mathrm{LB}^{\mathrm{GN},\mathrm{exp}}_{\ell,t}.
\]
For any SPD matrix, $\kappa(\cdot)\ge 1$ with equality if and only if the matrix is a scaled identity, so
$\mathrm{LB}^{\mathrm{GN},\mathrm{exp}}_{\ell}=k_{\ell}/\kappa(\Htilde^{\mathrm{GN},\mathrm{exp}}_{\ell})\le k_{\ell}$ with equality iff $\Htilde^{\mathrm{GN},\mathrm{exp}}_{\ell}$ is a scaled identity.
Proposition~\ref{prop:psd-max-effrank} gives the same condition as the unique maximizer of $r_e(\Htilde^{\mathrm{GN},\mathrm{exp}}_{\ell})$.
\end{proof}

\begin{corollary}[Spectral contraction on $\Aexp_{\ell-1}$ increases the target lower bound]\label{cor:spectral-contraction-maximizes-A-lb}
Under the notation of Corollary~\ref{cor:proxy-to-khatri-effrank}, assume the Kronecker proxy factorizes as
$\Htilde^{\mathrm{GN},\mathrm{exp}}_{\ell}=\Aexp_{\ell-1}\otimes \Gexp_{\ell}$ with $\Aexp_{\ell-1}\succ 0$ and $\Gexp_{\ell}\succ 0$.
Define the factorized proxy-to-target lower bound
\[
  \mathrm{LB}^{A}_{\ell}
  \;=\;
  \frac{k_{\ell}}{\kappa(\Aexp_{\ell-1})\,\kappa(\Gexp_{\ell})}.
\]
Then $r_e(\Hexp_{\ell})\ge \mathrm{LB}^{A}_{\ell}$.
Moreover, if $\Aexp_{\ell-1}$ is updated by a spectrally contractive regularizer in the sense of Definition~\ref{def:spectral-contraction} while keeping $\Gexp_{\ell}$ fixed, then $\mathrm{LB}^{A}_{\ell}$ is non-decreasing under the update.
Since $\kappa(\Aexp_{\ell-1})\ge 1$ for SPD matrices, $\mathrm{LB}^{A}_{\ell}\le k_{\ell}/\kappa(\Gexp_{\ell})$, and this upper bound is achieved if and only if $\Aexp_{\ell-1}$ is a scaled identity.
\end{corollary}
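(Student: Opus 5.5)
The statement to prove is Corollary~\ref{cor:spectral-contraction-maximizes-A-lb}. It has three claims, and I will prove them in order.

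\textbf{Step 1: the lower bound.} Start from Corollary~\ref{cor:proxy-to-khatri-effrank}, which gives
\[
r_e(\Hexp_{\ell}) \ge \frac{k_{\ell}}{\kappa(\Htilde^{\mathrm{GN},\mathrm{exp}}_{\ell})}.
\]
Under the hypothesis $\Aexp_{\ell-1}\succ 0$ and $\Gexp_{\ell}\succ 0$, the Kronecker product is also SPD. Its eigenvalues are the pairwise products $\{a_i g_j\}$. So the largest eigenvalue is $\lambda_{\max}(\Aexp_{\ell-1})\lambda_{\max}(\Gexp_{\ell})$ and the smallest is the product of the two minima. Hence
\[
\kappa(\Aexp_{\ell-1}\otimes\Gexp_{\ell}) = \kappa(\Aexp_{\ell-1})\,\kappa(\Gexp_{\ell}).
\]
This is the same computation used in the last part of Corollary~\ref{cor:proxy-to-khatri-effrank} and in Lemma~\ref{lem:kron-spectral-contraction}. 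Substituting it into the bound gives $r_e(\Hexp_{\ell})\ge \mathrm{LB}^{A}_{\ell}$.

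The one point to check is that the SPD hypothesis on the Tracy--Singh envelope, required by Corollary~\ref{cor:proxy-to-khatri-effrank}, holds here. By Proposition~\ref{prop:ts-kron-proxy}(iii), the envelope has the same spectrum as $\Aexp_{\ell-1}\otimes\Gexp_{\ell}$, and that product is SPD. So the envelope is SPD and the corollary applies.

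\textbf{Step 2: monotonicity.} Suppose $\Aexp_{\ell-1,t+1}$ comes from a spectral contraction of $\Aexp_{\ell-1,t}$. Write the update as $(1-\beta)\lambda_i+\beta\bar\lambda$ with $\beta\in[0,1]$. Then Lemma~\ref{lem:spectral-contraction-monotone} gives $\kappa(\Aexp_{\ell-1,t+1})\le\kappa(\Aexp_{\ell-1,t})$.

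I also need the updated matrix to stay SPD, so that the factorized formula for $\mathrm{LB}^{A}_{\ell}$ still applies. Since $\lambda_i>0$ and $\bar\lambda>0$, each new eigenvalue $(1-\beta)\lambda_i+\beta\bar\lambda$ is positive.

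With $\Gexp_{\ell}$ held fixed, the denominator $\kappa(\Aexp_{\ell-1})\kappa(\Gexp_{\ell})$ does not increase. Therefore $\mathrm{LB}^{A}_{\ell}$ is non-decreasing. Equivalently, this step can be obtained from Lemma~\ref{lem:kron-spectral-contraction} followed by the factorization in Step 1.

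\textbf{Step 3: the ceiling and when it is attained.} For any SPD matrix, $\kappa\ge 1$, with equality iff $\lambda_{\max}=\lambda_{\min}$. For a symmetric matrix this is equivalent to $\Aexp_{\ell-1}=\alpha\Ibf$ for some $\alpha>0$; the diagonalization argument of Proposition~\ref{prop:psd-max-effrank} gives this. Hence
\[
\mathrm{LB}^{A}_{\ell}\le \frac{k_{\ell}}{\kappa(\Gexp_{\ell})},
\]
with equality exactly when $\kappa(\Aexp_{\ell-1})=1$, that is, iff $\Aexp_{\ell-1}$ is a scaled identity.

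\textbf{Main obstacle.} The only delicate step is the SPD bookkeeping in Step 1: confirming that the SPD assumptions on the two factors carry over to the Tracy--Singh envelope, which Corollary~\ref{cor:proxy-to-khatri-effrank} needs. I would handle it via the spectral equivalence in Proposition~\ref{prop:ts-kron-proxy}(iii), as above. Everything else follows directly from earlier lemmas.
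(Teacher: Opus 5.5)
Your proposal is correct and follows essentially the same route as the paper: it substitutes $\kappa(\Aexp_{\ell-1}\otimes\Gexp_{\ell})=\kappa(\Aexp_{\ell-1})\,\kappa(\Gexp_{\ell})$ into Corollary~\ref{cor:proxy-to-khatri-effrank}, uses Lemma~\ref{lem:spectral-contraction-monotone} for monotonicity, and uses $\kappa\ge 1$ (with equality iff the matrix is a scaled identity) for the ceiling. You also make two SPD checks explicit that the paper leaves implicit: the Tracy--Singh envelope is SPD via Proposition~\ref{prop:ts-kron-proxy}(iii), and the contraction update preserves positivity.
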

\begin{proof}
Since $\Aexp_{\ell-1}\succ 0$ and $\Gexp_{\ell}\succ 0$, the Kronecker proxy is SPD and satisfies
$\kappa(\Htilde^{\mathrm{GN},\mathrm{exp}}_{\ell})=\kappa(\Aexp_{\ell-1})\,\kappa(\Gexp_{\ell})$.
Substituting this identity into Corollary~\ref{cor:proxy-to-khatri-effrank} yields
$r_e(\Hexp_{\ell})\ge k_{\ell}/\kappa(\Htilde^{\mathrm{GN},\mathrm{exp}}_{\ell})=\mathrm{LB}^{A}_{\ell}$.

If a spectrally contractive update produces $\Aexp_{\ell-1,t}\mapsto \Aexp_{\ell-1,t+1}$ and $\Gexp_{\ell}$ is fixed, then Lemma~\ref{lem:spectral-contraction-monotone} implies $\kappa(\Aexp_{\ell-1,t+1})\le \kappa(\Aexp_{\ell-1,t})$, hence
\[
  \frac{k_{\ell}}{\kappa(\Aexp_{\ell-1,t+1})\,\kappa(\Gexp_{\ell})}
  \;\ge\;
  \frac{k_{\ell}}{\kappa(\Aexp_{\ell-1,t})\,\kappa(\Gexp_{\ell})},
\]
so the bound is non-decreasing.
Finally, for any SPD matrix, $\kappa(\Aexp_{\ell-1})\ge 1$ with equality iff $\Aexp_{\ell-1}$ is a scaled identity, yielding $\mathrm{LB}^{A}_{\ell}\le k_{\ell}/\kappa(\Gexp_{\ell})$.
\end{proof}

\begin{theorem}[Spectral contraction on $\Aexp_{\ell-1}$ increases an eNTK lower bound]\label{thm:spectral-contraction-increases-reK-lb}
Assume the surrogate model of \Cref{sec:theory}: (i) $r_e(\Kmat)=r_e(G^{\mathrm{GN}})$ (Eq.~\eqref{eq:reK-reGGN}); (ii) the block-diagonal approximation \eqref{eq:gn-block} holds; and (iii) for each expert layer $\ell$ and iteration $t$, the expert block admits the Kronecker proxy $\Htilde^{\mathrm{GN},\mathrm{exp}}_{\ell,t}=\Aexp_{\ell-1,t}\otimes \Gexp_{\ell}$ with $\Aexp_{\ell-1,t}\succ 0$ and $\Gexp_{\ell}\succ 0$.

Fix the mixture weights $\alpha$ of Proposition~\ref{prop:block-mixture-entk}, the gate-block terms $\{r_e(\mathbf{G}^{\mathrm{GN},\mathrm{g}}_{\ell})\}$, and the gradient-factor condition numbers $\{\kappa(\Gexp_{\ell})\}$.
Fix an iteration index $t$ and define the per-layer lower bounds
\[
  \mathrm{LB}^{A}_{\ell,t} \;=\; \frac{k_{\ell}}{\kappa(\Aexp_{\ell-1,t})\,\kappa(\Gexp_{\ell})}
  \qquad(\text{Corollary~\ref{cor:spectral-contraction-maximizes-A-lb}}),
\]
and the induced global lower bound
\[
  \mathrm{LB}_{r_e(\Kmat),t}
  \;=\;
  \exp\!\Big(
    H(\alpha)
    + \sum_{\{\ell:\,s^{\mathrm{g}}_{\ell}>0\}} \alpha^{\mathrm{g}}_{\ell}\,\log r_e\big(\mathbf{G}^{\mathrm{GN},\mathrm{g}}_{\ell}\big)
    + \sum_{\{\ell:\,s^{\mathrm{exp}}_{\ell}>0\}} \alpha^{\mathrm{exp}}_{\ell}\,\log \mathrm{LB}^{A}_{\ell,t}
  \Big).
\]
Then, under this surrogate, $r_e(\Kmat)\gtrsim \mathrm{LB}_{r_e(\Kmat),t}$.
Moreover, if a spectrally contractive regularizer (Definition~\ref{def:spectral-contraction}) updates some feature Gram $\Aexp_{\ell_0-1,t}\mapsto \Aexp_{\ell_0-1,t+1}$ while keeping all other quantities fixed as above, then
\[
  \mathrm{LB}_{r_e(\Kmat),t+1} \;\ge\; \mathrm{LB}_{r_e(\Kmat),t}.
\]
\end{theorem}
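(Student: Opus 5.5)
The plan is to chain the already-established surrogate identities and then use monotonicity of $\exp$ and $\log$. First, by assumption (i), $r_e(\Kmat)=r_e(G^{\mathrm{GN}})$. By assumption (ii), I replace $G^{\mathrm{GN}}$ with its block-diagonal surrogate. Lemma~\ref{lem:block-effrank} (equivalently Proposition~\ref{prop:block-mixture-entk}) then gives
\[
r_e(\Kmat)\approx\exp\!\Big(H(\alpha)+\sum_{\ell}\alpha^{\mathrm{g}}_{\ell}\log r_e(\mathbf{G}^{\mathrm{GN},\mathrm{g}}_{\ell})+\sum_{\ell}\alpha^{\mathrm{exp}}_{\ell}\log r_e(\Hexp_{\ell})\Big).
\]
With $\alpha$ and the gate terms held fixed, this expression is a nondecreasing function of each $r_e(\Hexp_{\ell})$, because $\alpha^{\mathrm{exp}}_{\ell}\ge 0$ and both $\log$ and $\exp$ are increasing.

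Second, for each expert layer with $s^{\mathrm{exp}}_{\ell}>0$, I invoke Corollary~\ref{cor:spectral-contraction-maximizes-A-lb}. Under assumption (iii), it gives $r_e(\Hexp_{\ell})\ge \mathrm{LB}^{A}_{\ell,t}>0$. This corollary itself rests on Corollary~\ref{cor:proxy-to-khatri-effrank} (Tracy--Singh embedding plus the interlacing bound) and on $\kappa(A\otimes G)=\kappa(A)\kappa(G)$ for SPD factors. Substituting these termwise into the mixture expression and using monotonicity gives $r_e(\Kmat)\gtrsim \mathrm{LB}_{r_e(\Kmat),t}$. The symbol $\gtrsim$ records that the first step holds only up to the block-diagonal approximation, while every later step is an exact inequality.

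Third, for the monotonicity claim, suppose $\Aexp_{\ell_0-1,t}$ undergoes a spectrally contractive update. Lemma~\ref{lem:spectral-contraction-monotone} gives $\kappa(\Aexp_{\ell_0-1,t+1})\le\kappa(\Aexp_{\ell_0-1,t})$. The update keeps the matrix SPD, since the eigenvalues $(1-\beta)\lambda_i+\beta\bar\lambda$ stay positive when $\lambda_i>0$, so assumption (iii) continues to hold at $t+1$. With $\kappa(\Gexp_{\ell_0})$ and $k_{\ell_0}$ fixed, Corollary~\ref{cor:spectral-contraction-maximizes-A-lb} then gives $\mathrm{LB}^{A}_{\ell_0,t+1}\ge \mathrm{LB}^{A}_{\ell_0,t}$. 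Every other summand in the exponent is unchanged by hypothesis, and $\alpha^{\mathrm{exp}}_{\ell_0}\ge0$. Hence the exponent does not decrease, and neither does $\mathrm{LB}_{r_e(\Kmat),t}$.

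The main obstacle is bookkeeping rather than mathematics: making the phrase ``keeping all other quantities fixed'' precise. The mixture weights $\alpha$ depend on nuclear norms, and $s^{\mathrm{exp}}_{\ell_0}$ could change under the update. The theorem sidesteps this by freezing $\alpha$ as part of the surrogate, and I will state this explicitly. For SPHERE specifically, trace preservation (Lemma~\ref{lem:sphere-spectral-gradient}) keeps $\operatorname{Tr}(\Aexp)$, and hence the proxy's nuclear norm, invariant, which supports the choice of freezing $\alpha$. I also need $\mathrm{LB}^{A}_{\ell,t}>0$ so that its logarithm is defined; this follows from the SPD assumption.
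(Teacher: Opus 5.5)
Your proposal is correct and follows the paper's proof essentially step for step. Both use the mixture form from Proposition~\ref{prop:block-mixture-entk}, substitute $r_e(\Hexp_{\ell})\ge \mathrm{LB}^{A}_{\ell,t}$ termwise from Corollary~\ref{cor:spectral-contraction-maximizes-A-lb}, and then apply Lemma~\ref{lem:spectral-contraction-monotone} with every other term in the exponent frozen to get monotonicity.

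Your extra checks, that the contraction preserves SPD and that $\mathrm{LB}^{A}_{\ell,t}>0$, are details the paper leaves implicit. One caution: your trace-preservation remark concerns only the proxy. The weight $\alpha^{\mathrm{exp}}_{\ell}$ is built from $\|\Hexp_{\ell}\|_\ast$, which need not be invariant under the update, so freezing $\alpha$ remains a genuine hypothesis rather than a consequence.
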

\begin{proof}
Under the block-diagonal approximation, Proposition~\ref{prop:block-mixture-entk} gives
\[
  r_e(\Kmat)
  \;\approx\;
  \exp\!\Big(
    H(\alpha)
    + \sum_{\{\ell:\,s^{\mathrm{g}}_{\ell}>0\}} \alpha^{\mathrm{g}}_{\ell}\,\log r_e\big(\mathbf{G}^{\mathrm{GN},\mathrm{g}}_{\ell}\big)
    + \sum_{\{\ell:\,s^{\mathrm{exp}}_{\ell}>0\}} \alpha^{\mathrm{exp}}_{\ell}\,\log r_e(\Hexp_{\ell})
  \Big).
\]
For each expert layer, Corollary~\ref{cor:spectral-contraction-maximizes-A-lb} provides the lower bound $r_e(\Hexp_{\ell})\ge \mathrm{LB}^{A}_{\ell,t}$, hence $\log r_e(\Hexp_{\ell})\ge \log \mathrm{LB}^{A}_{\ell,t}$.
Substituting these inequalities into the mixture expression yields $r_e(\Kmat)\gtrsim \mathrm{LB}_{r_e(\Kmat),t}$.

Finally, if a spectrally contractive update on $\Aexp_{\ell_0-1,t}$ is applied while $\kappa(\Gexp_{\ell_0})$ is fixed, then Lemma~\ref{lem:spectral-contraction-monotone} implies $\kappa(\Aexp_{\ell_0-1,t+1})\le \kappa(\Aexp_{\ell_0-1,t})$, and therefore $\mathrm{LB}^{A}_{\ell_0,t+1}\ge \mathrm{LB}^{A}_{\ell_0,t}$ by Corollary~\ref{cor:spectral-contraction-maximizes-A-lb}.
With all mixture weights and other terms fixed, the only term that changes from $t$ to $t{+}1$ inside $\log \mathrm{LB}_{r_e(\Kmat),\cdot}$ is $\alpha^{\mathrm{exp}}_{\ell_0}\log \mathrm{LB}^{A}_{\ell_0,\cdot}$.
Since $\mathrm{LB}^{A}_{\ell_0,t+1}\ge \mathrm{LB}^{A}_{\ell_0,t}$, we have $\log \mathrm{LB}_{r_e(\Kmat),t+1}\ge \log \mathrm{LB}_{r_e(\Kmat),t}$ and hence $\mathrm{LB}_{r_e(\Kmat),t+1}\ge \mathrm{LB}_{r_e(\Kmat),t}$.
\end{proof}

\subsection{Proof of Theorem SPHERE improves spectral plasticity}
\label{app:proof-thm:sphere-monotone-lb-reK}

\begin{proof}[Proof of Theorem~\ref{thm:sphere-monotone-lb-reK}]
This is an immediate corollary of Theorem~\ref{thm:spectral-contraction-increases-reK-lb}.
Instantiate Theorem~\ref{thm:spectral-contraction-increases-reK-lb} at the expert output block so that the updated feature factor is $\Aexp_{\mathrm{last},t}$ (i.e., $\Aexp_{\ell_0-1,t}=\Aexp_{\mathrm{last},t}$ in the notation of Theorem~\ref{thm:spectral-contraction-increases-reK-lb}), and note that under the surrogate model the tractable objective is $\widehat r_e(\Kmat)_t=\mathrm{LB}_{r_e(\Kmat),t}$.
Then applying SPHERE gives $\mathrm{LB}_{r_e(\Kmat),t+1}\ge \mathrm{LB}_{r_e(\Kmat),t}$, i.e., $\widehat r_e(\Kmat)_{t+1}\ge \widehat r_e(\Kmat)_{t}$.
\end{proof}

\section{Dense MLP as a Degenerate Top-$K$ MoE}\label{app:mlp-degenerate-moe}
The preceding analysis focuses on Top-$K$ MoE policies. A natural question is whether SPHERE applies to standard dense networks. This section shows that a dense MLP is a degenerate case of Top-$K$ MoE with $E{=}1$ and $K{=}1$, and that the SPHERE penalty remains a valid representation regularizer in this setting. Empirical results confirm that SPHERE improves dense PPO backbones.

\noindent\textbf{Setup.}\spacefactor=1000\space\space%
Consider the Top-$K$ MoE output (Eq.~\eqref{eq:topk-output}) with a single expert ($E{=}1$) and $K{=}1$.
Then the selected set is $\Sset(x){=}\{1\}$ for every input (hence the routing is constant), and the subset-softmax weight (Eq.~\eqref{eq:subset-softmax-def}) is
\[
  h^{(1)}_1(x;\psi)
  =
  \frac{\exp(s_1(x;\psi))}{\sum_{j\in\{1\}}\exp(s_j(x;\psi))}
  =
  1.
\]
Therefore the MoE reduces identically to a dense network,
\[
  y(x;\Theta)
  =
  m_1(x;\theta_1)\,h^{(1)}_1(x;\psi)
  =
  m_1(x;\theta_1),
\]
and the gate parameters $\psi$ do not affect the output.

\noindent\textbf{No gate gradients and eNTK reduction.}\spacefactor=1000\space\space%
Since $h^{(1)}_1(x;\psi)\equiv 1$, we have $\partial h^{(1)}_1/\partial s_1 = 0$ and hence $\nabla_{\psi} h^{(1)}_1(x;\psi)=0$ by the chain rule.
Therefore $\nabla_{\psi}y(x;\Theta)=m_1(x;\theta_1)\,\nabla_{\psi} h^{(1)}_1(x;\psi)=0$ for all $x$.
Stacking over a minibatch $X=\{x_i\}_{i=1}^{N}$, the Jacobian block with respect to gate parameters is identically zero, $\Jg=0$.
Writing the full Jacobian as $\J=[\,\Jg\;|\;\Jexp\,]$, the empirical NTK becomes
\[
  \Kmat
  =
  \J\J^\top
  =
  \Jexp(\Jexp)^\top,
\]
which is exactly the eNTK of the dense MLP $m_1(\cdot;\theta_1)$.
Equivalently, the Gauss--Newton surrogate satisfies
\[
  G^{\mathrm{GN}}
  =
  \tfrac{1}{N}\J^\top\J
  =
  \begin{bmatrix}
    0 & 0 \\
    0 & \tfrac{1}{N}(\Jexp)^\top\Jexp
  \end{bmatrix},
\]
so the gate contributes only zero eigenvalues; since $r_e(\cdot)$ depends only on the strictly positive spectrum (Eq.~\eqref{eq:effrank-parameter}), these zeros do not affect $r_e(\Kmat)$.

\noindent\textbf{SPHERE remains applicable.}\spacefactor=1000\space\space%
In this degenerate setting, the gating-weighted feature construction collapses to MLP features.
Specifically, in Eq.~\eqref{eq:gating-weighted-feature} we have $h^{(1)}_{i,1}=1$, so $a_{\ell-1}(x_i)=a^{\mathrm{exp}}_{1,\ell-1}(x_i)$ and $\Aexp_{\ell-1}$ is the usual activation Gram at layer $\ell{-}1$.
Under the same within-layer independence approximation used throughout, the corresponding Gauss--Newton block reduces to the dense-layer Kronecker form $\Htilde^{\mathrm{GN}}_{\ell}=\Aexp_{\ell-1}\otimes \Gexp_{\ell}$.
Therefore the SPHERE penalty on $\Aexp_{\mathrm{last}}$ is a well-defined representation regularizer for dense MLPs.
Since SPHERE is spectrally contractive (Corollary~\ref{cor:sphere-monotone-re-cond}), Proposition~\ref{prop:feature-spectral-improves-bound} applies verbatim in this $E{=}1$ setting and implies that regularizing $\Aexp_{\mathrm{last}}$ monotonically improves the corresponding surrogate lower bound on $r_e(\Kmat)$.
This reduction is consistent with the empirical gains from applying SPHERE to dense PPO backbones in \Cref{tab:humanoidbench-mlp-sphere}.

\noindent\textbf{Empirical evidence.}\spacefactor=1000\space\space%
We additionally evaluate this reduction directly by applying the same feature-Gram isotropy penalty to dense MLP backbones under CRL.
\Cref{tab:humanoidbench-mlp-sphere} shows that SPHERE improves both PPO and PPO(10x), consistent with SPHERE acting as a representation regularizer that preserves eNTK spectral plasticity beyond MoE architectures.

	\begin{table}[H]
	  \centering
	  \small
		  \caption{\textbf{SPHERE improves dense MLP backbones.} SPHERE is applied to PPO and PPO(10x) by regularizing the last-layer feature Gram matrix.}
		  \label{tab:humanoidbench-mlp-sphere}
		  \begin{tabular}{lcc}
	    \toprule
	    Backbone & w/o SPHERE & w/ SPHERE \\
    \midrule
    PPO & $0.36\pm0.12$ & $\mathbf{0.43\pm0.09}$ \\
    PPO(10x) & $0.41\pm0.13$ & $\mathbf{0.50\pm0.15}$ \\
    \bottomrule
  \end{tabular}
\end{table}

\section{Rethinking MoE Through an eNTK Spectral Perspective}\label{app:rethink-moe-entk-spectrum}
\noindent\textbf{Overview.}\spacefactor=1000\space\space%
\Cref{sec:theory} connects Top-$K$ MoE plasticity loss to decay of the spectral-entropy effective rank $r_e(\Kmat)$ and introduces a tractable surrogate expressed in terms of expert-layer feature Grams $\{\Aexp_{\ell-1}\}$ and gradient Grams $\{\Gexp_{\ell}\}$.
This appendix section provides a complementary interpretation: two common MoE objectives---\emph{expert load balancing} and \emph{expert specialization}---can be viewed as mechanisms that shape the spectrum of $\Aexp_{\ell-1}$, and consequently affect the proxy condition number $\kappa(\Htilde^{\mathrm{GN},\mathrm{exp}}_{\ell}){=}\kappa(\Aexp_{\ell-1})\kappa(\Gexp_{\ell})$ as well as the lower bounds in Corollary~\ref{cor:spectral-contraction-maximizes-A-lb} and Theorem~\ref{thm:spectral-contraction-increases-reK-lb}.

\subsection{The expert-feature spectrum depends jointly on routing and representations}
Fix an expert layer $\ell$ and recall the concatenated gating-weighted expert feature vector $a^{\mathrm{ce}}_{\ell-1}(x_i)\in\R^{D_{\ell-1}}$ and its Gram $\Aexp_{\ell-1}$ (\Cref{app:block-khatri-rao-details}).
Writing the expert-wise block decomposition explicitly,
\begin{equation}
  \big[\Aexp_{\ell-1}\big]_{e,e'}
  \;=\;
  \frac{1}{N}\sum_{i=1}^{N}
  h^{(K)}_{i,e} h^{(K)}_{i,e'}\;
  a^{\mathrm{exp}}_{e,\ell-1}(x_i)\,a^{\mathrm{exp}}_{e',\ell-1}(x_i)^{\top}.
  \label{eq:app-Aexp-blocks}
\end{equation}
Equation~\eqref{eq:app-Aexp-blocks} makes explicit that the spectrum of $\Aexp_{\ell-1}$ depends on two coupled components:
\emph{(i)} the routing weights $\{h^{(K)}_{i,e}\}$ induced by the gate on the current state distribution, and
\emph{(ii)} the expert representations $\{a^{\mathrm{exp}}_{e,\ell-1}(x_i)\}$.
Accordingly, load balancing and specialization can be interpreted as spectral constraints on $\Aexp_{\ell-1}$: they control the \emph{distribution of blockwise spectral mass} and the \emph{magnitude of cross-expert coupling}, which in turn influence both $r_e(\Aexp_{\ell-1})$ and $\kappa(\Aexp_{\ell-1})$.

\subsection{Load balancing redistributes block-level spectral mass}
Load balancing can be formalized by quantifying how much of the feature Gram trace is contributed by each expert block.
Let $\Aexp_{\ell-1,(e,e)}\defeq [\Aexp_{\ell-1}]_{e,e}$ and define
\[
  \beta_e \;\defeq\; \frac{\operatorname{Tr}(\Aexp_{\ell-1,(e,e)})}{\operatorname{Tr}(\Aexp_{\ell-1})},
  \qquad e\in\{1,\ldots,E\},
\]
whenever $\operatorname{Tr}(\Aexp_{\ell-1})>0$.

\begin{lemma}[Trace weights induced by routing]\label{lem:beta-trace-form}
For each expert $e$,
\[
  \operatorname{Tr}(\Aexp_{\ell-1,(e,e)})
  \;=\;
  \frac{1}{N}\sum_{i=1}^{N} \big(h^{(K)}_{i,e}\big)^2 \, \big\|a^{\mathrm{exp}}_{e,\ell-1}(x_i)\big\|_2^2.
\]
Moreover, $\operatorname{Tr}(\Aexp_{\ell-1})=\sum_{e=1}^{E}\operatorname{Tr}(\Aexp_{\ell-1,(e,e)})$, so $\beta\in\Delta^{E-1}$.
\end{lemma}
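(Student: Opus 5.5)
The plan is to compute the diagonal blocks of $\Aexp_{\ell-1}$ directly from its definition as the normalized Gram of the concatenated feature matrix $\Phi^{\mathrm{ce}}_{\ell-1}$, and then take traces. By the explicit block formula \eqref{eq:app-Aexp-blocks} (equivalently the $(e,e')$ block recorded in \Cref{app:block-khatri-rao-details}), setting $e'=e$ gives
\[
  \Aexp_{\ell-1,(e,e)} \;=\; \frac{1}{N}\sum_{i=1}^{N}\big(h^{(K)}_{i,e}\big)^2\, a^{\mathrm{exp}}_{e,\ell-1}(x_i)\,a^{\mathrm{exp}}_{e,\ell-1}(x_i)^{\top}.
\]
Applying linearity of the trace and the identity $\operatorname{Tr}(vv^\top)=v^\top v=\|v\|_2^2$ for each summand yields the first claim immediately.

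For the second claim, I will use the fact that the trace of a block-partitioned square matrix equals the sum of the traces of its diagonal blocks. This is the case here because the partition of $\Aexp_{\ell-1}$ into $E\times E$ blocks is symmetric, with row and column blocks both indexed by the expert coordinates of dimensions $d^{\mathrm{exp}}_{e,\ell-1}$. The diagonal entries of $\Aexp_{\ell-1}$ are therefore exactly the union of the diagonal entries of the $\Aexp_{\ell-1,(e,e)}$, which gives $\operatorname{Tr}(\Aexp_{\ell-1})=\sum_e\operatorname{Tr}(\Aexp_{\ell-1,(e,e)})$.

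Membership $\beta\in\Delta^{E-1}$ then requires two things. First, each $\beta_e\ge 0$: by the first claim each numerator is a nonnegative sum of squares, and the denominator is positive by the standing assumption $\operatorname{Tr}(\Aexp_{\ell-1})>0$. Second, the weights sum to one: $\sum_e\beta_e=1$ follows by dividing the trace decomposition by $\operatorname{Tr}(\Aexp_{\ell-1})$.

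There is no real obstacle. The only point needing care is confirming that the block partition is aligned on both sides, so that the diagonal blocks tile the main diagonal. Note also that experts outside $\Sset(x_i)$ contribute zero, since $h^{(K)}_{i,e}=0$ there; this is consistent with the formula and needs no separate case.
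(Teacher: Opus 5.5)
Your proposal is correct and follows the paper's proof essentially step for step: set $e'=e$ in the block formula, take traces using $\operatorname{Tr}(vv^\top)=\|v\|_2^2$, and sum the diagonal-block traces to obtain $\sum_e\beta_e=1$. Your explicit check that each $\beta_e\ge 0$, which is needed for membership in the simplex, is a small point the paper leaves implicit.
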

\begin{proof}
By \eqref{eq:app-Aexp-blocks} with $e'=e$,
\[
  \Aexp_{\ell-1,(e,e)}
  \;=\;
  \frac{1}{N}\sum_{i=1}^{N} \big(h^{(K)}_{i,e}\big)^2\,
  a^{\mathrm{exp}}_{e,\ell-1}(x_i)\,a^{\mathrm{exp}}_{e,\ell-1}(x_i)^{\top}.
\]
Taking traces and using $\operatorname{Tr}(vv^\top)=\|v\|_2^2$ gives the first claim.
For the second claim, note that $\operatorname{Tr}(\Aexp_{\ell-1})$ equals the sum of the traces of the diagonal blocks, and the definition of $\beta$ then yields $\sum_e \beta_e=1$.
\end{proof}

\begin{corollary}[Inactive experts reduce the effective-rank ceiling]\label{cor:inactive-expert-rank-cap}
Let $d^{\mathrm{exp}}_{e,\ell-1}\defeq \dim\big(a^{\mathrm{exp}}_{e,\ell-1}(x)\big)$ and $D_{\ell-1}=\sum_{e=1}^{E} d^{\mathrm{exp}}_{e,\ell-1}$.
If an expert $e^\star$ is never selected on the batch, i.e., $h^{(K)}_{i,e^\star}=0$ for all $i\in\{1,\ldots,N\}$, then
\[
  r_e(\Aexp_{\ell-1})
  \;\le\;
  \operatorname{rank}(\Aexp_{\ell-1})
  \;\le\;
  D_{\ell-1}-d^{\mathrm{exp}}_{e^\star,\ell-1}.
\]
\end{corollary}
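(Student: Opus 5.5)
The plan is to show that zero routing weight for expert $e^\star$ forces an entire diagonal block of $\Aexp_{\ell-1}$, together with its row and column, to vanish. The rank bound then follows, and the effective-rank bound is just the general inequality $r_e(M)\le\operatorname{rank}(M)$ recorded after Eq.~\eqref{eq:effrank-parameter}.

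First, by Eq.~\eqref{eq:app-Aexp-blocks}, every block $[\Aexp_{\ell-1}]_{e^\star,e'}$ and $[\Aexp_{\ell-1}]_{e,e^\star}$ carries the factor $h^{(K)}_{i,e^\star}$ in each summand. Under the hypothesis $h^{(K)}_{i,e^\star}=0$ for all $i$, all these blocks are zero. Equivalently, in the concatenated feature matrix $\Phi^{\mathrm{ce}}_{\ell-1}$ (rows $a^{\mathrm{ce}}_{\ell-1}(x_i)^\top$), the $d^{\mathrm{exp}}_{e^\star,\ell-1}$ columns belonging to expert $e^\star$ are identically zero.

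Next, I use $\Aexp_{\ell-1}=\tfrac1N(\Phi^{\mathrm{ce}}_{\ell-1})^\top\Phi^{\mathrm{ce}}_{\ell-1}$. This gives $\operatorname{rank}(\Aexp_{\ell-1})=\operatorname{rank}(\Phi^{\mathrm{ce}}_{\ell-1})$. That rank is at most the number of nonzero columns, namely $D_{\ell-1}-d^{\mathrm{exp}}_{e^\star,\ell-1}$.

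Alternatively, after a symmetric permutation $\Aexp_{\ell-1}$ is a direct sum of a zero block of size $d^{\mathrm{exp}}_{e^\star,\ell-1}$ and a block of size $D_{\ell-1}-d^{\mathrm{exp}}_{e^\star,\ell-1}$. The rank of a direct sum is the sum of the ranks, so the same bound follows.

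Finally, $r_e(\Aexp_{\ell-1})\le\operatorname{rank}(\Aexp_{\ell-1})$ holds because $r_e$ is the exponential of the Shannon entropy of a distribution supported on the nonzero singular values. That entropy is at most $\log$ of the support size (Proposition~\ref{prop:psd-max-effrank}).

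The only point needing care is the convention in the appendix that SPSD matrices are treated as ridge-shifted SPD. Under that reading the rank would be full, so the statement must be understood for the unshifted Gram $\Aexp_{\ell-1}$ itself, which is the matrix the corollary refers to. Otherwise the argument is routine, with no genuine obstacle.
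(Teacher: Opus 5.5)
Your proposal is correct and matches the paper's proof: zero routing weight kills the $d^{\mathrm{exp}}_{e^\star,\ell-1}$ columns of $\Phi_{\ell-1}$ belonging to $e^\star$, so $\Aexp_{\ell-1}=\tfrac{1}{N}\Phi_{\ell-1}^\top\Phi_{\ell-1}$ has rank at most $D_{\ell-1}-d^{\mathrm{exp}}_{e^\star,\ell-1}$, and $r_e\le\operatorname{rank}$ finishes the argument. The paper's detour through Lemma~\ref{lem:beta-trace-form} and $\beta_{e^\star}=0$ is not needed, your remark that the statement concerns the unshifted Gram is apt, and like the paper you tacitly need $\Aexp_{\ell-1}\neq 0$ for $r_e\le\operatorname{rank}$ (the empty-sum convention would give $r_e(0)=1>0$).
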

\begin{proof}
By Lemma~\ref{lem:beta-trace-form}, $h^{(K)}_{i,e^\star}\equiv 0$ implies $\operatorname{Tr}(\Aexp_{\ell-1,(e^\star,e^\star)})=0$.
If additionally $\operatorname{Tr}(\Aexp_{\ell-1})>0$ (so $\beta$ is well-defined), then $\beta_{e^\star}=0$.
Moreover, since $a^{\mathrm{ce}}_{\ell-1}(x_i)$ concatenates the blocks $h^{(K)}_{i,e}a^{\mathrm{exp}}_{e,\ell-1}(x_i)$, the $e^\star$ block is identically zero across all samples, so the corresponding $d^{\mathrm{exp}}_{e^\star,\ell-1}$ columns of $\Phi_{\ell-1}$ are all zeros.
Thus $\Aexp_{\ell-1}=(1/N)\Phi_{\ell-1}^\top\Phi_{\ell-1}$ has a zero row/column block of size $d^{\mathrm{exp}}_{e^\star,\ell-1}$ and therefore $\operatorname{rank}(\Aexp_{\ell-1})\le D_{\ell-1}-d^{\mathrm{exp}}_{e^\star,\ell-1}$.
Finally, $r_e(M)\le \operatorname{rank}(M)$ for any $M$ (\Cref{sec:preliminaries}), giving the claimed bound.
\end{proof}
This gives a complementary motivation for load balancing: beyond flattening the trace weights $\beta$, it also helps avoid \emph{inactive} experts that impose a hard ceiling on $r_e(\Aexp_{\ell-1})$ (and thus on the expert-layer contribution to spectral plasticity).

\begin{corollary}[Expert-block mixture form under decoupling]\label{cor:expert-block-mixture-Aexp}
Assume that the cross-expert blocks vanish, i.e., $[\Aexp_{\ell-1}]_{e,e'}=0$ for all $e\neq e'$ (so $\Aexp_{\ell-1}=\bigoplus_{e=1}^{E} \Aexp_{\ell-1,(e,e)}$).
Then
\begin{equation}
  r_e(\Aexp_{\ell-1})
  \;=\;
  \exp\!\Big(
    H(\beta)
    + \sum_{\{e:\,\beta_e>0\}} \beta_e \log r_e\!\big(\Aexp_{\ell-1,(e,e)}\big)
  \Big).
  \label{eq:app-effrank-expert-decomp}
\end{equation}
\end{corollary}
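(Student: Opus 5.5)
The plan is to reduce the statement directly to Lemma~\ref{lem:block-effrank}. Under the decoupling hypothesis, $\Aexp_{\ell-1}=\bigoplus_{e=1}^{E}\Aexp_{\ell-1,(e,e)}$ is block-diagonal with $B=E$ blocks $M_e\defeq\Aexp_{\ell-1,(e,e)}$, so that lemma applies once its mixture weights $\alpha_e=\|M_e\|_\ast/\sum_m\|M_m\|_\ast$ are identified with the trace weights $\beta_e$ defined just before Lemma~\ref{lem:beta-trace-form}. We assume $\operatorname{Tr}(\Aexp_{\ell-1})>0$ so that $\beta$ is well defined; this is the same nondegeneracy condition $\sum_b s_b>0$ that Lemma~\ref{lem:block-effrank} requires.

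First, each diagonal block is SPSD, because by Eq.~\eqref{eq:app-Aexp-blocks} with $e'=e$ it is a nonnegative combination of outer products $vv^\top$. For an SPSD matrix the singular values coincide with the eigenvalues, all nonnegative, so the nuclear norm equals the trace: $s_e=\|M_e\|_\ast=\operatorname{Tr}(M_e)$.

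Second, block-diagonality gives $\sum_e s_e=\sum_e\operatorname{Tr}(M_e)=\operatorname{Tr}(\Aexp_{\ell-1})$, which is also recorded in Lemma~\ref{lem:beta-trace-form}. Hence $\alpha_e=\operatorname{Tr}(M_e)/\operatorname{Tr}(\Aexp_{\ell-1})=\beta_e$ exactly. In particular the index sets $\{e:s_e>0\}$ and $\{e:\beta_e>0\}$ coincide.

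Third, substitute $\alpha=\beta$ into Lemma~\ref{lem:block-effrank}. The conclusion becomes Eq.~\eqref{eq:app-effrank-expert-decomp}, with $H(\beta)$ understood under the convention $0\log 0=0$. This convention is implicit in the lemma, whose entropy sum effectively runs only over blocks with $s_b>0$.

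No step is a real obstacle. The only point needing care is the identification of the nuclear norm with the trace, which relies on the diagonal blocks being SPSD. That in turn holds because they are principal blocks of the Gram matrix $\Aexp_{\ell-1}$, or directly from their outer-product form.
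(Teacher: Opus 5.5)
Your proposal is correct and matches the paper's proof. Both apply Lemma~\ref{lem:block-effrank} to the block-diagonal $\Aexp_{\ell-1}=\bigoplus_e\Aexp_{\ell-1,(e,e)}$ and use that SPSD blocks have nuclear norm equal to trace, so the lemma's mixture weights $\alpha_e$ coincide with $\beta_e$; your explicit handling of $\operatorname{Tr}(\Aexp_{\ell-1})>0$ and the $0\log 0=0$ convention only spells out what the paper leaves implicit.
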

\begin{proof}
Apply Lemma~\ref{lem:block-effrank} to the block-diagonal matrix $\Aexp_{\ell-1}=\bigoplus_e \Aexp_{\ell-1,(e,e)}$.
Lemma~\ref{lem:block-effrank} uses mixture weights $\alpha_e=\|\Aexp_{\ell-1,(e,e)}\|_\ast/\|\Aexp_{\ell-1}\|_\ast$.
Since each block is SPSD, $\|\Aexp_{\ell-1,(e,e)}\|_\ast=\operatorname{Tr}(\Aexp_{\ell-1,(e,e)})$ and
$\|\Aexp_{\ell-1}\|_\ast=\operatorname{Tr}(\Aexp_{\ell-1})$, hence $\alpha_e=\beta_e$.
Substituting $\alpha=\beta$ into Lemma~\ref{lem:block-effrank} gives \eqref{eq:app-effrank-expert-decomp}.
\end{proof}

\begin{corollary}[Balanced isotropic blocks maximize $r_e$ and minimize $\kappa$]\label{cor:balanced-isotropic-max}
Assume the decoupling condition of Corollary~\ref{cor:expert-block-mixture-Aexp} and that all expert blocks have the same dimension $d$, with $\Aexp_{\ell-1,(e,e)}=\alpha_e \Ibf_{d}$ for some $\alpha_e>0$.
Then
\[
  r_e(\Aexp_{\ell-1}) \;=\; d\,\exp(H(\beta)),
  \qquad
  \kappa(\Aexp_{\ell-1}) \;=\; \frac{\max_e \alpha_e}{\min_e \alpha_e},
  \qquad
  \beta_e \;=\; \frac{\alpha_e}{\sum_{m=1}^{E}\alpha_m}.
\]
In particular, $r_e(\Aexp_{\ell-1})\le dE$ with equality if and only if $\alpha_1=\cdots=\alpha_E$, and $\kappa(\Aexp_{\ell-1})\ge 1$ with equality if and only if $\alpha_1=\cdots=\alpha_E$.
\end{corollary}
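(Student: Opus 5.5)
Under the decoupling assumption, $\Aexp_{\ell-1}=\bigoplus_{e=1}^{E}\alpha_e\Ibf_d$. This is a diagonal matrix whose spectrum is the multiset $\{\alpha_e\}_{e=1}^E$, with each $\alpha_e$ repeated $d$ times. Every claim in Corollary~\ref{cor:balanced-isotropic-max} can be read off from this spectrum, so the plan is to compute it once and then specialize.

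\textbf{Trace weights.} By Lemma~\ref{lem:beta-trace-form}, $\operatorname{Tr}(\Aexp_{\ell-1,(e,e)})=d\,\alpha_e$ and $\operatorname{Tr}(\Aexp_{\ell-1})=d\sum_m\alpha_m$. Hence
\[
  \beta_e=\frac{\alpha_e}{\sum_{m=1}^{E}\alpha_m},
\]
which is the third identity.

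\textbf{Effective rank.} Invoke Corollary~\ref{cor:expert-block-mixture-Aexp}. Each block $\alpha_e\Ibf_d$ is a positive scaled identity, so by Proposition~\ref{prop:psd-max-effrank} (or directly from the uniform normalized spectrum) $r_e(\alpha_e\Ibf_d)=d$. Since all $\beta_e>0$ and $\sum_e\beta_e=1$, the formula \eqref{eq:app-effrank-expert-decomp} gives
\[
  r_e(\Aexp_{\ell-1})=\exp\big(H(\beta)+\log d\big)=d\,\exp(H(\beta)).
\]

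\textbf{Condition number.} Because $\Aexp_{\ell-1}$ is diagonal, its largest and smallest eigenvalues are $\max_e\alpha_e$ and $\min_e\alpha_e$, both strictly positive. Therefore $\kappa(\Aexp_{\ell-1})=\max_e\alpha_e/\min_e\alpha_e$.

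\textbf{Extremal statements.} Shannon entropy on the simplex $\Delta^{E-1}$ satisfies $H(\beta)\le\log E$, with equality iff $\beta$ is uniform. By the formula for $\beta_e$, $\beta$ is uniform iff $\alpha_1=\cdots=\alpha_E$. This gives $r_e(\Aexp_{\ell-1})\le dE$, with equality exactly in the balanced case. Similarly, $\kappa\ge1$ holds trivially because $\max\ge\min$, with equality iff all $\alpha_e$ coincide.

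No step is a genuine obstacle; the only care needed is citing the strict concavity of entropy to justify the equality case of $H(\beta)\le\log E$.
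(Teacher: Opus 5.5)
Your proposal is correct and follows the paper's proof essentially step for step. As in the paper, $\operatorname{Tr}(\alpha_e\Ibf_d)=d\alpha_e$ gives $\beta_e$, substituting $r_e(\alpha_e\Ibf_d)=d$ into \eqref{eq:app-effrank-expert-decomp} gives $d\exp(H(\beta))$, $H(\beta)\le\log E$ gives the $dE$ ceiling, and $\kappa$ comes from the spectrum $\{\alpha_e\}$ with multiplicity $d$; the only addition is that you spell out the trivial $\kappa\ge 1$ equality case, which the paper leaves implicit.
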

\begin{proof}
If $\Aexp_{\ell-1,(e,e)}=\alpha_e \Ibf_{d}$, then $r_e(\Aexp_{\ell-1,(e,e)})=d$ and $\operatorname{Tr}(\Aexp_{\ell-1,(e,e)})=\alpha_e d$, hence $\beta_e=\alpha_e/\sum_m \alpha_m$.
Substituting into \eqref{eq:app-effrank-expert-decomp} gives $r_e(\Aexp_{\ell-1})=d\exp(H(\beta))$.
Since $H(\beta)\le \log E$ with equality iff $\beta$ is uniform, this implies $r_e(\Aexp_{\ell-1})\le dE$ with equality iff $\alpha_1=\cdots=\alpha_E$.
Finally, the eigenvalues of $\Aexp_{\ell-1}$ are $\{\alpha_e\}_{e=1}^{E}$ each with multiplicity $d$, giving $\kappa(\Aexp_{\ell-1})=\max_e \alpha_e/\min_e \alpha_e$.
\end{proof}

\begin{figure}[t]
  \centering
	  \includegraphics[width=0.60\linewidth]{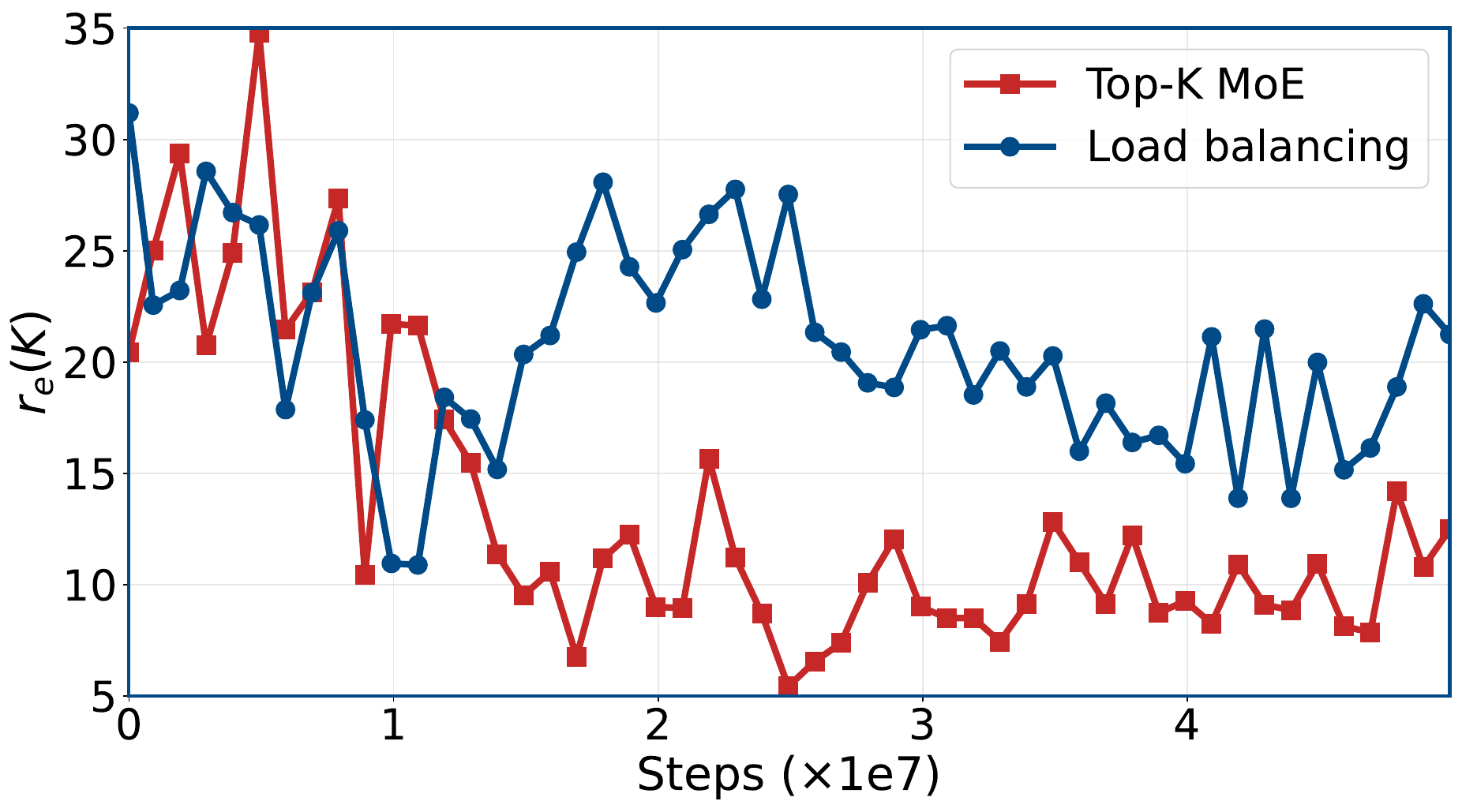}
	  \caption{\textbf{Load balancing is positively associated with spectral plasticity.}
	  The eNTK effective rank $r_e(\Kmat)$ is measured on HumanoidBench for a Top-$K$ MoE actor with and without a load-balancing objective.
	  Load balancing maintains higher $r_e(\Kmat)$, consistent with the spectral view that redistributing routing-induced trace mass across experts helps prevent collapse of functional update directions.}
	  \label{fig:loadbalance-vs-topk-entk}
	\end{figure}

\noindent\textbf{Empirical link to $r_e(\Kmat)$.}\spacefactor=1000\space\space%
\Cref{fig:loadbalance-vs-topk-entk} empirically corroborates the above mechanism.
A central observation is that load balancing acts \emph{spectrally}: by preventing a few experts from dominating the trace of the gated feature Gram, it makes the trace weights $\beta$ less peaked, increasing $H(\beta)$ and thus increasing the block-mixture contribution to $r_e(\Aexp_{\ell-1})$ (cf. \eqref{eq:app-effrank-expert-decomp}).
Through the Kronecker proxy in \Cref{sec:theory}, this translates into a better-conditioned expert-layer surrogate and a larger expert-layer contribution to $r_e(\Kmat)$.
\Cref{tab:loadbalance-success-rek} reports the corresponding improvement in final success alongside the higher end-of-training $r_e(\Kmat)$.

\begin{table}[t]
  \centering
  \small
  \caption{\textbf{Load balancing improves success alongside $r_e(\Kmat)$.} Under CRL on HumanoidBench, a load-balancing objective improves average final success and maintains a higher end-of-training $r_e(\Kmat)$. This supports the interpretation that load balancing benefits continual performance by preserving spectral plasticity.}
  \label{tab:loadbalance-success-rek}
  \begin{tabular}{lcc}
    \toprule
    Method & Avg. final success under CRL & End-of-training $r_e(\Kmat)$ \\
    \midrule
    Top-$K$ MoE & $0.36 \pm 0.08$ & $12.49$ \\
    + Load balancing & $\mathbf{0.45 \pm 0.23}$ & $\mathbf{21.24}$ \\
    \bottomrule
  \end{tabular}
\end{table}

\subsection{Specialization attenuates cross-expert coupling}
In practice, Top-$K$ routing induces nonzero off-diagonal blocks in \eqref{eq:app-Aexp-blocks} whenever two experts co-activate on the same samples.
These cross blocks couple expert subspaces and can induce a more non-uniform spectrum, decreasing spectral entropy (hence $r_e$) and typically worsening conditioning.

To formalize this, decompose $\Aexp_{\ell-1}$ into its block-diagonal and off-diagonal parts:
\[
  \Aexp_{\ell-1}
  \;=\;
  \Aexp_{\ell-1,\mathrm{bd}} \;+\; \Delta_{\ell-1},
  \qquad
  \Aexp_{\ell-1,\mathrm{bd}}\defeq \bigoplus_{e=1}^{E} [\Aexp_{\ell-1}]_{e,e},
\]
where $\Delta_{\ell-1}$ collects the cross-expert blocks.

\begin{proposition}[Effective-rank deviation induced by cross-expert coupling]\label{prop:effrank-cross-coupling}
Let $D\defeq D_{\ell-1}$ and assume $\operatorname{Tr}(\Aexp_{\ell-1})>0$.
Define $\tau\defeq \operatorname{Tr}(\Aexp_{\ell-1})$ and
\[
  \varepsilon_{\Delta} \;\defeq\; \frac{\sqrt{D}\,\|\Delta_{\ell-1}\|_{F}}{\tau}.
\]
If $\varepsilon_{\Delta}\le 1-1/D$, then
\[
  \big|\log r_e(\Aexp_{\ell-1})-\log r_e(\Aexp_{\ell-1,\mathrm{bd}})\big|
  \;\le\;
  \varepsilon_{\Delta}\log(D-1)+h(\varepsilon_{\Delta}),
\]
where $h(\varepsilon)\defeq -\varepsilon\log\varepsilon-(1-\varepsilon)\log(1-\varepsilon)$.
\end{proposition}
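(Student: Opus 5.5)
The plan is to recognize $\log r_e$ of an SPSD matrix as the von Neumann entropy of its trace-normalized version, and then apply the Fannes--Audenaert continuity inequality. Set $\rho\defeq \Aexp_{\ell-1}/\tau$ and $\sigma\defeq \Aexp_{\ell-1,\mathrm{bd}}/\tau$.

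\emph{Step 1: $\rho$ and $\sigma$ are density matrices.} Since $\Delta_{\ell-1}$ has only off-diagonal blocks, its diagonal entries vanish, so $\operatorname{Tr}(\Delta_{\ell-1})=0$. Hence $\operatorname{Tr}(\Aexp_{\ell-1,\mathrm{bd}})=\tau$, and both $\rho$ and $\sigma$ have unit trace. The matrix $\Aexp_{\ell-1}$ is SPSD by construction. The matrix $\Aexp_{\ell-1,\mathrm{bd}}$ is a direct sum of principal submatrices of an SPSD matrix, so it is SPSD as well. Singular values of SPSD matrices coincide with eigenvalues (\Cref{sec:preliminaries}), and $r_e$ is invariant under positive rescaling. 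Therefore $\log r_e(\Aexp_{\ell-1})=S(\rho)$ and $\log r_e(\Aexp_{\ell-1,\mathrm{bd}})=S(\sigma)$, where $S(\cdot)=-\operatorname{Tr}(\cdot\log\cdot)$ is the spectral entropy, with zero eigenvalues dropped exactly as in Eq.~\eqref{eq:effrank-parameter}.

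\emph{Step 2: bound the trace distance.} We have $T\defeq\tfrac12\|\rho-\sigma\|_1=\tfrac{1}{2\tau}\|\Delta_{\ell-1}\|_\ast$. By Cauchy--Schwarz on the singular values of a $D\times D$ matrix, $\|\Delta_{\ell-1}\|_\ast\le\sqrt{D}\,\|\Delta_{\ell-1}\|_F$. This gives $T\le \varepsilon_\Delta/2\le\varepsilon_\Delta$.

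\emph{Step 3: apply Fannes--Audenaert and use monotonicity.} The Fannes--Audenaert inequality for $D$-dimensional states states that $|S(\rho)-S(\sigma)|\le T\log(D-1)+h(T)$. I will invoke it as a known result; alternatively, it can be reduced to the classical case by comparing the ordered eigenvalue vectors via Mirsky/Lidskii, whose $\ell_1$ distance is at most $\|\rho-\sigma\|_1$, and then using the classical Audenaert bound.

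It remains to replace $T$ by $\varepsilon_\Delta$. Let $\phi(x)\defeq x\log(D-1)+h(x)$. Its derivative is $\phi'(x)=\log(D-1)+\log\frac{1-x}{x}$, which is nonnegative exactly when $x\le 1-1/D$. So $\phi$ is nondecreasing on $[0,1-1/D]$. Since $0\le T\le\varepsilon_\Delta\le 1-1/D$ by hypothesis, we get $\phi(T)\le\phi(\varepsilon_\Delta)$, which is the claimed bound.

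\emph{Main obstacle.} The only substantive ingredient is the sharp continuity bound of Step 3; everything else is bookkeeping. The hypothesis $\varepsilon_\Delta\le 1-1/D$ is used precisely to keep $\phi$ on its monotone branch. The factor-of-two slack between $T$ and $\varepsilon_\Delta$ is harmless and need not be tracked. Degenerate zero eigenvalues cause no trouble, since $0\log0=0$ is consistent with the definition of $r_e$.
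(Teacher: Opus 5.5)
Your proof is correct and follows essentially the same route as the paper, which simply applies its general stability lemma (Lemma~\ref{lem:effrank-stability}) with $K=\Aexp_{\ell-1}$, $\widetilde K=\Aexp_{\ell-1,\mathrm{bd}}$, $n=D$: bound the distance between the trace-normalized spectra by $\varepsilon_\Delta$, invoke Fannes--Audenaert, and use monotonicity of $x\log(D-1)+h(x)$ on $[0,1-1/D]$. The differences are cosmetic. The paper bounds the spectral $\ell_1$ distance via Hoffman--Wielandt and $\|v\|_1\le\sqrt{n}\|v\|_2$, with a trace-difference term that vanishes here. You instead go through the nuclear norm and use the equal traces directly, which gives $T\le\varepsilon_\Delta/2$ rather than the lemma's $\delta\le\varepsilon$; either bound suffices.
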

\begin{proof}
Apply Lemma~\ref{lem:effrank-stability} with $K=\Aexp_{\ell-1}$ and $\widetilde K=\Aexp_{\ell-1,\mathrm{bd}}$ (both are SPSD).
Since $\Delta_{\ell-1}$ contains only off-diagonal blocks, $\operatorname{Tr}(\Aexp_{\ell-1})=\operatorname{Tr}(\Aexp_{\ell-1,\mathrm{bd}})=\tau$, and $\Delta=K-\widetilde K=\Delta_{\ell-1}$.
In Lemma~\ref{lem:effrank-stability}, take $n=D$, so $\varepsilon=\sqrt{D}\|\Delta\|_F/\tau=\varepsilon_{\Delta}$, which yields the stated bound.
\end{proof}

\begin{proposition}[Condition-number control under cross-expert coupling]\label{prop:kappa-cross-coupling}
Assume $\Aexp_{\ell-1,\mathrm{bd}}\succ 0$ and $\|\Delta_{\ell-1}\|_{2}<\lambda_{\min}(\Aexp_{\ell-1,\mathrm{bd}})$.
Then $\Aexp_{\ell-1}\succ 0$ and
\[
  \kappa(\Aexp_{\ell-1})
  \;\le\;
  \frac{\lambda_{\max}(\Aexp_{\ell-1,\mathrm{bd}})+\|\Delta_{\ell-1}\|_{2}}
       {\lambda_{\min}(\Aexp_{\ell-1,\mathrm{bd}})-\|\Delta_{\ell-1}\|_{2}}\,.
\]
\end{proposition}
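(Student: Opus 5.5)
The plan is a direct Weyl-type perturbation argument. Write $A\defeq\Aexp_{\ell-1}=A_{\mathrm{bd}}+\Delta$ with $A_{\mathrm{bd}}\defeq\Aexp_{\ell-1,\mathrm{bd}}$ and $\Delta\defeq\Delta_{\ell-1}$. First note that $\Delta$ is symmetric, since $A$ and $A_{\mathrm{bd}}$ are. Hence $\|\Delta\|_2$ equals its largest eigenvalue in absolute value, and for every unit vector $x$,
\[
  -\|\Delta\|_2\;\le\;x^\top\Delta x\;\le\;\|\Delta\|_2 .
\]

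The core step is Weyl's inequality for symmetric matrices, written via the Rayleigh quotient (as in \citealp{horn2012matrix}). For any unit $x$,
\[
  x^\top A x=x^\top A_{\mathrm{bd}}x+x^\top\Delta x .
\]
Taking the minimum over $x$ gives $\lambda_{\min}(A)\ge\lambda_{\min}(A_{\mathrm{bd}})-\|\Delta\|_2$. Taking the maximum gives $\lambda_{\max}(A)\le\lambda_{\max}(A_{\mathrm{bd}})+\|\Delta\|_2$.

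By the hypothesis $\|\Delta\|_2<\lambda_{\min}(A_{\mathrm{bd}})$, the lower bound is strictly positive. Therefore $A\succ0$, which is the first claim. In particular $\kappa(A)=\lambda_{\max}(A)/\lambda_{\min}(A)$ is finite, and no ridge-shift convention from \Cref{app:numerical-spsd} is needed here.

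Finally, the ratio is monotone: it increases when the numerator grows and when the positive denominator shrinks. Combining the two eigenvalue bounds therefore gives
\[
  \kappa(A)\;\le\;\frac{\lambda_{\max}(A_{\mathrm{bd}})+\|\Delta\|_2}{\lambda_{\min}(A_{\mathrm{bd}})-\|\Delta\|_2}.
\]

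There is no real obstacle. The only points that need care are the symmetry of $\Delta$, which justifies the two-sided Rayleigh bound, and the fact that the denominator is positive before dividing. Optionally, I would remark that $\lambda_{\min}(A_{\mathrm{bd}})=\min_e\lambda_{\min}([A]_{e,e})$ and $\lambda_{\max}(A_{\mathrm{bd}})=\max_e\lambda_{\max}([A]_{e,e})$, because the spectrum of a direct sum is the union of the block spectra. This makes the bound expressible blockwise.
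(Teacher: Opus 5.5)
Your proposal is correct and follows the paper's proof: both use Weyl's inequalities to get $\lambda_{\max}(\Aexp_{\ell-1})\le\lambda_{\max}(\Aexp_{\ell-1,\mathrm{bd}})+\|\Delta_{\ell-1}\|_2$ and $\lambda_{\min}(\Aexp_{\ell-1})\ge\lambda_{\min}(\Aexp_{\ell-1,\mathrm{bd}})-\|\Delta_{\ell-1}\|_2$. From there, the hypothesis makes the lower bound positive, and you divide. The only difference is that you derive Weyl's inequality from the Rayleigh quotient using the symmetry of $\Delta_{\ell-1}$, where the paper cites it directly.
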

\begin{proof}
Weyl's inequalities give
$\lambda_{\max}(\Aexp_{\ell-1})\le \lambda_{\max}(\Aexp_{\ell-1,\mathrm{bd}})+\|\Delta_{\ell-1}\|_2$
and
$\lambda_{\min}(\Aexp_{\ell-1})\ge \lambda_{\min}(\Aexp_{\ell-1,\mathrm{bd}})-\|\Delta_{\ell-1}\|_2$.
Under the stated assumption, the lower bound is positive, so $\Aexp_{\ell-1}\succ 0$ and dividing yields the condition-number bound.
\end{proof}

\subsection{Summary and implications}
This analysis yields a unified interpretation: load balancing and specialization act as complementary spectral constraints on $\Aexp_{\ell-1}$.
Balancing increases the block-level entropy term $H(\beta)$ by distributing trace mass across expert blocks, while specialization reduces cross-expert coupling $\Delta_{\ell-1}$ so that $r_e(\Aexp_{\ell-1})$ is close to the block-diagonal approximation (Proposition~\ref{prop:effrank-cross-coupling}) and $\kappa(\Aexp_{\ell-1})$ remains controlled (Proposition~\ref{prop:kappa-cross-coupling}).
SPHERE operates directly on $\Aexp_{\mathrm{last}}$ (\Cref{sec:method}), encouraging isotropy of the \emph{gating-weighted} expert features; empirically, this is consistent with the strong correlation between $r_e(\Aexp_{\mathrm{last}})$ and $r_e(\Kmat)$ (\Cref{fig:case-study}).

\section{Perturbation Bounds for Approximation Errors}
\label{app:perturbation-bounds}
The surrogate analysis in \Cref{app:ts-kron-details,app:proj-I} relies on block-diagonal and within-layer independence approximations. This section establishes perturbation bounds that quantify how approximation errors propagate to $r_e(\Kmat)$.

The development proceeds in two parts. First, Lemma~\ref{lem:effrank-stability} proves that the effective rank is stable under Frobenius-norm perturbations: if the perturbation is small relative to the trace scale, then $r_e(\cdot)$ changes by at most a controlled multiplicative factor. Since $r_e(\Kmat)=r_e(G^{\mathrm{GN}})$ (Eq.~\eqref{eq:reK-reGGN}), this directly controls changes in spectral plasticity. Second, we instantiate this general bound for the specific approximations used in our analysis: gate--expert decoupling (Corollary~\ref{cor:gate-expert-decoupling-bound}), ignoring the gating contribution (Corollary~\ref{cor:ignore-gating-bound}), and K-FAC factorization (Corollary~\ref{cor:kfac-factorization-bound}). The empirical diagnostics in \Cref{fig:assumption-validation-gate-expert-ggncos,fig:kfac-scatter} verify that the coupling terms are small in practice.

\noindent\textbf{Effective-rank stability under matrix perturbations.}\spacefactor=1000\space\space%
We control approximation error by bounding the total-variation distance between the normalized spectra of two SPSD matrices, which directly upper bounds the change in spectral entropy (hence $\log r_e$).
\emph{Remark.} If the perturbation is small in Frobenius norm relative to the trace scale, then $r_e(\cdot)$ is stable up to a multiplicative factor that grows smoothly with this ratio.
These worst-case continuity bounds can be conservative; we use them to make explicit which coupling terms must be small, and rely on the empirical diagnostics in \Cref{fig:assumption-validation-gate-expert-ggncos,fig:kfac-scatter} to assess their magnitude in practice.

\begin{lemma}[Stability of $r_e$ under Frobenius perturbations]\label{lem:effrank-stability}
Let $K,\widetilde K\in\R^{n\times n}$ be SPSD with $\operatorname{Tr}(K){>}0$ and $\operatorname{Tr}(\widetilde K){>}0$.
Let $\lambda(K)\in\R_{+}^{n}$ denote the vector of eigenvalues (with multiplicity) and define normalized spectra
\[
  p \defeq \frac{\lambda(K)}{\operatorname{Tr}(K)},
  \qquad
  \widetilde p \defeq \frac{\lambda(\widetilde K)}{\operatorname{Tr}(\widetilde K)}.
\]
Let $\tau\defeq \min\{\operatorname{Tr}(K),\operatorname{Tr}(\widetilde K)\}$ and $\Delta\defeq K-\widetilde K$.
Define
\[
  \varepsilon \defeq \frac{\sqrt{n}\,\|\Delta\|_{F}}{\tau},
  \qquad
  h(\varepsilon)\defeq -\varepsilon\log\varepsilon-(1-\varepsilon)\log(1-\varepsilon).
\]
If $\varepsilon\le 1-1/n$, then
\[
  \big|\log r_e(K)-\log r_e(\widetilde K)\big|
  \;=\;
  \big|H(p)-H(\widetilde p)\big|
  \;\le\;
  \varepsilon\log(n-1)+h(\varepsilon),
\]
and hence
\[
  e^{-(\varepsilon\log(n-1)+h(\varepsilon))}
  \;\le\;
	  \frac{r_e(K)}{r_e(\widetilde K)}
	  \;\le\;
	  e^{\varepsilon\log(n-1)+h(\varepsilon)}.
	\]
  \begin{proof}
  Since $K\succeq 0$, all eigenvalues are nonnegative and $\sum_{i=1}^{n}\lambda_i(K)=\operatorname{Tr}(K)>0$, so $p\in\Delta^{n-1}$ is a probability vector. The same holds for $\widetilde p$.

  Let $\lambda\defeq \lambda(K)$ and $\widetilde\lambda\defeq \lambda(\widetilde K)$, ordered non-increasingly.
  Since $K$ and $\widetilde K$ are symmetric, the Hoffman--Wielandt inequality gives
  \[
    \|\lambda-\widetilde\lambda\|_2
    \;\le\;
    \|K-\widetilde K\|_F
    \;=\;
    \|\Delta\|_F.
  \]
  By $\|v\|_1\le \sqrt{n}\|v\|_2$, we obtain
  \begin{equation}
    \|\lambda-\widetilde\lambda\|_1 \;\le\; \sqrt{n}\,\|\Delta\|_F.
    \label{eq:effrank-stability-l1-eigs}
  \end{equation}

  Next, since $\operatorname{Tr}(\Delta)=\langle \Delta,\Ibf_n\rangle_F$ under the Frobenius inner product, Cauchy--Schwarz yields
  \begin{equation}
    \big|\operatorname{Tr}(K)-\operatorname{Tr}(\widetilde K)\big|
    \;=\;
    |\operatorname{Tr}(\Delta)|
    \;\le\;
    \|\Delta\|_F\,\|\Ibf_n\|_F
    \;=\;
    \sqrt{n}\,\|\Delta\|_F.
    \label{eq:effrank-stability-trace}
  \end{equation}

  We now bound the total-variation distance between the normalized spectra.
  Using the triangle inequality and $\|\widetilde\lambda\|_1=\operatorname{Tr}(\widetilde K)$,
  \begin{align*}
    \|p-\widetilde p\|_1
    &= \Big\|\frac{\lambda}{\operatorname{Tr}(K)}-\frac{\widetilde\lambda}{\operatorname{Tr}(\widetilde K)}\Big\|_1 \\
    &\le
    \Big\|\frac{\lambda}{\operatorname{Tr}(K)}-\frac{\widetilde\lambda}{\operatorname{Tr}(K)}\Big\|_1
    \;+\;
    \Big\|\frac{\widetilde\lambda}{\operatorname{Tr}(K)}-\frac{\widetilde\lambda}{\operatorname{Tr}(\widetilde K)}\Big\|_1 \\
    &= \frac{\|\lambda-\widetilde\lambda\|_1}{\operatorname{Tr}(K)}
    \;+\;
    \|\widetilde\lambda\|_1\cdot\Big|\frac{1}{\operatorname{Tr}(K)}-\frac{1}{\operatorname{Tr}(\widetilde K)}\Big| \\
    &= \frac{\|\lambda-\widetilde\lambda\|_1}{\operatorname{Tr}(K)}
    \;+\;
    \operatorname{Tr}(\widetilde K)\cdot
    \frac{\big|\operatorname{Tr}(K)-\operatorname{Tr}(\widetilde K)\big|}{\operatorname{Tr}(K)\operatorname{Tr}(\widetilde K)} \\
    &= \frac{\|\lambda-\widetilde\lambda\|_1}{\operatorname{Tr}(K)}
    \;+\;
    \frac{\big|\operatorname{Tr}(K)-\operatorname{Tr}(\widetilde K)\big|}{\operatorname{Tr}(K)} \\
    &\le
    \frac{\|\lambda-\widetilde\lambda\|_1}{\tau}
    \;+\;
    \frac{\big|\operatorname{Tr}(K)-\operatorname{Tr}(\widetilde K)\big|}{\tau}.
  \end{align*}
  Substituting \eqref{eq:effrank-stability-l1-eigs} and \eqref{eq:effrank-stability-trace} gives
  \[
    \|p-\widetilde p\|_1 \;\le\; \frac{2\sqrt{n}\,\|\Delta\|_F}{\tau} \;=\; 2\varepsilon.
  \]
  Define $\delta\defeq \tfrac{1}{2}\|p-\widetilde p\|_1$. Then $\delta\le \varepsilon$.

  The Fannes--Audenaert inequality states that for probability vectors in dimension $n$, if $\delta\le 1-1/n$ then
  \[
    |H(p)-H(\widetilde p)|
    \;\le\;
    \delta\log(n-1)+h(\delta).
  \]
  Since $\varepsilon\le 1-1/n$ by assumption and $\delta\le \varepsilon$, the condition holds.
  Moreover, the function $g(x)\defeq x\log(n-1)+h(x)$ is non-decreasing on $[0,1-1/n]$ because
  $g'(x)=\log\!\big(\frac{(n-1)(1-x)}{x}\big)\ge 0$ iff $x\le 1-1/n$.
  Therefore,
  \[
    |H(p)-H(\widetilde p)|
    \;\le\;
    \delta\log(n-1)+h(\delta)
    \;\le\;
    \varepsilon\log(n-1)+h(\varepsilon).
  \]

  Finally, since $K$ and $\widetilde K$ are SPSD, $r_e(K)=\exp(H(p))$ and $r_e(\widetilde K)=\exp(H(\widetilde p))$, hence
  $\big|\log r_e(K)-\log r_e(\widetilde K)\big|=\big|H(p)-H(\widetilde p)\big|$ and exponentiating yields the ratio bound.
  \end{proof}
\end{lemma}

\noindent\textbf{Instantiation for Top-$K$ MoE approximations.}\spacefactor=1000\space\space%
Recall that $r_e(\Kmat)=r_e(G^{\mathrm{GN}})$ (Eq.~\eqref{eq:reK-reGGN}), where
$G^{\mathrm{GN}}\defeq (1/N)\J^\top\J$ and $\J=[\,\Jg\mid\Jexp\,]$ partitions gate and expert parameters (\Cref{sec:method-deompose}).
The next two corollaries upper bound the perturbation size $\|\Delta\|_F$ in terms of (i) the gate--expert cross block $\mathbf{G}^{\mathrm{GN},\mathrm{g,exp}}$ and (ii) the gate block $\mathbf{G}^{\mathrm{GN},\mathrm{g}}$.

\begin{corollary}[Gate--expert decoupling error bound]\label{cor:gate-expert-decoupling-bound}
Let $G^{\mathrm{GN}}$ be partitioned into gate/expert blocks as in \Cref{sec:method-deompose}:
\[
  G^{\mathrm{GN}}
  \;=\;
  \begin{bmatrix}
    \mathbf{G}^{\mathrm{GN},\mathrm{g}} & \mathbf{G}^{\mathrm{GN},\mathrm{g,exp}} \\
    \mathbf{G}^{\mathrm{GN},\mathrm{exp,g}} & \mathbf{G}^{\mathrm{GN},\mathrm{exp}}
  \end{bmatrix}.
\]
Define the gate--expert block-diagonal approximation
\[
  G^{\mathrm{GN}}_{\mathrm{bd}}
  \;\defeq\;
  \begin{bmatrix}
    \mathbf{G}^{\mathrm{GN},\mathrm{g}} & 0 \\
    0 & \mathbf{G}^{\mathrm{GN},\mathrm{exp}}
  \end{bmatrix}.
\]
Then $\Delta_{\mathrm{bd}}\defeq G^{\mathrm{GN}}-G^{\mathrm{GN}}_{\mathrm{bd}}$ satisfies
\[
  \Delta_{\mathrm{bd}}
  \;=\;
  \begin{bmatrix}
    0 & \mathbf{G}^{\mathrm{GN},\mathrm{g,exp}} \\
    \mathbf{G}^{\mathrm{GN},\mathrm{exp,g}} & 0
  \end{bmatrix},
  \qquad
  \|\Delta_{\mathrm{bd}}\|_F
  \;=\;
  \sqrt{2}\,\|\mathbf{G}^{\mathrm{GN},\mathrm{g,exp}}\|_F.
\]
Consequently, letting $P\defeq P^{\mathrm{g}}+P^{\mathrm{exp}}$, $\tau_{\mathrm{bd}}\defeq \operatorname{Tr}(G^{\mathrm{GN}})=\operatorname{Tr}(G^{\mathrm{GN}}_{\mathrm{bd}})$ and $\varepsilon_{\mathrm{bd}}\defeq \sqrt{P}\|\Delta_{\mathrm{bd}}\|_F/\tau_{\mathrm{bd}}$,
Lemma~\ref{lem:effrank-stability} with $(K,\widetilde K)=(G^{\mathrm{GN}},G^{\mathrm{GN}}_{\mathrm{bd}})$ and $n=P$ gives, for $\varepsilon_{\mathrm{bd}}\le 1-1/P$,
\[
  \big|\log r_e(G^{\mathrm{GN}})-\log r_e(G^{\mathrm{GN}}_{\mathrm{bd}})\big|
  \;\le\;
  \varepsilon_{\mathrm{bd}}\log(P-1)+h(\varepsilon_{\mathrm{bd}}).
\]
\end{corollary}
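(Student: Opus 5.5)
The plan is to compute $\Delta_{\mathrm{bd}}$ directly from the block partition, evaluate its Frobenius norm, and then invoke Lemma~\ref{lem:effrank-stability}. Subtracting $G^{\mathrm{GN}}_{\mathrm{bd}}$ from $G^{\mathrm{GN}}$ blockwise cancels the two diagonal blocks $\mathbf{G}^{\mathrm{GN},\mathrm{g}}$ and $\mathbf{G}^{\mathrm{GN},\mathrm{exp}}$. What remains is exactly the off-diagonal matrix with cross blocks $\mathbf{G}^{\mathrm{GN},\mathrm{g,exp}}$ and $\mathbf{G}^{\mathrm{GN},\mathrm{exp,g}}$, which is the claimed form of $\Delta_{\mathrm{bd}}$.

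For the norm, the squared Frobenius norm of a block matrix is the sum of the squared Frobenius norms of its blocks. The cross blocks are transposes of each other: $\mathbf{G}^{\mathrm{GN},\mathrm{exp,g}}=(1/N)(\Jexp)^\top\Jg=\big((1/N)(\Jg)^\top\Jexp\big)^\top=(\mathbf{G}^{\mathrm{GN},\mathrm{g,exp}})^\top$. Since transposition preserves $\|\cdot\|_F$, we get $\|\Delta_{\mathrm{bd}}\|_F^2=2\|\mathbf{G}^{\mathrm{GN},\mathrm{g,exp}}\|_F^2$, and taking square roots gives the factor $\sqrt{2}$.

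To apply Lemma~\ref{lem:effrank-stability} with $K=G^{\mathrm{GN}}$, $\widetilde K=G^{\mathrm{GN}}_{\mathrm{bd}}$ and $n=P$, I will verify its hypotheses in the following order.

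\begin{enumerate}
\item $K$ is SPSD because it is the Gram matrix $(1/N)\J^\top\J$.
\item $\widetilde K$ is SPSD because it is a direct sum of the two Gram matrices $(1/N)(\Jg)^\top\Jg$ and $(1/N)(\Jexp)^\top\Jexp$, each SPSD by the same argument as in the Preliminaries.
\item The traces agree, $\operatorname{Tr}(G^{\mathrm{GN}})=\operatorname{Tr}(G^{\mathrm{GN}}_{\mathrm{bd}})$, since $\Delta_{\mathrm{bd}}$ has zero diagonal blocks. Hence $\tau=\min\{\operatorname{Tr}K,\operatorname{Tr}\widetilde K\}=\tau_{\mathrm{bd}}$.
\item The trace must be positive, which I would assume implicitly (the bound is vacuous otherwise).
\end{enumerate}

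With these in place, the lemma's $\varepsilon=\sqrt{P}\|\Delta\|_F/\tau$ coincides with $\varepsilon_{\mathrm{bd}}$. Under $\varepsilon_{\mathrm{bd}}\le 1-1/P$, the lemma then yields the stated bound on $|\log r_e(G^{\mathrm{GN}})-\log r_e(G^{\mathrm{GN}}_{\mathrm{bd}})|$.

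There is no real obstacle in this argument. The only points needing care are confirming that $\widetilde K$ is SPSD, since the lemma requires it, and noting the trace positivity assumption that the statement leaves implicit.
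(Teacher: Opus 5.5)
Your proposal is correct and follows the paper's proof. You subtract blockwise, use the transpose relation $\mathbf{G}^{\mathrm{GN},\mathrm{exp,g}}=(\mathbf{G}^{\mathrm{GN},\mathrm{g,exp}})^\top$ to get the $\sqrt{2}$ factor, and then invoke Lemma~\ref{lem:effrank-stability} directly. The paper uses the same hypotheses you check explicitly without stating them: that $G^{\mathrm{GN}}_{\mathrm{bd}}$ is SPSD, that the traces coincide so $\tau=\tau_{\mathrm{bd}}$, and that $\tau_{\mathrm{bd}}>0$.
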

\begin{proof}
Subtracting the block-diagonal matrix $G^{\mathrm{GN}}_{\mathrm{bd}}$ from $G^{\mathrm{GN}}$ yields the stated $\Delta_{\mathrm{bd}}$.
Since $\mathbf{G}^{\mathrm{GN},\mathrm{exp,g}}=(\mathbf{G}^{\mathrm{GN},\mathrm{g,exp}})^\top$, we have
$\|\Delta_{\mathrm{bd}}\|_F^2=\|\mathbf{G}^{\mathrm{GN},\mathrm{g,exp}}\|_F^2+\|\mathbf{G}^{\mathrm{GN},\mathrm{exp,g}}\|_F^2=2\|\mathbf{G}^{\mathrm{GN},\mathrm{g,exp}}\|_F^2$.
\end{proof}

\begin{corollary}[Ignoring gating contribution]\label{cor:ignore-gating-bound}
Define the expert-only approximation
\[
  G^{\mathrm{GN}}_{\mathrm{exp}}
  \;\defeq\;
  \begin{bmatrix}
    0 & 0 \\
    0 & \mathbf{G}^{\mathrm{GN},\mathrm{exp}}
  \end{bmatrix}.
\]
Then $\Delta_{\mathrm{exp}}\defeq G^{\mathrm{GN}}-G^{\mathrm{GN}}_{\mathrm{exp}}$ satisfies
\[
  \Delta_{\mathrm{exp}}
  \;=\;
  \begin{bmatrix}
    \mathbf{G}^{\mathrm{GN},\mathrm{g}} & \mathbf{G}^{\mathrm{GN},\mathrm{g,exp}} \\
    \mathbf{G}^{\mathrm{GN},\mathrm{exp,g}} & 0
  \end{bmatrix},
  \qquad
  \|\Delta_{\mathrm{exp}}\|_F^2
  \;=\;
  \|\mathbf{G}^{\mathrm{GN},\mathrm{g}}\|_F^2 + 2\|\mathbf{G}^{\mathrm{GN},\mathrm{g,exp}}\|_F^2.
\]
Consequently, letting $P\defeq P^{\mathrm{g}}+P^{\mathrm{exp}}$, $\tau_{\mathrm{exp}}\defeq \min\{\operatorname{Tr}(G^{\mathrm{GN}}),\operatorname{Tr}(G^{\mathrm{GN}}_{\mathrm{exp}})\}$ and $\varepsilon_{\mathrm{exp}}\defeq \sqrt{P}\|\Delta_{\mathrm{exp}}\|_F/\tau_{\mathrm{exp}}$,
Lemma~\ref{lem:effrank-stability} with $(K,\widetilde K)=(G^{\mathrm{GN}},G^{\mathrm{GN}}_{\mathrm{exp}})$ and $n=P$ gives, for $\varepsilon_{\mathrm{exp}}\le 1-1/P$,
\[
  \big|\log r_e(G^{\mathrm{GN}})-\log r_e(G^{\mathrm{GN}}_{\mathrm{exp}})\big|
  \;\le\;
  \varepsilon_{\mathrm{exp}}\log(P-1)+h(\varepsilon_{\mathrm{exp}}).
\]
\end{corollary}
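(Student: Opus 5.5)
The statement to prove is Corollary~\ref{cor:ignore-gating-bound}, and the plan mirrors the proof of Corollary~\ref{cor:gate-expert-decoupling-bound}. There are two parts: an exact computation of the perturbation $\Delta_{\mathrm{exp}}$ and of its Frobenius norm, followed by a direct application of Lemma~\ref{lem:effrank-stability}.

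\emph{Computing $\Delta_{\mathrm{exp}}$.} Subtract $G^{\mathrm{GN}}_{\mathrm{exp}}$ blockwise from the gate/expert partition of $G^{\mathrm{GN}}$ in \Cref{sec:method-deompose}. Only the expert--expert block cancels, so the displayed form of $\Delta_{\mathrm{exp}}$ follows at once.

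\emph{Computing $\|\Delta_{\mathrm{exp}}\|_F$.} The squared Frobenius norm is the sum of the squared Frobenius norms of the blocks. Symmetry gives $G^{\mathrm{GN}}=(1/N)\J^\top\J$, so
\[
\mathbf{G}^{\mathrm{GN},\mathrm{exp,g}}=\big(\mathbf{G}^{\mathrm{GN},\mathrm{g,exp}}\big)^\top .
\]
The two cross blocks therefore have equal Frobenius norm, which yields
\[
\|\Delta_{\mathrm{exp}}\|_F^2=\|\mathbf{G}^{\mathrm{GN},\mathrm{g}}\|_F^2+2\|\mathbf{G}^{\mathrm{GN},\mathrm{g,exp}}\|_F^2 .
\]

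\emph{Checking the hypotheses of Lemma~\ref{lem:effrank-stability}.} Take $K=G^{\mathrm{GN}}$, $\widetilde K=G^{\mathrm{GN}}_{\mathrm{exp}}$ and $n=P$.
\begin{itemize}
\item $K$ is SPSD because it is a Gram matrix.
\item $\widetilde K$ is SPSD because it is the direct sum of a zero block and the principal submatrix $\mathbf{G}^{\mathrm{GN},\mathrm{exp}}=(1/N)(\Jexp)^\top\Jexp$.
\item Both traces must be positive. This is implicit in the definition of $\varepsilon_{\mathrm{exp}}$, which requires $\tau_{\mathrm{exp}}>0$. I would state it as a standing assumption, namely $\operatorname{Tr}(\mathbf{G}^{\mathrm{GN},\mathrm{exp}})>0$; this implies $\operatorname{Tr}(G^{\mathrm{GN}})>0$, since the gate trace is nonnegative.
\end{itemize}

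\emph{Applying the lemma.} The lemma's $\tau=\min\{\operatorname{Tr}(K),\operatorname{Tr}(\widetilde K)\}$ coincides with $\tau_{\mathrm{exp}}$, so its $\varepsilon$ is exactly $\varepsilon_{\mathrm{exp}}$. Under $\varepsilon_{\mathrm{exp}}\le 1-1/P$, the lemma gives the claimed bound on $|\log r_e(G^{\mathrm{GN}})-\log r_e(G^{\mathrm{GN}}_{\mathrm{exp}})|$.

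\emph{The one difference from the decoupling case.} The traces of $K$ and $\widetilde K$ now differ, by $\operatorname{Tr}(\mathbf{G}^{\mathrm{GN},\mathrm{g}})$. The lemma already handles this through the minimum trace $\tau$ and its trace-difference term, so nothing extra is needed. Nothing here is a genuine obstacle; the only point needing care is this trace positivity and the identification $\tau=\tau_{\mathrm{exp}}$.
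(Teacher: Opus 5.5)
Your proposal is correct and follows the same route as the paper. You subtract blockwise to read off $\Delta_{\mathrm{exp}}$, sum the squared Frobenius norms of the disjoint blocks using $\mathbf{G}^{\mathrm{GN},\mathrm{exp,g}}=(\mathbf{G}^{\mathrm{GN},\mathrm{g,exp}})^\top$, and then apply Lemma~\ref{lem:effrank-stability} with $\tau=\tau_{\mathrm{exp}}$. Your explicit checks that $G^{\mathrm{GN}}_{\mathrm{exp}}$ is SPSD and that $\operatorname{Tr}(\mathbf{G}^{\mathrm{GN},\mathrm{exp}})>0$ (needed for $\tau_{\mathrm{exp}}>0$) are hypotheses the paper leaves implicit, so stating them is a sound addition.
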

\begin{proof}
Subtracting $G^{\mathrm{GN}}_{\mathrm{exp}}$ from $G^{\mathrm{GN}}$ yields the stated $\Delta_{\mathrm{exp}}$.
Summing the squared Frobenius norms of the disjoint blocks gives
\[
  \|\Delta_{\mathrm{exp}}\|_F^2
  \;=\;
  \|\mathbf{G}^{\mathrm{GN},\mathrm{g}}\|_F^2
  +\|\mathbf{G}^{\mathrm{GN},\mathrm{g,exp}}\|_F^2
  +\|\mathbf{G}^{\mathrm{GN},\mathrm{exp,g}}\|_F^2.
\]
Using $\mathbf{G}^{\mathrm{GN},\mathrm{exp,g}}=(\mathbf{G}^{\mathrm{GN},\mathrm{g,exp}})^\top$ yields
$\|\mathbf{G}^{\mathrm{GN},\mathrm{exp,g}}\|_F=\|\mathbf{G}^{\mathrm{GN},\mathrm{g,exp}}\|_F$ and hence the stated identity.
\end{proof}

\begin{corollary}[K-FAC factorization error bound]\label{cor:kfac-factorization-bound}
Let $\hat{\Hexp}_{\mathrm{out}}=\frac{1}{T}\sum_{t}(g_t g_t^\top)\otimes(a_t a_t^\top)$ and $\hat{\Gexp}_{\mathrm{out}}=\frac{1}{T}\sum_t g_t g_t^\top$, $\hat{\Aexp}_{\mathrm{last}}=\frac{1}{T}\sum_t a_t a_t^\top$ (as in \Cref{fig:kfac-scatter}).
Then
\[
  \big\|\hat{\Hexp}_{\mathrm{out}}-\hat{\Gexp}_{\mathrm{out}}\otimes\hat{\Aexp}_{\mathrm{last}}\big\|_F
  \;\le\;
  \sqrt{\frac{1}{T}\sum_t \big\|g_t g_t^\top-\hat{\Gexp}_{\mathrm{out}}\big\|_F^2}\cdot
  \sqrt{\frac{1}{T}\sum_t \big\|a_t a_t^\top-\hat{\Aexp}_{\mathrm{last}}\big\|_F^2}.
\]
\emph{Remark.} K-FAC is accurate when token-wise activation and gradient second-moment factors exhibit weak \emph{co-fluctuation} across tokens.
In particular, the factorization is exact when the centered factors $(g_t g_t^\top-\hat{\Gexp}_{\mathrm{out}})$ and $(a_t a_t^\top-\hat{\Aexp}_{\mathrm{last}})$ are uncorrelated in the Hilbert--Schmidt inner product.
Moreover, for any perturbation $\Delta W$,
\[
  \big|\langle \Delta W,\;(\hat{\Hexp}_{\mathrm{out}}-\hat{\Gexp}_{\mathrm{out}}\otimes\hat{\Aexp}_{\mathrm{last}})\Delta W\rangle_F\big|
  \;\le\;
  \big\|\hat{\Hexp}_{\mathrm{out}}-\hat{\Gexp}_{\mathrm{out}}\otimes\hat{\Aexp}_{\mathrm{last}}\big\|_{F}\,\|\Delta W\|_F^2,
\]
so the same bound controls the deviation between the GN and K-FAC directional curvatures used in \Cref{fig:kfac-scatter} (up to normalization by trace).
\end{corollary}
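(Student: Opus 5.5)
The plan is to write the residual as an average of Kronecker products of centered factors and then apply Cauchy--Schwarz, first in the Frobenius inner product and then over the average index $t$. Set $G_t\defeq g_tg_t^\top$, $A_t\defeq a_ta_t^\top$, and let $\bar G\defeq\hat{\Gexp}_{\mathrm{out}}=\frac1T\sum_t G_t$ and $\bar A\defeq\hat{\Aexp}_{\mathrm{last}}=\frac1T\sum_t A_t$ be their averages. Define the centered factors $\tilde G_t\defeq G_t-\bar G$ and $\tilde A_t\defeq A_t-\bar A$, which satisfy $\sum_t\tilde G_t=0$ and $\sum_t\tilde A_t=0$.

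\textbf{Step 1: covariance identity.} Use bilinearity of $\otimes$ to expand
\[
\frac1T\sum_t G_t\otimes A_t=\frac1T\sum_t(\bar G+\tilde G_t)\otimes(\bar A+\tilde A_t).
\]
The two cross terms are $\bar G\otimes\frac1T\sum_t\tilde A_t$ and $\frac1T\sum_t\tilde G_t\otimes\bar A$. Both vanish because the centered factors sum to zero. This yields the exact identity
\[
\hat{\Hexp}_{\mathrm{out}}-\bar G\otimes\bar A=\frac1T\sum_t\tilde G_t\otimes\tilde A_t .
\]

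\textbf{Step 2: norm bound.} Take Frobenius norms and use the triangle inequality. Since singular values multiply under $\otimes$, we have $\|X\otimes Y\|_F=\|X\|_F\|Y\|_F$. Therefore the norm of the residual is at most $\frac1T\sum_t\|\tilde G_t\|_F\|\tilde A_t\|_F$. Applying Cauchy--Schwarz on this empirical average gives exactly the product of the two root-mean-square quantities in the statement.

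\textbf{Step 3: exactness remark.} This follows from the identity in Step 1. The residual is zero when the empirical cross-covariance $\frac1T\sum_t\tilde G_t\otimes\tilde A_t$ vanishes. That is the precise meaning of "no co-fluctuation". The phrase "uncorrelated in the Hilbert--Schmidt inner product" should be read in this tensor sense, and I would state that reading explicitly. Pairwise orthogonality $\langle\tilde G_t,\tilde A_t\rangle$ alone does not suffice.

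\textbf{Step 4: directional curvature bound.} Write $R$ for the residual and view $\Delta W$ as a vector $v=\mathrm{vec}(\Delta W)$, ordered to match the Kronecker convention. Then $|v^\top Rv|\le\|R\|_2\|v\|_2^2\le\|R\|_F\|\Delta W\|_F^2$, using $\|R\|_2\le\|R\|_F$ together with Cauchy--Schwarz.

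\textbf{Main obstacle.} The main obstacle is bookkeeping rather than analysis. The corollary uses the ordering $g\otimes a$, whereas earlier sections use $a\otimes g$. These differ only by a perfect-shuffle permutation, which preserves Frobenius norms, so the bound is unaffected. I would note this explicitly so that the $\mathrm{vec}$ convention in Step 4 is consistent.
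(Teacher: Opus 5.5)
Your proposal is correct and follows essentially the same route as the paper. Both arguments use the centering identity $\hat{\Hexp}_{\mathrm{out}}-\hat{\Gexp}_{\mathrm{out}}\otimes\hat{\Aexp}_{\mathrm{last}}=\frac1T\sum_t\tilde G_t\otimes\tilde A_t$, then $\|X\otimes Y\|_F=\|X\|_F\|Y\|_F$, the triangle inequality and Cauchy--Schwarz over $t$, with the quadratic-form bound via $\|R\|_2\le\|R\|_F$ (the paper writes it as $|\langle U,MV\rangle_F|\le\|M\|_F\|U\|_F\|V\|_F$). Your reading of the exactness remark as vanishing of the empirical cross-covariance $\frac1T\sum_t\tilde G_t\otimes\tilde A_t$, rather than pairwise orthogonality, is correct and more precise than the paper's wording, and the ordering/perfect-shuffle point is harmless bookkeeping.
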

\begin{proof}
Let $X_t=g_t g_t^\top$, $Y_t=a_t a_t^\top$, $\bar X\defeq \hat{\Gexp}_{\mathrm{out}}=\frac{1}{T}\sum_t X_t$, and $\bar Y\defeq \hat{\Aexp}_{\mathrm{last}}=\frac{1}{T}\sum_t Y_t$.
Then
\[
  \hat{\Hexp}_{\mathrm{out}}-\hat{\Gexp}_{\mathrm{out}}\otimes\hat{\Aexp}_{\mathrm{last}}
  \;=\;
  \frac{1}{T}\sum_t X_t\otimes Y_t - \bar X\otimes \bar Y
  \;=\;
  \frac{1}{T}\sum_t (X_t-\bar X)\otimes(Y_t-\bar Y),
\]
where the last step follows by expanding $(X_t-\bar X)\otimes(Y_t-\bar Y)$ and using $\sum_t (X_t-\bar X)=0$ and $\sum_t (Y_t-\bar Y)=0$.
Then $\|(X_t-\bar X)\otimes(Y_t-\bar Y)\|_F=\|X_t-\bar X\|_F\,\|Y_t-\bar Y\|_F$ and
\[
  \Big\|\frac{1}{T}\sum_t (X_t-\bar X)\otimes(Y_t-\bar Y)\Big\|_F
  \le \frac{1}{T}\sum_t \|X_t-\bar X\|_F\,\|Y_t-\bar Y\|_F
  \le \sqrt{\frac{1}{T}\sum_t \|X_t-\bar X\|_F^2}\,\sqrt{\frac{1}{T}\sum_t \|Y_t-\bar Y\|_F^2},
\]
by Cauchy--Schwarz.
\!\!The quadratic-form bound follows from $|\langle U,MV\rangle_F|\le \|M\|_F\|U\|_F\|V\|_F$ with $U{=}V{=}\Delta W$.
\end{proof}

\noindent\textbf{Connection to the Gauss--Newton blocks.}\spacefactor=1000\space\space%
Since $\mathbf{G}^{\mathrm{GN},\mathrm{g,exp}}=(1/N){\Jg}^{\top}\Jexp$ and $\mathbf{G}^{\mathrm{GN},\mathrm{g}}=(1/N){\Jg}^{\top}\Jg$, the perturbation magnitudes in Corollaries~\ref{cor:gate-expert-decoupling-bound}--\ref{cor:ignore-gating-bound} are directly controlled by the gate--expert cross-block and the gate-block scale in $G^{\mathrm{GN}}$.
In particular, the gate--expert block cosine used in \Cref{fig:assumption-validation-gate-expert-ggncos} implies
\[
  \|\mathbf{G}^{\mathrm{GN},\mathrm{g,exp}}\|_{F}
  \;=\;
  \cos(\mathrm{gate},\mathrm{expert})\cdot
  \sqrt{\|\mathbf{G}^{\mathrm{GN},\mathrm{g}}\|_{F}\,\|\mathbf{G}^{\mathrm{GN},\mathrm{exp}}\|_{F}},
\]
so a small off-diagonal cosine provides a direct quantitative link between the empirical diagnostic and the perturbation terms that govern the induced change in $\log r_e(G^{\mathrm{GN}})$, hence in $\log r_e(\Kmat)$ by Eq.~\eqref{eq:reK-reGGN}, via Lemma~\ref{lem:effrank-stability}.

\section{Additional Experimental Results}\label{app:extra-experiments}
This section provides additional experimental results that complement the main paper: per-task success rates underlying the main-paper averages, additional ablations and robustness analyses, runtime overhead at large expert counts, and supervised continual-learning experiments.

\subsection{Additional Robustness Analyses}
\label{app:additional-statistical-checks}
This subsection reports robust statistics for the main CRL comparisons.

\subsubsection{Robust Statistics for the Main CRL Comparisons}
\label{app:robust-statistics}
\Cref{tab:robust-10seed-stats} reports 10-seed robust statistics for the main CRL comparisons. SPHERE has higher mean success and IQM than the Top-$K$ MoE baseline in both benchmarks.

\begin{table}[H]
  \centering
  \small
  \caption{\textbf{10-seed robust statistics for the main CRL comparisons.} We report mean $\pm$ std and robust aggregate statistics.}
  \label{tab:robust-10seed-stats}
  \begin{tabular}{lccc}
    \toprule
    Setting & Method & Mean $\pm$ std & IQM / IQR \\
    \midrule
    HumanoidBench CRL & Top-$K$ MoE & $0.36 \pm 0.05$ & $0.36 / 0.05$ \\
    HumanoidBench CRL & SPHERE & $\mathbf{0.53 \pm 0.07}$ & $\mathbf{0.52} / 0.11$ \\
    MetaWorld CRL & Top-$K$ MoE & $0.16 \pm 0.08$ & $0.14 / 0.14$ \\
    MetaWorld CRL & SPHERE & $\mathbf{0.41 \pm 0.07}$ & $\mathbf{0.41} / 0.07$ \\
    \bottomrule
  \end{tabular}
\end{table}

\subsection{Additional Baseline Results}
\label{app:additional-baseline-comparisons}
\subsubsection{Spectral and Plasticity Baselines}
\label{app:spectral-plasticity-baselines}
\Cref{tab:spectral-plasticity-baselines} compares SPHERE with additional spectral and plasticity baselines on HumanoidBench under CRL. Spectral normalization~\citep{miyato2018spectral,bjorck2021towards}, ReDo~\citep{sokar2023dormant}, and spectral regularization~\citep{lewandowski2024spectral} improve over Top-$K$ MoE, while remaining below SPHERE.

\begin{table}[H]
  \centering
  \small
  \caption{\textbf{Additional baselines on HumanoidBench under CRL.} All methods use the matched Top-$K$ MoE PPO setting.}
  \label{tab:spectral-plasticity-baselines}
  \begin{tabular}{lc}
    \toprule
    Method & Average success \\
    \midrule
    Top-$K$ MoE & $0.36 \pm 0.08$ \\
    Spectral Normalization & $0.41 \pm 0.08$ \\
    ReDo & $0.41 \pm 0.04$ \\
    Spectral Regularization & $0.48 \pm 0.06$ \\
    SPHERE & $\mathbf{0.54 \pm 0.12}$ \\
    \bottomrule
  \end{tabular}
\end{table}

\subsubsection{Expert-Choice Routing}
\label{app:expert-choice-routing}
The main Top-$K$ MoE policy uses token-choice routing, with the control-step state representation serving as the routed token. \Cref{tab:expert-choice-routing} evaluates expert-choice routing~\citep{zhou2022expertchoice} on HumanoidBench under CRL. SPHERE improves the expert-choice baseline from $0.38\pm0.03$ to $0.44\pm0.05$.

\begin{table}[H]
  \centering
  \small
  \caption{\textbf{SPHERE improves expert-choice routing on HumanoidBench under CRL.} We report average success across the five-task sequence.}
  \label{tab:expert-choice-routing}
  \begin{tabular}{lc}
    \toprule
    Method & Average success \\
    \midrule
    Expert-choice routing & $0.38\pm0.03$ \\
    Expert-choice routing + SPHERE & $\mathbf{0.44\pm0.05}$ \\
    \bottomrule
  \end{tabular}
\end{table}

\subsection{Plasticity Diagnostics}
\label{app:additional-plasticity-diagnostics}
\subsubsection{Plasticity Diagnostics Beyond $r_e(\Kmat)$}
\label{app:diagnostics-beyond-entk}
Beyond $r_e(\Kmat)$, \Cref{tab:plasticity-diagnostics} reports dormant ratio, feature norm, and gradient norm on HumanoidBench under CRL. SPHERE reduces dormant ratio and gradient norm while keeping feature norms comparable to the baseline.

\begin{table}[H]
  \centering
  \small
  \caption{\textbf{Plasticity diagnostics beyond $r_e(\Kmat)$.} We report dormant ratio, feature L2 norm, and gradient L2 norm on HumanoidBench under CRL.}
  \label{tab:plasticity-diagnostics}
  \begin{tabular}{lcc}
    \toprule
    Diagnostic & Top-$K$ MoE & SPHERE \\
    \midrule
    Dormant ratio & $0.010$ & $0.005$ \\
    Feature L2 norm & $269.60$ & $269.80$ \\
    Gradient L2 norm & $312.70$ & $255.40$ \\
    \bottomrule
  \end{tabular}
\end{table}

\subsection{Architecture Robustness}
\label{app:additional-architecture-checks}
\subsubsection{Expert-Count Sweep}
\label{app:expert-count-sweep}
We vary the number of experts on HumanoidBench under CRL while keeping $K=2$. \Cref{tab:expert-count-sweep} shows that SPHERE improves over the baseline across the tested expert counts.

\begin{table}[H]
  \centering
  \small
  \caption{\textbf{Effect of the number of experts on HumanoidBench under CRL.} We keep $K=2$ and vary the number of experts $E$.}
  \label{tab:expert-count-sweep}
  \begin{tabular}{cccc}
    \toprule
    Number of experts & Top-$K$ MoE & SPHERE & Difference \\
    \midrule
    $4$ & $0.32 \pm 0.07$ & $0.45 \pm 0.08$ & $+0.13$ \\
    $10$ & $0.36 \pm 0.05$ & $0.53 \pm 0.07$ & $+0.17$ \\
    $16$ & $0.40 \pm 0.09$ & $0.59 \pm 0.10$ & $+0.19$ \\
    \bottomrule
  \end{tabular}
\end{table}

\subsection{Actor--Critic Placement}
\label{app:additional-placement-checks}
\subsubsection{Actor/Critic Placement}
\label{app:actor-critic-placement}
\Cref{tab:actor-critic-placement} compares actor-only, critic-only, and both-network application in PPO on HumanoidBench under CRL. Actor-only already outperforms critic-only, and applying SPHERE to both actor and critic performs best in this ablation. We use actor-only application as the default to isolate the policy-side effect and match the theoretical analysis.

\begin{table}[H]
  \centering
  \small
  \caption{\textbf{Actor/critic placement in PPO on HumanoidBench under CRL.} Both-network application is strongest in this ablation.}
  \label{tab:actor-critic-placement}
  \begin{tabular}{lc}
    \toprule
    PPO placement & Average success \\
    \midrule
    Actor only & $0.54 \pm 0.12$ \\
    Critic only & $0.41 \pm 0.06$ \\
    Actor and critic & $\mathbf{0.57 \pm 0.07}$ \\
    \bottomrule
  \end{tabular}
\end{table}

\subsection{Interaction with Other Regularization}
\label{app:additional-regularization-interaction-checks}
\subsubsection{LayerNorm Interaction}
\label{app:ln-interaction-check}
\Cref{tab:ln-interaction} reports a small interaction check with layer normalization. In HumanoidBench CRL with PPO, adding LN leaves the mean unchanged and reduces variance; in MetaWorld CRL with TD3, adding LN yields a further gain.

\begin{table}[H]
  \centering
  \small
  \caption{\textbf{Interaction with layer normalization.} We compare SPHERE alone with SPHERE plus LN.}
  \label{tab:ln-interaction}
  \begin{tabular}{lcc}
    \toprule
    Setting & SPHERE only & SPHERE with LN \\
    \midrule
    HumanoidBench CRL with PPO & $0.54 \pm 0.12$ & $0.54 \pm 0.03$ \\
    MetaWorld CRL with TD3 & $0.55 \pm 0.06$ & $0.65 \pm 0.09$ \\
    \bottomrule
  \end{tabular}
\end{table}

\subsection{Setting Robustness}
\label{app:additional-setting-checks}
\subsubsection{Stationary Control and Longer-Horizon RL}
\label{app:stationary-longer-horizon}
\Cref{tab:stationary-longer-horizon} reports a fixed-data stationary TD3+BC control and a longer-horizon MetaWorld RL chain, where SPHERE improves average success in both cases.

\begin{table}[H]
  \centering
  \small
  \caption{\textbf{Stationary-control and longer-horizon checks.} We report average success in each setting.}
  \label{tab:stationary-longer-horizon}
  \begin{tabular}{lcc}
    \toprule
    Setting & Baseline & SPHERE \\
    \midrule
    Stationary TD3+BC control & $0.46 \pm 0.09$ & $\mathbf{0.55 \pm 0.06}$ \\
    Longer-horizon MetaWorld RL & $0.09 \pm 0.03$ & $\mathbf{0.21 \pm 0.04}$ \\
    \bottomrule
  \end{tabular}
\end{table}

\subsection{Per-Task Success Rates}
We report full per-task success rates underlying the average summaries in the main paper, along with additional success-rate comparisons between RL and CRL for alternative baselines. For CRL, we report \emph{per-task final success} evaluated on each task immediately after its own training segment (rather than evaluating the final policy on all tasks after completing the sequence) to eliminate the influence of catastrophic forgetting and isolate plasticity loss.

Throughout this appendix, HumanoidBench refers to five H1 locomotion tasks (stand, walk, pole, slide, and run) and MetaWorld refers to the CW10 subset from Continual World \citep{wolczyk2021continualworld} (hammer, push-wall, faucet-close, push-back, stick-pull, handle-press-side, push, shelf-place, window-close, and peg-unplug-side).

	\begin{table}[H]
	  \centering
	  \small
		  \caption{\textbf{On MetaWorld-10 under RL, SPHERE yields the highest average success among Top-$K$ MoE variants.} Full per-task final success rates on MetaWorld-10 under RL for dense PPO, PPO(10x), and a Top-$K$ MoE actor with mitigation baselines. LN denotes applying LayerNorm to the Top-$K$ MoE actor. \Cref{fig:metaworld-methods-success-bars} summarizes the corresponding averages.}
		  \label{tab:metaworld-per-task-full}
		  \resizebox{\textwidth}{!}{%
		  \begin{tabular}{lcccccccc}
	    \toprule
    Task &
    PPO &
    PPO(10x) &
    Top-$K$ MoE &
    LN &
		    SPHERE &
    C-CHAIN &
    CBP &
    PW \\
    \midrule
    hammer           & 0.76 $\pm$ 0.39 & 1.00 $\pm$ 0.00 & 0.98 $\pm$ 0.04 & 0.40 $\pm$ 0.42 & 0.96 $\pm$ 0.08 & 0.90 $\pm$ 0.16 & 1.00 $\pm$ 0.00 & 1.00 $\pm$ 0.00 \\
    push-wall        & 0.96 $\pm$ 0.05 & 0.80 $\pm$ 0.28 & 1.00 $\pm$ 0.00 & 0.25 $\pm$ 0.38 & 1.00 $\pm$ 0.00 & 0.78 $\pm$ 0.35 & 1.00 $\pm$ 0.00 & 0.90 $\pm$ 0.13 \\
    faucet-close     & 1.00 $\pm$ 0.00 & 1.00 $\pm$ 0.00 & 1.00 $\pm$ 0.00 & 1.00 $\pm$ 0.00 & 1.00 $\pm$ 0.00 & 1.00 $\pm$ 0.00 & 0.92 $\pm$ 0.16 & 1.00 $\pm$ 0.00 \\
    push-back        & 0.18 $\pm$ 0.27 & 0.33 $\pm$ 0.47 & 0.00 $\pm$ 0.00 & 0.23 $\pm$ 0.39 & 0.02 $\pm$ 0.04 & 0.20 $\pm$ 0.40 & 0.16 $\pm$ 0.32 & 0.00 $\pm$ 0.00 \\
    stick-pull       & 0.00 $\pm$ 0.00 & 0.00 $\pm$ 0.00 & 0.00 $\pm$ 0.00 & 0.00 $\pm$ 0.00 & 0.16 $\pm$ 0.32 & 0.00 $\pm$ 0.00 & 0.00 $\pm$ 0.00 & 0.00 $\pm$ 0.00 \\
    handle-press-side& 1.00 $\pm$ 0.00 & 1.00 $\pm$ 0.00 & 1.00 $\pm$ 0.00 & 1.00 $\pm$ 0.00 & 1.00 $\pm$ 0.00 & 1.00 $\pm$ 0.00 & 1.00 $\pm$ 0.00 & 1.00 $\pm$ 0.00 \\
    push             & 0.68 $\pm$ 0.34 & 0.83 $\pm$ 0.17 & 0.82 $\pm$ 0.27 & 0.57 $\pm$ 0.12 & 1.00 $\pm$ 0.00 & 0.94 $\pm$ 0.08 & 0.96 $\pm$ 0.05 & 0.96 $\pm$ 0.05 \\
    shelf-place      & 0.00 $\pm$ 0.00 & 0.00 $\pm$ 0.00 & 0.00 $\pm$ 0.00 & 0.00 $\pm$ 0.00 & 0.00 $\pm$ 0.00 & 0.00 $\pm$ 0.00 & 0.00 $\pm$ 0.00 & 0.00 $\pm$ 0.00 \\
    window-close     & 1.00 $\pm$ 0.00 & 1.00 $\pm$ 0.00 & 1.00 $\pm$ 0.00 & 1.00 $\pm$ 0.00 & 1.00 $\pm$ 0.00 & 1.00 $\pm$ 0.00 & 1.00 $\pm$ 0.00 & 1.00 $\pm$ 0.00 \\
    peg-unplug-side  & 0.92 $\pm$ 0.12 & 0.65 $\pm$ 0.35 & 0.90 $\pm$ 0.13 & 0.83 $\pm$ 0.24 & 1.00 $\pm$ 0.00 & 0.84 $\pm$ 0.23 & 0.84 $\pm$ 0.08 & 0.86 $\pm$ 0.15 \\
    \midrule
    average          & 0.65 $\pm$ 0.12 & 0.66 $\pm$ 0.13 & 0.67 $\pm$ 0.04 & 0.53 $\pm$ 0.16 & \textbf{0.71 $\pm$ 0.04} & 0.67 $\pm$ 0.12 & 0.69 $\pm$ 0.06 & 0.67 $\pm$ 0.03 \\
    \bottomrule
  \end{tabular}
  }%
\end{table}

	\begin{table}[H]
	  \centering
	  \small
				  \caption{\textbf{On MetaWorld-10 under CRL, SPHERE yields the highest average success among the evaluated methods.} Full per-task final success rates on MetaWorld-10 under CRL for dense PPO, PPO(10x), and a Top-$K$ MoE actor with mitigation baselines. LN denotes applying LayerNorm to the Top-$K$ MoE actor. \Cref{fig:metaworld-methods-success-bars} summarizes the corresponding averages.}
		  \label{tab:metaworld-crl-per-task-full}
		  \resizebox{\textwidth}{!}{%
		  \begin{tabular}{lcccccccc}
	    \toprule
    Task &
    PPO &
    PPO(10x) &
    Top-$K$ MoE &
    LN &
		    SPHERE &
    PW &
    C-CHAIN &
    CBP \\
    \midrule
    hammer           & 1.00 $\pm$ 0.00 & 0.97 $\pm$ 0.05 & 0.98 $\pm$ 0.04 & 0.48 $\pm$ 0.43 & 0.98 $\pm$ 0.04 & 1.00 $\pm$ 0.00 & 0.80 $\pm$ 0.40 & 0.60 $\pm$ 0.45 \\
    push-wall        & 0.20 $\pm$ 0.40 & 0.00 $\pm$ 0.00 & 0.00 $\pm$ 0.00 & 0.24 $\pm$ 0.39 & 0.70 $\pm$ 0.25 & 0.00 $\pm$ 0.00 & 0.04 $\pm$ 0.08 & 0.18 $\pm$ 0.36 \\
    faucet-close     & 0.14 $\pm$ 0.23 & 0.67 $\pm$ 0.47 & 0.38 $\pm$ 0.47 & 1.00 $\pm$ 0.00 & 1.00 $\pm$ 0.00 & 0.70 $\pm$ 0.38 & 0.38 $\pm$ 0.38 & 1.00 $\pm$ 0.00 \\
    push-back        & 0.00 $\pm$ 0.00 & 0.00 $\pm$ 0.00 & 0.00 $\pm$ 0.00 & 0.14 $\pm$ 0.28 & 0.00 $\pm$ 0.00 & 0.00 $\pm$ 0.00 & 0.00 $\pm$ 0.00 & 0.18 $\pm$ 0.36 \\
    stick-pull       & 0.00 $\pm$ 0.00 & 0.00 $\pm$ 0.00 & 0.00 $\pm$ 0.00 & 0.00 $\pm$ 0.00 & 0.00 $\pm$ 0.00 & 0.00 $\pm$ 0.00 & 0.00 $\pm$ 0.00 & 0.00 $\pm$ 0.00 \\
    handle-press-side& 0.50 $\pm$ 0.42 & 0.45 $\pm$ 0.45 & 0.50 $\pm$ 0.45 & 0.82 $\pm$ 0.36 & 0.98 $\pm$ 0.04 & 0.62 $\pm$ 0.39 & 0.60 $\pm$ 0.45 & 0.70 $\pm$ 0.40 \\
    push             & 0.00 $\pm$ 0.00 & 0.05 $\pm$ 0.05 & 0.00 $\pm$ 0.00 & 0.46 $\pm$ 0.41 & 0.22 $\pm$ 0.17 & 0.00 $\pm$ 0.00 & 0.00 $\pm$ 0.00 & 0.06 $\pm$ 0.08 \\
    shelf-place      & 0.00 $\pm$ 0.00 & 0.00 $\pm$ 0.00 & 0.00 $\pm$ 0.00 & 0.00 $\pm$ 0.00 & 0.00 $\pm$ 0.00 & 0.00 $\pm$ 0.00 & 0.00 $\pm$ 0.00 & 0.00 $\pm$ 0.00 \\
    window-close     & 0.00 $\pm$ 0.00 & 0.00 $\pm$ 0.00 & 0.20 $\pm$ 0.40 & 0.62 $\pm$ 0.43 & 1.00 $\pm$ 0.00 & 0.03 $\pm$ 0.04 & 0.40 $\pm$ 0.49 & 0.20 $\pm$ 0.40 \\
    peg-unplug-side  & 0.00 $\pm$ 0.00 & 0.00 $\pm$ 0.00 & 0.00 $\pm$ 0.00 & 0.00 $\pm$ 0.00 & 0.02 $\pm$ 0.04 & 0.00 $\pm$ 0.00 & 0.02 $\pm$ 0.04 & 0.00 $\pm$ 0.00 \\
    \midrule
    average          & 0.18 $\pm$ 0.11 & 0.21 $\pm$ 0.10 & 0.21 $\pm$ 0.14 & 0.38 $\pm$ 0.23 & \textbf{0.49 $\pm$ 0.06} & 0.23 $\pm$ 0.08 & 0.22 $\pm$ 0.18 & 0.29 $\pm$ 0.21 \\
    \bottomrule
  \end{tabular}
		  }%
\end{table}

	\begin{table}[H]
	  \centering
	  \small
	  \caption{\textbf{Relative to Top-$K$ MoE, SPHERE improves average success by 36\% under RL.} Full per-task final success rates on HumanoidBench under RL with ReLU activations. Methods include dense PPO, PPO(10x), and MoE baselines. LN denotes applying LayerNorm to the Top-$K$ MoE actor. Mitigation baselines include SPHERE, PW, C-CHAIN, and CBP, applied to the Top-$K$ MoE actor.}
	  \label{tab:humanoidbench-per-task-full-relu}
	  \resizebox{\textwidth}{!}{%
	  \begin{tabular}{lcccccccccc}
	    \toprule
    Task &
    PPO &
    PPO(10x) &
    Top-$K$ MoE &
    LN &
    Dense-MoE &
    DS-MoE &
		    SPHERE &
    PW &
    C-CHAIN &
    CBP \\
    \midrule
		    stand & 0.74 $\pm$ 0.20 & 0.88 $\pm$ 0.12 & 0.84 $\pm$ 0.22 & 0.85 $\pm$ 0.05 & 0.87 $\pm$ 0.05 & 0.77 $\pm$ 0.17 & 0.95 $\pm$ 0.05 & 0.68 $\pm$ 0.15 & 0.90 $\pm$ 0.08 & 0.83 $\pm$ 0.13 \\
		    walk  & 0.62 $\pm$ 0.12 & 0.76 $\pm$ 0.39 & 0.46 $\pm$ 0.41 & 1.00 $\pm$ 0.00 & 0.57 $\pm$ 0.40 & 0.73 $\pm$ 0.13 & 0.75 $\pm$ 0.13 & 0.83 $\pm$ 0.15 & 0.53 $\pm$ 0.41 & 0.50 $\pm$ 0.41 \\
		    pole  & 0.60 $\pm$ 0.09 & 0.66 $\pm$ 0.14 & 0.58 $\pm$ 0.20 & 0.75 $\pm$ 0.15 & 0.60 $\pm$ 0.40 & 0.67 $\pm$ 0.13 & 0.83 $\pm$ 0.13 & 0.73 $\pm$ 0.12 & 0.87 $\pm$ 0.05 & 0.63 $\pm$ 0.45 \\
		    slide & 0.00 $\pm$ 0.00 & 0.00 $\pm$ 0.00 & 0.00 $\pm$ 0.00 & 0.00 $\pm$ 0.00 & 0.00 $\pm$ 0.00 & 0.00 $\pm$ 0.00 & 0.33 $\pm$ 0.17 & 0.23 $\pm$ 0.19 & 0.00 $\pm$ 0.00 & 0.00 $\pm$ 0.00 \\
		    run   & 0.76 $\pm$ 0.38 & 0.78 $\pm$ 0.15 & 0.88 $\pm$ 0.13 & 0.45 $\pm$ 0.45 & 0.85 $\pm$ 0.05 & 0.85 $\pm$ 0.15 & 0.90 $\pm$ 0.08 & 0.65 $\pm$ 0.19 & 0.57 $\pm$ 0.40 & 0.50 $\pm$ 0.50 \\
		    \midrule
		    average & 0.54 $\pm$ 0.16 & 0.62 $\pm$ 0.16 & 0.55 $\pm$ 0.19 & 0.61 $\pm$ 0.13 & 0.58 $\pm$ 0.18 & 0.60 $\pm$ 0.11 & \textbf{0.75 $\pm$ 0.11} & 0.62 $\pm$ 0.16 & 0.57 $\pm$ 0.19 & 0.49 $\pm$ 0.30 \\
		    \bottomrule
  \end{tabular}
		  }%
\end{table}

	\begin{table}[H]
	  \centering
	  \small
	  \caption{\textbf{Relative to Top-$K$ MoE, SPHERE improves average success by 50\% under CRL.} Full per-task final success rates on HumanoidBench under CRL with ReLU activations. Methods include dense PPO, PPO(10x), and MoE baselines. LN denotes applying LayerNorm to the Top-$K$ MoE actor. Mitigation baselines include SPHERE, PW, C-CHAIN, and CBP, applied to the Top-$K$ MoE actor.}
	  \label{tab:humanoidbench-crl-per-task-full-relu}
	  \resizebox{\textwidth}{!}{%
	  \begin{tabular}{lcccccccccc}
	    \toprule
    Task &
    PPO &
    PPO(10x) &
    Top-$K$ MoE &
    LN &
    Dense-MoE &
    DS-MoE &
		    SPHERE &
		    PW &
		    C-CHAIN &
		    CBP \\
		    \midrule
					    stand & 0.74 $\pm$ 0.20 & 0.88 $\pm$ 0.12 & 0.84 $\pm$ 0.22 & 0.82 $\pm$ 0.08 & 0.90 $\pm$ 0.00 & 0.77 $\pm$ 0.17 & 0.85 $\pm$ 0.18 & 0.70 $\pm$ 0.16 & 0.90 $\pm$ 0.08 & 0.83 $\pm$ 0.13 \\
					    walk  & 0.78 $\pm$ 0.13 & 0.74 $\pm$ 0.22 & 0.92 $\pm$ 0.08 & 1.00 $\pm$ 0.00 & 0.90 $\pm$ 0.00 & 0.83 $\pm$ 0.09 & 0.98 $\pm$ 0.04 & 0.83 $\pm$ 0.09 & 0.73 $\pm$ 0.19 & 0.90 $\pm$ 0.00 \\
					    pole  & 0.26 $\pm$ 0.26 & 0.42 $\pm$ 0.29 & 0.06 $\pm$ 0.08 & 0.71 $\pm$ 0.17 & 0.05 $\pm$ 0.05 & 0.33 $\pm$ 0.34 & 0.38 $\pm$ 0.18 & 0.40 $\pm$ 0.22 & 0.23 $\pm$ 0.26 & 0.53 $\pm$ 0.41 \\
					    slide & 0.00 $\pm$ 0.00 & 0.00 $\pm$ 0.00 & 0.00 $\pm$ 0.00 & 0.00 $\pm$ 0.00 & 0.00 $\pm$ 0.00 & 0.00 $\pm$ 0.00 & 0.03 $\pm$ 0.04 & 0.07 $\pm$ 0.09 & 0.00 $\pm$ 0.00 & 0.00 $\pm$ 0.00 \\
					    run   & 0.00 $\pm$ 0.00 & 0.00 $\pm$ 0.00 & 0.00 $\pm$ 0.00 & 0.00 $\pm$ 0.00 & 0.00 $\pm$ 0.00 & 0.00 $\pm$ 0.00 & 0.47 $\pm$ 0.17 & 0.00 $\pm$ 0.00 & 0.20 $\pm$ 0.20 & 0.00 $\pm$ 0.00 \\
			    \midrule
				    average & 0.36 $\pm$ 0.12 & 0.41 $\pm$ 0.13 & 0.36 $\pm$ 0.08 & 0.50 $\pm$ 0.05 & 0.37 $\pm$ 0.01 & 0.39 $\pm$ 0.12 & \textbf{0.54 $\pm$ 0.12} & 0.40 $\pm$ 0.11 & 0.41 $\pm$ 0.15 & 0.45 $\pm$ 0.11 \\
			    \bottomrule
		  \end{tabular}
		  }%
\end{table}

\subsubsection{A Continual-Learning Summary Metric on HumanoidBench (First Five Tasks)}
\noindent\textbf{Average success of the final policy.}\spacefactor=1000\space\space%
Per-task final success reveals which tasks benefit most, but it can obscure the overall plasticity--retention trade-off in a continual-learning sequence.
We therefore report a complementary single-number metric: after sequentially training on the first five HumanoidBench tasks, we evaluate the final checkpoint on Stand, Walk, Pole, Slide, and Run and average success across tasks.
This five-task average directly reflects both adaptation to later tasks and retention on earlier tasks in one summary score.
Importantly, the improvement does not come at the cost of increased forgetting: SPHERE improves the five-task average while maintaining comparable and often higher success on earlier tasks, as shown in Table~\ref{tab:humanoidbench-crl-per-task-full-relu}.

		\begin{table}[H]
			  \centering
			  \small
			  \caption{\textbf{Accounting for both adaptation and retention, SPHERE improves the final five-task average success.} The final checkpoint after completing the five-task sequence is evaluated on Stand, Walk, Pole, Slide, and Run and averaged across tasks.}
			  \label{tab:humanoidbench-crl5-avg-success}
			  \begin{tabular}{lc}
	    \toprule
	    Method & Avg.\ over five tasks \\
    \midrule
    Top-$K$ MoE & $0.03\pm0.04$ \\
    SPHERE & $\mathbf{0.25\pm0.05}$ \\
    \bottomrule
  \end{tabular}
\end{table}

\subsubsection{eNTK Effective Rank on Online States}
\noindent
The main paper tracks spectral plasticity using $r_e(\Kmat)$ computed on a fixed held-out state batch.
As a robustness check, we also compute $r_e(\Kmat)$ on \emph{online} states sampled from the rollout buffer at each logging step.
\Cref{fig:entk-onlinestates} shows that the same qualitative trend holds: Top-$K$ MoE exhibits a decay in $r_e(\Kmat)$ over training, while SPHERE maintains higher effective rank throughout.

	\begin{figure}[H]
	  \centering
			  \includegraphics[width=0.60\linewidth]{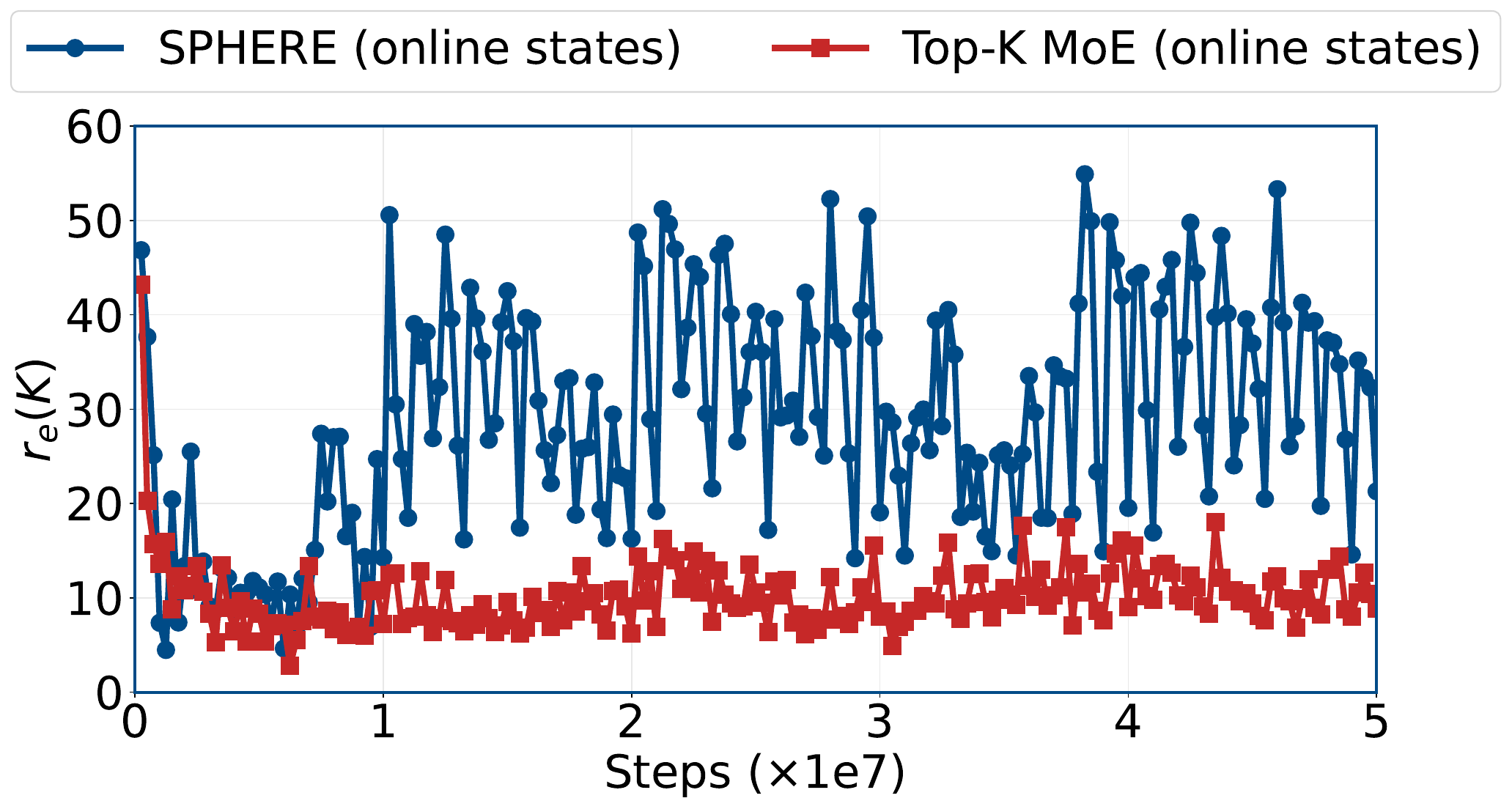}
		  \caption{\textbf{The spectral-plasticity trend persists when $r_e(\Kmat)$ is computed on online rollout states.} Top-$K$ MoE exhibits effective-rank decay, while SPHERE maintains a higher effective rank throughout training.}
		  \label{fig:entk-onlinestates}
		\end{figure}

\subsection{Additional Ablation Experiments}
\subsubsection{SPHERE Design Ablations}
\noindent\textbf{Adaptive Target Scale.}\spacefactor=1000\space\space%
SPHERE uses the batch-dependent isotropic reference scale $\alpha=\operatorname{Tr}(\Aexp_{\mathrm{last}})/m$ (Eq.~\eqref{eq:sphere-penalty}), so the penalty depends only on anisotropy and does not introduce an additional target-scale hyperparameter.
Fixing $\alpha$ to a constant degrades performance relative to adaptive scaling, suggesting that treating the target scale as a fixed knob is not beneficial in this setting.
Results are reported in \Cref{tab:sphere-design-ablations-extra}.

\noindent\textbf{Expert-Only Regularization.}\spacefactor=1000\space\space%
By design, SPHERE regularizes only the expert representations in the actor and leaves the gate unregularized, to isolate effects on policy plasticity without directly shaping routing.
Concretely, it applies the penalty to the routing-weighted expert feature Gram constructed from the weighted concatenation in Eq.~\eqref{eq:gating-weighted-feature}.
We ablate an unweighted variant that drops the routing weights and directly concatenates expert features in \Cref{tab:sphere-design-ablations-extra}.

\noindent\textbf{Gate-Only Regularization.}\spacefactor=1000\space\space%
To isolate whether the gains come from regularizing the expert representations versus directly regularizing routing, we remove the expert penalty and apply the regularizer only to the gate outputs.
This improves over the baseline ($0.43\pm0.08$ vs.\ $0.36\pm0.08$) but remains below SPHERE ($0.54\pm0.12$), suggesting that expert-feature mode separation is the main driver of the gains.

\noindent\textbf{Block-Diagonal Approximation.}\spacefactor=1000\space\space%
Our analysis uses a block-diagonal approximation across layers to obtain a tractable layerwise surrogate objective.
As a stress test, we remove this layerwise separation by concatenating the weighted expert features from all expert layers into a single vector and regularizing the resulting Gram.
This variant attains $0.47\pm0.12$, improving over the baseline but providing no additional gain over SPHERE, supporting the adequacy of the block-diagonal layerwise proxy.

\noindent\textbf{Layer-Mixture Entropy Regularization.}\spacefactor=1000\space\space%
Proposition~\ref{prop:block-mixture-entk} expresses $r_e(\Kmat)$ as a mixture over per-layer terms with weights $\{\alpha^{\mathrm{exp}}_{\ell}\}$.
We ablate adding an auxiliary entropy regularizer $-\sum_{\ell:\,s^{\mathrm{exp}}_{\ell}>0}\alpha^{\mathrm{exp}}_{\ell}\log\alpha^{\mathrm{exp}}_{\ell}$ on top of SPHERE, which yields $0.41\pm0.03$.
This underperforms SPHERE, suggesting that explicitly shaping the layer-mixture weights is not beneficial in this setting.
One possible explanation is that this term imposes a global cross-layer constraint through $\{\alpha^{\mathrm{exp}}_{\ell}\}$ and can overconstrain optimization, reducing the flexibility of different layers to adapt.

\noindent\textbf{Regularizing Both Feature and Gradient Factors.}\spacefactor=1000\space\space%
Our method regularizes the weighted expert feature Gram $\Aexp_{\mathrm{last}}$ (\Cref{sec:method}), treating the backprop-gradient Gram $\Gexp_{\mathrm{last}}$ as fixed in the feature--gradient effective-rank factorization (\Cref{sec:theory}).
As an additional ablation, we apply the same SPHERE penalty to both $\Aexp_{\mathrm{last}}$ and $\Gexp_{\mathrm{last}}$, yielding $0.77\pm0.23$.
This suggests that regularizing gradient factors can further help, although the higher variance indicates less stable optimization.
However, $\Gexp_{\mathrm{last}}$ is not directly available from forward passes and requires additional backpropagation to extract, increasing implementation complexity and computational overhead.
We therefore do not adopt joint $\Aexp_{\mathrm{last}}$--$\Gexp_{\mathrm{last}}$ regularization as the main SPHERE instantiation and leave a more systematic study of jointly regularizing $\Aexp_{\mathrm{last}}$ and $\Gexp_{\mathrm{last}}$ to future work.

\noindent\textbf{Regularizing the Kronecker Proxy $\Aexp_{\mathrm{last}}\otimes\Gexp_{\mathrm{last}}$.}\spacefactor=1000\space\space%
\Cref{sec:theory} introduces a Kronecker-product proxy $\Htilde^{\mathrm{GN},\mathrm{exp}}_{\ell}{=}\Aexp_{\ell-1}\otimes \Gexp_{\ell}$ for the expert-layer Gauss--Newton blocks.
As an additional ablation, we directly apply the SPHERE penalty to the proxy matrix $\Aexp_{\mathrm{last}}\otimes\Gexp_{\mathrm{last}}$, which yields $0.66\pm0.20$ average success.
This improves substantially over the baseline and even exceeds SPHERE in mean, suggesting that explicitly shaping a feature--gradient proxy of the expert Gauss--Newton geometry can be beneficial.
However, the high variance indicates less stable optimization, consistent with coupling two stochastic factors and requiring extraction of $\Gexp_{\mathrm{last}}$ via additional backpropagation.

\noindent\textbf{Regularizing the Khatri--Rao Expert Block $\Aexp_{\mathrm{last}} * \Gexp_{\mathrm{last}}$.}\spacefactor=1000\space\space%
\Cref{sec:theory} models the same-layer expert Gauss--Newton blocks in block Khatri--Rao form $\Hexp_{\ell}=\Aexp_{\ell-1} * \Gexp_{\ell}$.
As an additional ablation, we directly apply the SPHERE penalty to $\Aexp_{\mathrm{last}} * \Gexp_{\mathrm{last}}$, which yields $0.50\pm0.25$ average success.
This variant underperforms SPHERE and is similarly high-variance, suggesting that directly regularizing the full expert-block geometry is either too strict for this setting or introduces additional optimization noise relative to regularizing the feature Gram alone.

		\begin{table}[H]
		  \centering
		  \small
			  \caption{\textbf{SPHERE design ablations reveal key components and trade-offs.} Fixing the target scale $\alpha$ degrades performance, while several alternative penalties underperform the default setting or trade higher mean success for substantially higher variance.}
			  \label{tab:sphere-design-ablations-extra}
			  \begin{tabular}{lc}
	    \toprule
			    Variant & Average success \\
		    \midrule
		    w/o SPHERE & $0.36\pm0.08$ \\
			    w/ SPHERE & $0.54\pm0.12$ \\
		    Fixed $\alpha{=}1$ & $0.27\pm0.09$ \\
		    Fixed $\alpha{=}2$ & $0.40\pm0.18$ \\
		    Fixed $\alpha{=}5$ & $0.41\pm0.09$ \\
			    Unweighted feature Gram & $0.50\pm0.03$ \\
			    Joint feature Gram across layers & $0.47\pm0.12$ \\
			    Gate-only regularization & $0.43\pm0.08$ \\
			    Layer-mixture entropy term & $0.41\pm0.03$ \\
				    On $\Aexp_{\mathrm{last}}$ and $\Gexp_{\mathrm{last}}$ & $\mathbf{0.77\pm0.23}$ \\
	    On $\Aexp_{\mathrm{last}}\otimes\Gexp_{\mathrm{last}}$ & $0.66\pm0.20$ \\
	    On $\Aexp_{\mathrm{last}} * \Gexp_{\mathrm{last}}$ & $0.50\pm0.25$ \\
	    \bottomrule
  \end{tabular}
\end{table}

\subsection{Additional Robustness Analysis Experiments}
\subsubsection{Activation-Function Ablation}
\noindent\textbf{Activation Agnosticism.}\spacefactor=1000\space\space%
We ablate the policy activation function on HumanoidBench under CRL.
\Cref{tab:humanoidbench-robustness} compares Top-$K$ MoE and SPHERE under each activation, keeping all other settings fixed; the Tanh setting differs from the ReLU baseline only in the activation.
SPHERE improves over the baseline under both ReLU and Tanh, suggesting that the gains do not rely on a particular activation choice.

\begin{table}[H]
  \centering
  \small
  \caption{\textbf{SPHERE is robust to activation choice on HumanoidBench under CRL.} SPHERE improves five-task average success over Top-$K$ MoE under both ReLU and Tanh.}
  \label{tab:humanoidbench-robustness}
  \begin{tabular}{lcc}
    \toprule
    Activation & Top-$K$ MoE & SPHERE \\
    \midrule
    ReLU & $0.36\pm0.08$ & $\mathbf{0.54\pm0.12}$ \\
    Tanh & $0.36\pm0.10$ & $\mathbf{0.58\pm0.17}$ \\
    \bottomrule
  \end{tabular}
\end{table}

\subsubsection{Structure Agnosticism}
\noindent\textbf{Structure Agnosticism.}\spacefactor=1000\space\space%
To test whether SPHERE depends on a particular MoE structure, we apply the same penalty to Dense-MoE and DS-MoE actors under CRL.
Results are summarized in \Cref{tab:humanoidbench-structure-agnostic}.
SPHERE attains $0.55\pm0.12$ on Dense-MoE and $0.51\pm0.08$ on DS-MoE, suggesting that the method transfers across MoE variants beyond Top-$K$.
In the same HumanoidBench CRL setting, SPHERE also improves SoftMoE~\citep{sokar2025softmoe} from $0.55\pm0.09$ to $0.60\pm0.05$.

\begin{table}[H]
  \centering
  \small
  \caption{\textbf{SPHERE transfers across MoE architectures on HumanoidBench under CRL.} SPHERE improves both Dense-MoE and DS-MoE, indicating benefits beyond Top-$K$ MoE.}
  \label{tab:humanoidbench-structure-agnostic}
  \begin{tabular}{lcc}
    \toprule
	    MoE variant & w/o SPHERE & w/ SPHERE \\
	    \midrule
	    Dense-MoE & $0.37\pm0.01$ & $\mathbf{0.55\pm0.12}$ \\
	    DS-MoE & $0.39\pm0.12$ & $\mathbf{0.51\pm0.08}$ \\
	    SoftMoE & $0.55\pm0.09$ & $\mathbf{0.60\pm0.05}$ \\
	    \bottomrule
	  \end{tabular}
	\end{table}

\subsubsection{Penalty Ratio Robustness}
\noindent\textbf{Penalty-ratio robustness.}\spacefactor=1000\space\space%
SPHERE sets the adaptive regularization coefficient as
$\lambda^{\mathrm{e}}=\rho\cdot \operatorname{stopgrad}\!\Big(\frac{\|\nabla_{\theta}\Lcal_{\mathrm{actor}}\|_2}{\|\nabla_{\theta}\Lcal_{\mathrm{SPHERE}}\|_2+\varepsilon}\Big)$.
\Cref{fig:humanoidbench-sphere-rho-sweep} reports a sweep over $\rho$ on HumanoidBench under CRL.
Performance is best at $\rho=1\mathrm{e}{-3}$, while $\rho=0$ removes the regularizer and recovers the baseline.
Importantly, all nonzero $\rho$ values in this sweep outperform $\rho=0$ by a clear margin ($+0.09$ to $+0.17$ average success), indicating that SPHERE is robust to the choice of $\rho$ and provides consistent gains once the regularizer is enabled.
Overall, performance exhibits a broad optimum around $10^{-3}$.
When $\rho$ is increased (e.g., $5\mathrm{e}{-2}$), performance degrades relative to the best setting, consistent with over-regularization that can overconstrain representation learning.
We additionally find that this robustness carries over across benchmarks: on MetaWorld under CRL, using the same $\rho=1\mathrm{e}{-3}$ as HumanoidBench (two orders of magnitude smaller than our default MetaWorld setting in Table~\ref{tab:train-hyperparams}) yields $0.369\pm0.130$ average success, still improving over the unregularized Top-$K$ MoE baseline ($0.21\pm0.14$; Table~\ref{tab:metaworld-crl-per-task-full}).

\begin{figure}[H]
  \centering
  \includegraphics[width=0.45\linewidth]{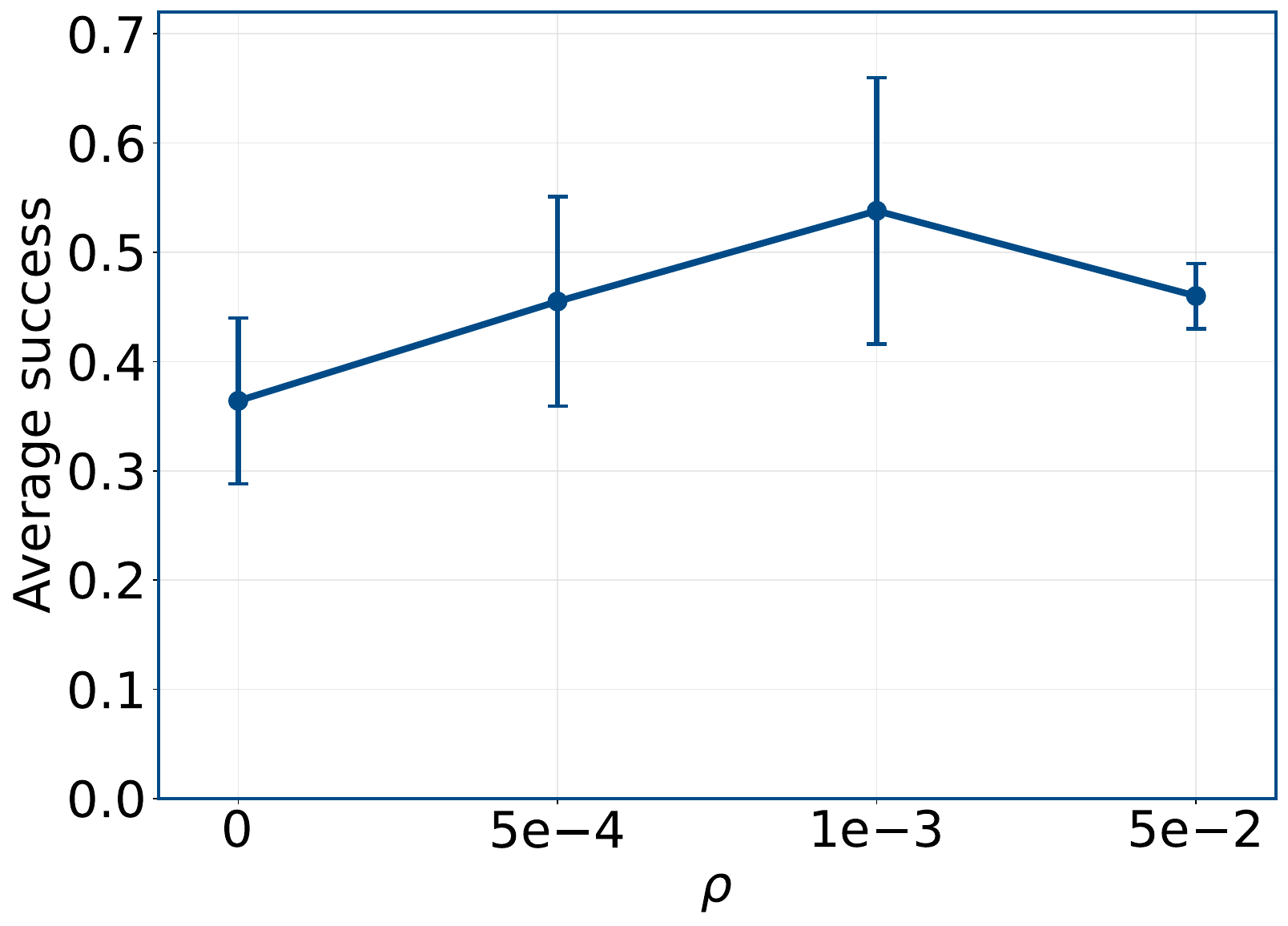}
	  \caption{\textbf{SPHERE is robust to the ratio hyperparameter $\rho$ on HumanoidBench under CRL.} Any $\rho>0$ improves five-task average success over $\rho=0$ by $+0.09$ to $+0.17$, with a broad optimum around $\rho=10^{-3}$.}
	  \label{fig:humanoidbench-sphere-rho-sweep}
\end{figure}

\subsubsection{Causal Analysis}
\label{app:causal-analysis}
\noindent\textbf{Negative (sign-flip) intervention.}\spacefactor=1000\space\space%
To probe whether SPHERE's effect on performance is mediated by spectral plasticity, we run a counterfactual intervention that flips the sign of the SPHERE ratio, $\rho=-1\mathrm{e}{-3}$, while keeping the rest of the PPO and architecture settings fixed.
This reverses the direction of the Gram-isotropy pressure in Eq.~\eqref{eq:sphere-penalty}, encouraging \emph{anisotropy} rather than isotropy.
\Cref{tab:negative-intervention-success} summarizes average final success for $\rho=0$ and $\rho=1\mathrm{e}{-3}$ from \Cref{fig:humanoidbench-sphere-rho-sweep}, and reports the final success of the negative-intervention run.
Compared to the SPHERE setting in \Cref{fig:humanoidbench-entk-erank-archs}, the negative intervention exhibits substantially lower $r_e(\Kmat)$ throughout training, along with a sharp drop in success, as shown in \Cref{fig:negative-intervention-curves}.

\begin{table}[H]
  \centering
  \small
	  \caption{\textbf{Sign-flipping the SPHERE ratio substantially reduces success on HumanoidBench under CRL.} Average final success for $\rho\in\{0,10^{-3},-10^{-3}\}$.}
	  \label{tab:negative-intervention-success}
	  \begin{tabular}{lc}
    \toprule
    Setting & Average final success \\
    \midrule
    Top-$K$ MoE, $\rho=0$ & $0.36\pm0.08$ \\
    SPHERE, $\rho=1\mathrm{e}{-3}$ & $\mathbf{0.54\pm0.12}$ \\
    Negative-SPHERE, $\rho=-1\mathrm{e}{-3}$ & $0.20\pm0.09$ \\
    \bottomrule
  \end{tabular}
\end{table}

\begin{figure}[H]
  \centering
  \includegraphics[width=0.70\linewidth]{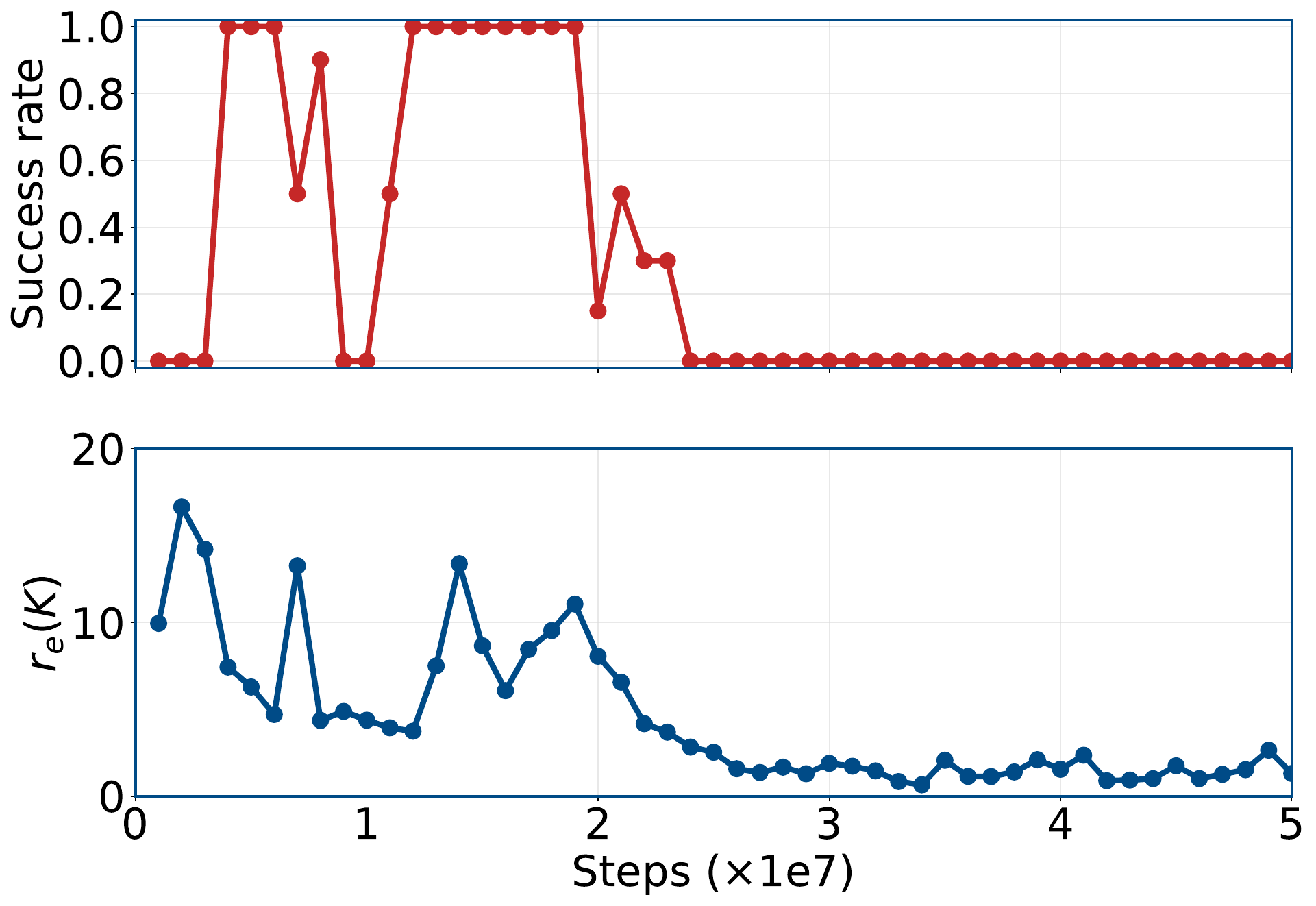}
  \caption{\textbf{Under the sign-flip intervention, both $r_e(\Kmat)$ and success collapse.} Evaluation success rate and $r_e(\Kmat)$ over training steps for $\rho=-10^{-3}$.}
  \label{fig:negative-intervention-curves}
\end{figure}

\noindent\textbf{Lead--lag evidence (negative intervention).}\spacefactor=1000\space\space%
We quantify whether changes in $r_e(\Kmat)$ precede changes in success by computing the lagged correlation $\mathrm{corr}(\mathrm{success}_t, r_e(\Kmat)_{t-\ell})$ over evaluation checkpoints, with $\ell$ measured in evaluation ticks.
As shown in \Cref{fig:negative-intervention-curves,fig:negative-intervention-leadlag}, $r_e(\Kmat)$ tends to lead success: lagged correlations are larger for $\ell>0$ and peak at $\ell=10$ (Pearson $0.52$).
Consistently, in the one-step predictive model $\mathrm{success}_t \sim \mathrm{success}_{t-1} + \mathrm{trend}(t) + r_e(\Kmat)_{t-1}$, the coefficient on $r_e(\Kmat)_{t-1}$ is positive (standardized $0.12$) and remains positive under a moving-block bootstrap (block size 25; $p(\beta\le 0)\approx 0.007$).
These results are consistent with $r_e(\Kmat)$ being a leading indicator of success in this setting.
The sign-flip intervention identifies the causal effect of $\rho$ on both $r_e(\Kmat)$ and success; the lead--lag analysis is used only to support the mechanism that changes in $r_e(\Kmat)$ precede changes in success, rather than to identify a causal effect of $r_e(\Kmat)$ on success.

\begin{figure}[H]
  \centering
  \includegraphics[width=0.70\linewidth]{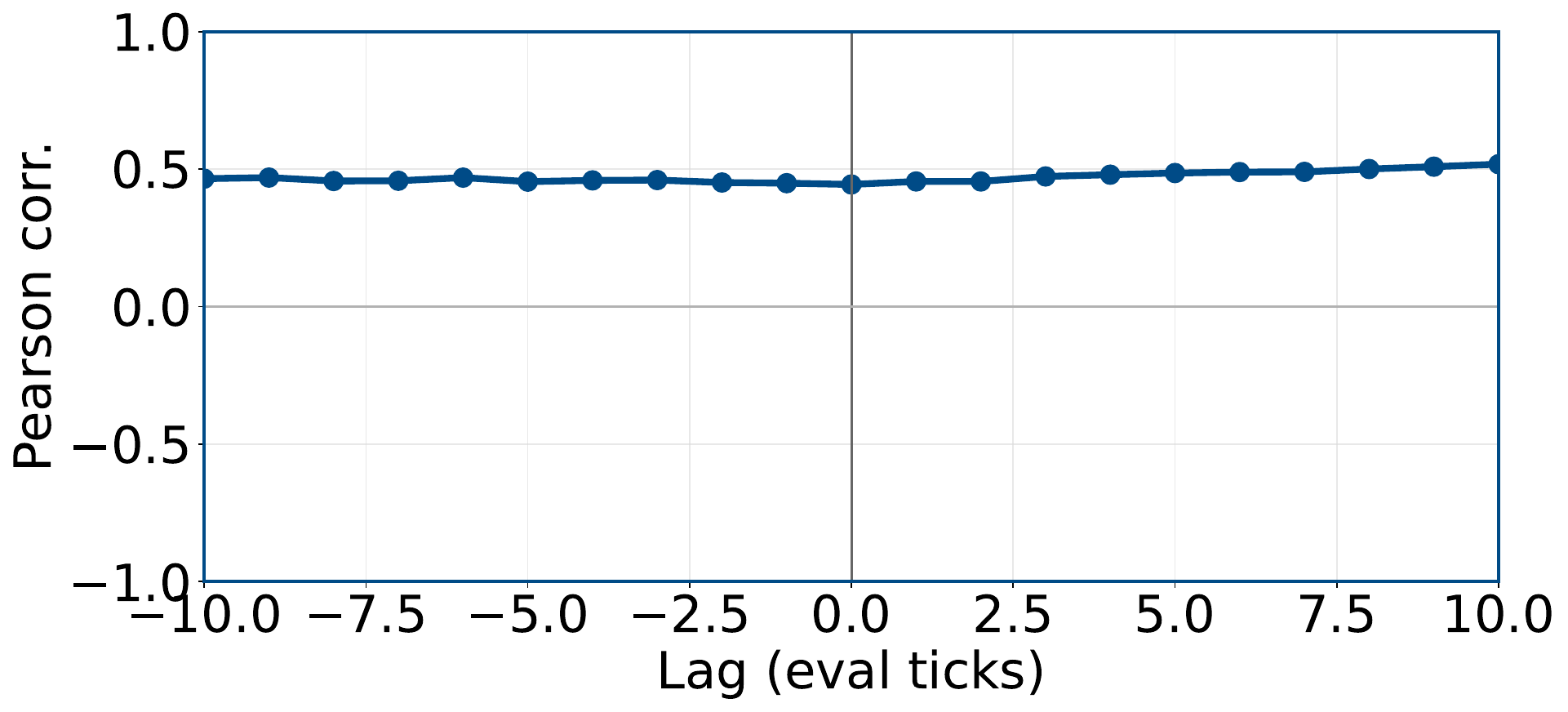}
  \caption{\textbf{$r_e(\Kmat)$ tends to lead success under the negative intervention.} We report the correlation between $\mathrm{success}_t$ and $r_e(\Kmat)_{t-\ell}$ across evaluation checkpoints. Positive lag $\ell>0$ means $r_e(\Kmat)$ is shifted earlier.}
  \label{fig:negative-intervention-leadlag}
\end{figure}

\subsubsection{Algorithm Agnosticism}
\noindent\textbf{Algorithm Agnosticism.}\spacefactor=1000\space\space%
To test whether SPHERE is algorithm-agnostic, we apply the same penalty to a Top-$K$ MoE critic trained with TD3 under CRL.
 \Cref{tab:metaworld-td3-ablation} reports final success averaged over seeds.
SPHERE improves average success from $0.50\pm0.08$ to $0.65\pm0.09$, suggesting that the penalty can also benefit off-policy training.
This further suggests that SPHERE is not limited to MoE actors and can also benefit MoE critics.

\begin{table}[H]
  \centering
  \small
  \caption{\textbf{SPHERE improves TD3 with a Top-$K$ MoE critic by 30\% on MetaWorld under CRL.} This demonstrates that SPHERE benefits off-policy training and generalizes beyond PPO actors to MoE critics.}
  \label{tab:metaworld-td3-ablation}
  \begin{tabular}{lc}
    \toprule
    Method & Average success \\
	    \midrule
	    Top-$K$ MoE critic (TD3) & $0.50\pm0.08$ \\
	    Regularized MoE critic (TD3) & $\mathbf{0.65\pm0.09}$ \\
	    \bottomrule
	  \end{tabular}
\end{table}

\subsection{SPHERE Promotes Expert Specialization}
\label{app:expert-specialization}
We study whether SPHERE increases expert specialization on HumanoidBench by measuring (i) representation-level diversity among the selected experts and (ii) geometric coherence of the gate's routing in its embedding space.

\noindent\textbf{Representation-level specialization.}\spacefactor=1000\space\space%
Let $h_{i,e}\in\R^{H}$ denote the last-hidden feature of expert $e$ at state $s_i$, and let $S_i$ denote the set of selected experts at $s_i$.
We measure redundancy among selected experts via the mean cosine overlap over pairs $(e,e')\in S_i$ with $e\ne e'$.
We also report a complementary orthogonality score defined as the mean of $1-|\cos|$ over the same pairs.

\noindent\textbf{Gate geometry and local consistency.}\spacefactor=1000\space\space%
Let $g_i$ be the gate embedding at $s_i$ and let $y_i$ be the Top-1 expert label.
We compute (i) the silhouette score of $\{g_i\}$ clustered by $\{y_i\}$, and (ii) a $k_{\mathrm{nn}}$-nearest-neighbor consistency score that averages the Jaccard overlap between the selected-expert sets of neighboring states in the gate-embedding space.

\noindent\textbf{Expert load balancing.}\spacefactor=1000\space\space%
Let $c_e$ denote the number of times expert $e$ is selected across the batch, i.e., $c_e\defeq |\{(i,e'):\, e'\in S_i,\; e'{=}e\}|$, so that $\sum_e c_e=\sum_i |S_i|$.
We quantify load imbalance by the maximum relative deviation from the uniform expected count,
\[
  \mathrm{MaxVio}
  \;\defeq\;
  \max_{e}\frac{|c_e-\bar c|}{\bar c},
  \qquad
  \bar c \defeq \frac{1}{E}\sum_{e=1}^{E}c_e,
\]
We also report the active-expert ratio $|\{e:\,c_e>0\}|/E$ as a simple coverage measure, and the normalized routing entropy
\[
  H_{\mathrm{route}}
  \;\defeq\;
  \frac{-\sum_{e=1}^{E}\hat p_e\log \hat p_e}{\log E},
  \qquad
  \hat p_e \defeq \frac{c_e}{\sum_{e'=1}^{E}c_{e'}}.
\]

\noindent\textbf{Results.}\spacefactor=1000\space\space%
\Cref{tab:humanoidbench-specialization} reports these metrics on the HumanoidBench run task using $N{=}512$ deterministic on-policy rollout states across independent runs.
SPHERE reduces overlap and increases orthogonality, indicating less redundant expert representations for the same state.
In parallel, SPHERE improves gate-embedding coherence and local routing consistency.
Overall, these results indicate that SPHERE improves representation-level specialization and routing geometry in MoE actors.

\begin{table}[H]
  \centering
  \small
	  \caption{\textbf{SPHERE promotes expert specialization and routing coherence.} On the HumanoidBench run task, SPHERE reduces expert overlap from 0.34 to 0.13 and increases orthogonality from 0.66 to 0.87, indicating less redundant representations among selected experts. Gate-geometry metrics such as silhouette and kNN Jaccard also improve, reflecting more coherent routing. Arrows in column headers indicate the direction of improvement.}
	  \label{tab:humanoidbench-specialization}
  \resizebox{\textwidth}{!}{%
  \begin{tabular}{lccccccc}
    \toprule
    Method & Overlap $\downarrow$ & Orthogonality $\uparrow$ & Silhouette $\uparrow$ & kNN Jaccard $\uparrow$ & MaxVio $\downarrow$ & Active ratio $\uparrow$ & Routing entropy $\uparrow$ \\
    \midrule
    Top-$K$ MoE (baseline) & $0.34\pm0.01$ & $0.66\pm0.01$ & $0.02\pm0.15$ & $0.82\pm0.11$ & $2.73\pm0.85$ & $0.67\pm0.21$ & $0.66\pm0.11$ \\
	    SPHERE (ours) & $\mathbf{0.13\pm0.01}$ & $\mathbf{0.87\pm0.01}$ & $\mathbf{0.15\pm0.06}$ & $\mathbf{0.96\pm0.02}$ & $2.70\pm0.75$ & $0.69\pm0.10$ & $0.66\pm0.08$ \\
    \bottomrule
  \end{tabular}
  }%
\end{table}

\subsection{Runtime Overhead Scaling to $E{=}1000$ Experts}\label{app:runtime-overhead-largeE}
\Cref{tab:humanoidbench-runtime} reports wall-clock training time per task under the same training setup while scaling the expert count up to $E{=}1000$.
Although total training time increases with $E$, the relative overhead of SPHERE remains small (only $3.4$--$6.3\%$ across $E\in\{10,20,40,80,160,1000\}$), indicating that SPHERE is not a computational bottleneck.\footnote{To rule out confounding from gradient-norm scaling, for the large-$E$ runs ($E\in\{80,160,1000\}$) we directly optimize $\Lcal_{\mathrm{PPO}}+\Lcal_{\mathrm{SPHERE}}$ (i.e., without adaptive scaling).}

\noindent\textbf{Efficient Gram computation.}\spacefactor=1000\space\space%
Recall that the SPHERE penalty depends only on $\mathrm{Tr}(\Aexp_{\mathrm{last}})$ and $\|\Aexp_{\mathrm{last}}\|_F^2$.
We therefore evaluate it using whichever is cheaper to form: $\Aexp_{\mathrm{last}}$ or the sample Gram $(1/N)\Phi\Phi^\top$.
By cyclicity of trace and the fact that $\Phi^\top\Phi$ and $\Phi\Phi^\top$ share the same non-zero eigenvalues, we have $\mathrm{Tr}(\Aexp_{\mathrm{last}})=\mathrm{Tr}((1/N)\Phi\Phi^\top)$ and $\|\Aexp_{\mathrm{last}}\|_F^2=\|(1/N)\Phi\Phi^\top\|_F^2$.
In our experiments, $m$ scales linearly with the expert count for fixed per-expert width, so this Gram computation scales linearly in $E$.
Let $\Phi\in\R^{N\times m}$ denote the gating-weighted expert feature matrix at the regularized layer (Eq.~\eqref{eq:gating-weighted-feature}), where $N$ is the PPO minibatch size and $m$ is the concatenated feature dimension.
Forming the smaller Gram and evaluating the statistics costs
\[
  \text{Forward cost} \;=\; \Theta\!\big(\min\{N,m\}^2\max\{N,m\}\big),
\]
dominated by a single matrix multiplication computing either $\Phi^\top\Phi$ or $\Phi\Phi^\top$.
Moreover, when $m\ge N$ and we work with the sample Gram $G\defeq \Phi\Phi^\top/N$, the SPHERE penalty takes the form
\[
  \Lcal_{\mathrm{SPHERE}}(\Phi)
  \;=\;
  \|G\|_{F}^{2} \;-\; \frac{\mathrm{Tr}(G)^{2}}{m},
  \qquad
  G=\frac{1}{N}\Phi\Phi^\top,
\]
and a direct differentiation yields
\[
  \nabla_{\Phi}\Lcal_{\mathrm{SPHERE}}(\Phi)
  \;=\;
  \frac{4}{N}\Big(G-\frac{\mathrm{Tr}(G)}{m}\Ibf_{N}\Big)\Phi.
\]
Thus the backward pass is dominated by the matrix multiplication $G\Phi$ and has the same asymptotic cost as forming $G$.
In our regime $m\gg N$, this gives an additional $\Theta(N^2 m)$ flops per PPO minibatch for SPHERE (forward and backward each contributing $\Theta(N^2 m)$).

\noindent\textbf{Potential block-sparse acceleration.}\spacefactor=1000\space\space%
Since the penalty is defined on the \emph{gating-weighted} expert features, Top-$K$ routing induces a block-sparse structure in $\Phi$ (and hence in the expert-layer Gram), which could potentially be exploited in future work to accelerate the computation with custom block-sparse kernels.
In the experiments reported in \Cref{tab:humanoidbench-runtime}, we do not implement such kernels and use the default dense computation described above.

\begin{table}[H]
  \centering
  \small
  \caption{\textbf{SPHERE adds 3.4--6.3\% wall-clock overhead when scaling to $E{=}1000$ experts.} Hours per task on NVIDIA A100 for the HumanoidBench five-task suite with $10$M environment steps per task.}
  \label{tab:humanoidbench-runtime}
  \begin{tabular}{lcccccc}
    \toprule
    Method & $E{=}10$ & $E{=}20$ & $E{=}40$ & $E{=}80$ & $E{=}160$ & $E{=}1000$ \\
    \midrule
    Top-$K$ MoE (baseline) & 2.23 & 2.23 & 2.24 & 2.05 & 2.33 & 4.37 \\
    SPHERE (ours) & 2.37 (+6.28\%) & 2.34 (+4.93\%) & 2.37 (+5.80\%) & 2.11 (+3.35\%) & 2.45 (+5.27\%) & 4.61 (+5.39\%) \\
    \bottomrule
  \end{tabular}
\end{table}

\subsection{Validation of Theoretical Approximation Assumptions}
\label{app:approx-assumption-validation}
Our analysis adopts a block-diagonal approximation of the Gauss--Newton matrix that separates gate and expert parameters (Eq.~\eqref{eq:gn-block}).
For dense MLPs, block-diagonal and within-layer independence approximations are widely adopted in practical curvature modeling and natural-gradient methods (e.g., KFAC and its variants; \citealp{martens2015kfac,grosse2016kfacconv,botev2017practical}).
In MoE actors, however, the validity of an analogous \emph{gate--expert} block structure is \emph{not} obvious: Top-$K$ routing multiplies gate probabilities with expert activations, which can in principle induce substantial cross-block curvature between gate and expert parameters.
Complementing the empirical diagnostics below, \Cref{app:perturbation-bounds} provides a general perturbation bound that translates small gate--expert coupling into a controlled change in $r_e(\Kmat)$.

\noindent\textbf{Validation of the gate--expert block-diagonal assumption.}\spacefactor=1000\space\space%
To empirically assess this MoE-specific assumption, we measure the relative magnitude of the gate--expert cross-block in the Gauss--Newton matrix.
Let $G^{\mathrm{GN}}_{ab}$ denote the Gauss--Newton sub-block between parameter groups $a$ and $b$, where $a,b\in\{\mathrm{gate},\mathrm{expert}\}$.
We report the normalized block cosine similarity, defined as
\[
	  \cos(a,b)\;\defeq\;\frac{\|G^{\mathrm{GN}}_{ab}\|_{F}}{\sqrt{\|G^{\mathrm{GN}}_{aa}\|_{F}\,\|G^{\mathrm{GN}}_{bb}\|_{F}}}\in[0,1],
\]
which is scale-invariant and directly compares cross-block curvature to the geometric mean of the corresponding within-block curvature.
If gate and expert parameters were strongly coupled in $G^{\mathrm{GN}}$, we would expect a large off-diagonal value comparable to the diagonal blocks; conversely, values near zero indicate a near block-diagonal structure.
\Cref{fig:assumption-validation-gate-expert-ggncos} shows that the off-diagonal gate--expert coupling is substantially smaller than the within-block terms in a representative late-stage checkpoint on the HumanoidBench run task, supporting the gate--expert block-diagonal surrogate used throughout our analysis.

\begin{figure}[H]
  \centering
  \includegraphics[width=0.52\linewidth]{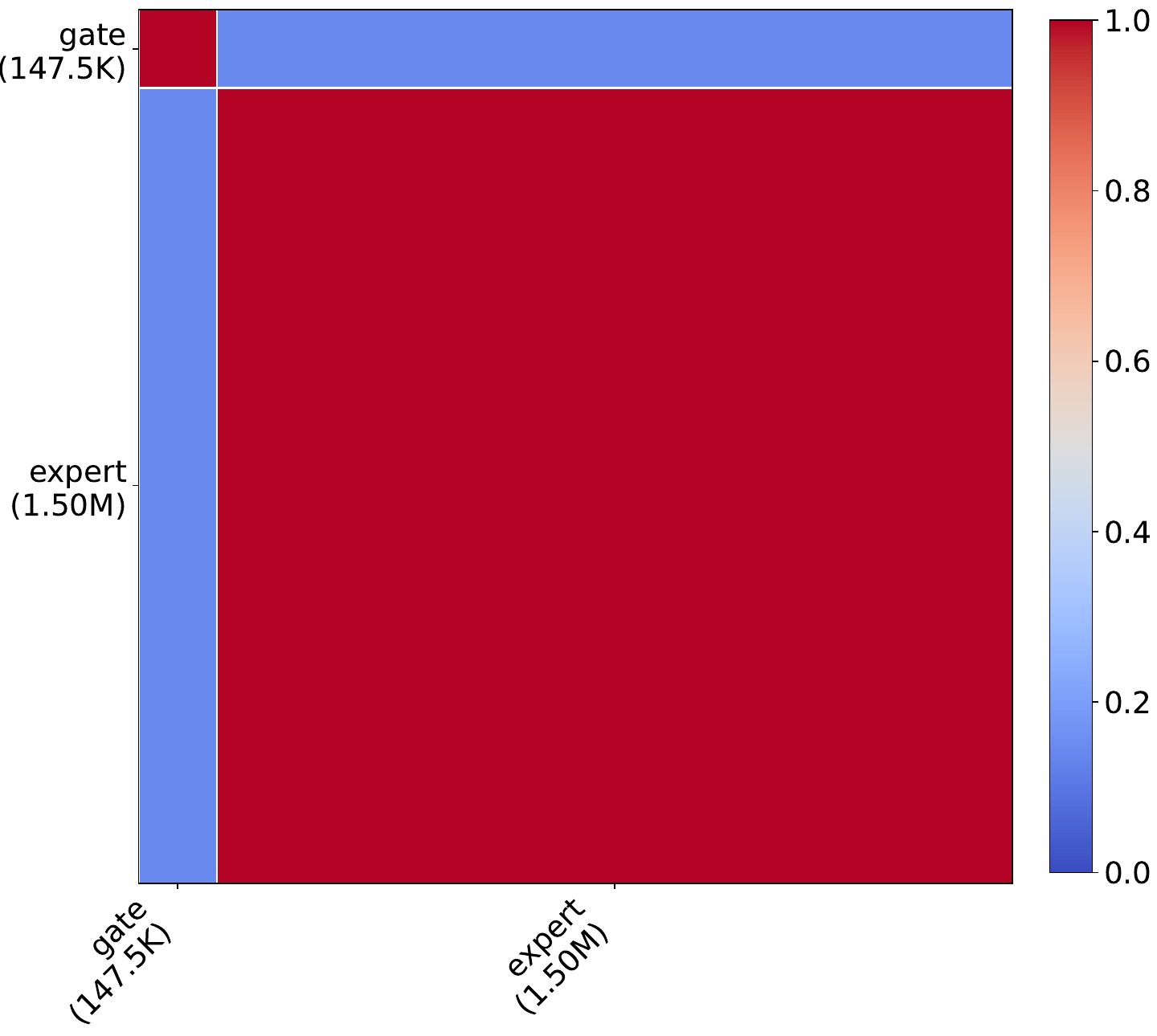}
  \caption{\textbf{Gate--expert coupling is weak in a Top-$K$ MoE actor.} We visualize the Gauss--Newton block cosine similarity $\cos(a,b)$ between the gate and expert parameter groups on the HumanoidBench run task. Blocks are ordered as $\{\mathrm{gate},\mathrm{expert}\}$ to match Eq.~\eqref{eq:gn-block}, and block widths and heights are proportional to parameter counts. Small off-diagonal values indicate weak gate--expert coupling relative to within-block curvature.}
  \label{fig:assumption-validation-gate-expert-ggncos}
\end{figure}

\noindent\textbf{Validation of Proposition~\ref{prop:kron-proxy-surrogate}.}\spacefactor=1000\space\space%
Proposition~\ref{prop:kron-proxy-surrogate} replaces the intractable expert-layer block $\Hexp_{\ell}{=}\Aexp_{\ell}\ast \Gexp_{\ell}$ (block Khatri--Rao form) with its Kronecker proxy $\Htilde^{\mathrm{GN},\mathrm{exp}}_{\ell}{=}\Aexp_{\ell}\otimes \Gexp_{\ell}$, and predicts that optimizing the proxy effective rank is sufficient for improving the expert-layer effective rank under the model.
We empirically verify this prediction by checking that the proxy effective rank tracks the target effective rank across rollout batches.

Concretely, we roll out a fixed checkpoint on the HumanoidBench run task, repeatedly sample batches of $N{=}64$ states, and instantiate Proposition~\ref{prop:kron-proxy-surrogate} at the actor's output block.
For each batch, we compute the proxy effective rank $r_e(\Htilde^{\mathrm{GN},\mathrm{exp}}_{\mathrm{out}})$ and the target effective rank $r_e(\Hexp_{\mathrm{out}})$, where $\Htilde^{\mathrm{GN},\mathrm{exp}}_{\mathrm{out}}=\Aexp_{\mathrm{last}}\otimes\Gexp_{\mathrm{out}}$ and $\Hexp_{\mathrm{out}}=\Aexp_{\mathrm{last}}\ast\Gexp_{\mathrm{out}}$ under the model.
\Cref{fig:kron-proxy-vs-khatri} shows a strong positive association, supporting the Kronecker proxy as a faithful surrogate for analyzing and optimizing expert-layer spectral properties.

\begin{figure}[H]
  \centering
  \includegraphics[width=0.42\linewidth]{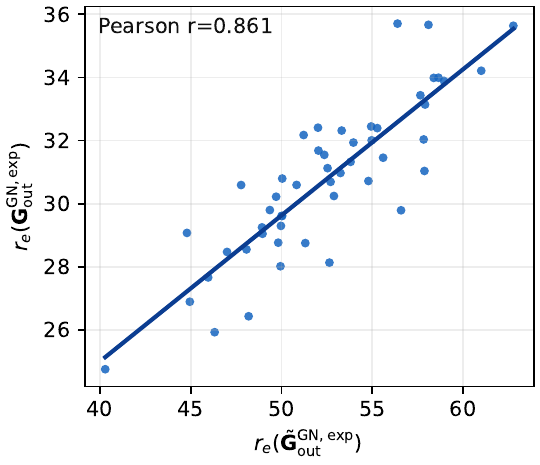}
  \caption{\textbf{The Kronecker proxy provides a reliable surrogate for expert-block effective rank.} The proxy effective rank $r_e(\Htilde^{\mathrm{GN},\mathrm{exp}}_{\mathrm{out}})$ is strongly correlated with the target effective rank $r_e(\Hexp_{\mathrm{out}})$, with Pearson $r{=}0.861$. This supports Proposition~\ref{prop:kron-proxy-surrogate} at the expert output block in a Top-$K$ MoE actor.}
  \label{fig:kron-proxy-vs-khatri}
\end{figure}

\noindent\textbf{Validation of the K-FAC factorization assumption.}\spacefactor=1000\space\space%
Beyond the Kronecker proxy at the level of effective ranks, K-FAC additionally assumes that, within an expert layer, the activation and backprop factors are weakly coupled across tokens (tokens are (state, expert) pairs), so that the empirical GN block
$\hat{\Hexp}_{\mathrm{out}}{=}\frac{1}{T}\sum_{t}(g_t g_t^\top)\otimes(a_t a_t^\top)$
is well-approximated by $\hat{\Gexp}_{\mathrm{out}}\otimes\hat{\Aexp}_{\mathrm{last}}$.
To test this without forming either matrix, we sample many random rank-one parameter perturbations $\Delta W$ over rollout batches and compare the resulting normalized directional curvatures.
\Cref{fig:kfac-scatter} compares the K-FAC proxy curvature $q_{\mathrm{KFAC}}$ against the empirical GN curvature $q_{\mathrm{GN}}$; if the factorization holds, the points concentrate along the diagonal.

\begin{figure}[H]
  \centering
  \includegraphics[width=0.42\linewidth]{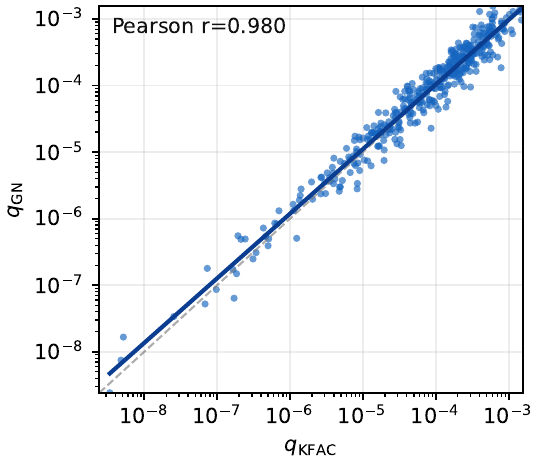}
  \caption{\textbf{K-FAC proxy curvatures closely match empirical Gauss--Newton curvatures at the expert output block.} The K-FAC proxy curvature $q_{\mathrm{KFAC}}$ concentrates near the diagonal when compared to the empirical Gauss--Newton curvature $q_{\mathrm{GN}}$. Pearson correlation between $\log q_{\mathrm{KFAC}}$ and $\log q_{\mathrm{GN}}$ is $r{=}0.980$.}
  \label{fig:kfac-scatter}
\end{figure}

\subsection{Applicability Beyond RL and MLP Settings}
\label{app:applicability-beyond-rl-mlp}

\subsubsection{Visual MetaWorld with a ResNet Backbone}
\label{app:visual-metaworld-resnet}
We further evaluate SPHERE in a visual MetaWorld CRL setting with a ResNet encoder, as shown in \Cref{tab:visual-metaworld-resnet}.
SPHERE improves average success from $0.18 \pm 0.04$ to $0.32 \pm 0.07$, suggesting that the gain extends beyond an MLP policy backbone.

\begin{table}[H]
  \centering
  \small
  \caption{\textbf{SPHERE improves visual MetaWorld CRL with a ResNet backbone.} We report average success across the continual task sequence.}
  \label{tab:visual-metaworld-resnet}
  \begin{tabular}{lc}
    \toprule
    Method & Average success \\
    \midrule
    ResNet policy & $0.18 \pm 0.04$ \\
    ResNet policy + SPHERE & $\mathbf{0.32 \pm 0.07}$ \\
    \bottomrule
  \end{tabular}
\end{table}

\subsubsection{Computer Vision with MLP and ViT Backbones}
\label{app:supervised-splitcifar100}
To probe whether SPHERE provides similar spectral regularization benefits outside reinforcement learning, we evaluate SPHERE in a supervised continual-learning setting on SplitCIFAR-100.

\noindent\textbf{Setup.}\spacefactor=1000\space\space%
We consider SplitCIFAR-100 with 20 tasks (5 classes per task) and sequentially fine-tune a Top-$K$ MoE classifier with $E{=}10$ experts and $K{=}2$.
We train under task-incremental supervision via task-masked cross entropy (restricting the softmax to the 5 classes of the current task), AdamW learning rate $5\times10^{-4}$, 5 epochs per task, and CIFAR augmentation.
SPHERE uses the feature-Gram penalty in Eq.~\eqref{eq:sphere-penalty} with gradient-norm scaling (\Cref{app:sphere-code}) and ratio $\rho=0.1$.

\noindent\textbf{Metrics.}\spacefactor=1000\space\space%
Following our per-task evaluation protocol for continual training, after finishing task $t$ we evaluate that checkpoint on the \emph{current} task-$t$ test set and then average across tasks.
We report class-incremental accuracy (argmax over all 100 classes), i.e., without task masking at evaluation.
As an auxiliary spectral diagnostic, we compute $r_e(\Kmat)$ of the classifier empirical NTK on a fixed held-out batch of CIFAR test images reused across checkpoints and methods.

\noindent\textbf{Results.}\spacefactor=1000\space\space%
\Cref{tab:splitcifar100-in-task-success} reports the results.
SPHERE improves class-incremental in-task success, indicating improved ability to fit new tasks under the sequential protocol.

\begin{table}[H]
  \centering
  \small
  \caption{\textbf{SPHERE improves SplitCIFAR-100 continual-learning performance and increases final-task $r_e(\Kmat)$.} We report average class-incremental accuracy across tasks and the final-task empirical NTK effective rank $r_e(\Kmat)$.}
  \label{tab:splitcifar100-in-task-success}
  \begin{tabular}{lcc}
    \toprule
    Method & Average success rate & Final-task $r_e(\Kmat)$ \\
    \midrule
    Fine-tune & $0.29\pm0.01$ & $6.55$ \\
    SPHERE & $\mathbf{0.48\pm0.01}$ & $\mathbf{10.03}$ \\
    \midrule
    Gain & $+69\%\pm6\%$ & $+53\%$ \\
    \bottomrule
  \end{tabular}
\end{table}

\noindent\textbf{ViT backbone.}\spacefactor=1000\space\space%
We additionally evaluate a small vision transformer where we replace the feed-forward network (FFN) sublayer in each block with a Top-$K$ MoE (8 layers, 8 heads, patch size 4, embed dim 256, $E{=}128$, $K{=}2$).
For this variant, we use the same SPHERE penalty with ratio $\rho=0.1$.
\Cref{tab:splitcifar100-in-task-success-vit} shows that SPHERE continues to improve class-incremental in-task success under this architecture.

\begin{table}[H]
  \centering
  \small
  \caption{\textbf{SPHERE continues to improve SplitCIFAR-100 with a transformer MoE-FFN backbone and increases final-task $r_e(\Kmat)$.} We report average class-incremental accuracy across tasks and the final-task empirical NTK effective rank $r_e(\Kmat)$.}
  \label{tab:splitcifar100-in-task-success-vit}
  \begin{tabular}{lcc}
    \toprule
    Method & Average success rate & Final-task $r_e(\Kmat)$ \\
    \midrule
    Fine-tune & $0.38\pm0.03$ & $4.54$ \\
    SPHERE & $\mathbf{0.52\pm0.03}$ & $\mathbf{7.46}$ \\
    \midrule
    Gain & $+37\%\pm13\%$ & $+64\%$ \\
    \bottomrule
  \end{tabular}
\end{table}

\subsubsection{Natural Language Processing with a Transformer Backbone}
\label{app:supervised-20news}
To probe whether the supervised gains observed above extend to text, SPHERE is evaluated on a continual 20 Newsgroups classification stream.

\noindent\textbf{Setup.}\spacefactor=1000\space\space%
A long-horizon stream of $T{=}400$ tasks is constructed, where each task is a 10-way classification problem.
For each task, ten classes are sampled without replacement from the 20 labels, and classes may repeat across tasks.
Per task, 5 training examples per class and 50 test examples per class are sampled from the standard train/test split; when a class re-appears later in the stream, the per-class training subset is resampled to avoid repeatedly training on an identical few-shot set.
The model is a 4-layer Transformer classifier with 8 attention heads and embedding dimension 128, and uses a Top-$K$ MoE FFN in every layer with $E{=}8$, $K{=}1$, temperature $0.01$, and FFN hidden dimension 1024.
Training uses task-masked cross entropy, AdamW learning rate $10^{-3}$, weight decay $0.3$, batch size 64, and 1 epoch per task.
SPHERE uses the feature-Gram penalty in Eq.~\eqref{eq:sphere-penalty} with gradient-norm scaling as described in \Cref{app:sphere-code}, and uses ratio $\rho{=}0.1$ applied to all MoE layers.

\noindent\textbf{Metric.}\spacefactor=1000\space\space%
Following our per-task evaluation protocol, after finishing task $t$ we evaluate that checkpoint on the \emph{current} task-$t$ test set and then average across tasks.
We report task-masked in-task success, i.e., argmax restricted to the 10 classes of task $t$.
The corresponding chance level is $0.1$.
As an auxiliary spectral diagnostic, we compute the empirical NTK effective rank $r_e(\Kmat)$ on a fixed held-out batch of $N{=}32$ test examples reused across methods.

\noindent\textbf{Results.}\spacefactor=1000\space\space%
\Cref{tab:20news-in-task-success} reports the results.
SPHERE substantially improves the raw average per-task success in this long-horizon text stream.

\begin{table}[H]
  \centering
  \small
  \caption{\textbf{SPHERE substantially improves long-horizon continual learning on 20 Newsgroups and increases final-task $r_e(\Kmat)$.} We report average task-masked accuracy across tasks and the final-task empirical NTK effective rank $r_e(\Kmat)$.}
  \label{tab:20news-in-task-success}
  \begin{tabular}{lcc}
    \toprule
    Method & Average success rate & Final-task $r_e(\Kmat)$ \\
    \midrule
    Fine-tune & $0.17\pm0.02$ & $2.71$ \\
    SPHERE & $\mathbf{0.27\pm0.01}$ & $\mathbf{14.55}$ \\
    \midrule
    Gain & $+62\%\pm12\%$ & $+438\%$ \\
    \bottomrule
  \end{tabular}
\end{table}

\end{document}